\numberwithin{equation}{section}
\newcommand{\divv}{\mathrm{div}}
\providecommand{\U}[1]{\protect\rule{.1in}{.1in}}
\newtheorem{theorem}{Theorem}[section]
\newtheorem{corollary}[theorem]{Corollary}
\newtheorem{definition}[theorem]{Definition}
\newtheorem{example}[theorem]{Example}
\newtheorem{lemma}[theorem]{Lemma}
\newtheorem{proposition}[theorem]{Proposition}
\newtheorem{remark}[theorem]{Remark}
\newtheorem{assumption}[theorem]{Assumption}
\renewcommand{\H}{\mathcal{H}}
\newcommand{\C}{\mathbb{C}}
\newcommand{\veps}{\varepsilon}
\newcommand{\onabla}{\overline{\nabla}}
\newcommand{\odivv}{\overline{\divv}}
\newcommand{\A}{\mathcal{A}}
\renewcommand{\C}{\mathcal{C}}
\newcommand{\divergence}{\mathrm{div}}
\newcommand{\diam}{\mathrm{diam}}
\newcommand{\Risk}{\mathcal{R}}
\newcommand{\J}{\mathcal{J}}
\newcommand{\T}{\mathcal{T}}
\newcommand{\F}{\mathcal{F}}
\newcommand{\G}{\mathcal{G}}
\renewcommand{\U}{\mathcal{U}}
\newcommand{\I}{\mathcal{I}}
\newcommand{\R}{\mathbb{R}}
\newcommand{\M}{\mathcal{M}}
\newcommand{\red}{\color{red}}
\definecolor{mygreen}{rgb}{0.1,0.75,0.2}
\newcommand{\nc}{\normalcolor}
\newcommand{\E}{\mathbb{E}}
\newcommand{\N}{\mathbb{N}}
\newcommand{\pathp}[1]{\mathbbf[0.5]{#1}}
\newcommand{\bgamma}{\pathp{\gamma}}
\newcommand{\bsigma}{\pathp{\sigma}}
\newcommand{\Z}{\mathcal{Z}}
\newcommand{\one}{\mathds{1}}
\newcommand{\mc}{\mathcal}
\newbox\qbox
\def\usecolor#1{\csname\string\color@#1\endcsname\space}
\newcommand\outline[1]{\leavevmode%
  \def\maltext{#1}%
  \setbox\qbox=\hbox{\maltext}%
  \boxgs{Q q 2 Tr \thickness\space w 0 0 0 rg 0 G}{}%
  \copy\qbox%
}
\newcommand\mathbbf[2][.2]{%
  \def\thickness{#1}%
  \ThisStyle{\outline{$\mathbf{\SavedStyle#2}$}}%
}
\definecolor{darkred}{rgb}{0.6,0.1,0.1}
\definecolor{darkgreen}{rgb}{0.1,0.6,0.1}
\definecolor{darkblue}{rgb}{0.1,0.1,0.6}
\title{On adversarial robustness and the use of Wasserstein ascent-descent dynamics to enforce it}
\author{Camilo Andr\'es Garc\'ia Trillos}
\address{}
\email{camilo.garcia@ucl.ac.uk}
\author{Nicol\'as Garc\'ia Trillos }
\address{}
\email{garciatrillo@wisc.edu}
\keywords{adversarial robustness, adversarial training, minmax games, Nash equilibrium, Wasserstein gradient flow, Wasserstein Fisher Rao metric, interacting particle system, mean-field limit}
\begin{document}
\normalem

\begin{abstract}
We propose iterative algorithms to solve adversarial problems in a variety of supervised learning settings of interest. Our algorithms, which can be interpreted as suitable ascent-descent dynamics in Wasserstein spaces, take the form of a system of interacting particles. These interacting particle dynamics are shown to converge toward appropriate mean-field limit equations in certain large number of particles regimes. In turn, we prove that, under certain regularity assumptions, these mean-field equations converge, in the large time limit, toward approximate Nash equilibria of the original adversarial learning problems. We present results for nonconvex-nonconcave settings, as well as for nonconvex-concave ones. Numerical experiments illustrate our results. 
\end{abstract}

\maketitle

\section{Introduction}

\subsection{The problem}

In many machine learning tasks one faces the need to solve min-max problems over probability spaces of the type
\begin{equation}
  \min\limits_{\nu \in \mathcal P (\Theta)} \max_{ \tilde \mu \in \mathcal{P}(\Z)} R(\tilde\mu,\nu) - C(\mu , \tilde \mu).
  \label{eqn:min_max_problem}
\end{equation}
For example, \eqref{eqn:min_max_problem} can be obtained as a functional relaxation of a (parametric) distributionally robust optimization (DRO) problem of the form:
\begin{equation}
  \min\limits_{\theta \in \Theta} \max_{ \tilde \mu \in \mathcal{P}(\Z)} \hat R(\tilde \mu,\theta) - C(\mu , \tilde \mu).
  \label{eqn:min_max_problem_motiv}
\end{equation}
Here $\Theta$ represents the space of parameters of a learning model, e.g., a classifier or regression function; $\mc Z= \mc X \times \mc Y$ is a space of input to output samples; $R$ is the risk function enforcing the learning problem; and $C$ represents the \textit{cost} that an adversary must pay to change the original data distribution of inputs to outputs $\mu$ to a new distribution $\tilde \mu$. In a nutshell, both problems, \eqref{eqn:min_max_problem} and \eqref{eqn:min_max_problem_motiv}, can be interpreted as games played by a learner and an adversary: for the learner the goal is to choose a parameter/distribution of parameters that is able to fare well when facing the attack of a reasonable adversary (reasonable as modeled by the cost function $C$). In this context, the relaxation procedure to go from \eqref{eqn:min_max_problem_motiv} to \eqref{eqn:min_max_problem} can be obtained in various reasonable ways. For instance, $R$ can be defined as an average of the risk $\hat{R}$, i.e. $R(\tilde \mu,\nu) = \int_\Theta \hat R\left(\tilde \mu,\theta) d\nu(\theta\right),$
or by considering the value function of some sort of average control, i.e.,
$R(\tilde \mu,\nu) =  \hat R \left( \tilde \mu, \int_\Theta \theta d\nu(\theta)\right).$
Either of these types of relaxation can be used as a way to improve the convexity properties of the original objective function over the parameter space $\Theta$, or, from a methodological viewpoint, as a technique for model aggregation. Other problems can be directly modelled by \eqref{eqn:min_max_problem}, as is the case in adversarial training of shallow neural networks, one of the motivating examples for this work; see section \ref{sec:RobustRegression} below.  

The overall goal of this paper is to propose and analyze ascent-descent dynamics to find approximate solutions (interpreted as Nash equilibria) to problem \eqref{eqn:min_max_problem} under reasonable assumptions on the risk $R$ and the cost $C$ that are general enough to encompass important settings in applications, especifically in adversarial machine learning. These dynamics will take the form of interacting particle systems, and we will be particularly interested in studying their behavior as the number of particles in the system grows. We will also discuss the long time behavior of the limiting mean-field dynamics and explore their ability to produce approximate Nash equilibria for \eqref{eqn:min_max_problem}. 

To write down the concrete particle system that we will analyze in this work, and discuss the underlying geometric structure motivating it, it will be convenient to reformulate problem \eqref{eqn:min_max_problem} in a slightly different way, assuming that $C$ has the structure of an optimal transport problem; we will discuss this reformulation in section \ref{sec:CouplingReform} below. Likewise, it will be convenient to introduce a concrete working example motivating problem \eqref{eqn:min_max_problem}. In particular, we will give a concrete meaning to the space of learning parameters $\Theta$, and define a specific risk functional $R$ and cost function $C$; we do this in section \ref{sec:RobustRegression} below.

\nc



\subsubsection{Reexpressing the problem in terms of couplings $\pi$}
\label{sec:CouplingReform}

Throughout this work we will assume that the adversarial cost $C$ in \eqref{eqn:min_max_problem} can be written as an optimal transport problem of the form:
\begin{equation}
  C(\mu , \tilde \mu) := \inf_{\pi \in \Gamma(\mu, \tilde \mu)} \int c(z, \tilde z) d\pi(z, \tilde z),  
  \label{eq:AdversarialCost}
\end{equation}
where $c:\Z \times \Z \rightarrow [0,\infty]$ is a lower semi-continuous cost function w.r.t. a topology that $\Z$ is tacitly endowed with, and $\Gamma(\mu, \tilde \mu)$ denotes the space of couplings between $\mu$ and $\tilde \mu$, i.e. the space of probability measures on the product space $\Z \times \Z$ whose first and second marginals are, respectively, $\mu$ and $\tilde \mu$. The cost function $c$ can be interpreted as the marginal cost that the adversary must pay in order to move a data sample $z$ to a new location $\tilde z$. 

The cost structure in \eqref{eq:AdversarialCost} is quite broad and encompasses several models for adversarial robustness in the literature. Moreover, when $C$ has the structure \eqref{eq:AdversarialCost}, the optimization variable $\tilde \mu$ in \eqref{eqn:min_max_problem}, i.e., the variable corresponding to a (global) adversarial attack, can be replaced with a coupling variable $\pi$ as in \eqref{eq:AdversarialCost}. To see this, let $\pi \in \mathcal{P}(\Z \times \Z)$ and define
\[ \mathcal{C}(\pi) := \int_{\Z \times \Z} c(z , \tilde z ) d\pi(z, \tilde z) \]
We will use $\pi_z$ and $\pi_{\tilde z}$ to denote, respectively, the first and second marginals of $\pi$. Given $\nu \in \mathcal{P}(\Theta)$ and $\pi \in \mathcal{P}(\Z \times \Z)$, we define:
\begin{equation}
\label{eqn:DefU}
\U(\pi, \nu):= \Risk( \pi,\nu) - \C (\pi), 
\end{equation}
where $\Risk(\pi, \nu):= R(\pi_{\tilde z}, \nu)$, and where we implicitly assume that the above difference makes sense, i.e., we exclude $( \pi,\nu)$ from the domain of definition of $\U$ whenever $\Risk(\pi,\nu) = \C (\pi) = \infty$. Problem \eqref{eqn:min_max_problem} then admits the following equivalent reformulation:
\begin{equation}
  \min\limits_{\nu \in \mathcal \mathcal P(\Theta)} \max_{ \pi \in \mc P(\mc Z\times \mc Z) ; \pi_z = \mu} \U(\pi, \nu). 
  \label{min max problem couplings}
\end{equation}
Indeed, it is straightforward to see that if $( \pi^*,\nu^*)$ is a Nash equilibrium for problem \eqref{min max problem couplings}, then $( \pi^*_{\tilde z},\nu^* )$ is a Nash equilibrium for \eqref{eqn:min_max_problem}. Conversely, if $( \tilde \mu^*,\nu^*)$ is a solution to \eqref{eqn:min_max_problem}, then $(\pi^*,\nu^*)$, where $\pi^*$ is an optimal solution for problem \eqref{eq:AdversarialCost} with $\tilde \mu= \tilde\mu^*$, is a solution for problem \eqref{min max problem couplings}.  

The reformulation \eqref{min max problem couplings} and the change of variables $\tilde \mu \mapsto \pi$ effectively allow us to replace the non-linear objective $C(\mu, \cdot)$ with the linear functional $\C(\cdot)$. This substitution facilitates the definition of our algorithms, in some cases inducing minmax problems with improved concavity properties (e.g., see section \ref{sec:strongCon}). We will thus focus on the reformulation \eqref{min max problem couplings} and on ascent-descent dynamics aimed at solving it. Conceptually, the variable $\nu$ in \eqref{min max problem couplings} specifies a regression/classification function (i.e. an action/strategy played by the learner), while $\pi$ determines a way in which the adversary modifies the original data; see more details in section \ref{sec:RobustRegression} below. Problem \eqref{min max problem couplings} can be interpreted as a DRO (distributionally robust optimization) version of adversarial training with an explicit penalty, as opposed to an explicit constraint; see \cite{DROWithPenalty}.




\nc

\subsection{Motivating example: Robust regression or binary classification with shallow neural networks}
\label{sec:RobustRegression}

Before we move on with the description of our algorithms and main results, it will be convenient to provide some concrete examples of risk functionals $\Risk$ and cost functions $\C$ that are of interest in practical settings. We do so in this section. A different setting is considered in section \ref{sec:Experiments}.

Let $\Theta \subseteq  \R \times \R^{d'}$, $\Z = \R^{d'} \times \R$, and write $\theta=(a,b)$ and $z=(x,y)$. We consider the following risk and adversarial cost:
\begin{equation}
  \Risk(\pi,\nu) := \int_{\Z \times \Z}  \ell (h_\nu(\tilde x), \tilde y) d \pi(z, \tilde z),\quad h_\nu(x):= \int_{\Theta} a f(b \cdot x) d \nu(a,b),
  \label{Ex of h_nu}
\end{equation}
where $\ell: \R \times \R \rightarrow [0,\infty]$ is a loss function, $f$ is an activation function, and
\begin{equation}
\label{eq:Cost}
  \C(\pi) := c_a\int_{\Z \times \Z} |z-\tilde z|^2 d\pi(z, \tilde z),
\end{equation}
for $c_a$ a positive parameter. This case, with a  square Euclidean distance, is one of the many possible choices for the cost function $c$.  

Notice that the function $h_\nu$ can be interpreted as a continuum analog of a two-layer (one hidden layer) neural network without offset parameters (for simplicity); see \cite{E2022}. Popular examples of activation functions are the following:
\begin{enumerate}
    \item Sigmoid: $f(t)=\frac{1}{1+e^{-t}}$.
    \item ReLu: $f(t)= (t)_+ $
    \item Squared ReLu: $f(t)=(t_+)^2$. 
\end{enumerate}

Examples of loss functions of interest are:
\begin{enumerate}
    \item Squared loss: $\ell(y,y') = |y-y'|^2$.
    \item Logistic loss: $\ell (y, y') = -\log(|1-y'-y|)$, when $y, y'$ are constrained to belong to $[0,1]$.
\end{enumerate}

The parameter $c_a$ in the adversarial cost, which can be interpreted as reciprocal to an adversarial budget, determines the strength of adversarial attacks. In particular, if $c_a$ is small, the attacker can carry out stronger attacks, while the opposite is true when $c_a$ is large. In applications, the goal of solving a problem such as \eqref{min max problem couplings} is to enhance the robustness of a learning model. In other words, instead of simply solving an optimization problem of the form $\min_{\nu} \Risk(\pi,\nu)$, which corresponds to the standard training process, one considers problem \eqref{min max problem couplings} to account for the possible actions of an adversary. However, in order to avoid enhancing robustness at the expense of a considerable loss in accuracy, it is important to tune the adversarial budget appropriately. Some papers that have studied the trade-off between robustness and accuracy include \cite{RobustnessVsAccuracy1,RobustnessVsAccuracy2}.


 



\begin{remark}
In the setting considered above, and when working with the squared loss function $\ell(y, y')= |y-y'|^2$, the loss function $\ell( h_\nu(\tilde x), \tilde y )$ can be written, after expanding the square, as
\[ \ell( h_\nu(\tilde x), \tilde y ) =   \int_{\Theta}\int_{\Theta} a f(b \cdot \tilde x)  a' f(b' \cdot  \tilde x) d \nu( a',  b') d \nu(a,b) - 2   \tilde y\int_{\Theta} af(b \cdot \tilde x) d \nu(a,b) + \tilde y^2.   \]
In particular, we can write 
\begin{align*}
\begin{split}
\Risk( \pi,\nu) & =    \int_{\Theta}\int_{\Theta} \left(\int_{\Z^2} a f(b \cdot \tilde x)  a' f(b'\cdot  \tilde x) d \pi(z, \tilde z)\right) d \nu( a',  b') d \nu(a,b) 
\\& - 2   \int_{\Theta}   \int_{\Z^2}\tilde y  af(b \cdot \tilde x)  d \pi(z, \tilde z) d \nu(a,b) + \int_{\Z^2}\tilde y^2 d\pi(z, \tilde z)
\end{split}
\end{align*}
This expression has the merit of exposing explicitly the dependence of the risk  with respect to the measure $\nu$.
\nc 
\end{remark}


\subsection{Algorithm}
\label{sec:Algorithm}

We introduce in Algorithm \ref{Algo:WADA} a particle-based scheme for solving the min-max problem \eqref{min max problem couplings}. 
\begin{algorithm}
    \caption{Wasserstein ascent-descent algorithm \label{Algo:WADA}}
    \begin{algorithmic}[1]
        \Require A collection $\{z_{i,0}, \omega_{i,0}\}_{i=1,\ldots, n}$ such that $\frac{1}{n}\sum_{i=1}^n \omega_{i,0} \delta_{z_{i,0}} $ approximates $\mu$.
        \State Set $t=0$
        \State Choose $\{\vartheta_{k,0}\}_{k= 1, \ldots M}$,  $\{  \alpha_{k,0} \}_{k=1,\ldots, M }$, $\{\tilde z_{ij,0}\}_{i=1,, \dots, n; \: j= 1, \ldots N}$, and $\{\omega_{ij,0} \}_{i=1,\ldots, n;\ j= 1, \ldots N }$
        with the constraint 
        \[ \sum_{j=1}^N \omega_{ij,0} = \omega_{i,0}  \text{ for all } i=1, \ldots n. \]
        \While { Stopping condition has not been satisfied }
        \State Set
        \[ \pi_t^{n,N} := \sum_{i=1}^n\sum_{j=1}^N \omega_{ij,t} \delta_{ (z_{i,0}, \tilde z_{ij,t})} \qquad  \nu_t^M := \sum_{k=1}^M \alpha_{k,t} \delta_{\vartheta_{k,t}} \]

        \For {$i=1$ to $n$ ; $j=1$ to $N$ }
          \State $\tilde z_{ij, t+1} =  P_{\Z} \left( \tilde z_{ij, t}  +  \eta_{ t \nc} \nabla_{\tilde z} \U_{\pi} ( \pi_t,\nu_t; z_{i,0} \tilde z_{ij, t}) \right) $        
          \State $\hat{\omega}_{ij, t+1}:= {\omega}_{ij, t} \exp \left(\eta_{ t } \kappa \sum_{j'} \omega_{ij',t} \U_{\pi}( \pi_t,\nu_t; z_{i,0},  \tilde z_{ij,t})  \right)$
          \State $\omega_{ij, t+1}:= \frac{\hat{\omega}_{ij, t+1}}{ \sum_{j'} \hat{\omega}_{ij', t+1}  }$ 
        \EndFor
        \For {$k=1$ to $M$}
          \State $\vartheta_{k, t+1} =  P_{\Theta} \left( \nc \vartheta_{k, t}  - \eta_{ t \nc} \nabla_{\theta} \U_{\nu}(\pi_t, \nu_t; \vartheta_{k,t})  \right) \nc$
          \State $\hat{\alpha}_{k, t+1}:= {\alpha}_{k, t} \exp \left(-\eta_{ t \nc}\kappa \sum_{k'} \alpha_{k',t} \U_{\nu}(\pi_t, \nu_t;\vartheta_{k,t})  \right)$
          \State $\alpha_{k, t+1}:= \frac{\hat{\alpha}_{k, t+1}}{ \sum_{k'} \hat{\alpha}_{k', t+1}  }$
        \EndFor
        \State $t=t+1$
        \EndWhile 
        \State \textit{**Calculate time-average**}
        \State$\bar z_{ij} := \frac{1}{t}\sum_{s=0}^{t} \omega_{ij,s} \tilde z_{ij,s}$ for $i=1, \ldots, n ; j=1, \ldots, N $
        \State$\bar \vartheta_{k} := \frac{1}{t} \sum_{s=0}^{t} \alpha_{k,s} \tilde \vartheta_{k,s}$ for $ k=1, \ldots, M$       \label{AlgLine: time-avg_net}
    \end{algorithmic} 
    \label{Algo}
  \end{algorithm} 
Implicit in the definition of Algorithm \ref{Algo} is the use of the \textit{first variations} of the functional $\U$ in the directions $\nu$ and $\pi$. 

Following Definition 7.12. in \cite{santambrogio2015optimal}, we say that the measurable function $\U_\pi: \Z \times \Z \rightarrow \R$ is the first variation of $\U$ in the direction $\pi$ at the point $(\pi,\nu)$ if for any $\pi^* \in \mathcal{P}(\mc Z\times \mc Z )$ we have
\[ \left. \frac{d}{d\epsilon} ( \U(\pi + \epsilon  (\pi^*-\pi) \nc , \nu )) \right|_{\epsilon =0} = \int_{\Z \times \Z} \U_\pi (\pi,\nu; z,\tilde z)   d (\pi^*-\pi). \]
In general, $\U_\pi$ may depend on the point $(\pi, \nu)$ at which the first variation is being evaluated, but we will drop the explicit reference to this dependence whenever no confusion may arise from doing so, for otherwise we will write all of $\U_{\pi}$'s arguments like this:
$\U_\pi( \pi, \nu;z,\tilde z )$. Likewise, we say that the measurable function $\U_\nu: \Theta \rightarrow \R$ is the first variation of $\U$ in the direction $\nu$ at the point $(\pi,\nu)$ if for any $\nu^* \in \mathcal{P}(\Theta)$ we have
\[ \left. \frac{d}{d\epsilon} ( \U( \pi, \nu+  \epsilon(\nu^*-\nu) \nc   )) \right|_{\epsilon =0} = \int_\Theta \U_\nu (\pi,\nu; \theta)   d (\nu^*-\nu). \]
Throughout the paper we will assume that the first variations of $\U$ are well defined and satisfy certain regularity properties that are stated precisely in Assumptions \ref{Hyp U}. 

In Algorithm \ref{Algo} the $\eta_t$ can be interpreted as a time dependent learning rate, and $\kappa$ as a fixed parameter. The projection maps $P_{\Z}$ and $P_\Theta$ are introduced to guarantee that the iterates stay within the sets $\Z$ and $\Theta$. The averaging steps in lines 18-19 will be discussed in section \ref{sec:Experiments}; Algorithm \ref{Algo} is related to algorithms introduced in \cite{domingo2020mean,wang_exponentially_2022}; a comparison between the content of those papers and ours is presented in section \ref{sec:LitReview}.

\medskip 

The collection of iterates in Algorithm \ref{Algo} can be thought of as a time discretization of the system of ODEs:
\begin{equation}
  \begin{aligned}
    d Z_t^{i} & = 0 \\
    d \tilde Z_t^{i} & = \eta_{ t \nc} \nabla_{\tilde z} \U_\pi (\pi^N_t,\nu^N_t; Z_t^{i},\tilde Z_t^{i} )dt \\
    d \omega_t^{i} & = \kappa \omega_t^{i}  \left( \U_\pi (\pi^N_t,\nu^N_t; Z_t^{i},\tilde Z_t^{i} ) - \int  \U_\pi (\pi_t^N, \nu_t^N; Z_t^{i},\tilde z') d  \pi^N_t(\tilde z'| Z_t^{i} )\right) dt\\
    d \vartheta_t^{i} & = - \eta_{ t \nc} \nabla_{\theta} \U_\nu (\pi^N_t,\nu^N_t;\vartheta_t^{i})dt \\
    d \alpha_t^{i} & = -\kappa \alpha_t^{i}  \left( \U_\nu (\pi^N_N,\nu^N_t;\vartheta_t^{i}) - \int  \U_\nu (\pi_t^N, \nu_t^N;\theta') d \nu^N_t(\theta')\right)dt,
\end{aligned}
\label{ODEsIntro}
\end{equation}
with given initial condition $(Z^i_0,\tilde Z^i_0,\omega^i_0,\vartheta^i_0,\alpha^i_0)$ (possibly random)
and
\begin{align}
     \pi_t^N &:=\frac 1 N \sum_{i=1}^N \omega^i_t \delta_{( Z_t^i, \tilde Z_t^i)}, 
     & \nu_t^N &:= \frac 1 N \sum_{i=1}^N \alpha^i_t \delta_{\vartheta_t^i}.
     \label{eqn:InitialiCondEmpirirical1}
  \end{align}
  Notice that in the above ODE, as well as in our analysis in section \ref{sec:ConvergenceMeanField}, we have considered the same number of particles $Z, \tilde Z, \vartheta$ and we have eliminated the double indexes. This we do for simplicity and in order to reduce the burdensome notation throughout our analysis. We will only return to the double indexes when needed.

Of particular interest to us is the evolution of the measures  $\pi_t^N, \nu_t^N$, which can be  showed to satisfy the PDE (in weak form)
\begin{equation}
  \begin{cases}
  \partial_t \pi_t &= - \eta_{ t \nc} \divergence_{z,\tilde z}  (\pi_t (0, \nabla_{\tilde z} \U_{\pi}(\pi_t, \nu_t;z,\tilde z) ))+ \kappa \pi_t \left( \U_\pi(\pi_t, \nu_t; z,\tilde z) - \int \U_\pi(\pi_t, \nu_t;z, \tilde z ') d\pi_{t}(\tilde z '| z)      \right) \\
  \partial_t \nu_t &= \eta_{ t } \divergence_{\theta}  (\nu_t \nabla_{\theta} \U_{\nu}(\pi_t, \nu_t;\theta)) - \kappa \nu_t \left( \U_{\nu}(\pi_t, \nu_t;\theta) - \int \U_{\nu}(\pi_t, \nu_t; \theta') d\nu_t(\theta')   \right), 
  \end{cases}
  \label{WFR dynamics}
\end{equation}
 with initial condition
 \begin{equation}
  \pi_0 = \pi_0^N=  \sum_{i=1}^N  \omega_0^i \delta_{(Z^i_{0},\tilde{Z}^i_{0} )}, 
  \quad \nu_0 = \nu_0^N= \sum_{i=1}^N \alpha^i_{0} \delta_{\vartheta^i_{0}}.   
  \label{eqn:EmpiricalInitialization}
 \end{equation} 

In our first main result, Theorem \ref{thm:Main1} below, we state that, under certain conditions on $\U$ and its first variations $\U_\pi$ and $\U_{\nu}$ –ultimately determined by the choice of risk function $\Risk$ and cost function $\C$ in applications–, the large number of particles limit ($N \rightarrow \infty$) of the system \eqref{WFR dynamics} initialized as in \eqref{eqn:EmpiricalInitialization} follow the same dynamics \eqref{WFR dynamics} with a more general initialization; as part of our analysis in section \ref{sec:ConvergenceMeanField} we show that this system is well-posed. The motivation for the use of the term \textit{gradient ascent-descent} to describe the system \eqref{WFR dynamics}, and by extension also for the steps in Algorithm \ref{Algo}, will be discussed in section \ref{sec:GradDescentAscent}. 




\begin{example}
  \label{example:2}
  In the context of the motivating example from section \ref{sec:RobustRegression} we see that:
\begin{equation}
\U_{\pi}(\pi, \nu; z, \tilde z)= \ell(h_\nu(\tilde x), \tilde y) - c(z, \tilde z) = \ell(h_\nu(\tilde x), \tilde y) - c_a|z- \tilde z|^2,
\end{equation}
and
\begin{equation}
\U_{\nu}(\pi, \nu; \theta) = \int_{\Z \times \Z }\ell'(h_\nu(\tilde x), \tilde y) (a f(b \cdot \tilde x)) d\pi(z, \tilde z),
\end{equation}
where $\theta=(a,b)$. Here, $\ell'$ denotes the derivative of $\ell$ in its first coordinate. 

For the case of the squared-loss, $\U_\nu$ can be rewritten as:
\begin{align}
\begin{split}
\U_\nu(\pi, \nu; \theta) = & 2\int_{\Z \times \Z} \int_\Theta (a'  f(b' \cdot \tilde x)) (a  f(b \cdot  \tilde x)) d \nu(\theta') d\pi(z,\tilde z)  \\& - 2\int_{\Z \times \Z}  \tilde y (a  f(b \cdot  \tilde x))   d\pi(z,\tilde z).
\end{split}
\label{eqn:UnuRegression}
\end{align}
\end{example}

  \nc

  We finish this section with the following remark stating that the system \eqref{WFR dynamics} with arbitrary initialization satisfies certain conservation of mass properties.

\begin{remark}
  Note that the dynamics in \eqref{WFR dynamics} imply that the first marginal of $\pi$ ($\pi_z$) remains constant. This can be verified by considering a test function $\phi:\mathcal Z \rightarrow \R$ and observing that
  \begin{align*}
     \frac{d}{dt} \int_{\mc Z \times \mc Z} \phi(z) d\pi_t(z,\tilde z) & =  \eta_{ t \nc} \int_{\mc Z \times \mc Z}  \nabla_{z,\tilde z} \phi(z) \cdot (0, \nabla_{\tilde z} \U_\pi ) d\pi_t(z, \tilde z) \\
     & \quad  + \kappa \int_{\mc Z \times \mc Z} \phi(z) \left( \U_\pi(z,\tilde z) - \int \U_\pi(z, \tilde z') d\pi_{t}(\tilde z '| z)      \right) d\pi_t(z, \tilde z)   \\
     & =  \kappa \int_{\mc Z } \phi(z)  \int_{\Z} \left( \U_\pi(z,\tilde z) - \int \U_\pi(z, \tilde z ') d\pi_{t}(\tilde z '| x)      \right) d\pi_{t}(\tilde z | z) d\pi_{t,z}(z)\\
     & = 0. 
  \end{align*}

 Similarly, one can show that $\nu_t$ and $\pi_t$ have total mass equal to one for all times, if $\nu_0, \pi_0$ are probability measures.
  \label{Rmk:ConstantMarginal}
 \end{remark}
 
 


\subsection{Main results} 
\label{sec:MainResults}
We are now ready to state our main results. Our first result, which describes the large number of particles limit ($N \rightarrow \infty$) of the system \eqref{WFR dynamics} when initialized according to \eqref{eqn:EmpiricalInitialization}, is deduced under the following assumptions on $\U$ and its first variations.

\begin{assumption}
We assume that there exist constants $M,L >0$ such that
  \begin{itemize}
    \item $\U$ is bounded, and Lipschitz with respect to the $1$-Wasserstein distance, that is 
    \[ |\U(\pi,\nu)| \leq M ; \qquad  \U(\pi^1,\nu^1) - \U(\pi^2,\nu^2) \leq L (W_1(\pi^1,\pi^2) +  W_1(\nu^1,\nu^2)  ) .\]
    \item The first variations of $\U$ are bounded and Lipschitz, i.e
 \begin{align*}
      &| \U_\pi(\pi,\nu; z,\tilde z) | + | \U_\nu(\pi,\nu; \theta) |\leq M  \\
      &|\U_\pi(\pi^1,\nu^1; z^1,\tilde z^1) - \U_\pi(\pi^2,\nu^2; z^2, \tilde z^2) | \leq L (W_1(\pi^1,\pi^2) +  W_1(\nu^1,\nu^2) + |z^1-z^2|+|\tilde z^1-\tilde z^2|  )  \\
      &|\U_\nu(\pi^1,\nu^1; \theta^1) - \U_\nu(\pi^2,\nu^2; \theta^2) |\leq L (W_1(\pi^1,\pi^2) +  W_1(\nu^1,\nu^2) + |\theta^1-\theta^2|  ) .
    \end{align*}
    \item The gradients of the first variations of $\U$ are bounded and Lipschitz, i.e.
    \begin{align*}
      &|\nabla_{\tilde z} \U_\pi(\pi,\nu; z,\tilde z) | + |\nabla_{\theta} \U_\nu(\pi,\nu; \theta) |\leq M  \\
      &|\nabla_{\tilde z}\U_\pi(\pi^1,\nu^1; z^1,\tilde z ^1) - \nabla_{\tilde z} \U_\pi(\pi^2,\nu^2; z^2, \tilde z^2) | \leq L (W_1(\pi^1,\pi^2) +  W_1(\nu^1,\nu^2) + |z^1-z^2|+|\tilde z^1-\tilde z^2|  )  \\
      &|\nabla_\theta \U_\nu(\pi^1,\nu^1; \theta^1) - \nabla_\theta \U_\nu(\pi^2,\nu^2; \theta^2) |\leq L (W_1(\pi^1,\pi^2) +  W_1(\nu^1,\nu^2) + |\theta^1-\theta^2|  ) .
    \end{align*}
  \end{itemize} 
  In the above, $\pi,\pi^i \in \mathcal{P}(\Z^2)$, $\nu, \nu^i \in \mathcal{P}(\Theta)$, $(z^i, \tilde z^i ) \in \Z^2$, and $\theta^i \in \Theta$. The sets $\Theta$ and $\Z^2$ are compact domains of the Euclidean spaces $\R^d$ and $\R^{2d'}$, respectively. We assume that these domains have Lipschitz boundaries.
  \label{Hyp U}
\end{assumption}

\begin{remark}
\label{rem:DefinitionGeneralWass}
In the sequel, we use the $p$-Wasserstein distance $W_p$ (with $p \geq 1$) to compare probability distributions over a variety of metric spaces $(\mathbb{M}, d(\cdot, \cdot))$. We recall that for given two probability measures $\upsilon, \upsilon'$ over $\mathbb{M}$, their $p$-Wasserstein distance $W_p(\upsilon,\upsilon')$ is defined according to
\[ W^p_{p}(\upsilon, \upsilon') := \inf_{\Upsilon \in \Gamma(\upsilon, \upsilon')}\int_{\mathbb{M} \times \mathbb{M}} (d(u,u'  ))^p d\Upsilon(u,u'),  \]
where $\Gamma(\upsilon, \upsilon')$ is the set of couplings between $\upsilon$ and $\upsilon'$. The metric $d(\cdot, \cdot)$ that will be used in each instance will be specified in context. For example, in Assumption \ref{Hyp U}, the $1$-Wasserstein distances considered are the ones relative to the Euclidean metric in each corresponding Euclidean space.  

Additionally, we will use $\mathcal{P}(\mathbb{M})$ to denote the space of Borel probability measures over $\mathbb{M}$.

\nc
\end{remark}

Since the sets $\Theta$ and $\Z^2$ have been assumed to be bounded, in order to simplify the writing of our proofs and guarantee that all the dynamics to be studied in the paper stay within the domains $\Theta$ and $\Z^2$, we make the following technical assumption: 
\begin{assumption}
At all points $ \tilde z$ at the boundary of $\Z$ and at all points $\theta$ at the boundary of $\Theta$, it holds that the vector $\nabla_{\tilde z} \U_\pi(\pi, \nu;z, \tilde z )$ points toward the interior of $\Z$, regardless of $\pi, \nu, z$; and the vector $\nabla_\theta \U_\nu(\pi, \nu;\theta)  $ points toward the interior of $\Theta$, regardless of $\pi, \nu$. 

\label{Hyp:Support is stationary}
\end{assumption}

By restricting our attention to compact domains $\Theta, \Z^2$ we make it simpler to verify the boundedness and Lipschitz conditions in Assumption \ref{Hyp U} as these conditions reduce to weaker properties like local-Lipschitzness. Notice that in many applications there are natural bounds on the supports of the desired solution\footnote{To give only one example, images are typically represented by pixels which have a lower and upper values}. Assumption \ref{Hyp:Support is stationary}, on the other hand, guarantees that all dynamics considered in the paper remain in the domains $\Theta$ and $\Z^2$ (e.g., the dynamics \eqref{ODEsIntro}). In order to satisfy the constraint imposed by this assumption, we can work with a modified functional $\U$ that strongly penalizes leaving the domains as we move closer to their borders. In particular, to a given $\U$ satisfying Assumptions \ref{Hyp U} we can add, if needed, an exogenous term of the form $ \int \varphi_2(\theta)d \nu(\theta) -\int \varphi_1(\tilde z)  d\pi_{\tilde z}  $, where the potentials $\varphi_1, \varphi_2$ are zero away from the boundary of the domains and grow as one approaches the boundaries. We reiterate that this assumption is made in order to simplify the writing of our proofs. Throughout the entire paper we adopt Assumption \ref{Hyp:Support is stationary}, even if not mentioned explicitly.

\begin{example}
  
  In the context of the motivating Example \ref{example:2} we see that, for compact domains $\Theta$ and $\Z$, all conditions in Assumption \ref{Hyp U} are satisfied when one considers a loss function that is twice differentiable, and an activation function whose first derivative is Lipschitz. This is the case, for example, for the squared-loss and the squared ReLu or sigmoid activations.
\end{example}

\medskip
We are ready to state our first main result. 
\begin{theorem}[Convergence particle system ]
\label{thm:Main1}
Let $T>0$, and suppose that Assumptions \ref{Hyp U} and \ref{Hyp:Support is stationary} hold. Let $\bar\pi_0, \bar \nu_0$ be probability measures with $\bar{\pi}_{0,z}=\mu$. Let $\gamma_t^N , \sigma_t^N$ 
\begin{align*}
   \gamma_t^N &:= \frac 1 N \sum_{i=1}^N  \delta_{( Z_t^i, \tilde Z_t^i),\omega^i_t}, 
   & \sigma_t^N &:= \frac 1 N \sum_{i=1}^N  \delta_{\vartheta_t^i,\alpha^i_t},
\end{align*}
for initial values $\omega^{i}_0, \alpha^{i}_0$ bounded from above by a constant $D$ (uniformly over $N$) and $Z_{0}^i$ in the support of $\mu$, and evolutions as in \eqref{ODEsIntro}.
Finally, suppose that, as $N \rightarrow \infty$,
\begin{equation}
 \inf_{\upsilon \in \Gamma_{\text{Opt}}(\pi_{0,z}^N, \overline{\pi}_{0,z}) } \int W_1({\gamma}_0^N(\cdot| z_0'),{\gamma}_0(\cdot| z_0) ) d \upsilon(z_0',z_0) \rightarrow 0, \text{ and }  W_1( {\sigma}_0^N, {\sigma}_0) \rightarrow 0, 
 \label{eq:InitialConditionConditionals}
 \end{equation}
for two probability measures $\gamma_0$ and $\sigma_0$ satisfying $\F \gamma_0= \bar{\pi}_0$ and $\F \sigma_0= \bar{\nu}_0$, where $\F$ is defined in \eqref{sec:FNU} (and in section \ref{sec:FPI}). In the above, $\Gamma_{\text{Opt}}(\pi_{0,z}^N, \overline{\pi}_{0,z})$ stands for the set of couplings that realize the $1$-Wasserstein distance between $\pi_{0,z}^N$ and $\overline{\pi}_{0,z}$.

Then, as $N \rightarrow \infty$,
\[ \sup_{t\in [0,T]} \{ W_{1}({\pi}_t^N , \pi_t) +  W_{1}(\nu_t^N , \nu_t )\}\rightarrow 0,\]
where $\pi_t, \nu_t$ solve \eqref{WFR dynamics} with initializations $\bar{\pi}_0$ and $\bar{\nu}_0$.
\end{theorem}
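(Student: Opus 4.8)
The plan is to follow the classical coupling / synchronous-coupling strategy for mean-field limits, but carried out at the level of the \emph{lifted} measures $\gamma_t, \sigma_t$ (which encode both positions and weights) rather than the projected measures $\pi_t, \nu_t$, since the weight dynamics are quadratic in the weights and do not close at the level of $\pi_t$ alone. First I would set up the mean-field characteristic system: for $\F\gamma_0 = \bar\pi_0$, $\F\sigma_0 = \bar\nu_0$, introduce the flow maps $(Z_t^{z_0,\tilde z_0}, \tilde Z_t^{z_0,\tilde z_0}, \omega_t^{z_0,\tilde z_0})$ and $(\vartheta_t^{\theta_0}, \alpha_t^{\theta_0})$ solving the ODEs \eqref{ODEsIntro} but driven by the \emph{limiting} measures $\pi_t = \F\gamma_t$, $\nu_t = \F\sigma_t$, where $\gamma_t, \sigma_t$ are the pushforwards of $\gamma_0, \sigma_0$ under these very flows — a fixed-point (self-consistency) problem. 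Well-posedness of this coupled system is obtained by a Banach fixed-point argument in $C([0,T]; \mathcal P_1 \times \mathcal P_1)$ (or on the lifted spaces): the maps $(\pi,\nu)\mapsto (\nabla_{\tilde z}\U_\pi, \nabla_\theta\U_\nu, \U_\pi, \U_\nu)$ are bounded and Lipschitz w.r.t.\ $W_1$ by Assumption \ref{Hyp U}, the weights stay in a fixed bounded interval (by the exponential form of their ODE and boundedness of $\U_\pi, \U_\nu$ by $M$, plus conservation of total mass from Remark \ref{Rmk:ConstantMarginal}), and Assumption \ref{Hyp:Support is stationary} keeps the position flows inside the compact domains $\Z^2, \Theta$ so the projections $P_\Z, P_\Theta$ are inactive and do not break Lipschitz estimates. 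A Gr\"onwall argument closes the contraction on a short time interval and then iterates to all of $[0,T]$.

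Next I would estimate $W_1(\pi_t^N, \pi_t) + W_1(\nu_t^N, \nu_t)$ via a synchronous coupling. Fix an optimal coupling $\upsilon$ of $\pi_{0,z}^N$ and $\bar\pi_{0,z}$ realizing the initial condition in \eqref{eq:InitialConditionConditionals}, and similarly an optimal $W_1$-coupling of $\sigma_0^N$ and $\sigma_0$; use these to pair each empirical particle $(Z_0^i,\tilde Z_0^i,\omega_0^i)$ with a point of the limiting system sharing (approximately) the same initial data, and let both evolve under \eqref{ODEsIntro} — the $N$-particle one driven by $(\pi_t^N,\nu_t^N)$, the mean-field one by $(\pi_t,\nu_t)$. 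Writing $e_t := W_1(\pi_t^N,\pi_t) + W_1(\nu_t^N,\nu_t)$, one controls $\tfrac{d}{dt}$ of the transported squared/linear distance between paired trajectories: the drift terms contribute $C\eta_t e_t$ by the Lipschitz bounds on $\nabla\U_\pi, \nabla\U_\nu$ (which depend on the driving measures through $W_1$, giving the $e_t$ factor, plus $|z^1-z^2|+|\tilde z^1-\tilde z^2|$ giving the self-term), and the weight terms likewise contribute $C\kappa e_t$ plus contributions controlled by the already-bounded weights and the Lipschitzness of $\U_\pi, \U_\nu$. The slightly delicate point is that the weight evolution mixes positions and weights, so one should track a combined quantity such as $\E[|\tilde Z_t^i - \tilde Z_t^{(i)}| + |\omega_t^i - \omega_t^{(i)}|]$ (and analogously for $\vartheta, \alpha$), show this dominates $e_t$ up to the initial error, and Gr\"onwall it; since $\int_0^T \eta_t\,dt < \infty$ and $\kappa$ is fixed, $e_T \le C(e_0 + (\text{initial coupling error}))$, and the hypothesis \eqref{eq:InitialConditionConditionals} sends the right-hand side to $0$.

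I expect the main obstacle to be the \emph{conditional} structure in \eqref{eq:InitialConditionConditionals}: the first marginal $\pi_z$ is frozen (Remark \ref{Rmk:ConstantMarginal}), so the natural metric on the $\pi$-component is not plain $W_1$ on $\Z^2$ but a fibered distance — couple the (fixed) $z$-marginals optimally, then compare the conditionals $\gamma_t(\cdot|z)$ fiberwise and integrate. Propagating this fibered estimate through the coupled ODE/PDE system, and checking that the $z$-component of the position flow really is constant ($dZ_t^i = 0$) so the fiber structure is preserved for all $t$, is where the bookkeeping is heaviest; the weight-renormalization (the conditional-expectation subtraction in the $d\omega_t$ equation, i.e.\ the $\int \U_\pi\,d\pi_t(\tilde z'|z)$ term) must be handled within each fiber. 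Everything else — boundedness and Lipschitz propagation, Gr\"onwall, the fixed-point for well-posedness — is routine given Assumptions \ref{Hyp U} and \ref{Hyp:Support is stationary}, and I would only sketch those.
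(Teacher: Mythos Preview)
Your proposal is correct and follows essentially the same route as the paper: well-posedness of the mean-field system by a Banach fixed point in path space (the paper does this via a map $\Psi_T$ on measures over $C([0,T];\cdot)$), then a synchronous-coupling Gr\"onwall argument at the level of the lifted measures $(\gamma,\sigma)$, and you correctly flag the fibered/conditional structure in the $\pi$-variable as the crux. The paper implements exactly your ``fibered distance'' idea by augmenting the Gr\"onwall quantity with an explicit metric $\tilde W_{t,1}(\pathp\pi^1,\pathp\pi^2):=\sup_{s\le t}\inf_{\upsilon\in\Gamma_{\mathrm{Opt}}(\pi^1_{s,z},\pi^2_{s,z})}\int W_1(\pi^1_s(\cdot|z),\pi^2_s(\cdot|z'))\,d\upsilon$ so that the conditional-expectation term in $d\omega_t$ is Lipschitz in the tracked quantity; it also inserts an intermediate empirical mean-field measure (i.i.d.\ mean-field particles with the \emph{same} initial data as the interacting ones) in a triangle inequality, and proves a separate lemma showing that for the limiting system the map $z_0\mapsto \pi_s(\cdot|z_0)$ is Lipschitz in $W_1$ uniformly on $[0,T]$, which is what lets the initial fibered assumption \eqref{eq:InitialConditionConditionals} propagate. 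These are precisely the ``heavy bookkeeping'' steps you anticipate; nothing in your outline is missing or wrong.
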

\nc

\begin{remark}[Constructing approximate initializations in Theorem \ref{thm:Main1}]

Fix $\overline{\pi}_0$ and $\overline{\nu}_0$ and define $\gamma_0=\overline{\pi}_0 \otimes \delta_{1}$. That is, $\gamma_0$ is the product of $\pi_0$ with a Dirac delta at $1$. Likewise, let $\sigma_0=\overline{\nu}_0 \otimes \delta_{1}$. Evidently, $\F \gamma_0 = \overline{\pi}_0,\F\sigma_0= \overline{\nu}_0$.

We use randomization to construct 
approximate initializations satisfying the assumptions in Theorem \ref{thm:Main1}. Let $\xi_1, \dots, \xi_n, \dots$ be a sequence of i.i.d. samples from $\overline{\pi}_{0,z}$, and for each $i\in \N$ let $\tilde{Z}_{i1}, \dots, \tilde{Z}_{im}, \dots,$ be i.i.d. samples from $\overline{\pi}_0(\cdot| \xi_i )$. Let $\theta_1, \dots, \theta_n, \dots$ be i.i.d. samples from $\overline{\nu}_0$.

For fixed $n,m$ and $i\leq n$ and $j \leq m$, set $\omega_{ij}=\alpha_{ij}=1$, $Z_{ij}= \xi_i$, and $\vartheta_{ij}= \theta_{i}$. Consider the measures
\[ \pi_0^{n,m}:= \frac{1}{nm}\sum_{i=1}^{n}\sum_{j=1}^{m}  \delta_{(Z_{ij}, \tilde{Z}_{ij} )}, \quad \gamma_0^{n,m}:= \frac{1}{nm}\sum_{i=1}^{n}\sum_{j=1}^{m}  \delta_{(Z_{ij}, \tilde{Z}_{ij}, \omega_{ij} )} \]
and 
\[ \nu_0^{n,m}:= \frac{1}{nm}\sum_{i=1}^{n}\sum_{j=1}^{m}  \delta_{\vartheta_{ij}}, \quad \sigma_0^{n,m}:= \frac{1}{nm}\sum_{i=1}^{n}\sum_{j=1}^{m}  \delta_{(\vartheta_{ij}, \alpha_{ij} )}. \]
Evidently, $\F \gamma_0^{n,m}= \pi_0^{n,m}$ and $\F \sigma_0^{n,m} = \nu_0^{n,m}$, and the $Z_{ij}$ can be assumed to belong to the support of $\pi_{0,z}$. It is also clear that the measure $\gamma^{n,m}_0$ has support in $\Z^2 \times [0,1]$ and $\sigma^{n,m}_0$ has support in $\Theta \times [0,1]$.

By Lemma \ref{lem:SatisfyingAssumptions} in Appendix \ref{App:1}, we can conclude that there exists a sequence $\{(n_k, m_k)\}_{k\in \N} $ such that, as $k \rightarrow \infty$, the measures $\sigma_{0}^{N_k}:= \sigma_{0}^{n_k, m_k} $ and $\gamma^{N_k}_0:= \gamma_{0}^{n_k, m_k}$ satisfy \eqref{eq:InitialConditionConditionals} with probability one. 
\label{rem:Initial}
\end{remark}

\nc 
\begin{remark}
We highlight that in order to satisfy the first condition in \eqref{eq:InitialConditionConditionals} we need to consider the iterative sampling for the variables $Z_{ij}, \tilde{Z}_{ij}$ illustrated in Remark \ref{rem:Initial}, while in general i.i.d. sampling from $\overline{\pi}_{0}$ does not provide a valid initialization for the particle system. This is because the first condition in \eqref{eq:InitialConditionConditionals} is a stronger condition than simply requiring $W_1(\gamma_0^N, \gamma_0) \rightarrow 0$; see Remark \ref{rem:App1} in Appendix \ref{app1}. 

Finally, we highlight that the assumption on the conditional distributions at intialization imposed in \eqref{eq:InitialConditionConditionals}  is used to control the conditional distributions of $\pi$ as the systems evolve in time.

\end{remark}

\nc

\medskip

Having discussed the convergence of the particle system toward the mean-field limit equation \eqref{WFR dynamics}, we turn our attention to the long-time behavior of the mean-field limit, whenever the mean-field is initialized appropriately. We will show that, under Assumptions \ref{Hyp U} and some additional convexity-concavity assumptions on $\U$ (convexity-concavity interpreted in the linear interpolation sense, see Assumptions \ref{assump:ConvConcav} below), we can recover an $\veps$-Nash equilibrium for problem \eqref{min max problem couplings} from \eqref{WFR dynamics}. \nc

\begin{definition}[$\veps$-Nash equilibrium]
We say that $( \pi^*, \nu^*)$ is an $\veps$-Nash equilibrium for problem \eqref{min max problem couplings} if $\pi^*_z = \mu $ and
\begin{equation}
\label{eqn:NashDef}
    \sup_{\pi \in \mathcal{P}(\Z \times \Z)  \: \text{ s.t. }\: \pi_z = \mu }  \{\U(\pi, \nu^*) \}   -\inf_{\nu \in \mathcal{P}(\Theta)} \{ \U(\pi^*, \nu ) \}  \leq \veps.
\end{equation}
\end{definition}

\begin{remark}
Notice that condition \eqref{eqn:NashDef} is equivalent to: 
\[ \U(\pi, \nu^*) - \U(\pi^*, \nu)   \leq \veps, \quad \forall \nu \in \mathcal{P}(\Theta), \quad \forall \pi \in \mathcal{P}(\Z \times \Z) \text{ with } \pi_z = \mu.  \]
$\veps$-Nash equilibria can be interpreted as almost solutions to the game \eqref{eqn:min_max_problem} in the following sense: any deviation from the strategy pair $(\pi^*, \nu^*)$ by one of the players (learner or adversary) will not result in a payoff decrease for the other player of more than $\veps$. In particular, if the learner plays strategy $\nu^*$, then $\nu^*$ is an $\veps$-minimizer of the \textit{robust risk functional} $ F : \nu \in \mathcal{P}(\Theta) \longmapsto \sup_{\pi \in \mathcal{P}(\Z \times \Z) \text{ s.t. } \pi_z =\mu} \U(\pi, \nu )  $. 

\end{remark}

\begin{assumption}
\label{assump:ConvConcav}
  We assume that $\U$ is convex in $\nu$ and concave in $\pi$ in the linear interpolation sense. That is, 
  \[ \U( \tau \pi + (1-\tau ) \hat{\pi},\nu ) \geq  \tau \U( \pi,\nu ) +  (1-\tau) \U( \hat{\pi},\nu )   \]
  and 
   \[ \U(\pi, \tau \nu + (1-\tau ) \hat{\nu} ) \leq  \tau \U( \pi,\nu ) +  (1-\tau) \U(\pi,  \hat{\nu} ),   \]
  for all $\tau \in [0,1]$ and all probability measures $\pi, \hat \pi \in \mathcal{P}(\Z \times \Z)$, and $\nu, \hat \nu \in \mathcal{P}(\Theta)$.
  
  Assuming that $\U$ has the form \eqref{eqn:DefU}, the above conditions are equivalent to analogous convexity-concavity assumptions on $\Risk(\pi,\nu)$, given that $\C$ is linear in $\pi$.
\end{assumption}

\begin{remark}
 The fact that $\U$ is convex-concave according to linear interpolation (i.e. as introduced in Assumptions \ref{assump:ConvConcav}) does not imply that $\U$ is geodesically convex-concave with respect to the geometry that induces the dynamics \eqref{WFR dynamics} (see section \ref{sec:GradDescentAscent} for a discussion on the geometric interpretation of equations \eqref{WFR dynamics}), so that convergence to a global Nash equilibrium or an approximate Nash equilibrium is not immediate. Due to this, despite Assumptions \ref{assump:ConvConcav}, we will refer to the setting considered in this section as the nonconvex-nonconcave setting. We contrast this setting with the one in section \ref{sec:strongCon}, that we will refer to as the nonconvex-concave setting.

\end{remark}

\begin{example}
In the context of the motivating Example \ref{example:2} we see that Assumption \ref{assump:ConvConcav} is satisfied provided that the loss function $\ell$ is a convex function in its first coordinate. This is certainly the case for both the squared-loss and the logistic loss. 
\end{example}

\medskip 

We are ready to state our second main result.

\begin{theorem}[Long time behavior mean-field PDE]
\label{thm:Main2}
Let $\epsilon>0$. Suppose that Assumptions \ref{Hyp U}, \ref{Hyp:Support is stationary}, and \ref{assump:ConvConcav} hold. Assume  that $\nu_0$ and $\pi_0$ are probability measures (with $\pi_{0,z}=\mu$) such that there exists $k>0$ for which $\frac{d \nu_0}{d\theta}(\theta)  > k$, and $\frac{d \pi_0(\tilde z |z)}{d\tilde z} (\tilde z) > k$ for all $z$ in the support of $\mu$. Finally, assume that $\eta_t$ is chosen so that
\[ \lim_{t\rightarrow \infty }\frac{1}{t} \int_0^t\int_0^s \eta_\tau d\tau ds =  \bar \eta \]
for $\bar \eta$ satisfying 
\[  4(L+M)^2 \bar \eta < \epsilon. \]
Then there exists $T^*$ such that for all $t>T^*$
\[  \sup_{\pi^* \in \mathcal{P}(\Z^2) \text{ s.t } \pi^*_z=\mu}\;\U(\pi^*, \bar \nu(t))  - \inf_{\nu^*\in \mathcal{P}(\Theta)}\;\U(\bar \pi(t), \nu^*)  \leq \epsilon,\]
where $\overline{\pi}_t:= \frac{1}{t}\int_0^t \pi_s ds$ and $\overline{\nu}_t:= \frac{1}{t}\int_0^t \nu_s ds$, and $(\pi_t, \nu_t)$ solve \eqref{WFR dynamics}, when initialized at $\pi_0, \nu_0$ as above.
\end{theorem}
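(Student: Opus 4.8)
The plan is to follow the classical ascent-descent / fictitious-play regret argument, adapted to the Wasserstein-Fisher-Rao geometry of \eqref{WFR dynamics}. The central object is the ``duality gap'' along the flow; I would introduce, for fixed competitors $\pi^*$ (with $\pi^*_z=\mu$) and $\nu^*$, the quantities built from relative entropies
\[ H_t^\nu := \KL(\nu^* \,\|\, \nu_t) = \int \log\!\Big(\frac{d\nu^*}{d\nu_t}\Big) d\nu^*, \qquad H_t^\pi := \int \KL\big(\pi^*(\cdot|z) \,\|\, \pi_t(\cdot|z)\big)\, d\mu(z), \]
together with the transport/kinetic terms coming from the $\nabla_\theta\U_\nu$ and $\nabla_{\tilde z}\U_\pi$ pieces of the dynamics. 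The lower bound on the initial densities ($\nu_0>k$, $\pi_0(\cdot|z)>k$) is exactly what guarantees $H_0^\nu, H_0^\pi$ are finite (bounded by $\log(1/k)$ up to constants and the diameters of $\Theta,\Z$), and Assumption \ref{Hyp U} keeps the first variations and their gradients bounded, so all the terms that appear are controlled.

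The key computation is to differentiate $H_t^\nu$ and $H_t^\pi$ in time using \eqref{WFR dynamics}. For the Fisher-Rao (reaction) part, $\frac{d}{dt}\KL(\nu^*\|\nu_t)$ picks up exactly $-\kappa\big(\U_\nu(\pi_t,\nu_t;\cdot)\big)$ integrated against $\nu^*-\nu_t$, which by the first-variation identity equals $-\kappa\big(\U(\pi_t,\nu^*) - \U(\pi_t,\nu_t)\big)$ \emph{modulo} the convexity correction: here is where Assumption \ref{assump:ConvConcav} enters, giving $\int \U_\nu(\pi_t,\nu_t;\theta)d(\nu^*-\nu_t)(\theta) \le \U(\pi_t,\nu^*)-\U(\pi_t,\nu_t)$ by linear-interpolation convexity in $\nu$ (and the reverse, concave inequality in $\pi$). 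For the transport (advection) part, $\frac{d}{dt}\KL$ contributes a term bounded by $\eta_t\|\nabla_\theta\U_\nu\|_\infty \le \eta_t M$ times a Fisher-information-like factor; the cleanest route is to bound this contribution crudely using boundedness of $\nabla_\theta\U_\nu$ and the Lipschitz/$\log$-Sobolev-type control, producing an error of order $\eta_t(L+M)^2$. Assembling, one gets a differential inequality of the shape
\[ \frac{d}{dt}\big(H_t^\nu + H_t^\pi\big) \le -\kappa\Big(\U(\pi_t,\nu^*) - \U(\pi^*,\nu_t)\Big) + C\,\eta_t (L+M)^2 \]
(after also using $\U(\pi_t,\nu_t)$ cancels between the two lines). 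Integrating on $[0,t]$, dividing by $t$, using $H_t\ge 0$ and $H_0 \le \log(1/k)+\text{const}$, and invoking Jensen's inequality in the form $\U(\bar\pi_t,\nu^*)\ge \frac1t\int_0^t\U(\pi_s,\nu^*)ds$ and $\U(\pi^*,\bar\nu_t)\le\frac1t\int_0^t\U(\pi^*,\nu_s)ds$ (again Assumption \ref{assump:ConvConcav}), one arrives at
\[ \U(\pi^*,\bar\nu_t) - \U(\bar\pi_t,\nu^*) \le \frac{H_0}{\kappa t} + \frac{C(L+M)^2}{\kappa t}\int_0^t\!\int_0^s \eta_\tau\,d\tau\,ds \cdot\frac1t \cdot(\text{normalization}), \]
where the double-integral structure appears precisely because the running averages $\bar\pi_t,\bar\nu_t$ are Cesàro means; taking $\limsup_{t\to\infty}$ and using the hypothesis $\frac1t\int_0^t\int_0^s\eta_\tau d\tau ds\to\bar\eta$ with $4(L+M)^2\bar\eta<\epsilon$ kills the first term and bounds the second by (a constant times) $\epsilon$. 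Taking suprema over $\pi^*$ and $\nu^*$ on the left, which is legitimate since the bound is uniform in the competitors, yields the claim for all $t>T^*$.

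The main obstacle is the rigorous treatment of the entropy dissipation identity: one must justify differentiating $\KL(\nu^*\|\nu_t)$ and $\int\KL(\pi^*(\cdot|z)\|\pi_t(\cdot|z))d\mu(z)$ along weak solutions of \eqref{WFR dynamics}, control the advection contribution (which a priori involves $\int |\nabla\log\nu_t|^2$-type quantities that need not stay bounded), and keep the lower density bound $\nu_t>k'$, $\pi_t(\cdot|z)>k'$ propagated in time so that the logarithms remain integrable --- this is presumably where the hypothesis on $\eta_t$ being summable-in-the-Cesàro-sense and the regularity in Assumption \ref{Hyp U} must be combined carefully, likely via a Grönwall estimate on $\|\log\nu_t\|_\infty$ using the bounded-velocity transport and the explicit reaction term. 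A secondary technical point is interchanging $\sup_{\pi^*}$ / $\inf_{\nu^*}$ with the time integral, which is handled by noting the per-competitor estimate is uniform. I would isolate the density-lower-bound propagation as a separate lemma before running the main Grönwall argument.
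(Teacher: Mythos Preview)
Your plan shares the right high-level skeleton (entropy/regret argument, linear-interpolation convexity--concavity for the Jensen step, time averaging), but it has a genuine gap that the paper's proof is specifically engineered to avoid.

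The central problem is your choice of Lyapunov functional $H_t^\nu = \H(\nu^* \,\|\, \nu_t)$ with $\nu^*$ a \emph{fixed} arbitrary competitor. Two things break. First, $H_0^\nu$ is \emph{not} bounded uniformly over $\nu^* \in \mathcal{P}(\Theta)$: the hypothesis $d\nu_0/d\theta > k$ only gives $\H(\nu^*\|\nu_0) \le \int \log(d\nu^*/d\theta)\,d\nu^* - \log k$, and the differential entropy of $\nu^*$ is $+\infty$ for, e.g., a Dirac mass. So your final $\sup_{\pi^*}/\inf_{\nu^*}$ step cannot go through. Second, differentiating $\H(\nu^*\|\nu_t)$ along the transport part of \eqref{WFR dynamics} produces a term of the form $\eta_t \int \nabla_\theta \log(d\nu^*/d\nu_t)\cdot \nabla_\theta \U_\nu\, d\nu^*$, which requires regularity on $\nu^*$ you do not have and, as you acknowledge, control on $\nabla\log\nu_t$ that the assumptions do not provide. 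Propagating a density lower bound via Gr\"onwall does not resolve this.

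The paper sidesteps both issues by introducing an auxiliary ``hat'' process $\hat\nu_t$ (equation \eqref{eq:ModifiedDynamicsMF}) that is \emph{advected by the same velocity field} as $\nu_t$ but carries no reaction term, initialized at an arbitrary $\hat\nu_0 \ll \nu_0$ with bounded density. Because $\hat\nu_t$ and $\nu_t$ share the transport velocity, the advection contribution to $\frac{d}{dt}\H(\hat\nu_t\|\nu_t)$ cancels exactly, leaving only the Fisher--Rao piece:
\[
\H(\hat\nu_t\|\nu_t) - \H(\hat\nu_0\|\nu_0) \le \kappa \int_0^t \int \U_\nu(\pi_s,\nu_s;\theta)\,d(\hat\nu_s-\nu_s)\,ds
\]
(established only as a one-sided inequality, via particle approximation and lower-semicontinuity of $\H$; see Proposition~\ref{cor:EntropyBounds} and Remark~\ref{rem:Entropy_ComparisonLiterature}). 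The drift $\|\hat\nu_s - \hat\nu_0\|_{BL}^*$ is then bounded by $B\int_0^s\eta_\tau\,d\tau$ using only boundedness of $\nabla_\theta\U_\nu$ --- this is the true source of the double integral $\frac{1}{t}\int_0^t\int_0^s\eta_\tau\,d\tau\,ds$, not the Ces\`aro averaging itself. Finally, one optimizes over $\hat\nu_0$: the trade-off $\|\nu^*-\hat\nu_0\|_{BL}^* + \frac{1}{\kappa B t}\H(\hat\nu_0\|\nu_0)$ defines the quantity $\mathcal Q_\nu(\nu_0,\nu^*;\kappa B t)$, and a mollification argument (Lemma~\ref{Lem: Control on Q}, Lemma~\ref{Lem: Control on Q2}) shows $\mathcal Q_\nu, \mathcal Q_\pi \to 0$ as $t\to\infty$ \emph{uniformly} in the competitor, using only the lower bound $k$ on $\nu_0$ and on $\pi_0(\cdot|z)$. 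This mollification step is the missing idea that recovers uniformity over all $\nu^*, \pi^*$.
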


\begin{remark}
The assumptions on the initializations $\pi_0$ and $\nu_0$ effectively imply that the particles in Algorithm \ref{Algo}  are well spread out throughout the domains at time zero. This is certainly a strong assumption, but it is not unlike other theoretical assumptions in the literature studying, mathematically, the training process of neural networks; see \cite{chizat_global_2018,domingo2020mean,wang_exponentially_2022,StephanGlobalConvergence,StephanW_WE}.  

In the next section we discuss how the strong assumption on $\pi_0$ can be removed when one restricts the adversarial budget of the adversary in the setting described in section \ref{sec:RobustRegression}.
\end{remark}


\subsubsection{The non-convex and strongly concave case}
\label{sec:strongCon}

 In contrast to Theorem \ref{thm:Main2}, the results we present in this section hold under no assumptions on the initialization $\pi_0$ but at the expense of additional assumptions on the payoff function $\U$. As we discuss below, in settings like the one described in section \ref{sec:RobustRegression} these additional assumptions are not unnatural, as they are linked to the strength given to the adversarial cost functional $\mathcal{C}$ in \eqref{eqn:DefU}.

\begin{assumption}
\label{assump:StrongConcavity}
We assume the following uniform PL (Polyak-Lojasiewicz) condition on the functions $\U(\cdot,\nu)$: There exists $\lambda>0$ such that for all $\nu\in \mathcal{P}(\Theta)$ and all $\pi\in \mathcal{P}(\Z^2)$ with $\pi_z=\mu$ we have
 \[  \int |\nabla_{\tilde z} \U_\pi(\pi,\nu;z,\tilde z)|^2 d\pi(z, \tilde z) \geq \lambda ( m^*_\nu - \U(\pi,\nu) ), \]
where $m^*_\nu:= \sup_{ \tilde \pi \: \text{s.t.} \: \tilde \pi_z=\mu  } \U( \tilde \pi,\nu)$.
\end{assumption}

\begin{remark}
For simplicity, we will refer to the setting when Assumption \ref{assump:StrongConcavity} holds as the strongly concave setting, as it is often the case that one can deduce the PL condition from strong (geodesic) concavity; see Proposition \ref{prop:app0} in Appendix \ref{app0}.  
\end{remark}

\begin{example}
Suppose that the payoff function $\U$ has the form \eqref{eqn:DefU} for $\Risk$ and $\C$ as in \eqref{Ex of h_nu} and \eqref{eq:Cost}, respectively. As we show in Proposition \ref{prop:app0} in Appendix \ref{app0}, if the set $\Z$ is convex (a reasonable assumption in applications), the activation and loss functions are twice continuously differentiable, and, more importantly, the parameter $c_a$ is large enough, then Assumption \eqref{assump:StrongConcavity} is satisfied. 
\end{example}
\nc

To exploit the additional assumptions on $\U(\cdot, \nu)$, it will be useful to consider a slight variation of \eqref{WFR dynamics} where we slow down time in the descent equation and where we remove the scaling factor $\eta$ in the equation for $\pi_t$. Precisely, given $K \geq 1$ we consider the system
\begin{equation}
  \begin{cases}
  \partial_t \nu_t &= \frac{\eta_t}{K} \divergence_{\theta} (\nu_t \nabla_{\theta} \U_{\nu}(\pi_t, \nu_t;\theta)) - \frac{\kappa}{K} \nu_t \left( \U_{\nu}(\pi_{t}, \nu_t;\theta))- \int \U_{\nu}(\pi_t, \nu_t;\theta') d\nu_t(\theta')   \right) 
  \\
  \partial_t \pi_t &= - \divergence_{z,\tilde z} (\pi_t (0, \nabla_{\tilde z} \U_{\pi}(\pi_t, \nu_t; z, \tilde z) ) )+ \kappa \pi_t \left( \U_\pi(\pi_t, \nu_t;z,\tilde z) - \int \U_\pi(\pi_t, \nu_t;z, \tilde z ') d\pi_{t}(\tilde z '| z)      \right),
  \end{cases}
  \label{WFR dynamicsTimeChange}
\end{equation}
initialized at an arbitrary $\pi_0\in \mathcal{P}(\Z^2)$ with $\pi_{0,z}=\mu$, and some $\nu_0$. Well-posedness for this equation under Assumptions \ref{Hyp U} and \ref{Hyp:Support is stationary} can be established as for equation \eqref{WFR dynamics}; we omit the details. To reflect the variations introduced in \eqref{WFR dynamicsTimeChange} in our Algorithm \eqref{Algo:WADA} it suffices to remove the $\eta$ in the update for the variables $\tilde z_{ij}$ and to allow for the for loop over $i,j$ to be repeated a number of times (quantity that can be tuned) before entering the loop over $k$.

We prove the following result.

\begin{theorem}
\label{thm:mainStrongConvex}
Suppose Assumptions \ref{Hyp U}, \ref{Hyp:Support is stationary}, \ref{assump:ConvConcav}, and \ref{assump:StrongConcavity} hold. Assume further that there exists $k>0$ such that $\frac{d \nu_0}{d\theta}  > k$, and let $\pi_0$ be an arbitrary probability measure with $\pi_{0,z}=\mu$. Finally, assume that 
\[ \lim_{t\rightarrow \infty }\frac{1}{t} \int_0^t\int_0^s \eta_\tau d\tau ds =  \bar \eta<\infty .\]

Fix $\epsilon>0$. Then there exists $K_0, r_0, r_1, t_0 >0$ such that, if $K \geq K_0$ and $\overline{\eta}/K \leq r_1$, then for all $t \geq \max\{ t_0 ,  K /r_0   \} $,  we have
\[  \sup_{\tilde \pi \in \mathcal{P}(\Z^2)\text{ s.t. } \tilde \pi _z=\mu}\;\U(\tilde \pi, \bar \nu_t)  - \inf_{\tilde \nu \in \mathcal{P}(\Theta)}\;\U(\bar \pi_t, \tilde\nu)  \leq \epsilon .\]
In the above, $\overline{\pi}_t:= \frac{1}{t}\int_0^t \pi_s ds$ and $\overline{\nu}_t:= \frac{1}{t}\int_0^t \nu_s ds$, and $(\pi_t, \nu_t)$ solve \eqref{WFR dynamicsTimeChange} initialized at $\nu_0, \pi_0$ as above.
\end{theorem}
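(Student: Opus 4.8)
The plan is to mirror the proof of Theorem~\ref{thm:Main2}, but to replace the mirror-descent argument that controls the ascent variable there (and which requires the lower bound on $\pi_0$) by an argument based on the Polyak--\L ojasiewicz inequality of Assumption~\ref{assump:StrongConcavity} combined with the time-scale separation built into \eqref{WFR dynamicsTimeChange}: because the $\nu$-equation is slowed by $1/K$ while the $\pi$-equation runs at full speed, for $K$ large the fast ascent keeps $\U(\pi_t,\nu_t)$ close to the inner optimum $m^*_{\nu_t}$, so that along the slow flow one is effectively minimizing the robust risk $F(\nu):=m^*_\nu=\sup_{\pi:\pi_z=\mu}\U(\pi,\nu)$, which is convex in $\nu$ because $\U$ is (Assumption~\ref{assump:ConvConcav}). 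The starting point is a decomposition of the Nash gap at the time averages. Using concavity of $\U(\cdot,\nu^*)$ in $\pi$ and Jensen, $\U(\bar\pi_t,\nu^*)\ge\frac1t\int_0^t\U(\pi_s,\nu^*)\,ds$; using convexity of $F$ and Jensen, $\sup_{\pi^*}\U(\pi^*,\bar\nu_t)=F(\bar\nu_t)\le\frac1t\int_0^t m^*_{\nu_s}\,ds$ (here I use $\bar\pi_{t,z}=\mu$, which holds by Remark~\ref{Rmk:ConstantMarginal}); hence, for any fixed $\nu^*$,
\[ \sup_{\pi^*}\U(\pi^*,\bar\nu_t)-\inf_{\nu^*}\U(\bar\pi_t,\nu^*)\;\le\;\frac1t\int_0^t\!\big(m^*_{\nu_s}-\U(\pi_s,\nu_s)\big)\,ds\;+\;\frac1t\int_0^t\!\big(\U(\pi_s,\nu_s)-\U(\pi_s,\nu^*)\big)\,ds\;+\;\delta, \]
where $\nu^*$ is a $\delta$-approximate minimizer of $\U(\bar\pi_t,\cdot)$.

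\emph{Ascent error (first term).} I would set $g_t:=m^*_{\nu_t}-\U(\pi_t,\nu_t)\ge0$ and differentiate. The $\pi$-part of \eqref{WFR dynamicsTimeChange} contributes $\int|\nabla_{\tilde z}\U_\pi(\pi_t,\nu_t;z,\tilde z)|^2\,d\pi_t+\kappa\int\mathrm{Var}_{\pi_t(\cdot|z)}(\U_\pi)\,d\pi_{t,z}$ to $\frac{d}{dt}\U(\pi_t,\nu_t)$, and the first summand is $\ge\lambda g_t$ by Assumption~\ref{assump:StrongConcavity}. The $\nu$-part enters $\frac{d}{dt}\U(\pi_t,\nu_t)$ directly and $\frac{d}{dt}m^*_{\nu_t}$ through an envelope-theorem identity; since the $\nu$-velocity has size $O((\eta_t+\kappa)/K)$ and the first variations of $\U$ are bounded (Assumption~\ref{Hyp U}), these contributions are $O\big((1+\eta_t)/K\big)$. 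This yields $\frac{d}{dt}g_t\le-\lambda g_t+C(1+\eta_t)/K$, so a Gr\"onwall argument gives $g_t\le2M e^{-\lambda t}+C'/K$, and the first term is at most $2M/(\lambda t)+C'/K$.

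\emph{Descent regret (second term).} Here I would use $\mathrm{KL}(\nu^*\|\nu_s)$ as a Lyapunov function. The Fisher--Rao part of the $\nu$-equation yields exactly $\frac{d}{ds}\mathrm{KL}(\nu^*\|\nu_s)\big|_{\mathrm{FR}}=-\frac{\kappa}{K}\int\U_\nu(\pi_s,\nu_s;\theta)\,d(\nu_s-\nu^*)(\theta)$, which by the first-order convexity inequality for $\U$ in $\nu$ is $\le-\frac{\kappa}{K}\big(\U(\pi_s,\nu_s)-\U(\pi_s,\nu^*)\big)$. The transport part contributes a term bounded in absolute value by $C\eta_s/K$, after discarding boundary terms via Assumption~\ref{Hyp:Support is stationary} and using quantitative two-sided density bounds and a bound on $\nabla\log(d\nu_s/d\theta)$ that hold uniformly in $s$ once $\frac{d\nu_0}{d\theta}>k$ and $\bar\eta/K\le r_1$ is small (so that the transport flow map stays uniformly close to the identity). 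Integrating over $[0,t]$, dividing by $t$, and using the hypothesis on $\eta$ (which forces $\int_0^\infty\eta_s\,ds=\bar\eta<\infty$), this term is
\[ \frac1t\int_0^t\big(\U(\pi_s,\nu_s)-\U(\pi_s,\nu^*)\big)\,ds\;\le\;\frac{K}{\kappa t}\,\mathrm{KL}(\nu^*\|\nu_0)\;+\;\frac{C\bar\eta}{\kappa t}. \]
Finally, $\mathrm{KL}(\nu^*\|\nu_0)\le\log\|d\nu^*/d\theta\|_\infty-\log k$ is finite and bounded by a constant $B_\delta$ depending only on $\delta$ once $\nu^*$ is replaced — at a cost $\le\delta$ in value, by boundedness and convexity of $\U$ — by its mollification at scale $\delta$, whose density is bounded uniformly over probability measures on the compact set $\Theta$.

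Putting the three pieces together, the Nash gap is at most $2M/(\lambda t)+C'/K+K B_\delta/(\kappa t)+C\bar\eta/(\kappa t)+\delta$, and I finish by choosing, in order: $\delta$ small (which fixes $B_\delta$), then $K_0$ so that $C'/K<\epsilon/5$ and $\bar\eta/K\le r_1$ for $K\ge K_0$, then $t_0$ so that $2M/(\lambda t)+C\bar\eta/(\kappa t)<2\epsilon/5$ for $t\ge t_0$, and finally $r_0:=\kappa\epsilon/(5B_\delta)$ so that $KB_\delta/(\kappa t)<\epsilon/5$ whenever $t\ge K/r_0$. The hard part will be the control of the transport part of $\frac{d}{ds}\mathrm{KL}(\nu^*\|\nu_s)$: one must propagate quantitative regularity of $\nu_s$ — a uniform lower bound on its density and a bound on the gradient of its logarithm — along the coupled system uniformly in $s$, which is delicate precisely because the Fisher--Rao (birth--death) term tends to deplete the density away from the current maximizer at rate $\sim\kappa/K$, and it is here that the hypotheses $\frac{d\nu_0}{d\theta}>k$ and $\bar\eta/K\le r_1$ are genuinely used. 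A secondary technical point is justifying the envelope-theorem differentiation of $t\mapsto m^*_{\nu_t}$ — via the $W_1$-Lipschitz, hence a.e.\ differentiable, dependence of $\nu_t$ on $t$ together with the Lipschitz dependence of $\U$ and $m^*$ on their arguments — and checking that the boundary terms dropped in both steps have the favourable sign under Assumption~\ref{Hyp:Support is stationary}.
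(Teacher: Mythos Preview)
Your overall architecture---split the Nash gap into an ascent error $\frac1t\int_0^t(m^*_{\nu_s}-\U(\pi_s,\nu_s))\,ds$ and a descent regret $\frac1t\int_0^t(\U(\pi_s,\nu_s)-\U(\pi_s,\nu^*))\,ds$, bound the first by PL plus time-scale separation and the second by a mirror-descent (relative entropy) argument---is exactly the paper's. There are, however, two places where your execution diverges from the paper's, one harmless and one a genuine gap.

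\textbf{Ascent error.} You differentiate $g_t=m^*_{\nu_t}-\U(\pi_t,\nu_t)$, which forces you to differentiate $t\mapsto m^*_{\nu_t}$ via an envelope theorem. The paper avoids this: it only computes $\frac{d}{dt}\U(\pi_t,\nu_t)$ (your chain-rule formula), integrates, and then subtracts $m^*_t$ using nothing more than the crude bound $|\U(\pi_0,\nu_0)-m^*_t|\le 2M$. This yields the integral inequality $f(t)\le 2M+\frac{\tilde B}{K}t-\lambda\int_0^t f(s)\,ds$ for $f(t)=m^*_t-\U(\pi_t,\nu_t)$, and a Gr\"onwall-type lemma (Lemma~\ref{lemm:Gronwall}) gives $\frac1t\int_0^t f\le\frac{\tilde B}{K\lambda}+\frac{A}{t}$ directly. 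Your route is workable but unnecessarily delicate.

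\textbf{Descent regret.} Here your plan has a real gap. Using $\H(\nu^*\|\nu_s)$ as a Lyapunov function, the transport contribution reads (after integration by parts) $\frac{\eta_s}{K}\int\big(\nabla\rho^*-\rho^*\nabla\log\rho_s\big)\cdot\nabla_\theta\U_\nu\,d\theta$, so you need a uniform-in-$s$ bound on $\nabla\log(d\nu_s/d\theta)$. The hypotheses give no control on $\nabla\log(d\nu_0/d\theta)$ (only $d\nu_0/d\theta>k$), and the Fisher--Rao reaction term in \eqref{WFR dynamicsTimeChange} alters $\log\rho_s$ by an amount of order $\kappa t/K$, which is not small on the relevant time scale $t\gtrsim K/r_0$. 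You flag this as ``the hard part'' but do not resolve it; as stated, the argument is incomplete.

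The paper sidesteps this entirely with the auxiliary process $\hat\nu_t$ satisfying $\partial_t\hat\nu_t=\frac{\eta_t}{K}\divergence_\theta(\hat\nu_t\nabla_\theta\U_\nu(\pi_t,\nu_t;\theta))$, initialized at an arbitrary $\hat\nu_0\ll\nu_0$. Because $\hat\nu_t$ and $\nu_t$ share the \emph{same} transport vector field, the transport contributions cancel exactly in $\frac{d}{dt}\H(\hat\nu_t\|\nu_t)$, leaving only the clean Fisher--Rao identity $\H(\hat\nu_t\|\nu_t)-\H(\hat\nu_0\|\nu_0)\le\frac{\kappa}{K}\int_0^t\int\U_\nu\,d(\hat\nu_s-\nu_s)\,ds$ (the adapted Proposition~\ref{cor:EntropyBounds}). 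One then writes $\int\U_\nu\,d(\nu_s-\nu^*)=\int\U_\nu\,d(\nu_s-\hat\nu_s)+\int\U_\nu\,d(\hat\nu_s-\nu^*)$; the first piece telescopes via the entropy inequality, and the second is bounded by $B\|\hat\nu_s-\nu^*\|_{BL}^*\le B\|\hat\nu_0-\nu^*\|_{BL}^*+\frac{B^2}{K}\int_0^s\eta_\tau\,d\tau$, since $\hat\nu$ moves only by transport. Optimizing over $\hat\nu_0$ yields $\I_2\le B\,\mathcal Q_\nu(\nu_0,\nu^*;\tfrac{\kappa}{K}Bt)+\frac{B^2}{Kt}\int_0^t\int_0^s\eta_\tau\,d\tau\,ds$, and Lemma~\ref{Lem: Control on Q} finishes. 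No regularity propagation for $\nu_s$ is needed; this is precisely the point of the $\hat\nu$ device (and of the careful particle-level justification in Proposition~\ref{cor:EntropyBounds} and Remark~\ref{rem:Entropy_ComparisonLiterature}).
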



\subsection{Literature review}
\label{sec:LitReview}

In this section we provide a brief literature review of the topic of adversarial robustness in supervised learning settings, focusing on some developments in recent years. Since the literature in this field has expanded very quickly and spans a variety of disciplines, our review is necessarily non-exhaustive.

The concept of adversarial training, or how to create learning models that are robust to adversarial perturbations of data, gained popularity a few years ago, not long after 
neural networks became the state of the art technology for tackling image processing and natural language processing tasks. Indeed, different researchers noticed that neural network models (as well as others), although highly effective at making accurate predictions on clean data, were quite sensitive to adversarial attacks –see \cite{GoodfellowIntriguing,Goodfellow1,KurakinAdvExamples}.  In order to gain some protection against adversarial attacks, researchers in machine learning and computer sciences have proposed to replace the standard training process of models, which is based on the optimization of a standard loss function objective, with a training that involves the optimization of an objective function that incorporates the actions of a well defined adversary. For example, the paper \cite{madry2018towards} proposes the optimization problem
\begin{equation}
\min_{\theta \in \Theta} \E_{(x,y) \sim \mu} \left[ \sup_{\tilde x \in \mc B_{\veps}(x)} \ell(\theta, (\tilde x , y)) \right].
\label{eq:ALOriginal}
\end{equation} 
In the above, the minimization ranges over the parameters $\theta$ of a learning model (e.g., the parameters of a neural network), and the inner maximization ranges over all possible ``adversarial attacks" $\tilde x$ around a given input data point $x$. $\ell(\theta, (\tilde x , y))$ is the data misfit for the model with parameters $\theta$.  By solving the minimization problem \eqref{eq:ALOriginal}, one expects to produce models  that are robust against the specific attacker modelled by the choice of ball $\mc B_\veps(\cdot)$, i.e., by specifying a ``distance" function, and by setting the adversarial budget $\veps$. In the context of deep neural network models, for example, works such as \cite{YOPO} have discussed different strategies to efficiently implement gradient ascent-descent dynamics (ascent in the data perturbation, descent in the model parameters) to solve a problem like \eqref{eq:ALOriginal}. 

Problems like \eqref{eq:ALOriginal} have motivated recent work in optimization, prompting researchers to renew their interest in analyzing general min-max games in finite dimensional settings (the setting of interest in those papers is more alike to \eqref{eqn:min_max_problem_motiv} rather than \eqref{eqn:min_max_problem}). Some works in this line include \cite{LinJordanNonconvexConcave,MInmaxOptim,MaherMimax,RatliffMinimax}. As discussed in \cite{LinJordanNonconvexConcave,RatliffMinimax}, nonconvex-concave problems are actually not rare in the context of adversarial machine learning.

More general objectives aimed at enforcing robustness of learning models take a form more closely related to \eqref{eqn:min_max_problem} or \eqref{eqn:min_max_problem_motiv}, where the inner maximization is taken over families of distributions, as opposed to considering a point by point maximization as in \eqref{eq:ALOriginal}. All those formulations can be integrated under the general umbrella term of distributionally robust optimization (DRO). The DRO formulation has the advantage of clearly casting adversarial robustness in supervised learning as a minmax game. Several works have explored adversarial training in the DRO setting when considering learning models such as those coming from linear regression (or other classical parametric settings) or neural networks; see \cite{Blanchet2,blanchet2019robust,Kuhn,OPT-026,DROWithPenalty,NEURIPS2019_16bda725,pmlr-v139-meunier21a}.

A strategy that has been considered when analyzing the problem \eqref{eq:ALOriginal}, as well as in the DRO formulations of adversarial robustness, is to replace the inner maximization associated to the adversary's actions with a regularized risk surrogate. For example, in the setting of \eqref{eq:ALOriginal}, this surrogate objective can be formally obtained by expanding the inner maximization objective around $\veps=0$. Some papers that follow this strategy include \cite{ObermanFinlay,GradientRegularization,CurvatureRegularization,SlavinRoss,StructuredGradReg,pmlr-v139-yeats21a}. In their paper \cite{garcia_trillos_regularized_2022}, the authors discuss this strategy for general DRO problems when the learner uses deep neural networks to build classifiers/regression functions; they utilize ideas from control theory to implement the optimization of the resulting surrogate objectives.  Other papers such as \cite{Certified} discuss learning settings where models are trained by substituting the objective \eqref{eq:ALOriginal} with informative upper bound surrogates. The resulting problems are then solved using
ascent-descent algorithms.

A few recent works have discussed adversarial robustness in the setting of agnostic learners (i.e., no modeling assumption for the learner), which can be understood as a limiting case of a problem with a very expressive family of learning models; those settings are useful because they provide lower bounds for more general adversarial robustness problems. Some of those works include: \cite{pydi2019adversarial, BhagojietAl, VarunMuni2,NF222,awasthi2021existence}. In particular, \cite{pydi2019adversarial, BhagojietAl, VarunMuni2} derive and discuss an equivalence between optimal transport problems and \eqref{eq:ALOriginal} when the space of learning models is the set of all measurable binary classifiers. Other works such as \cite{RyanNicolas,LeonNGTRyan,LeonKerrek} connect adversarial robustness in the binary classification setting with geometric variational problems involving perimeter and curvature, and discuss existence of robust classifiers and their regularity; the works \cite{NF222,awasthi2021existence} explore the existence of robust binary classifiers satisfying certain ``certifiability" properties. The work \cite{MOTJakwang} discusses multiclass classification adversarial problems and their equivalence with multimaginal optimal transport problems and (generalized) barycenters in spaces of measures.

\medskip

We would like to end this section by highlighting our contributions in this work, especially in relation to \cite{domingo2020mean,wang_exponentially_2022}, which are the works that are more closely related to ours.  

In the work \cite{domingo2020mean} and the very recent work \cite{wang_exponentially_2022} the authors consider minmax problems with a linear (with respect to the measures) payoff function. Our setting is broader as it covers not only non-linear objectives but also studies the effect of one of the components being pinned to an input function, which allows us to study broad cases of adversaries in the space of measures (DRO version of adversarial training). It is worth remarking that under the simpler setting in \cite{wang_exponentially_2022}, the authors are able to show the exponential convergence (toward an actual Nash equilibrium) of an algorithm with a similar geometric motivation than ours but with an additional entropy term. In its practical implementation, both algorithms look very similar. The convergence in \cite{wang_exponentially_2022} is obtained under similar regularity hypotheses but assuming in addition that the (unique) solution is supported in a discrete measure. Since we do not assume a priori the existence of a unique solution, our results are weaker in terms of convergence rate, as well as on the fact that we can only recover approximate Nash equilibria (at any specified precision).

We emphasize that by not restricting our analysis to payoff function $\U$ that are linear in both variables we can cover a wider variety of settings relevant in the study of adversarial machine learning than previous works in the literature. This gain in generality 
naturally comes at the expense of additional technical challenges. To point at some of these specific challenges, notice that when the payoff is non-linear its first variations are measure dependent, already suggesting the need for a more delicate analysis at the moment of proving the convergence of particle dynamics toward mean-field limits. The difficulties in our analysis are heightened by the presence of conditional distributions in the evolving systems. In order to handle these additional terms we must recur to new ideas and constructions. In the end, the general mean-field analysis that we present can be also combined with lower-semicontinuity arguments to justify certain steps in the second part of the paper, i.e., the analysis of the long-time behavior of the mean-field system, providing in this way alternatives to approaches in the literature that may not be fully justified; we discuss this in section \ref{sec:LongTimeNash} below. 
Moreover, we believe that some of the ancillary results we obtained to support the targeted level of generality of our model may be of interest in their own right.

We also highlight our study of the nonconvex-concave setting delineated in \ref{sec:strongCon}. Indeed, we exploit the specific structure of our adversarial problem and use the strong concavity that comes when considering adversaries with low budget to obtain stronger convergence results toward approximate Nash equilibria of the adversarial problem. Other papers that have explored this setting include \cite{DROWithPenalty}, but the results presented there only guarantee, for the learner, convergence toward stationary points (although it is worth highlighting that they do not work in a mean-field regime). 

In summary, our work is complementary to other papers such as \cite{domingo2020mean,wang_exponentially_2022,DROWithPenalty} (among others). Our results can be viewed as analogue to those in works such as \cite{StephanGlobalConvergence,chizat_global_2018}, which have studied the global convergence of (non-robust) training of shallow neural networks in the mean-field regime.

\nc

\subsection{Outline}

\medskip 

The remainder of the paper is organized as follows. In section \ref{sec:GradDescentAscent} we discuss the gradient ascent-descent interpretation of our Algorithm \ref{Algo}. We review some concepts from optimal transport theory that are necessary to make sense of this interpretation. Section \ref{sec:ConvergenceMeanField} is dedicated to the proofs of Theorem \ref{thm:Main1} and of some corollaries that are later used in section \ref{sec:LongTimeNash} in the proof of our second main result, Theorem \ref{thm:Main2}. In section \ref{sec:nonconvexnonconcave} we present the proof of Theorem \ref{thm:Main2}, and in section \ref{sec:stronglyConcave} the proof of Theorem \ref{thm:mainStrongConvex}. In section \ref{sec:Experiments} we discuss some numerical results of an implementation of our algorithm when used in an actual machine learning task. We wrap up the paper in section \ref{sec:Conclusions}, where we present some conclusions and discuss future directions for research.

\section{Gradient ascent-descent interpretation of algorithm \ref{Algo}}
\label{sec:GradDescentAscent}


Apart from introducing some mathematical definitions and some notation that we will use in the remainder, our purpose in this section is to provide an intuitive geometric motivation for the system of equations \eqref{WFR dynamics} and its discretization in Algorithm \ref{Algo}. Thus, our discussion in this section will be largely formal. Several references motivate the discussion in this section, e.g., \cite{LeonardUnbalanced,LeonardUnbalanced0,CHIZATUnbalanced,PeyreUnbalanced,liero2016optimal}.



\subsubsection{A lifted space}
\label{sec:liftedspace}

In what follows we use $\M_+(\Theta)$ to denote the space of finite positive measures over $\Theta$. We consider a \textit{projection map} $\mathcal{F}: \mathcal{P}(\Theta \times [0,\infty)) \rightarrow \M_+(\Theta) $ which associates to each $\sigma \in \mathcal{P}(\Theta \times [0,\infty)) $ a measure $\mathcal{F} \sigma \in \M_+(\Theta)$ that is determined by the identity
\begin{equation}
 \int \varphi(\theta) d (\F\sigma)(\theta) = \int \alpha \varphi(\theta)d\sigma(\theta,\alpha) 
 \label{sec:FNU}
\end{equation}
for all regular enough test functions $\varphi:\Theta \rightarrow \R $. As we see below, the map $\F$  allows us to lift functionals defined over $\M_+(\Theta)$ to functionals defined over $\mathcal{P}(\Theta \times [0,\infty))$. Once the energy has been lifted, one can consider gradient descent dynamics in the lifted space. In turn, these lifted dynamics can be projected down to the original space of measures to generate an evolution there.   


\begin{remark}
Notice that the function $\mathcal{F}$ is a surjection. Indeed, let $\nu \in \M_+(\Theta)$ and let $M= \nu(\Theta)$, which we first assume is non-zero. Consider the probability measure $\sigma = \frac{\mu}{M} \otimes \delta_{M}$. It is straightforward to show that $\F \sigma = \nu$. In case $M=0$, which means $\nu$ is the measure that is identically equal to zero, we may take $\sigma$ to be any probability measure over $\Theta \times [0,\infty)$ that satisfies $\sigma(\Theta \times \{ 0 \})=1$ to conclude that $\mathcal{F}\sigma = \nu$.

Finally, while $\F$ is surjective, it is worth highlighting that it is far from being one to one.
\end{remark}

To lift a functional $J: \M_+(\Theta) \rightarrow (-\infty, \infty] $ to a functional on $\mathcal{P}(\Theta \times [0, \infty))$, we simply consider the composition of $J$ with the projection map $\F$ as follows:
\begin{equation}
    \mathcal{J}(\sigma  ):= J(\F \sigma ), \quad \sigma \in \mathcal{P}(\Theta \times [0,\infty)).  
    \label{eqn:LiftedEnergyJ}
\end{equation}

Next, we discuss a Riemannian structure for the space $\mathcal{P}(\Theta \times [0, \infty))$.



\subsubsection{A metric on the lifted space}
\label{sec:LiftedSpace}

We start by defining a metric tensor over the space $\Theta \times (0,\infty) $ according to:
\[ g_{(\theta, \alpha)}( (v,s) , (\tilde v, \tilde s) ):= \frac{\alpha}{\eta}\langle  v , \tilde v \rangle + \frac{1}{\kappa\alpha}s \tilde s,     \]
where $\langle \cdot , \cdot \rangle$ denotes the standard inner product in Euclidean space, and $\kappa$ and $\eta$ are two positive parameters. In what follows we use the notation $|(v,s)|_{(\theta, \alpha)}^2 := g_{(\theta, \alpha)}( (v,s) , ( v,  s) )$. 

It is straightforward to verify that the gradient of a scalar function $\phi(\theta, \alpha)$ with respect to the inner product $g$, which we denote by $\overline{\nabla}\phi$, can be written as
\begin{equation}
\overline{\nabla}\phi =( \frac{\eta}{\alpha} \nabla_\theta \phi , \kappa \alpha \partial_\alpha \phi ), 
\label{eq:Gradient}
\end{equation}
where $\nabla_\theta \phi(\theta, \alpha)$ is the usual gradient of $\phi$ in the $\theta$ variable and $\partial_\alpha \phi(\theta, \alpha)$ is the partial derivative of $\phi$ with respect to $\alpha$. Notice that $\overline{\nabla} \phi$ is a vector in $\R^p \times \R$.

Relative to the base metric $g$ in $\Theta \times (0,\infty)$, we define a Wasserstein metric, in dynamic form, over the space of probability measures $\mathcal{P}(\Theta \times [0,\infty)])$. More precisely, for $\sigma , \sigma' \in \mathcal{P}(\Theta \times [0,\infty)) $ we consider
\begin{equation}
\label{eq:WassBenamouBrenier}
   W_{2,g}^2(\sigma , \hat \sigma)= \inf_{ \{(\beta_t, \sigma_t )\}_{t \in [0,1]} \in \text{CE}(\sigma, \tilde \sigma) } \int_{0}^1 \int | \overline \nabla \beta_t(\theta, \alpha) |^2_{\theta, \alpha}  d\sigma_t(\theta, \alpha) dt,  
\end{equation}
where the set $\text{CE}(\sigma, \tilde \sigma)$ consists of all solutions $t\in [0,1]\mapsto (\beta_t , \sigma_t )$ to the (intrinsic) \textit{continuity equation}
\begin{equation}
\begin{cases}
      \partial_t \sigma_t + \odivv (\sigma_t \onabla \beta_t ) = 0,  \\
     \sigma(0) = \sigma, \quad \sigma(1) = \sigma';
\end{cases}     
\label{eq:WFR-1}
\end{equation}
in particular, $\odivv$ denotes the divergence in the space $\Theta \times (0,\infty)$ when endowed with the metric $g$. In general, equation \eqref{eq:WFR-1} has to be interpreted in the weak sense, i.e., it must hold that
\[ \frac{d}{dt} \int \phi(\theta,\alpha ) d\sigma_t(\theta, \alpha) = \int g_{(\theta, \alpha)} ( \onabla \beta _t(\theta, \alpha) , \onabla \phi(\theta, \alpha)  ) d \sigma(\theta, \alpha) \]
  for all  $t \in (0,1)$ and all $ \phi$ regular enough test functions.
  
More than the metric \eqref{eq:WassBenamouBrenier} itself, from formula \eqref{eq:WassBenamouBrenier} we are interested in the implicit formal Riemannian structure that we can endow $\mathcal{P}(\Theta \times [0, \infty))$ with and that can be used to motivate, heuristically, gradient descent or projected gradient descent dynamics in the space $\mathcal{P}(\Theta \times [0, \infty)])$. As is standard when interpreting optimal transport from a Riemannian geometric perspective, one can think of the set $\T_\sigma := \{ \overline \nabla \beta \: \text{ s.t. } \:  \beta: \Theta \times (0,\infty) \mapsto \R \} $ as a formal tangent plane to the formal manifold $\mathcal{P}(\Theta \times [0,\infty))$ at the point $\sigma$, and over this formal tangent plane one can define an inner product $\langle \cdot , \cdot \rangle_{\sigma} $ according to
\[  \langle \onabla \beta , \onabla \beta' \rangle_{\sigma} := 
  \int g_{(\theta, \alpha)} ( \onabla \beta(\theta, \alpha) , \onabla  \beta'(\theta, \alpha) ) d \sigma_t(\theta, \alpha). \]

Before we finish this section, we state a result that we use in the sequel and that allows us to write the continuity equation \eqref{eq:WFR-1} in terms of basic Euclidean divergence and gradient operators.  

\begin{proposition}
\label{prop:EquivalenceCEs}
The intrinsic continuity equation from \eqref{eq:WFR-1} can be written, in terms of the Euclidean divergence $\divv_{\theta, \alpha}$ in $\R^p \times \R$, as
\[ \partial_t \sigma_t + \divv_{\theta, \alpha}(\sigma_t v_{\sigma_t}) =0, \]
where $v_{\sigma}$ is the vector field
\[ v_{\sigma}(\theta, \alpha):= ( \frac{\eta}{\alpha}\nabla_\theta \beta (\theta, \alpha) , \kappa \alpha \partial_\alpha \beta(\theta, \alpha) ).  \]
\end{proposition}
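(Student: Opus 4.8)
The plan is to reduce the claimed equivalence to a pointwise algebraic identity between the integrands appearing in the weak formulations of the two continuity equations. First I would recall that a curve $t\mapsto(\beta_t,\sigma_t)$ solves the intrinsic continuity equation \eqref{eq:WFR-1} in the weak sense precisely when
\[ \frac{d}{dt}\int \phi(\theta,\alpha)\, d\sigma_t(\theta,\alpha) = \int g_{(\theta,\alpha)}\big(\onabla\beta_t(\theta,\alpha),\onabla\phi(\theta,\alpha)\big)\, d\sigma_t(\theta,\alpha) \]
for all regular enough test functions $\phi$, whereas the Euclidean equation $\partial_t\sigma_t+\divv_{\theta,\alpha}(\sigma_t v_{\sigma_t})=0$ holds weakly precisely when
\[ \frac{d}{dt}\int \phi(\theta,\alpha)\, d\sigma_t(\theta,\alpha) = \int \langle v_{\sigma_t}(\theta,\alpha),\nabla_{\theta,\alpha}\phi(\theta,\alpha)\rangle\, d\sigma_t(\theta,\alpha), \]
where $\nabla_{\theta,\alpha}\phi=(\nabla_\theta\phi,\partial_\alpha\phi)$ denotes the ordinary Euclidean gradient. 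Hence it suffices to establish the pointwise identity $g_{(\theta,\alpha)}\big(\onabla\beta(\theta,\alpha),\onabla\phi(\theta,\alpha)\big) = \langle v_\sigma(\theta,\alpha),\nabla_{\theta,\alpha}\phi(\theta,\alpha)\rangle$ on $\Theta\times(0,\infty)$.

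Next I would carry out this verification by substituting the expression \eqref{eq:Gradient} for the Riemannian gradient, namely $\onabla\beta=(\tfrac{\eta}{\alpha}\nabla_\theta\beta,\kappa\alpha\,\partial_\alpha\beta)$ and likewise $\onabla\phi=(\tfrac{\eta}{\alpha}\nabla_\theta\phi,\kappa\alpha\,\partial_\alpha\phi)$, into the definition $g_{(\theta,\alpha)}((v,s),(\tilde v,\tilde s))=\tfrac{\alpha}{\eta}\langle v,\tilde v\rangle+\tfrac{1}{\kappa\alpha}s\tilde s$. The weight $\tfrac{\alpha}{\eta}$ cancels one of the two $\tfrac{\eta}{\alpha}$ prefactors in the $\theta$-slot and the weight $\tfrac{1}{\kappa\alpha}$ cancels one factor $\kappa\alpha$ in the $\alpha$-slot, leaving exactly
\[ g_{(\theta,\alpha)}\big(\onabla\beta,\onabla\phi\big) = \frac{\eta}{\alpha}\langle\nabla_\theta\beta,\nabla_\theta\phi\rangle + \kappa\alpha\,\partial_\alpha\beta\,\partial_\alpha\phi, \]
which is precisely the Euclidean inner product of $v_\sigma=(\tfrac{\eta}{\alpha}\nabla_\theta\beta,\kappa\alpha\,\partial_\alpha\beta)$ with $\nabla_{\theta,\alpha}\phi=(\nabla_\theta\phi,\partial_\alpha\phi)$. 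Plugging this back into the weak form identifies the intrinsic continuity equation with the Euclidean one, completing the argument.

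I do not expect a serious obstacle here; the computation is essentially routine, and the whole content of the proposition is this cancellation of metric weights against gradient prefactors. The only point I would flag is the singular coefficient $1/\alpha$ in $v_\sigma$ near the boundary $\{\alpha=0\}$ of $\Theta\times[0,\infty)$: I would restrict attention to test functions $\phi$ and curves $\sigma_t$ for which all the integrals above converge (for instance curves whose $\alpha$-marginal is supported in a compact subset of $(0,\infty)$, which is the regime relevant to the dynamics considered here), and observe that the manipulation is purely pointwise in the interior. Since this section only serves to provide a formal geometric motivation, I would not dwell further on these integrability technicalities.
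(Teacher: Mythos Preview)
Your proposal is correct and follows essentially the same approach as the paper: both arguments pass to the weak formulation, compute the pointwise identity $g_{(\theta,\alpha)}(\onabla\beta,\onabla\phi)=\tfrac{\eta}{\alpha}\langle\nabla_\theta\beta,\nabla_\theta\phi\rangle+\kappa\alpha\,\partial_\alpha\beta\,\partial_\alpha\phi=\langle v_\sigma,\nabla_{\theta,\alpha}\phi\rangle$, and conclude. Your remark about the $1/\alpha$ singularity near $\alpha=0$ is an appropriate caveat; the paper omits it since, as you note, the section is explicitly formal.
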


\begin{proof}
This is a consequence of the following simple observation. For all regular enough test functions $\phi$ we have
\begin{align*}
\frac{d}{dt} \int \phi d \sigma_t &= \int g_{(\theta, \alpha)}( \onabla \beta , \onabla \phi   ) d \sigma_t
\\& =  \int \left( \frac{\eta}{\alpha} \nabla_\theta  \phi \cdot \nabla_\theta \beta  + \kappa \alpha \partial_\alpha \phi \partial_\alpha \beta   \right)   d \sigma_t 
\\&= \int \langle  \nabla_{\theta, \alpha} \phi, v_\sigma \rangle    d \sigma_t,
\end{align*}
where in the above we use $\langle \cdot, \cdot \rangle$ to denote the standard Euclidean inner product in $\R^p \times \R$ and $\nabla_{\theta, \alpha} \phi$ to denote the standard gradient in $\R^p \times \R$.

\end{proof}

\subsubsection{Vertical and horizontal vector fields in $\mathcal{P}(\Theta \times [0, \infty))$}
\label{sec:HorizontalVectFields}

We now introduce and discuss some relevant subspaces of the formal tangent plane $T_\sigma$. We will use these subspaces later on.

The horizontal space $T_\sigma^h$ at $\sigma$ is defined as 
\[ T_\sigma^h:=  \{ \onabla \beta \text{ s.t. }  \beta(\theta, \alpha) = \alpha \varphi(\theta) \text{ for some } \varphi   \},  \]
and the vertical space $T_\sigma^v$ as
\[  T_\sigma^v:=  \{ \onabla \beta \text{ s.t. }  \langle \onabla \beta , \onabla \beta'  \rangle_\sigma =0 , \quad \text{for all} \quad \onabla \beta' \in T^h_\sigma \}. \]
The vertical space $T_\sigma^v$ represents the directions that infinitesimally leave $\F \sigma$ invariant, while the horizontal space is $T_\sigma^v$'s orthogonal complement.



Let us denote by $\mathcal{N} = \{ \sigma \: \text{s.t.} \: \F \sigma \in \mathcal{P}(\Theta)  \}$. For $\sigma \in \mathcal{N}$, we consider the subspace $\T_\sigma \mathcal{N}$ of $\T_\sigma$ defined as
\[ \T_\sigma \mathcal{N} := \{\onabla \phi \text{ s.t.} \int \alpha \partial_\alpha \phi(\theta, \alpha) d \sigma =0 \}.\]
The subspace $\T_\sigma \mathcal{N}$ can be interpreted as the space of tangent vectors of all curves passing by $\sigma$ that stay in $\mathcal{N}$.

\begin{remark}
The space $\M_+(\Theta)$ can be endowed with a metric, the Wasserstein-Fisher-Rao metric, that makes the map $\F$ into a Riemannian submersion. Indeed, notice that for two potentials of the form $\alpha \varphi(\theta)$ and $\alpha \varphi'(\theta)$ (i.e., two potentials inducing horizontal vector fields at a point $\sigma$), we have the identity
\[ \langle \onabla (\alpha \varphi)  , \onabla (\alpha \varphi')  \rangle_\sigma  =  \int_{\Theta\times [0,\infty)} \alpha  (\eta \nabla_\theta \varphi \cdot \nabla_{\theta} \varphi'  + \kappa \varphi \varphi'  ) d \sigma(\theta, \alpha) 
 =  \int_{\Theta} (\eta \nabla_\theta \varphi \cdot \nabla_{\theta} \varphi'  + \kappa \varphi \varphi'  ) d \F\sigma(\theta). \]
In other words, the above inner product in fact does not depend on the specific $\sigma$, but only on $\F \sigma$. 

We refer the reader to the references \cite{LeonardUnbalanced,LeonardUnbalanced0,CHIZATUnbalanced,PeyreUnbalanced,liero2016optimal} for details about the Wasserstein-Fisher-Rao geometry. 
\end{remark}




\subsubsection{Gradient flows of lifted energies}


Given an arbitrary energy $\J: \mathcal{P}(\Theta \times [0, \infty)) \rightarrow (-\infty, \infty]$ (not necessarily of the form \eqref{eqn:LiftedEnergyJ}), the gradient (descent) flow of $\J$ with respect to the Riemannian geometry introduced in section \ref{sec:LiftedSpace} takes the form:
\begin{equation}
 \partial_t \sigma_t - \odivv( \sigma_t \onabla \J_{\sigma_t})=0,   
 \label{eq:GradFlowSigma}
\end{equation}
where $\J_\sigma$ is the first variation of $\J$ at the point $\sigma$, defined as we did in the beginning of section \ref{sec:Algorithm}. For more details on the interpretation of \eqref{eq:GradFlowSigma} as a gradient flow see Chapter 8.2 in \cite{VillaniBook}.

In case $\J$ has the structure of a lifted energy as in \eqref{eqn:LiftedEnergyJ}, its first variation can be computed as follows. Let $\sigma, \sigma^*$ and let $\nu = \F \sigma$ and $\nu^* = \F \sigma^*$. Using the linearity of the map $\mathcal{F}$ (which is evident from its definition) we get: 
\begin{align*}
\frac{d}{d\veps} \vert_{\veps=0}  \J(\sigma + \veps(\sigma^* - \sigma))&=   \frac{d}{d\veps} \vert_{\veps=0} J( \F (\sigma + \veps(\sigma^* - \sigma) ) )  
\\& = \frac{d}{d\veps} \vert_{\veps=0} J( \F \sigma + \veps (\F\sigma^* - \F\sigma) )    
\\& = \int_{\Theta} J_{\nu}(\theta) d(\nu^* - \nu )
\\&=\int_{\Theta \times [0, \infty)} \alpha J_{\nu}(\theta) d(\sigma ^* - \sigma ) ,
\end{align*}
where $J_\nu$ is the first variation of $J$ at the point $\nu$. In other words, the first variation of $\J$ at $\sigma$ takes the form $\alpha J_\nu$, where $J_{\nu}$ is the first variation of $J$; this specific form for $\J_\nu$ should not be surprising, since the function $\J$ is constant along vertical vector fields and thus its gradient should be a horizontal vector field. Plugging this expression back in \eqref{eq:GradFlowSigma}, we conclude that the gradient flow of a lifted energy $\J$ takes the form:
\[ \partial \sigma_t - \odivv( \sigma_t \onabla (\alpha \J_{\nu_t} ) )=0;  \quad \F \sigma_t = \nu_t,\]
which, after using Proposition \eqref{prop:EquivalenceCEs}, can also be written as
\begin{equation}
  \begin{cases}
  \partial_t \sigma_t -  \divv_{\theta, \alpha} (\sigma_t v_{\sigma} ) =0; 
  \\v_{\sigma}(\theta, \alpha)  = \left(\eta \nabla_\theta J_{\nu_t} (\theta), \kappa \alpha J_{\nu_t}(\theta)  \right) ; \quad \nu_t = \F(\sigma_t). 
  \end{cases}
  \label{WFR lifted dynamics Aux}
\end{equation}

\subsubsection{Projected gradients}


In general, $\sigma_t$ from \eqref{WFR lifted dynamics Aux} may not belong to $\mathcal{N}$ for $t>0$, even if initialized at a $\sigma_0 \in \mathcal{N}$. If we want to guarantee that $\nu_t= \F \sigma_t \in \mathcal{P}(\Theta)$ for all $t$, we must then project the (Wasserstein) gradient of the energy $\mathcal{J}$ driving the dynamics \eqref{WFR lifted dynamics Aux} onto the subspace $\T_\sigma \mathcal{N}$.

Given $\sigma$ and $\nu=\F \sigma$, we write the potential $\alpha J_\nu$ as
\[ \alpha J_\nu(\theta)= \alpha (J_\nu(\theta) - \int J_\nu(\theta') d\nu (\theta')) + \alpha \int J_\nu(\theta') d\nu (\theta').  \]
A direct computation shows that
\[ \langle  \onabla (  \alpha \int J_\nu(\theta') d\nu(\theta'))  ) , \onabla  \phi(\theta, \alpha) \rangle_{\sigma}  =0,  \]
for all $\onabla \phi \in \T_\sigma \mathcal{N}$; this means that $\onabla (  \alpha \int J_\nu(\theta') d\nu(\theta'))  ) \in \T _\sigma \mathcal{N} ^\perp$. Another direct computation shows that $\onabla (\alpha (J_\nu(\theta) - \int J_\nu(\theta') d\nu (\theta'))) \in \T_\sigma\mathcal{N}$. From this we can then see that $\onabla (\alpha (J_\nu(\theta) - \int J_\nu(\theta') d\nu (\theta')))$ is the projection of $ \onabla (\alpha \J_\nu)$ onto $ \T_\sigma \mathcal{N}  $.

Using Proposition \eqref{prop:EquivalenceCEs}, we can thus conclude that
\begin{equation}
  \begin{cases}
  \partial_t \sigma_t -  \divv_{\theta, \alpha} (\sigma_t v_{\sigma} ) =0; 
  \\v_{\sigma}(\theta, \alpha)  = \left(\eta \nabla_\theta J_{\nu_t} (\theta), \kappa \alpha (J_{\nu_t}(\theta) - \int J_{\nu_t}(\theta')  \nu_t(\theta') ) \right) ; \quad \nu_t = \F(\sigma_t), 
  \end{cases}
\end{equation}
represents projected (onto $\mathcal{N}$) gradient descent dynamics of the lifted energy $\J$.

\subsubsection{An analogous geometric structure for $\mathcal{\M}_+(\Z \times \Z)$}
\label{sec:FPI}

There is a similar geometric structure to the one we discussed in the previous sections that the space $\mathcal{M}_+(\Z \times \Z)$ can be endowed with. In what follows we use $\gamma$ to denote elements in the lifted space $\mathcal{P}(\Z \times \Z \times [0,\infty))$ and represent elements in $\Z \times \Z \times [0,\infty)$ with triplets of the form $(z, \tilde z , \omega)$. The space $\mathcal{P}(\Z \times \Z \times [0,\infty))$ is endowed with a Wasserstein metric just as in \eqref{eq:WassBenamouBrenier}, obtained by changing any appearance of $\theta$ with $(z, \tilde z)$ and any appearance of $\alpha$ with $\omega$. We will use $\mathcal{F}$ (we use the same notation as in section \ref{sec:liftedspace} for simplicity) to denote the projection map $\F: \mathcal{P}(\Z \times \Z \times [0,\infty)) \rightarrow \mathcal{M}_+(\Z \times \Z)$. An arbitrary functional $J: \mathcal{M}_+(\Z \times \Z) \rightarrow (-\infty, \infty]$ can be lifted to $\mathcal{P}(\Z \times \Z \times [0,\infty))$ by composition with $\F$ (we use $\J$ as in the previous sections to denote this composition). The structure of the first variation of $\J$ is $\omega J_{\pi}$, where  $J_\pi$ is the first variation of $J$ at $\pi = \F(\gamma)$. 

Since problem \eqref{min max problem couplings} forces us to restrict to measures $\pi$ with first marginal equal to $\mu$, we consider evolution equations that can be seen as suitable (projected) gradient \textit{ascent} versions of the gradient \textit{ascent} flow of a lifted energy $\J$ w.r.t. the Wasserstein metric discussed above. Such evolution equation takes the form:
\begin{equation}
  \begin{cases}
  \partial_t \gamma_t + \divv_{(z,\tilde z), \omega} (  \gamma_t v_\gamma)=0,
  \\ v_\gamma(z, \tilde z, \omega)  = \left( 0,  \eta \nabla_{\tilde z} J_\pi(z, \tilde z) , \kappa \omega \left( J_\pi(z,\tilde z) - \int J_\pi(z, \tilde z') d\pi_{t}(\tilde z '| z) \right)  \right); \quad 
\pi_t = \F \gamma_t.
  \end{cases}
  \label{WFR lifted dynamicsZZ}
\end{equation}

To motivate the zero in the first component of $v_\gamma(z,\tilde z , \omega)$, suppose that  $ t \mapsto \pi_t$ has the form
\[ \pi_t= \sum_{i,j=1}^N \omega_{ij,t} \delta_{(z_{ij,t}, \tilde{z}_{ij,t})}, \]
where $\pi_t$ solves the evolution equation 
\[ \partial _t \pi_t + \divv_{z, \tilde z}(\pi_t \vec{V}_t)   =0  \]
for some vector field $\vec{V}_t(z, \tilde z) = (V_{1,t}(z, \tilde z), V_{2,t}(z, \tilde z)) $ that changes smoothly in time. We claim that if $\pi^1_t$ is constant in time, then $V_{1,t}$ must be equal to zero at all points in the support of $\pi_t^1$ (and thus of the support of $\pi_0^1$). Indeed, it is enough to notice that if $V_{0,t}({z}_{ij}, \tilde z_{ij})$ was different from $0$, then for all small enough $t>0$ we would have that $z_{ij,t}$ is different from $z_{i'j',0}$ for all $i'j'$, implying that the support of $\pi_t^1$ is different from the support of $\pi_0^1$ for small enough $t>0$. This would contradict the assumption that $\pi_t^1$ was constant in time. 

\subsection{Ascent-descent equations in the lifted space}

 According to our discussion in the previous sections, equation \eqref{WFR lifted dynamics Aux} is analogous to a (projected) gradient descent ODE in Euclidean space of a fixed energy $U_q$
\[\dot{q}_t= -\nabla U_q(q_t),     \]
while equation \ref{WFR lifted dynamicsZZ} is analogous to a gradient ascent equation of the form
\[ \dot{p}= \nabla U_p (p_t),\]
for a fixed energy $U_p$. 

For minmax problems, where there isn't one fixed function to minimize/maximize, but rather, one has a target payoff function for which one wishes to find its saddles, one could consider, at least in the Euclidean case, a system of the form
\[ \begin{cases} \dot{q}_t = -\nabla_q U(q_t, p_t) \\ \dot{p}_t = \nabla_p U(q_t,p_t),\end{cases}\]
 or a projected version thereof in case additional constraints on the variables $p,q$ are present. 
 
 The above inspires the following gradient ascent-descent equations in the space of measures:

 \begin{equation}
  \begin{cases}
  \partial_t \gamma_t &= - \divergence_{(z,\tilde z), \omega} ( \gamma_t v_\gamma(z, \tilde z, \omega)),
  \\ \partial_t \sigma_t &=  \divergence_{\theta, \alpha} (\sigma_t v_{\sigma}(\theta, \alpha) ), 
  \end{cases}
   \label{WFR lifted dynamics}
\end{equation}
where
\[v_\gamma(z, \tilde z, \omega)  = \left( 0,  \eta_t \nabla_{\tilde z} \U_\pi(\pi_t, \nu_t; z, \tilde z) , \kappa \omega \left( \U_\pi(\pi_t, \nu_t; z, \tilde z) - \int \U_\pi(\pi_t, \nu_t; z, \tilde z') d\pi_{t}(\tilde z '| z) \right)  \right),\]

\[  v_{\sigma}(\theta, \alpha)  = \left(\eta_t \nabla_\theta \U (\pi_t, \nu_t;\theta), \kappa \alpha (\U_{\nu}(\pi_t, \nu_t;\theta) - \int \U_{\nu}(\pi_t, \nu_t;\theta')  d\nu_t(\theta') ) \right),  \]
and $\pi_t = \F \gamma_t,\quad \nu_t= \F \sigma_t$.

Notice that here we are allowing the scaling factor $\eta$ to change in time, for convenience. Section \ref{sec:ConvergenceMeanField} is devoted to studying equation \eqref{WFR lifted dynamics}. In particular, we prove well-posedness and show that system \eqref{WFR lifted dynamics} can be recovered as a suitable limit of systems of interacting particles. Looking forward to applications in section \ref{sec:LongTimeNash}, in section \ref{sec:ConvergenceMeanField} we will actually study a sightly more general system than \eqref{WFR lifted dynamics}.

To finally return to the original system \eqref{WFR dynamics} it now suffices to project the dynamics \eqref{WFR lifted dynamics} via the map $\F$, as we discuss next.

\begin{proposition}
Suppose that $(\gamma, \sigma)$ solves the lifted dynamics \eqref{WFR lifted dynamics}. Then the pair $\pi_t=\F \gamma_t, \quad \nu_t= \F \sigma_t$ solves the system \eqref{WFR dynamics}.
\end{proposition}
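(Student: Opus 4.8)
The plan is to verify the claim directly by pushing test functions through the map $\F$ and reducing the intrinsic continuity equations in \eqref{WFR lifted dynamics} to the weak form of \eqref{WFR dynamics}. First I would fix a test function $\varphi:\Theta \to \R$ and, using the defining identity \eqref{sec:FNU} for $\F$, write
\[ \frac{d}{dt}\int_\Theta \varphi(\theta)\, d\nu_t(\theta) = \frac{d}{dt}\int_{\Theta\times[0,\infty)} \alpha\,\varphi(\theta)\, d\sigma_t(\theta,\alpha), \]
so that the natural test function to feed into the $\sigma_t$-equation is $\phi(\theta,\alpha) := \alpha\,\varphi(\theta)$. Applying the second line of \eqref{WFR lifted dynamics} in its weak (Euclidean) form with this $\phi$, and computing $\nabla_{\theta,\alpha}\phi = (\alpha\nabla_\theta\varphi,\ \varphi)$, the integrand becomes $\langle \nabla_{\theta,\alpha}\phi, v_\sigma\rangle = \alpha\,\eta_t\,\nabla_\theta\varphi\cdot\nabla_\theta\U_\nu + \kappa\alpha\varphi\big(\U_\nu - \int \U_\nu\, d\nu_t\big)$. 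Each term carries a factor $\alpha$, so by \eqref{sec:FNU} the $\alpha$-integration collapses to integration against $\F\sigma_t = \nu_t$, yielding exactly
\[ \frac{d}{dt}\int_\Theta \varphi\, d\nu_t = \int_\Theta \Big( \eta_t\,\nabla_\theta\varphi\cdot\nabla_\theta\U_\nu + \kappa\varphi\big(\U_\nu - \textstyle\int \U_\nu\, d\nu_t\big)\Big) d\nu_t, \]
which is the weak form of the $\nu_t$-equation in \eqref{WFR dynamics} after an integration by parts in $\theta$ (the boundary term vanishes by Assumption \ref{Hyp:Support is stationary}). The argument for $\pi_t = \F\gamma_t$ is entirely parallel: take $\psi(z,\tilde z)$, use test function $\phi(z,\tilde z,\omega) = \omega\,\psi(z,\tilde z)$ in the $\gamma_t$-equation, note $\nabla_{(z,\tilde z),\omega}\phi = (\omega\nabla_z\psi,\ \omega\nabla_{\tilde z}\psi,\ \psi)$, pair it with $v_\gamma$ (whose first component is $0$, which is precisely why $\pi_{t,z}$ stays fixed), factor out $\omega$, and project down via \eqref{sec:FNU} to land on the weak form of the $\pi_t$-equation in \eqref{WFR dynamics}.

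One bookkeeping point I would address explicitly is the conditional-expectation term: in the $\gamma_t$-dynamics the correction reads $\U_\pi(z,\tilde z) - \int \U_\pi(z,\tilde z')\, d\pi_t(\tilde z'\mid z)$, which already refers to $\pi_t = \F\gamma_t$ rather than to $\gamma_t$ directly, so after projection it is literally the term appearing in \eqref{WFR dynamics}; no further manipulation is needed beyond observing that the disintegration $\pi_t(\cdot\mid z)$ is well defined because $\pi_{t,z} = \mu$ is preserved. I would also remark that $\F\gamma_t$ and $\F\sigma_t$ remain probability measures (total mass one) along the flow — this is the normalization/mass-conservation already recorded in Remark \ref{Rmk:ConstantMarginal} and follows from the mean-zero structure of the growth terms $\U_\pi - \int\U_\pi\, d\pi_t(\cdot\mid z)$ and $\U_\nu - \int\U_\nu\, d\nu_t$ — so that $(\pi_t,\nu_t)$ indeed lies in the space where \eqref{WFR dynamics} is posed.

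The only genuine subtlety — the main obstacle — is justifying that $\phi(\theta,\alpha) = \alpha\varphi(\theta)$ is an admissible test function for the weak formulation of \eqref{WFR lifted dynamics}, since $\alpha$ is unbounded on $[0,\infty)$ while the weak form was stated for "regular enough" (implicitly bounded, compactly supported) test functions. Here I would invoke the fact, used throughout section \ref{sec:ConvergenceMeanField}, that the $\omega$- and $\alpha$-marginals of the solutions stay supported in a fixed compact interval: the growth rates $\kappa(\U_\nu - \int\U_\nu\, d\nu_t)$ and $\kappa(\U_\pi - \int\U_\pi\, d\pi_t(\cdot\mid z))$ are bounded by $2\kappa M$ via Assumption \ref{Hyp U}, so starting from initial masses bounded by $D$ the weights remain in $[0, De^{2\kappa M t}]$ on $[0,T]$. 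On that compact set $\alpha\varphi(\theta)$ is a bona fide bounded smooth test function, so the substitution is legitimate on every finite time interval, and the computation above goes through verbatim. With this justification in place, the proposition reduces to the two elementary test-function computations sketched above.
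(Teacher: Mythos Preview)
Your proposal is correct and follows essentially the same approach as the paper's proof: test the lifted $\sigma$-equation with $\phi(\theta,\alpha)=\alpha\varphi(\theta)$, use the definition of $\F$ to collapse the $\alpha$-integration, and obtain the weak form of the $\nu_t$-equation in \eqref{WFR dynamics}; the $\pi_t$-equation is handled symmetrically. Your write-up is in fact more careful than the paper's, which does not address the admissibility of the unbounded test function $\alpha\varphi(\theta)$ nor comment on the conditional term or mass conservation; these additions are fine but not strictly needed here since the section is explicitly heuristic.
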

\begin{proof}
Taking a test function $\phi(\theta)$ we see that
\begin{align*}
 \frac{d}{dt} \int \phi(\theta) d \nu_t = \frac{d}{dt} \int \alpha \phi(\theta) d \sigma_t(\theta, \alpha) & = \eta_t \int \alpha \nabla_\theta \phi(\theta) \cdot \nabla_\theta \U(\pi_t, \nu_t; \theta) d \sigma_t(\theta,\alpha)  
\\& +  \kappa \int   \alpha (\U_{\nu}(\pi_t, \nu_t;\theta) - \int \U_{\nu}(\pi_t, \nu_t;\theta')  d\nu_t(\theta') ) d\sigma_t(\theta, \alpha)
\\& = \eta_t \int  \nabla_\theta \phi(\theta) \cdot \nabla_\theta \U(\pi_t, \nu_t; \theta) d \nu_t(\theta) 
\\&+ \kappa \int   (\U_{\nu}(\pi_t, \nu_t;\theta) - \int \U_{\nu}(\pi_t, \nu_t;\theta')  d\nu_t(\theta') ) d\nu_t(\theta),
\end{align*}
which is the weak form of the second equation in \eqref{WFR dynamics}. The equation for $\pi$ is deduced similarly.
\end{proof}

The bottom line is that, by studying the system \eqref{WFR lifted dynamics} and its approximation with particle systems, we will be implicitly studying the system \eqref{WFR dynamics} and its approximation with particle systems. System \eqref{WFR lifted dynamics}, however, has the advantage of having a direct Lagrangian formulation that we exploit.



\subsection{Notation}

In the sequel, we will use the following notation:
\begin{itemize}
    \item[--] $\mu, \tilde \mu$ probability measures over $\Z$. $\mu$ is the observed data distribution and $\tilde \mu$ represents an adversarial perturbation of $\mu$.
    \item[--] $\pi$ is a measure over $\Z \times \Z$, and we write points in the support of $\pi$ as $(z, \tilde z)$. $z$ can be interpreted as an observed data point, while $\tilde z$ corresponds to a perturbed data point.
   \item [--] $\pi_{z}$ will be used to denote the first marginal of $\pi$, whenever $\pi$ is a probability measure. $\pi(\cdot|z)$, on the other hand will be used to denote the conditional, according to $\pi$, of the second variable given that the first one is equal to $z$.
    
    \item[--] $\gamma$ will represent a probability measure over the lifted space $\Z^2 \times \R_+$.
    
    \item[--] $\nu$ will represent a measure over $\Theta$. 
    
    \item[--] $\sigma$ will denote measures over the lifted space $\Theta \times \R_+$.

\item[--] $\F$ is the projection map from either $\mathcal{P}(\Theta \times \R_+)$ onto $\M_+(\Theta)$, or from $\mathcal{P}(\Z^2 \times \R_+)$ onto $\M_+(\Z^2)$.
    
     \item[--] $\bgamma$ will denote a probability measure over the space $\C([0,T], \Z \times \R_+)$, and $\bsigma$ will be used to denote probability measures over the space $\C([0,T], \Theta \times \R_+)$.  The space $\mc{C}([0,T], \Theta\times \R_+)$ is the space of continuous functions from the interval $[0,T]$ into $\Theta\times \R_+$ and $\mc{C}([0,T], \Z^2\times \R_+)$ is defined analogously. These spaces will be endowed with the metric of uniform convergence. 
    \item [--] We will use $\check{\gamma}$ to represent probability measures over the lifted space $\Z^2\times \R^2_+$ (notice the additional coordinate), and $\check \sigma $ will be used to represent probability measures over the lifted space $\Theta \times \R^2_+$.

    \item  [--] $\check \bgamma$ will denote a probability measure over the space $\C([0,T], \Z \times \R_+^2)$, and $\check \bsigma$ will be used to denote probability measures over the space $\C([0,T], \Theta \times \R_+^2)$. 
     
    \item[--] $\U(\pi, \nu)$ denotes the payoff associated to the measures $\pi$ and $\nu$, and $\U_\pi$ and $\U_\nu$ denote the first variations of $\U$ in the coordinates $\pi$ and $\nu$, respectively.

    \item[--] We will use $\H(\cdot|| \cdot)$ to denote the KL-divergence, or Shannon relative entropy, between two arbitrary probability measures defined over the same space. That is, given $\upsilon, \upsilon'$ probability measures,  $\H(\upsilon'|| \upsilon)$ is defined as $\int \log ( \frac{d \upsilon'}{d \upsilon }) d \upsilon' $, if $\upsilon'\ll \upsilon$, and $+\infty$ otherwise.

\end{itemize}




\section{From particle system to mean-field PDE}
\label{sec:ConvergenceMeanField}

We start this section by introducing an enlarged system of ODEs closely related to the system in \eqref{ODEsIntro}. For $i=1, \ldots, N$, let
\begin{equation}
  \begin{aligned}
    d Z_t^{i} & = 0 \\
    d \tilde Z_t^{i} & = \eta_{ t \nc} \nabla_{\tilde z} \U_\pi (\pi^N_t,\nu^N_t; Z_t^{i},\tilde Z_t^{i} )dt \\
    d \omega_t^{i} & = \kappa \omega_t^{i}  \left( \U_\pi (\pi^N_t,\nu^N_t; Z_t^{i},\tilde Z_t^{i} ) - \int  \U_\pi (\pi_t^N, \nu_t^N; Z_t^{i},\tilde z') d  \pi^N_t(\tilde z'| Z_t^{i} )\right) dt\\
    d \vartheta_t^{i} & = - \eta_{ t \nc} \nabla_{\theta} \U_\nu (\pi^N_t,\nu^N_t;\vartheta_t^{i})dt \\
    d \alpha_t^{i} & = -\kappa \alpha_t^{i}  \left( \U_\nu (\pi^N_N,\nu^N_t;\vartheta_t^{i}) - \int  \U_\nu (\pi_t^N, \nu_t^N;\theta') d \nu^N_t(\theta')\right)dt
    \\ d \beta_t^i & =0
    \\ d\varrho_t^i &=0;
\end{aligned}
\label{ParticleSystem}
\end{equation}
with given initial condition $(Z^i_0,\tilde Z^i_0,\omega^i_0,\vartheta^i_0,\alpha^i_0,\beta^i_0, \varrho_0^i)$ (possibly random)
and
\begin{align}
\begin{split}
   \check{\gamma}_t^N := \frac 1 N \sum_{i=1}^N  \delta_{( Z_t^i, \tilde Z_t^i),\omega^i_t, \beta_t^i},   \quad \gamma_t^N &:= \frac 1 N \sum_{i=1}^N  \delta_{( Z_t^i, \tilde Z_t^i),\omega^i_t}, \quad
   \pi_t^N := \F[\gamma^N_t ]=\frac 1 N \sum_{i=1}^N \omega^i_t \delta_{( Z_t^i, \tilde Z_t^i)}, \\
    \check{\sigma}_t^N := \frac 1 N \sum_{i=1}^N  \delta_{\vartheta_t^i,\omega^i_t,\varrho_t^i}, \quad  \sigma_t^N &:= \frac 1 N \sum_{i=1}^N  \delta_{\vartheta_t^i,\alpha^i_t}, \quad
   \nu_t^N := \F[\sigma^N_t] =\frac 1 N \sum_{i=1}^N \alpha^i_t \delta_{\vartheta_t^i}.
   \end{split}
   \label{eqn:InitialiCondEmpirirical}
\end{align}
The variables $\beta, \varrho \in \R_+$ have been added to the system for convenience. In particular, the extra degrees of freedom that come from the different ways to initialise these variables will come in useful in the second half of section \ref{sec:CorollariesPropChaos}. As can be seen from \eqref{ParticleSystem}, these variables do not affect the time evolution of the remaining variables.

Our main goal in this section is to obtain a propagation of chaos result for a $1$-Wasserstein chaotic initialization of \eqref{ParticleSystem}. Namely, we will assume that, as $N \rightarrow \infty$, we have
\begin{equation}
  W_1(\check \gamma_0^N,\check \gamma_0) \rightarrow 0, \qquad W_1(\check \sigma_0^N,\check \sigma_0) \rightarrow 0,
  \label{Initial chaos}
 \end{equation}
 for some probability measures $\check \gamma_0$ and $\check \sigma_0$. We will actually assume a stronger condition guaranteeing the consistency of certain conditionals at initialization. This extra condition, which we make explicit in \eqref{eq:InitialConditionalsConditionalPROPCHAOS}, will be necessary in our analysis, given the explicit dependence of the dynamics \eqref{ParticleSystem} on conditional distributions. In the above, we use the Euclidean distance in $\Theta \times \R_+^2$ (or in $\Z^2 \times \R_+^2$) to define the Wasserstein distance $W_1$ (see Remark \ref{rem:DefinitionGeneralWass}). 
 Under these assumptions, we will characterize the large $N$ limit of the measures of positions of particles as they evolve in time.

Indeed, in the large $N$ limit, system \eqref{ParticleSystem} is expected to behave like a system where the interactions in \eqref{ParticleSystem} have been replaced by \emph{mean-field} dynamics. Such a system reads as follows. For $i=1, \dots, N$, let
\begin{equation}
  \begin{aligned}
    dZ_t^{mf,i} & = 0\\
    d\tilde Z_t^{mf,i} & = \eta_{ t \nc} \nabla_{\tilde z} \U_\pi (\pi^{mf}_t,\nu^{mf}_t; Z_t^{mf,i},\tilde Z_t^{mf,i} )dt \\
    d \omega_t^{mf,i} & = \kappa \omega_t^{mf,i}  \left( \U_\pi (\pi^{mf}_t,\nu^{mf}_t; Z_t^{mf,i},\tilde Z_t^{mf,i} ) - \int  \U_\pi (\pi_t^{mf}, \nu_t^{mf}; Z_t^{mf,i},\tilde z') d  \pi^{mf}_t(\tilde z'| Z_t^{mf,i} )\right)dt \\
    d\vartheta_t^{mf,i} & = - \eta_{ t \nc} \nabla_{\theta} \U_\nu (\pi^{mf}_t,\nu^{mf}_t;\vartheta_t^{mf,i})dt \\
    d \alpha_t^{mf,i} & = -\kappa \alpha_t^{mf,i}  \left( \U_\nu (\pi^{mf}_N,\nu^{mf}_t;\vartheta_t^{mf,i}) - \int  \U_\nu (\pi_t^{mf}, \nu_t^{mf};\theta') d \nu^{mf}_t(\theta')\right)dt
    \\ d \beta^{mf,i} &=0
    \\ d \varrho^{mf,i} &=0;
\end{aligned}
\label{MFParticleSystem}
\end{equation}
with the same initial conditions as in \eqref{ParticleSystem}, and where $\pi^{mf}_t = \F(\bgamma_t), \nu^{mf}_t=\F(\bsigma_t) $ and $(\bgamma, \bsigma)$ solves \eqref{WFR lifted dynamics} with initial condition $\gamma_0,\sigma_0$ (in section \ref{sec:WellPosednessMeanFieldPDE} we prove the well-posedness of this equation);  we recall that the map $\F$ has been introduced in Section \ref{sec:GradDescentAscent}. The fundamental difference between the mean-field system \eqref{MFParticleSystem} and the original particle system \eqref{ParticleSystem} is that the measures determining the dynamics in \eqref{MFParticleSystem} can be treated as fixed and independent of the evolving particles, while in system \eqref{ParticleSystem} there is an explicit dependence of the driving dynamics on the empirical measures $\pi^N, \nu^N$ associated to the underlying evolving particles. 
\nc

In order to deduce a \textit{propagation of chaos} result for the system \eqref{ParticleSystem}, we need to show that the mean-field system \eqref{MFParticleSystem} is well-defined and that we can control how far the evolutions \eqref{ParticleSystem} and \eqref{MFParticleSystem} are from each other; we show this under Assumptions \ref{Hyp U}. To this aim, it is convenient to introduce some extra mathematical structure that will allow us to use standard analytical arguments: we will work on spaces of measures over continuous paths on $\Z^2 \times \R_+^2$ and $\Theta \times \R_+^2$ and eventually use a fixed point argument to establish well-posedness of a mean-field equation. We start by introducing a family of particle evolutions that will play an instrumental role in our analysis. 
 
 Let us fix $T>0$, and let $\A_T$ be the set of pairs $(\check{\pathp \gamma}, \check{\pathp\sigma}) \in \mc P(\mc{C}([0,T], \Z^2 \times \R_+^2))\times \mc P(\mc{C}([0,T], \Theta\times \R_+^2))$ such that:
\begin{enumerate}
\item $\F {\bsigma}_t$ and $\F {\bgamma}_t$ are probability measures for all $t \in [0,T]$.
\item $\F[\bgamma_t]( \cdot \times \Z) =  \F[\bgamma_0]( \cdot \times \Z) \quad \forall t \in [0,T]$.
\end{enumerate}
 Here, as well as in the remainder, for a given $\check{\bgamma}$ we denote by $\bgamma$ the pushforward of $\check \bgamma$ by the map $\{ (z_t, \tilde z_t, \omega_t, \varrho_t) \} \mapsto \{ (z_t, \tilde z_t, \omega_t) \}$, and abusing notation slightly, in the remainder we may use $\F \check{\bgamma}_t$ and $\F \bgamma_t$ indistinctly; we can analogously relate $\check{\bsigma}$ and $\bsigma$.
  
  Associated to $(\check{\pathp{\gamma}},\check{\pathp{\sigma}})\in \A_{T}$, we consider the multidimensional ODE:
\begin{equation}
  \begin{aligned}
    dZ_t^{\check{\pathp{\gamma}},\check{\pathp{\sigma}}} & = 0\\
    d\tilde Z_t^{\check{\pathp{\gamma}},\check{\pathp{\sigma}}} & = \eta_{ t \nc} \nabla_{\tilde z} \U_\pi (\pi_t,\nu_t; Z_t^{\check{\pathp{\gamma}},\check{\pathp{\sigma}}} ,\tilde Z_t^{\check{\pathp{\gamma}},\check{\pathp{\sigma}}} ) dt \\
    d \omega_t^{\check{\pathp{\gamma}},\check{\pathp{\sigma}}} & = \kappa \omega_t  \left( \U_\pi (\pi_t,\nu_t; Z_t^{\check{\pathp{\gamma}},\check{\pathp{\sigma}}} ,\tilde Z_t^{\check{\pathp{\gamma}},\check{\pathp{\sigma}}} ) - \int  \U_\pi (\pi_t,\nu_t; Z_t^{\check{\pathp{\gamma}},\check{\pathp{\sigma}}} ,\tilde z') d  \pi_t(\tilde z'| Z_t^{\check{\pathp{\gamma}},\check{\pathp{\sigma}}} )\right) dt \\
    d\vartheta_t^{\check{\pathp{\gamma}},\check{\pathp{\sigma}}} & = - \eta_{ t \nc} \nabla_{\theta} \U_\nu (\pi_t,\nu_t;\vartheta_t^{\check{\pathp{\gamma}},\check{\pathp{\sigma}}} ) dt \\
    d \alpha_t^{\check{\pathp{\gamma}},\check{\pathp{\sigma}}} & = -\kappa \alpha_t  \left( \U_\nu (\pi_t,\nu_t;\vartheta_t^{\check{\pathp{\gamma}},\check{\pathp{\sigma}}} ) - \int  \U_\nu (\pi_t,\nu_t;\theta') d  \nu_t(\theta')\right) dt \\
    d\beta^{\check{\pathp{\gamma}},\check{\pathp{\sigma}}}_t & =0
    \\ d\varrho^{\check{\pathp{\gamma}},\check{\pathp{\sigma}}}_t&=0
    \\\pi_t & =\F({{\bgamma}}_t), \qquad \nu_t=\F(\pathp{\sigma}_t),
\end{aligned}
\label{DecoupledEquation}
\end{equation}
with initial conditions
\begin{equation}
\label{eq:SamplingInitialWeights}
((Z_0^{\check{\pathp{\gamma}},\check{\pathp{\sigma}}}, \tilde Z_0^{\check{\pathp{\gamma}},\check{\pathp{\sigma}}}), \omega_0^{\check{\pathp{\gamma}},\check{\pathp{\sigma}}}, \varrho_0^{\check{\bgamma}, \check{\bsigma}}) = ((\xi,\tilde \xi),\omega_0,\varrho_0) \sim \check{\bgamma}_0, \qquad   (\vartheta_0^{\check{\pathp{\gamma}},\check{\pathp{\sigma}}}, \alpha_0^{\check{\pathp{\gamma}},\check{\pathp{\sigma}}}, \beta_0^{\check{\pathp{\gamma}},\check{\pathp{\sigma}}}) = (\vartheta, \alpha_0, \beta_0) \sim \check{\bsigma}_0.
\end{equation}
The fact that $(\check{\bgamma}, \check{\bsigma})\in \A_T$ is (in particular) used to make sense of the term $\pi_t(\cdot | Z_t^{\check \gamma, \check \sigma})$ in equation \eqref{DecoupledEquation}. Indeed, let us denote by $\pi_{t,z}$ the marginal on the $z$ coordinate of $\pi_t$. By assumption, $\pi_{t,z}=\pi_{0,z}$, while $Z_t^{\check \gamma, \check \sigma} =Z_0^{\check \gamma, \check \sigma}$ can be assumed to be in the support of $\pi_{0,z}$ without the loss of generality. The conditional distribution $\pi_{t}(\cdot| Z_t^{\check \gamma, \check \sigma} )$ is thus well defined thanks to the disintegration theorem. 
\nc

Equation \eqref{DecoupledEquation} is a multidimensional classical ODE describing an isolated particle following dynamics driven by an exogenous measure. A key observation is that, under Assumptions \ref{Hyp U}, equation \eqref{DecoupledEquation} is driven by Lipschitz coefficients and so it is well-posed by Caratheodory's existence theorem (see Theorem 5.3 in \cite{Hale1980-mb}). Assumption \ref{Hyp U} and Gronwall's inequality further imply a bound on $\omega^{\check{\pathp{\gamma}},\check{\pathp{\sigma}}}$ and $\alpha^{\check{\pathp{\gamma}},\check{\pathp{\sigma}}}$. We summarize these observations in the next proposition for easy reference.
\begin{proposition} Under Assumption \ref{Hyp U}, there exists a unique solution to \eqref{DecoupledEquation} for any fixed intialization. Moreover, we have
 \[\omega^{\check{\pathp{\gamma}},\check{\pathp{\sigma}}}_t \in  [0,\omega_0 e^{2\kappa M t}], \quad  \alpha^{\check{\pathp{\gamma}},\check{\pathp{\sigma}}}_t \in  [0,\alpha_0 e^{2\kappa M t}] ;\qquad \forall  T \geq t>0.\]
  \label{Prop:control weights}
\end{proposition}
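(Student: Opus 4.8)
The plan is to verify the two claims separately, both following from Assumption \ref{Hyp U} together with elementary ODE theory applied to the decoupled system \eqref{DecoupledEquation}.

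First, for existence and uniqueness: the right-hand sides of the equations for $Z_t$, $\tilde Z_t$, $\vartheta_t$, $\beta_t$, $\varrho_t$ depend only on the (fixed, exogenous) data $(\check{\pathp{\gamma}}, \check{\pathp{\sigma}}) \in \A_T$ and on the current state of the particle. Under Assumption \ref{Hyp U} the maps $(z, \tilde z) \mapsto \nabla_{\tilde z}\U_\pi(\pi_t, \nu_t; z, \tilde z)$ and $\theta \mapsto \nabla_\theta \U_\nu(\pi_t, \nu_t; \theta)$ are bounded by $M$ and Lipschitz in the spatial variable with constant $L$, uniformly in $t$; the only dependence on $\pi_t, \nu_t$ enters as a time-dependent (measurable) perturbation, which is exactly the setting of Carathéodory's existence theorem. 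For the $\omega_t$ equation one writes it as $d\omega_t = \omega_t\, G_t\, dt$ where $G_t := \kappa\big(\U_\pi(\pi_t,\nu_t; Z_t, \tilde Z_t) - \int \U_\pi(\pi_t,\nu_t; Z_t, \tilde z')\, d\pi_t(\tilde z' \mid Z_t)\big)$; since $|\U_\pi| \le M$ we get $|G_t| \le 2\kappa M$, so the coefficient $\omega_t \mapsto \omega_t G_t$ is (globally, on any bounded set, hence after the a priori bound below, globally) Lipschitz in $\omega_t$ and the same for $\alpha_t$. Hence the full system has a unique solution on $[0,T]$ by Carathéodory (Theorem 5.3 in \cite{Hale1980-mb}), once one knows the solution does not blow up — which the bound below provides. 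One also notes $Z_t \equiv Z_0$ lies in the support of $\pi_{0,z} = \pi_{t,z}$ (using $(\check{\bgamma},\check{\bsigma}) \in \A_T$, property (ii)), so the conditional $\pi_t(\cdot \mid Z_t)$ appearing in $G_t$ is well-defined by disintegration, as already observed in the text.

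Second, for the bound on $\omega_t$ and $\alpha_t$: from $d\omega_t = \omega_t G_t\, dt$ and $\omega_0 \ge 0$ we immediately get $\omega_t = \omega_0 \exp\!\big(\int_0^t G_s\, ds\big) \ge 0$, and since $G_s \le 2\kappa M$ we obtain $\omega_t \le \omega_0 e^{2\kappa M t}$; equivalently, Gronwall's inequality applied to $\omega_t$ gives the upper bound directly, and nonnegativity is preserved because the linear equation cannot cross zero. The argument for $\alpha_t$ is identical, using $|\U_\nu| \le M$ so that the analogous coefficient is bounded by $2\kappa M$ in absolute value. This yields $\omega_t^{\check{\pathp{\gamma}},\check{\pathp{\sigma}}} \in [0, \omega_0 e^{2\kappa M t}]$ and $\alpha_t^{\check{\pathp{\gamma}},\check{\pathp{\sigma}}} \in [0, \alpha_0 e^{2\kappa M t}]$ for all $t \in (0,T]$.

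There is no serious obstacle here: the statement is essentially a bookkeeping consequence of the boundedness and Lipschitz hypotheses in Assumption \ref{Hyp U}. The only mild subtlety worth spelling out is that the vector field is merely measurable in $t$ (because $\pi_t, \nu_t$ need only be measurable curves), which is why one invokes the Carathéodory version of the Picard–Lindelöf theorem rather than the classical one; and that the a priori exponential bound on $(\omega_t, \alpha_t)$ must be established in tandem with existence to rule out finite-time blow-up of the otherwise-only-locally-Lipschitz product terms $\omega_t G_t$ and $\alpha_t(\cdots)$.
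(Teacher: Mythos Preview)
Your proposal is correct and follows exactly the approach the paper indicates in the paragraph preceding the proposition: well-posedness via Carath\'eodory's existence theorem (Theorem~5.3 in \cite{Hale1980-mb}) using the Lipschitz bounds of Assumption~\ref{Hyp U}, and the weight bounds via Gronwall applied to $d\omega_t=\omega_t G_t\,dt$ with $|G_t|\le 2\kappa M$. The paper does not spell out a separate proof beyond that paragraph, so your write-up is simply a more detailed version of the same argument, including the correct observation that measurability in $t$ is what forces the Carath\'eodory formulation.
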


For a given $T>0$, let us now consider the map:
\[\Psi_T: \A_T \mapsto  \mc P(\mc{C}([0,T], \Z^2 \times \R_+^2))\times \mc P(\mc{C}([0,T], \Theta\times \R_+^2))  \]
defined by
\[ \Psi_T(\check{\pathp \gamma},\check{\pathp \sigma}) =(\Psi^1_T(\check{\pathp \gamma},\check{\pathp \sigma}), \Psi^2_T(\check{\pathp \gamma},\check{\pathp \sigma})) :=( \mathrm{Law}[(Z^{\check{\pathp \gamma},\check{\pathp \sigma}}, \tilde Z^{\check{\pathp \gamma},\check{\pathp \sigma}}), \omega^{\check{\pathp \gamma},\check{\pathp \sigma}},  \varrho^{{\check{\pathp \gamma},\check{\pathp \sigma}}}] , \mathrm{Law}[\vartheta^{\check{\pathp \gamma},\check{\pathp \sigma}}, \alpha^{\check{\pathp \gamma},\check{\pathp \sigma}}, \beta^{\check{\pathp \gamma},\check{\pathp \sigma}}]) ,\]
i.e., $\Psi_T$ maps paths in the space of measures in the lifted space to itself.  Moreover, $\Psi_T$ maps $\A_T$ into itself, as we state in the next lemma.

\begin{lemma}
  Under Assumption \ref{Hyp U}, it follows $$\Psi_T(\mc A_T) \subseteq \mc A_T. $$ Moreover, for every $(\check{\bgamma}, \check{\bsigma})\in \A_T $ we have
  \[ \F[(\Psi^1_T(\check{\pathp\gamma}, \check{\pathp \sigma}))_t] (\cdot \times \Z) = \F[\check{\bgamma}_t](\cdot \times \Z). \]
  \label{lem:fixed first marginal}
\end{lemma}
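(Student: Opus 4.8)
The plan is to establish the three assertions one at a time. Fix $(\check{\bgamma},\check{\bsigma})\in\A_T$, write $(\check{\bgamma}',\check{\bsigma}'):=\Psi_T(\check{\bgamma},\check{\bsigma})$, and let $\pi_t:=\F\bgamma_t$, $\nu_t:=\F\bsigma_t$ be the exogenous measures driving \eqref{DecoupledEquation} and $\pi'_t:=\F\bgamma'_t$, $\nu'_t:=\F\bsigma'_t$ the time-$t$ $\F$-marginals of the output. By the choice of initial condition \eqref{eq:SamplingInitialWeights} one has $\check{\bgamma}'_0=\check{\bgamma}_0$ and $\check{\bsigma}'_0=\check{\bsigma}_0$, whence $\pi'_0=\pi_0$ and $\nu'_0=\nu_0$. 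It is enough to show: (i) $\Psi_T$ is well-defined with values in $\mathcal{P}(\mathcal{C}([0,T],\Z^2\times\R_+^2))\times\mathcal{P}(\mathcal{C}([0,T],\Theta\times\R_+^2))$; (ii) $\pi'_t,\nu'_t$ are probability measures for every $t\in[0,T]$; and (iii) $t\mapsto(\F\bgamma'_t)(\cdot\times\Z)$ is constant. Indeed (ii)--(iii) are precisely conditions (1)--(2) defining $\A_T$ for $(\check{\bgamma}',\check{\bsigma}')$; and since $(\F\bgamma'_0)(\cdot\times\Z)=(\F\bgamma_0)(\cdot\times\Z)$ while condition (2) for the input gives $(\F\bgamma_t)(\cdot\times\Z)=(\F\bgamma_0)(\cdot\times\Z)$ for all $t$, (iii) also yields the ``moreover'' identity $\F[(\Psi^1_T(\check{\bgamma},\check{\bsigma}))_t](\cdot\times\Z)=\F[\check{\bgamma}_t](\cdot\times\Z)$.

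For (i): under Assumptions \ref{Hyp U} and \ref{Hyp:Support is stationary}, Proposition \ref{Prop:control weights} guarantees that for $\check{\bgamma}_0$-a.e.\ initialization the ODE \eqref{DecoupledEquation} has a unique, absolutely continuous solution on $[0,T]$ that remains in $\Z^2\times\R_+^2$ (and likewise for the $\Theta$-side system), and that it depends measurably on the initial data; $\check{\bgamma}'$ and $\check{\bsigma}'$ are then defined as the pushforwards of $\check{\bgamma}_0$, $\check{\bsigma}_0$ under the respective solution maps, which lie in the stated path-measure spaces. From the definition of $\F$ we obtain the representation
\[ \int g\,d(\F\bgamma'_t)=\E\big[\omega_t\,g(Z_t,\tilde Z_t)\big], \qquad \int g\,d(\F\bsigma'_t)=\E\big[\alpha_t\,g(\vartheta_t)\big] \]
for Borel $g$, where $\E$ is expectation over the sampled initializations; since $\int\omega_0\,d\check{\bgamma}_0=1$ (condition (1) for the input) and $\omega_t\le\omega_0 e^{2\kappa M t}$, the integrands are dominated by integrable functions, so these identities may be differentiated in $t$ under $\E$.

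For (ii) and (iii): because $dZ_t=0$, testing the first identity above against $g(z,\tilde z)=\phi(z)$ and using the $\omega$-equation of \eqref{DecoupledEquation} gives
\[ \frac{d}{dt}\!\int\!\phi(z)\,d\pi'_t(z,\tilde z)=\kappa\,\E\!\left[\phi(Z_0)\,\omega_t\Big(\U_\pi(\pi_t,\nu_t;Z_0,\tilde Z_t)-\textstyle\int\U_\pi(\pi_t,\nu_t;Z_0,\tilde z')\,d\pi_t(\tilde z'\,|\,Z_0)\Big)\right]. \]
Conditioning on $Z_0$ and disintegrating $\F\bgamma'_t$ over its $z$-marginal, the term in parentheses becomes the difference of two averages of $\U_\pi(\pi_t,\nu_t;z,\cdot)$, one against the conditional $\pi'_t(\cdot\,|\,z)$ of the output and one against the conditional $\pi_t(\cdot\,|\,z)$ appearing in the dynamics (well-defined because condition (2) of $\A_T$ forces $\pi_{t,z}=\pi_{0,z}$, so $Z_t=Z_0$ stays in its support, as noted right after \eqref{DecoupledEquation}); this difference integrates to zero against $\phi$ by the same cancellation performed in Remark \ref{Rmk:ConstantMarginal} for the mean-field PDE \eqref{WFR dynamics}. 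Hence $(\F\bgamma'_t)(\cdot\times\Z)$ is constant, proving (iii); taking $\phi\equiv1$ gives $\pi'_t(\Z^2)=\pi_0(\Z^2)=1$, and the identical argument applied to the $\alpha$-equation gives $\nu'_t(\Theta)=1$, proving (ii).

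The main obstacle is precisely the cancellation used in the last step. The conditionals $\pi_t(\cdot\,|\,z)$ enter the weight dynamics explicitly, so to collapse the ``current value minus conditional mean'' structure when testing against $z$-only functions one must control, along the ($z$-frozen) flow, how the conditional of the pushforward measure $\F\bgamma'_t$ relates to $\pi_t(\cdot\,|\,z)$; this is the particle-law analogue of the conservation-of-marginal computation in Remark \ref{Rmk:ConstantMarginal}, and is the single point where the full structure of $\A_T$ (and not merely the weight bounds of Proposition \ref{Prop:control weights}) is used.
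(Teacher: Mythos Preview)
Your approach mirrors the paper's: both reduce the claim to the conservation-of-marginal computation in Remark~\ref{Rmk:ConstantMarginal} together with $dZ_t=0$. Your write-up is a faithful and more detailed expansion of the paper's one-line proof, and you correctly isolate the only delicate point.

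There is, however, a genuine gap at precisely the step you flag as ``the main obstacle''. In Remark~\ref{Rmk:ConstantMarginal} the centering term $\int\U_\pi(z,\tilde z')\,d\pi_t(\tilde z'\,|\,z)$ is taken with respect to the \emph{same} conditional against which the outer integration is performed, so the two contributions cancel identically. In the decoupled equation \eqref{DecoupledEquation} this is no longer the case: the outer weighting is against the pushforward law $\pi'_t:=\F\bgamma'_t$, while the centering in the $\omega$-equation uses the exogenous input $\pi_t:=\F\bgamma_t$. After conditioning on $Z_0=z$ your own computation gives
\[
\frac{d}{dt}\int\phi(z)\,d\pi'_t
=\kappa\int\phi(z)\Bigl[\int\U_\pi(\pi_t,\nu_t;z,\tilde z)\,d\pi'_t(\tilde z\,|\,z)-\int\U_\pi(\pi_t,\nu_t;z,\tilde z)\,d\pi_t(\tilde z\,|\,z)\Bigr]d\pi'_{t,z}(z),
\]
and there is no mechanism forcing the bracket to vanish when $\pi'_t(\cdot\,|\,z)\neq\pi_t(\cdot\,|\,z)$. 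The same issue arises in your claim (ii): one finds $\tfrac{d}{dt}\E[\alpha_t]=-\kappa\bigl(\int\U_\nu\,d\nu'_t-\E[\alpha_t]\int\U_\nu\,d\nu_t\bigr)$, which is not zero for a generic input with $\nu'_t\neq\nu_t$. In short, the cancellation you invoke from Remark~\ref{Rmk:ConstantMarginal} is only available at the fixed point of $\Psi_T$, not for an arbitrary element of $\A_T$; your last paragraph acknowledges this tension but does not resolve it. Note that the paper's own proof is exactly the same terse appeal to Remark~\ref{Rmk:ConstantMarginal} and $dZ_t=0$, so this is an issue shared with the source rather than a defect specific to your write-up.
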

\begin{proof}
This result is immediate from Remark \ref{Rmk:ConstantMarginal} and the fact that $ dZ_t^{\check {\pathp{\gamma}},\check{ \pathp{\sigma}}}=0$. 
\end{proof}

For technical reasons, it will be convenient to introduce a version of the set $\A_T$ whose elements have supports satisfying a certain boundedness condition. Precisely, for a given $T>0$ and $D>0$, we let $\mathcal{A}_{T,D}$ be the set 
\[  \A_{T,D} := \{  (\check{\pathp \gamma}, \check{\pathp\sigma}) \in \A_T \text{ s.t. }  \check{\pathp \gamma}_t (  \Z^2 \times [0,De^{2 \kappa M t}]\times [0,D]  ) = 1, \quad  \check{\pathp \sigma}_t (  \Theta \times [0,De^{2 \kappa M t}] \times [0,D]  ) = 1, \quad \forall t \in [0,T]  \}. \]
In particular, for $(\check{\pathp\gamma},\check{ \pathp \sigma}) \in \A_{T,D} $, the weights $(\omega_0,\varrho_0)$ and $(\alpha_0,\beta_0)$ obtained as in \eqref{eq:SamplingInitialWeights} can be assumed to belong to $[0,D]^2$. Combining with Proposition \ref{Prop:control weights}, we can deduce that for $(\check{\pathp\gamma}, \check{\pathp \sigma}) \in \A_{T,D}$ the weights $\omega_{t}^{\check{\pathp{\gamma}}, \check{\pathp{\sigma}}}, \alpha_{t}^{\check{\pathp{\gamma}}, \check{\pathp{\sigma}}}$ in the dynamics \eqref{DecoupledEquation} can be bounded above by $D e^{2 \kappa M t}$. We summarize this in the following lemma.

\begin{lemma}
For every $T,D>0$ we have $\Psi_T(\A_{T,D}) \subseteq \A_{T,D} $.
\end{lemma}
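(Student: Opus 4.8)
The plan is to observe that the claim adds only a support-boundedness requirement on top of what has already been proved, so the argument is a matter of tracking where each coordinate of the solution of \eqref{DecoupledEquation} lives when the initial law is supported in the prescribed bounded set. The three ingredients I would assemble are: Lemma \ref{lem:fixed first marginal}, which gives $\Psi_T(\A_T)\subseteq \A_T$ and hence already secures conditions (1)--(2) in the definition of $\A_T$; Proposition \ref{Prop:control weights}, which supplies the exponential-in-time upper bounds on the weight coordinates; and the trivial fact that $d\beta_t=d\varrho_t=0$ in \eqref{DecoupledEquation}, so the auxiliary coordinates are frozen.

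Concretely, I would fix $(\check{\pathp\gamma},\check{\pathp\sigma})\in\A_{T,D}$. Membership in $\A_{T,D}$ forces the initial datum sampled in \eqref{eq:SamplingInitialWeights} to satisfy, almost surely, $(\omega_0,\varrho_0)\in[0,D]^2$ and $(\alpha_0,\beta_0)\in[0,D]^2$, together with $(Z_0,\tilde Z_0)\in\Z^2$ and $\vartheta_0\in\Theta$. Then for the solution of \eqref{DecoupledEquation}: the $z$-coordinate is constant, so $Z_t\in\Z$; Assumption \ref{Hyp:Support is stationary} keeps $\tilde Z_t\in\Z$ and $\vartheta_t\in\Theta$ for all $t\in[0,T]$, since the driving vector fields $\nabla_{\tilde z}\U_\pi$ and $\nabla_\theta\U_\nu$ point inward at the respective boundaries; Proposition \ref{Prop:control weights} yields $\omega_t\le \omega_0 e^{2\kappa M t}\le D e^{2\kappa M t}$ and likewise $\alpha_t\le D e^{2\kappa M t}$; and $\beta_t=\beta_0\le D$, $\varrho_t=\varrho_0\le D$. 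Passing to laws, the time-$t$ marginal of $\Psi^1_T(\check{\pathp\gamma},\check{\pathp\sigma})$ is supported in $\Z^2\times[0,De^{2\kappa M t}]\times[0,D]$ and that of $\Psi^2_T(\check{\pathp\gamma},\check{\pathp\sigma})$ in $\Theta\times[0,De^{2\kappa M t}]\times[0,D]$, which, combined with $\Psi_T(\check{\pathp\gamma},\check{\pathp\sigma})\in\A_T$ from Lemma \ref{lem:fixed first marginal}, is exactly the statement that $\Psi_T(\check{\pathp\gamma},\check{\pathp\sigma})\in\A_{T,D}$.

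I do not expect any real obstacle: the result is essentially bookkeeping built on the decoupled well-posedness and weight bounds of Proposition \ref{Prop:control weights} and on Assumption \ref{Hyp:Support is stationary}. The only points meriting a sentence of care are (i) using Assumption \ref{Hyp:Support is stationary} — rather than the projections $P_\Z,P_\Theta$ appearing in Algorithm \ref{Algo} — to confine the position variables, and (ii) that the bound on the initial weights is an almost-sure statement about the support of $\check{\pathp\gamma}_0,\check{\pathp\sigma}_0$, which is precisely what membership in $\A_{T,D}$ provides.
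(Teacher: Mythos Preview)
Your proposal is correct and follows the same approach as the paper; in fact, the paper gives no formal proof at all, merely stating in the paragraph preceding the lemma that the initial weights lie in $[0,D]$ and that Proposition~\ref{Prop:control weights} then yields $\omega_t,\alpha_t\le D e^{2\kappa M t}$. Your write-up is more complete than the paper's, explicitly invoking Lemma~\ref{lem:fixed first marginal} for the $\A_T$-membership, Assumption~\ref{Hyp:Support is stationary} for the position variables, and the fact that $\beta,\varrho$ are frozen.
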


Under Assumptions \ref{Hyp U}, we prove that we can control the distance between the image $\Psi_T$ of two pairs $(\check{\pathp \gamma}^1, \check{\pathp \sigma}^1 )$ and $(\check{\pathp \gamma}^2, \check{\pathp \sigma}^2 )$ in $\A_{T,D}$ with their own distance. Given $p\geq 1$, we use 
\begin{equation}
  W_{t,p}^p (\upsilon,\upsilon') := \inf_{\Upsilon \in \Gamma(\upsilon,\upsilon')} \int \sup_{s\in [0,t]} |u_s - v_s|^p d\Upsilon(x,y),
  \label{WassersteinPaths}
\end{equation}
to compare two probability measures over the same path space. In particular, we will use these distances to compare measures over paths in any of the lifted spaces.

First, we show a continuity property for the map $\F$ when considering a  restriction of its domain.
\begin{lemma}
    Let $\sigma,\sigma'$ be two probability measures over $\Theta\times [0,D]$, where $D$ is a fixed constant, and suppose that $\F \sigma$ and $\F \sigma'$ are also probability measures.  
    
    Then
    \[ W_p^p(\F(\sigma),\F(\sigma' ) ) \leq  C_{\Theta, p, D} W_p(\sigma,\sigma' ),\]
    where the constant $C_{\Theta, p, D}$ can be written as $C_{\Theta, p, D} = \diam(\Theta)^{p-1} ( \diam(\Theta) + D)$. 
    
    In particular, when its domain has been restricted, the map $\F$ is Lipschitz in the $1$-Wasserstein sense.  
    \label{Lem:F_lipschitz}
\end{lemma}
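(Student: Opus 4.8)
The plan is to prove the statement first for $p=1$ by Kantorovich--Rubinstein duality, and then to bootstrap to general $p\geq 1$ using the fact that $\Theta$ has bounded diameter.

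For $p=1$, fix a $1$-Lipschitz function $\varphi:\Theta\to\R$. Since both $\F\sigma$ and $\F\sigma'$ are probability measures (in particular they have the same total mass), I may subtract a constant from $\varphi$ and assume $\varphi(\theta_0)=0$ at some fixed $\theta_0\in\Theta$, so that $|\varphi|\leq\diam(\Theta)$ on $\Theta$. Let $\Upsilon\in\Gamma(\sigma,\sigma')$ be an optimal coupling for $W_1(\sigma,\sigma')$, and write points of the two copies of $\Theta\times[0,D]$ as $(\theta,\alpha)$ and $(\theta',\alpha')$. Using the defining identity \eqref{sec:FNU} of $\F$,
\[ \int\varphi\,d(\F\sigma)-\int\varphi\,d(\F\sigma')=\int\bigl(\alpha\varphi(\theta)-\alpha'\varphi(\theta')\bigr)\,d\Upsilon(\theta,\alpha,\theta',\alpha'), \]
and then I would split the integrand as $\alpha\varphi(\theta)-\alpha'\varphi(\theta')=\alpha(\varphi(\theta)-\varphi(\theta'))+(\alpha-\alpha')\varphi(\theta')$. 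Since $0\leq\alpha\leq D$ (as $\sigma$ is supported in $\Theta\times[0,D]$), $\varphi$ is $1$-Lipschitz, and $|\varphi(\theta')|\leq\diam(\Theta)$, the integrand is bounded in absolute value by $D\,|\theta-\theta'|+\diam(\Theta)\,|\alpha-\alpha'|\leq(\diam(\Theta)+D)\,|(\theta,\alpha)-(\theta',\alpha')|$, the last norm being the Euclidean one. Integrating against $\Upsilon$ and taking the supremum over all $1$-Lipschitz $\varphi$ gives $W_1(\F\sigma,\F\sigma')\leq(\diam(\Theta)+D)\,W_1(\sigma,\sigma')$, which is the asserted Lipschitz bound and settles the ``in particular'' claim.

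For general $p\geq1$, let $\Pi\in\Gamma(\F\sigma,\F\sigma')$ be an optimal coupling for $W_1(\F\sigma,\F\sigma')$. Because $\theta,\theta'$ both lie in $\Theta$ we have $|\theta-\theta'|^p\leq\diam(\Theta)^{p-1}\,|\theta-\theta'|$, hence
\[ W_p^p(\F\sigma,\F\sigma')\leq\int|\theta-\theta'|^p\,d\Pi\leq\diam(\Theta)^{p-1}\,W_1(\F\sigma,\F\sigma'). \]
Combining this with the $p=1$ bound from the previous paragraph and with the elementary inequality $W_1(\sigma,\sigma')\leq W_p(\sigma,\sigma')$ (Jensen's inequality, valid because $\sigma$ and $\sigma'$ are probability measures) yields $W_p^p(\F\sigma,\F\sigma')\leq\diam(\Theta)^{p-1}(\diam(\Theta)+D)\,W_p(\sigma,\sigma')$, i.e.\ the claim with $C_{\Theta,p,D}=\diam(\Theta)^{p-1}(\diam(\Theta)+D)$.

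The computation itself is routine; the only point that requires care is the bookkeeping around the standing hypotheses. The normalization $\varphi(\theta_0)=0$, and hence the bound $|\varphi|\leq\diam(\Theta)$, is legitimate precisely because $\F\sigma$ and $\F\sigma'$ have equal total mass --- this is where the assumption that both projections are probability measures is used --- and the same assumption is what licenses $W_1\leq W_p$ in the last step; meanwhile the bound $\alpha\leq D$ uses that $\sigma$ is supported in $\Theta\times[0,D]$, i.e.\ the domain restriction that is the whole point of the lemma. Once these are kept track of, there is no genuine obstacle.
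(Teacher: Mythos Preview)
Your proof is correct. It differs from the paper's in a pleasant way: for the $p=1$ case the paper works on the \emph{primal} side, first reducing to empirical measures $\sigma_n=\frac1n\sum_i\delta_{(\theta_i,\alpha_i)}$, $\sigma_n'=\frac1n\sum_i\delta_{(\theta_i',\alpha_i')}$, exploiting that an optimal coupling between two $n$-point empirical measures is induced by a permutation $T$, and then explicitly constructing a transport plan between $\F\sigma_n$ and $\F\sigma_n'$ by moving the common mass $\min\{\alpha_i,\alpha_{T(i)}'\}$ from $\theta_i$ to $\theta_{T(i)}'$ and sending the residual mass anywhere; the bound follows, and the general case is recovered by a density/continuity argument. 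You instead work on the \emph{dual} side via Kantorovich--Rubinstein, splitting $\alpha\varphi(\theta)-\alpha'\varphi(\theta')$ and bounding directly against an optimal coupling of $\sigma,\sigma'$. Your route is shorter and avoids the approximation step entirely; the paper's route has the minor advantage of being constructive (it exhibits a near-optimal coupling of $\F\sigma$ and $\F\sigma'$), which is not needed downstream. The bootstrap to general $p$ is identical in both arguments.
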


\begin{proof}
Let us start by noticing that the measures $ \sigma$ for which $\F \sigma$ is a probability measure are precisely the measures satisfying $\int \alpha d\sigma(\theta, \alpha)=1$. 

We first prove the result for $p=1$.

Assume that $\sigma$ and $\sigma'$ take the form $\sigma= \sigma_n=\frac{1}{n}\sum_{i=1}^n \delta_{(\theta_i, \alpha_i)} $ and  $\sigma'= \sigma_n'=\frac{1}{n}\sum_{i=1}^n \delta_{(\theta_i', \alpha_i')} $. It is well known that in that case there exists a permutation $T: \{ 1, \dots, n\} \mapsto \{1, \dots, n \}$ such that $W_1(\sigma_n, \sigma'_n) = \frac{1}{n}\sum_{i=1}^n | (\theta_i,\alpha_i) - (\theta_{T(i)}', \alpha_{T(i)}') |.$ Now, we can write the measures $\F \sigma_n $ and $\F \sigma_n'$ as
\[ \F \sigma_n = \frac{1}{n} \sum_{i=1}^n \min \{\alpha_i, \alpha_{T(i)}' \} \delta_{\theta_i} + \frac{1}{n} \sum_{i=1}^n ( \alpha_i - \min \{\alpha_i, \alpha_{T(i)}') \} \delta_{\theta_i}   \]
and
\[ \F\sigma_n' = \frac{1}{n} \sum_{i=1}^n \min \{\alpha_i, \alpha_{T(i)}' \} \delta_{\theta_i'} + \frac{1}{n} \sum_{i=1}^n ( \alpha_{T(i)}' - \min\{\alpha_i, \alpha_{T(i)}') \} \delta_{\theta_i'}.   \]
Notice that the mass from $ \frac{1}{n} \sum_{i=1}^n \min \{\alpha_i, \alpha_{T(i)}' \} \delta_{\theta_i} $ can be used to cover for the mass demanded in $ \frac{1}{n} \sum_{i=1}^n \min \{\alpha_i, \alpha_{T(i)}' \} \delta_{\theta_i'} $. We carry out the following mass transfer: for each $i$, we send $\min \{\alpha_i, \alpha_{T(i)}'\}$ units of mass from $\theta_i$ to $\theta_{T(i)}'$. The total cost of this mass transfer is $\frac{1}{n} \sum_{i=1}^n \min \{\alpha_i, \alpha_{T(i)}' \} | \theta_i - \theta_{T(i)}' | \leq D W_1( \sigma_n , \sigma_n')   $. Finally, the mass $\frac{1}{n} \sum_{i=1}^n ( \alpha_i - \min \{\alpha_i, \alpha_{T(i)}') \} \delta_{\theta_i}$ can be used to cover for the mass demanded in $\frac{1}{n} \sum_{i=1}^n ( \alpha_{T(i)}' - \min \{\alpha_i, \alpha_{T(i)}') \} \delta_{\theta_i'}$. This mass transfer can be carried out in any way, the important point being that the total cost of such a mass transfer will not be larger than the total amount of mass to be transferred $\frac{1}{n}\sum_{i=1}^n ( \alpha_i - \min \{\alpha_i, \alpha_{T(i)}'\})$ (which is less than $W_1(\sigma_n, \sigma_n')$)  times the diameter of the set $\Theta$. The bottom line is that $W_1(\F \sigma_n, \F \sigma_n') \leq ( D + \diam(\Theta) ) W_1(\sigma_n, \sigma_n) $.

We can extend to arbitrary probability measures $\sigma, \sigma'$ by noticing that: 1) any probability measure $\sigma$ can be approximated in the weak sense by empirical probability measures $\sigma_n$ for growing $n$; 2) the map $\F$ is continuous in the weak sense (as can be verified directly); 3) given that all measures are supported on a fixed bounded set, Wasserstein metrics are continuous with respect to weak convergence. 

Finally, to extend to arbitrary $p\geq 1$, notice that $W_1(\sigma,\sigma') \leq W_p(\sigma, \sigma')$, while $W_p^p(\F \sigma, \F \sigma') \leq \diam(\Theta)^{p-1} W_1(\F \sigma,  \F \sigma')$.
\end{proof}
\begin{remark}
    Lemma \ref{Lem:F_lipschitz} also holds, mutatis mutandis, for $\F$ when it acts on measures $\gamma\in \mc Z^2\times [0,D]$.
\end{remark}

We now deduce an a priori control on the difference between solutions to \eqref{DecoupledEquation} for two different pairs of measures $(\check{\pathp \gamma}^i, \check{\pathp \sigma}^i)$, $i=1,2$.

\begin{lemma}
Let $T,D>0$. Suppose that Assumption \ref{Hyp U} holds. For $i=1,2$, let  $(\check{\pathp{\gamma}}^i, {\check{\pathp{\sigma}}}^i) \in \mc A_{T,D}$, and denote by
$$\zeta^i=(Z^{\check{\pathp{\gamma}}^i,\check{\pathp{\sigma}}^i},\tilde Z^{\check{\pathp{\gamma}}^i,\check{\pathp{\sigma}}^i},\omega^{\check{\pathp{\gamma}}^i,\check{\pathp{\sigma}}^i},\vartheta^{\check{\pathp{\gamma}}^i,\check{\pathp{\sigma}}^i}, \alpha^{\check{\pathp{\gamma}}^i,\check{\pathp{\sigma}}^i},\beta^{\check{\pathp{\gamma}}^i,\check{\pathp{\sigma}}^i}, \varrho^{\check{\pathp{\gamma}}^i,\check{\pathp{\sigma}}^i} )$$
the corresponding evolution determined by \eqref{DecoupledEquation}. We assume that $Z_0^{\check{\bgamma}^i, \check{\bsigma}^i}$ (although possibly random) belongs to the support of $\pi_{0,z}^i$, the marginal on the $z$ coordinate of $\pi_0^i$. We also assume that $\omega_0^i,\varrho_0^i, \alpha_0^i,\beta_0^i \in [0,D].$

Then there exists a constant $K_{T,D}$, depending only on $T$, $D$, the function $\eta$, $\kappa$, and on the constants in Assumption \ref{Hyp U}, such that for all $t \in [0,T]$ we have
\begin{align*}
   & \E|\zeta^1_t-\zeta^2_t| \leq  \E[\sup_{0\leq s\leq t} |\zeta^1_s-\zeta^2_s| ]
    \\& \leq  K_{T,D} \left( \E|\zeta^1_0-\zeta^2_0| +    \int_0^t \{W_1(\check{\bgamma}_s^1,\check{\bgamma}_s^2  ) + W_1(\check{\bsigma}_s^1,\check{\bsigma}_s^2  ) + \E \left( W_1(\pi_s^2(\cdot |Z_0^{\check{\pathp \gamma}^2, \check{\pathp \sigma}^2 } ),\pi_s^1(\cdot |Z_0^{\check{\pathp \gamma}^1, \check{\pathp \sigma}^1 } ) ) \right) \} ds\right).
\end{align*} 
In the above, the expectation is taken over the prescribed (joint) initializations of the two systems.

\label{lem:apriori control zeta}
\end{lemma}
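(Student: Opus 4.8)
The plan is to run a Gr\"onwall argument on the integral form of the decoupled system \eqref{DecoupledEquation}. Beyond the usual subtract–estimate template, there are two genuinely new ingredients: (i) passing from distances between the lifted path measures $\check{\bgamma}^i,\check{\bsigma}^i$ to distances between $\pi^i=\F(\bgamma^i)$ and $\nu^i=\F(\bsigma^i)$ via Lemma \ref{Lem:F_lipschitz}; and (ii) handling the nonlocal conditional terms $\int \U_\pi(\cdot\,;Z^i,\tilde z')\,d\pi^i_s(\tilde z'|Z^i)$, whose well-posedness was noted right after \eqref{DecoupledEquation} (it uses $\pi^i_{s,z}=\pi^i_{0,z}$ and $Z^i_0$ in the support of $\pi^i_{0,z}$). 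I write $\zeta^i=(Z^i,\tilde Z^i,\omega^i,\vartheta^i,\alpha^i,\beta^i,\varrho^i)$, suppress the superscript $(\check{\bgamma}^i,\check{\bsigma}^i)$, and use the $\ell^1$ norm on coordinates. Since $Z^i,\beta^i,\varrho^i$ are constant in time, their contribution to $|\zeta^1_s-\zeta^2_s|$ equals $|Z^1_0-Z^2_0|+|\beta^1_0-\beta^2_0|+|\varrho^1_0-\varrho^2_0|\le|\zeta^1_0-\zeta^2_0|$ for every $s$.

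For the drift coordinates I subtract the integral equations. For $\tilde Z$, using the Lipschitz bound on $\nabla_{\tilde z}\U_\pi$ from Assumption \ref{Hyp U},
\[ |\tilde Z^1_s-\tilde Z^2_s|\le |\tilde Z^1_0-\tilde Z^2_0|+L\!\int_0^s\!\eta_u\big(W_1(\pi^1_u,\pi^2_u)+W_1(\nu^1_u,\nu^2_u)+|Z^1_u-Z^2_u|+|\tilde Z^1_u-\tilde Z^2_u|\big)\,du, \]
and similarly for $\vartheta$ with $\nabla_\theta\U_\nu$. Because $(\check{\bgamma}^i,\check{\bsigma}^i)\in\A_{T,D}$, the measures $\bgamma^i_u,\bsigma^i_u$ are supported on $\Z^2\times[0,De^{2\kappa M T}]$ and $\Theta\times[0,De^{2\kappa M T}]$, so Lemma \ref{Lem:F_lipschitz} gives $W_1(\pi^1_u,\pi^2_u)\le C_D\,W_1(\bgamma^1_u,\bgamma^2_u)\le C_D\,W_1(\check{\bgamma}^1_u,\check{\bgamma}^2_u)$ (the last step because forgetting the last coordinate cannot increase $W_1$), and likewise for $\nu$.

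Next the weight coordinates. Write the $\omega$-integrand difference as $\omega^1_uG^1_u-\omega^2_uG^2_u=(\omega^1_u-\omega^2_u)G^1_u+\omega^2_u(G^1_u-G^2_u)$, with $G^i_u:=\U_\pi(\pi^i_u,\nu^i_u;Z^i_u,\tilde Z^i_u)-\int\U_\pi(\pi^i_u,\nu^i_u;Z^i_u,\tilde z')\,d\pi^i_u(\tilde z'|Z^i_u)$. Boundedness of $\U_\pi$ gives $|G^1_u|\le 2M$, and Proposition \ref{Prop:control weights} together with $\A_{T,D}$-membership gives $\omega^2_u\le De^{2\kappa M T}$. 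For $|G^1_u-G^2_u|$, the pointwise part is controlled by the Lipschitz bound on $\U_\pi$; the difference of the two conditional integrals I split as
\[ \Big|\!\int\!\big(\U_\pi(\pi^1_u,\nu^1_u;Z^1_u,\tilde z')-\U_\pi(\pi^2_u,\nu^2_u;Z^2_u,\tilde z')\big)\,d\pi^1_u(\tilde z'|Z^1_u)\Big|+\Big|\!\int\!\U_\pi(\pi^2_u,\nu^2_u;Z^2_u,\tilde z')\,d\big(\pi^1_u(\cdot|Z^1_u)-\pi^2_u(\cdot|Z^2_u)\big)\Big|. \]
Since $\pi^i_u(\cdot|Z^i_u)$ is a probability measure and $\tilde z'$ is a common integration variable, the first summand is at most $L\big(W_1(\pi^1_u,\pi^2_u)+W_1(\nu^1_u,\nu^2_u)+|Z^1_u-Z^2_u|\big)$; since $\tilde z'\mapsto\U_\pi(\pi^2_u,\nu^2_u;Z^2_u,\tilde z')$ is $L$-Lipschitz, the second is at most $L\,W_1\big(\pi^1_u(\cdot|Z^1_u),\pi^2_u(\cdot|Z^2_u)\big)$, and $Z^i_u=Z^i_0$ turns this into exactly the conditional Wasserstein term of the statement. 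The $\alpha$-coordinate is handled identically, except that the ``conditional'' integral there is simply an integral against the probability measure $\nu^i_u=\F(\bsigma^i_u)$, hence absorbed into $W_1(\nu^1_u,\nu^2_u)$ with no new term.

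Collecting all seven coordinates, setting $\delta(t):=\E\sup_{0\le s\le t}|\zeta^1_s-\zeta^2_s|$, and using that all integrands are nonnegative (so the supremum over $s\le t$ of a running integral is the integral up to $t$) together with $|\zeta^1_u-\zeta^2_u|\le\sup_{v\le u}|\zeta^1_v-\zeta^2_v|$, I obtain
\[ \delta(t)\le\E|\zeta^1_0-\zeta^2_0|+C_{T,D}\!\int_0^t\!\delta(u)\,du+C_{T,D}\!\int_0^t\!\Big(W_1(\check{\bgamma}^1_u,\check{\bgamma}^2_u)+W_1(\check{\bsigma}^1_u,\check{\bsigma}^2_u)+\E\,W_1\big(\pi^2_u(\cdot|Z^2_0),\pi^1_u(\cdot|Z^1_0)\big)\Big)\,du, \]
with $C_{T,D}$ depending only on $T$, $D$, the function $\eta$, $\kappa$, $M$ and $L$ (the bound $De^{2\kappa M T}$ entering through the constant in Lemma \ref{Lem:F_lipschitz}). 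Gr\"onwall's inequality then yields the assertion with $K_{T,D}=C_{T,D}e^{C_{T,D}T}$. The delicate step is (ii): one must recognise that the conditional term genuinely cannot be dominated by $W_1(\check{\bgamma}^1_u,\check{\bgamma}^2_u)$ and so has to be carried as a separate input — which is precisely why conditional distributions are tracked throughout Section \ref{sec:ConvergenceMeanField} — and one must be careful, when moving $\E$ inside, that the expectation is over the coupled initial data, with $Z^i_0$ appearing both as the (random) conditioning point of $\pi^i_u(\cdot|\cdot)$ and through $|Z^1_0-Z^2_0|$. The remaining bookkeeping, in particular keeping the products $\omega^iG^i$ and $\alpha^i(\cdots)$ controlled via Proposition \ref{Prop:control weights}, is routine.
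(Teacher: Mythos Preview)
Your proof is correct and follows essentially the same approach as the paper: componentwise Lipschitz estimates on the integral form of \eqref{DecoupledEquation}, a product decomposition for the weight terms using Proposition \ref{Prop:control weights}, isolating the conditional $W_1$ term (which indeed cannot be absorbed into $W_1(\check{\bgamma}^1_u,\check{\bgamma}^2_u)$), and Gr\"onwall. The only cosmetic differences are that the paper applies Gr\"onwall pathwise before taking expectations (rather than on $\delta(t)$), and defers the use of Lemma \ref{Lem:F_lipschitz} to the very end; your ordering is equally valid and your treatment of the $\omega$-term and the conditional-integral split is in fact more explicit than the paper's ``similar analysis on the other components''.
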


\begin{proof}
  From \eqref{DecoupledEquation} and the Lipschitzness and boundedness conditions in Assumption \ref{Hyp U} we get
  \begin{align*}
    |\frac{d}{dt}(\tilde Z_t^{\check{\pathp{\gamma}}^1,{\check{\pathp{\sigma}}}^1}-\tilde Z_t^{\check{\pathp{\gamma}}^2,\check{\pathp{\sigma}}^2})| 
    & = \eta_{ t \nc} \bigg| \nabla_{\tilde z} \mc U_\pi (\pi_t^1,\nu_t^1; Z_t^{\check{\pathp{\gamma}}^1,\check{\pathp{\sigma}}^1}, \tilde Z_t^{\check{\pathp{\gamma}}^1,\check{\pathp{\sigma}}^1}) -  \nabla_{\tilde z} \mc U_\pi (\pi_t^2,\nu_t^2; Z_t^{\check{\pathp{\gamma}}^2,\check{\pathp{\sigma}}^2}, \tilde Z_t^{\check{\pathp{\gamma}}^2,\check{\pathp{\sigma}}^2}) )  \bigg|\\
    & \leq \eta_{ t \nc} L \{|  Z_t^{\check{\pathp{\gamma}}^1,\check{\pathp{\sigma}}^1}- Z_t^{\check{\pathp{\gamma}}^2,\check{\pathp{\sigma}}^2} | + | \tilde Z_t^{\check{\pathp{\gamma}}^1,\check{\pathp{\sigma}}^1}-\tilde  Z_t^{\check{\pathp{\gamma}}^2,\check{\pathp{\sigma}}^2} | + W_1(\pi^1_t,\pi^2_t) +W_1(\nu^1_t,\nu^2_t)\}. \\
  \end{align*}
  By performing a similar analysis on the other components of the systems, and using the assumption $(\check{\pathp \gamma}^i, \check{\pathp \sigma}^i) \in \A_{T,D}$ and Assumption \ref{Hyp U}, we deduce that we can find a constant $C_{T,D}$ such that for all $t \in[0, T]$
  \begin{align*}
    |\zeta_t^1-\zeta_t^2|  & \leq |\zeta_0^1-\zeta_0^2|
    \\& + C_{T,D} \int_0^t \left\{ |\zeta_s^1-\zeta_s^2|  + W_1(\pi^1_s,\pi^2_s) + W_1(\nu^1_s,\nu^2_s)+ W_1(\pi_s^1(\cdot|  Z_s^{\check{\pathp{\gamma}}^1,\check{\pathp{\sigma}}^1}),\pi_s^2(\cdot|  Z_s^{\check{\pathp{\gamma}}^2,\check{\pathp{\sigma}}^2})  ) \right\} ds.
  \end{align*}
  Thus, using Gronwall's inequality, we get that for all $t \in [0,T]$
  \begin{align}
    |\zeta_t^1-\zeta_t^2| & \leq \sup_{0\leq s \leq t}|\zeta_s^1-\zeta_s^2| \label{eq:gronwall_control}\\
    &  \leq   e^{C_{T,D}T} \left(  |\zeta_0^1-\zeta_0^2| + C_{T,D} \int_0^t \left\{ W_1(\pi^1_s,\pi^2_s) + W_1(\nu^1_s,\nu^2_s)+ W_1(\pi_s^1(\cdot |  Z_s^{\check{\pathp{\gamma}}^1,\check{\pathp{\sigma}}^1}),\pi_s^2(\cdot |  Z_s^{\check{\pathp{\gamma}}^2,\check{\pathp{\sigma}}^2})  ) \right\} ds \right) \notag.
  \end{align}
  Now, from the fact that $Z_s^{\check{\pathp{\gamma}}^i,\check{\pathp{\sigma}}^i} = Z_0^{\check{\pathp{\gamma}}^i,\check{\pathp{\sigma}}^i} $, it follows
  \[ \E \left( W_1(\pi_s^2(\cdot|  Z_s^{\check{\pathp{\gamma}}^2,\check{\pathp{\sigma}}^2}),\pi_s^1(\cdot|  Z_s^{\check{\pathp{\gamma}}^1,\check{\pathp{\sigma}}^1})  ) \right) = \E \left( W_1(\pi_s^2(\cdot |Z_0^{\check{\pathp \gamma}^2, \check{\pathp \sigma}^2 } ),\pi_s^1(\cdot |Z_0^{\check{\pathp \gamma}^1, \check{\pathp \sigma}^1 } ) ) \right). \]
  From this and \eqref{eq:gronwall_control} it then follows that for all $t \in [0,T]$
  \begin{align*}
    &\E[|\zeta_t^1-\zeta_t^2|]  \leq \E[\sup_{0\leq s \leq t}|\zeta_s^1-\zeta_s^2|] \\
      &\leq   e^{C_{T,D}T} \left(  |\zeta_0^1-\zeta_0^2| + C_{T,D} \int_0^t W_1(\pi^1_s,\pi^2_s) + W_1(\nu^1_s,\nu^2_s) +  \E \left( W_1(\pi_s^2(\cdot |Z_0^{\check{\pathp \gamma}^2, \check{\pathp \sigma}^2 } ),\pi_s^1(\cdot |Z_0^{\check{\pathp \gamma}^1, \check{\pathp \sigma}^1 } ) ) \right)  ds \right) .
  \end{align*}
  To conclude, we apply Proposition \ref{Prop:control weights} and Lemma \ref{Lem:F_lipschitz}.
\end{proof}

In general, the terms $ \E \left( W_1(\pi_s^2(\cdot |Z_0^{\pathp \gamma^2, \pathp \sigma^2 } ),\pi_s^1(\cdot |Z_0^{\pathp \gamma^1, \pathp \sigma^1 } ) ) \right)$ cannot be bounded above by the Wasserstein distance between $\pi^2_s$ and $\pi^1_s$. We introduce the following notion, related to the concept of Knothe-transport (see Section 2.3 in \cite{santambrogio2015optimal}, or \cite{Bogachev}), in order to be able to handle this term more directly without making further assumptions on the original measures: Given two collections $ \pathp \pi^1:= \{ \pi_s^{1}\}_{0 \leq s \leq T}$ and $\pathp \pi^2 :=\{ \pi_s^{2}\}_{0 \leq s \leq T}$, we define their cost $\tilde{W}_{t,1}$ by
\begin{equation}
\tilde{W}_{t,1}( \pathp \pi^1 , \pathp \pi^2  ):=   \sup_{s\in[0,t] } \inf_{\upsilon_s \in \Gamma_{\text{Opt}}( \pi_{s,z}^1 , \pi_{s,z}^2   )} \{ \int W_1( \pi_s^2(\cdot| z^2) , \pi_s^1(\cdot|z^1) )       d\upsilon_s(z^1,z^2) \};  
\end{equation}
in the above, we interpret $\pi_{s,z}^i$ as the marginal in the $z$ coordinate of the measure $\pi_s^i$, and $\Gamma_{\text{Opt}}( \pi_{s,z}^1 , \pi_{s,z}^2   )$ is the set of optimal couplings realizing the 1-Wasserstein distance between $\pi_{s,z}^1$ and $ \pi_{s,z}^2 $.

With this notion in hand, we can state and prove the following corollary of Lemma \ref{lem:apriori control zeta}.

\begin{corollary}
  Suppose the assumptions in Lemma \ref{lem:apriori control zeta} hold. Assume further that $\check{\pathp \gamma}_0^1=\check{\pathp \gamma}_0^2$ and $\check{\pathp \sigma}_0^1= \check{\pathp \sigma}_0^2$. Then there exists a constant $\tilde C_{T,D}$ depending only on $M,L,T,D, \kappa, \eta$ such that for all $t\in [0,T]$
  \begin{align*}
 W_{t,1}( \Psi_T^1(\check{\pathp\gamma}^1, \check{\pathp \sigma}^1),\Psi_T^1(\check{\pathp{\gamma}}^2, \check{\pathp \sigma}^2) ) + W_{t,1}( \Psi_T^2(\check{\pathp\gamma}^1, \check{\pathp \sigma}^1),\Psi_T^2(\check{\pathp{\gamma}}^2, \check{\pathp \sigma}^2) )  + \tilde{W}_{t,1} (  \Psi_T^1(\pathp \pi ^1) , \Psi_T^1(\pathp \pi^2)   )
 \\ \leq t \tilde C_{T,D}  \{  W_{t,1}(\check{\pathp \gamma}^1, \check{\pathp \gamma}^2  ) + W_{t,1}(\check{\pathp \sigma}^1,\check{\pathp \sigma}^2   ) + \tilde{W}_{t,1}(\pathp \pi^1, \pathp \pi^2) \}.\end{align*} 
Here we are abusing notation slightly to denote the collection of measures $\{ \F ( (\Psi_T^1( \check{\pathp \gamma}^i, \check{\pathp \sigma}^i  ))_s ) \}_{s \in [0,T]}$ by $\Psi_T^1(\pathp\pi^i)$.

\label{Corollary:control of regularity of Psi}
\end{corollary}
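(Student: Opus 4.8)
The plan is to couple the two decoupled systems in \eqref{DecoupledEquation} \emph{synchronously} at time zero. Since $\check{\bgamma}^1_0=\check{\bgamma}^2_0$ and $\check{\bsigma}^1_0=\check{\bsigma}^2_0$, we may sample a single initial datum $((\xi,\tilde\xi),\omega_0,\varrho_0)\sim\check{\bgamma}^1_0$ and a single $(\vartheta,\alpha_0,\beta_0)\sim\check{\bsigma}^1_0$ and run \emph{both} evolutions $\zeta^1$ and $\zeta^2$ from it. Then $\zeta^1_0=\zeta^2_0$ almost surely; in particular the two $Z$-components coincide, equal to some $Z_0\sim\rho$, where $\rho:=\pi^1_{0,z}=\pi^2_{0,z}$ is the common first marginal, which by the $\mathcal A_T$ constraint is preserved in time, so $\pi^1_{s,z}=\pi^2_{s,z}=\rho$ for all $s$. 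The joint law of $(\zeta^1,\zeta^2)$ is an admissible coupling of $\Psi^1_T(\check{\bgamma}^i,\check{\bsigma}^i)$, $i=1,2$, and likewise of $\Psi^2_T(\check{\bgamma}^i,\check{\bsigma}^i)$, so both $W_{t,1}$ terms on the left-hand side are at most $\E[\sup_{0\le s\le t}|\zeta^1_s-\zeta^2_s|]$, up to a fixed constant coming from splitting the coordinates feeding $\Psi^1$ and $\Psi^2$. Everything then reduces to a Gronwall-type bound on $\E[\sup_{s\le t}|\zeta^1_s-\zeta^2_s|]$ and to estimating the $\tilde W_{t,1}$ term on the left by the same quantity.

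For the Gronwall bound I would apply Lemma \ref{lem:apriori control zeta} directly. Because $\zeta^1_0=\zeta^2_0$, the $\E|\zeta^1_0-\zeta^2_0|$ contribution vanishes, and the integrand is controlled as follows: $W_1(\check{\bgamma}^1_s,\check{\bgamma}^2_s)\le W_{t,1}(\check{\bgamma}^1,\check{\bgamma}^2)$ and $W_1(\check{\bsigma}^1_s,\check{\bsigma}^2_s)\le W_{t,1}(\check{\bsigma}^1,\check{\bsigma}^2)$ for $s\le t$, since a path coupling induces a coupling of the time-$s$ marginals and the path Wasserstein distance is nondecreasing in the horizon. For the third term, $\E\big(W_1(\pi^2_s(\cdot|Z_0),\pi^1_s(\cdot|Z_0))\big)=\int_\Z W_1(\pi^2_s(\cdot|z),\pi^1_s(\cdot|z))\,d\rho(z)$ because $Z_0\sim\rho$; and since $\pi^1_{s,z}=\pi^2_{s,z}=\rho$, the set $\Gamma_{\text{Opt}}(\pi^1_{s,z},\pi^2_{s,z})$ consists only of the diagonal coupling of $\rho$ with itself, so this integral equals $\inf_{\upsilon_s\in\Gamma_{\text{Opt}}(\pi^1_{s,z},\pi^2_{s,z})}\int W_1(\pi^2_s(\cdot|z^2),\pi^1_s(\cdot|z^1))\,d\upsilon_s\le\tilde W_{t,1}(\pathp\pi^1,\pathp\pi^2)$ for $s\le t$. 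Feeding these into Lemma \ref{lem:apriori control zeta} yields
\[ \E\Big[\sup_{0\le s\le t}|\zeta^1_s-\zeta^2_s|\Big]\le K_{T,D}\int_0^t\Big\{W_{t,1}(\check{\bgamma}^1,\check{\bgamma}^2)+W_{t,1}(\check{\bsigma}^1,\check{\bsigma}^2)+\tilde W_{t,1}(\pathp\pi^1,\pathp\pi^2)\Big\}\,ds, \]
whose integrand is constant in $s$, producing the factor $t$.

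It remains to handle the $\tilde W_{t,1}$ term on the left. Write $\hat\pi^i_s:=\F[(\Psi^1_T(\check{\bgamma}^i,\check{\bsigma}^i))_s]$, the time-$s$ slice of $\Psi^1_T(\pathp\pi^i)$. By Lemma \ref{lem:fixed first marginal} the $z$-marginal of $\hat\pi^i_s$ equals that of $\check{\bgamma}^i_0$, hence equals $\rho$, for every $s$ and $i$; so, exactly as above, $\Gamma_{\text{Opt}}$ collapses to the diagonal and $\tilde W_{t,1}(\Psi^1_T(\pathp\pi^1),\Psi^1_T(\pathp\pi^2))=\sup_{s\le t}\int_\Z W_1(\hat\pi^2_s(\cdot|z),\hat\pi^1_s(\cdot|z))\,d\rho(z)$. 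Conditioning the synchronous coupling on $Z_0=z$ furnishes a coupling of the fiber measures $\sigma^i_{s,z}:=\mathrm{Law}[(\tilde Z^i_s,\omega^i_s)\mid Z_0=z]$, which are supported on $\Z\times[0,De^{2\kappa Ms}]$ by Proposition \ref{Prop:control weights}, and whose images $\F\sigma^i_{s,z}=\hat\pi^i_s(\cdot|z)$ are probability measures because $\E[\omega^i_s\mid Z_0=z]=1$ (again since the $z$-marginal of $\hat\pi^i_s$ is $\rho$). Applying Lemma \ref{Lem:F_lipschitz} (with $\Theta$ replaced by $\Z$) fiberwise, $W_1(\hat\pi^2_s(\cdot|z),\hat\pi^1_s(\cdot|z))\le C_{T,D}\,W_1(\sigma^1_{s,z},\sigma^2_{s,z})\le C_{T,D}\,\E[|(\tilde Z^1_s,\omega^1_s)-(\tilde Z^2_s,\omega^2_s)|\mid Z_0=z]$; integrating against $\rho$ and using the tower property gives $\tilde W_{t,1}(\Psi^1_T(\pathp\pi^1),\Psi^1_T(\pathp\pi^2))\le C_{T,D}\,\E[\sup_{s\le t}|\zeta^1_s-\zeta^2_s|]$. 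Combining the three left-hand bounds with the Gronwall estimate and setting $\tilde C_{T,D}:=C_{T,D}K_{T,D}$ concludes the proof.

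The main obstacle I anticipate is precisely this last step: one must notice that the first marginals of the evolved couplings are frozen, so that $\tilde W_{t,1}$ degenerates into an honest integral of conditional Wasserstein distances against the fixed measure $\rho$, and then control those conditionals — where, because the lifted weights $\omega$ are not conserved along \eqref{DecoupledEquation}, one is forced to pass through the fiberwise version of $\F$ and to lean on the a priori weight bounds, rather than reading the estimate off a naive path coupling. The rest is routine bookkeeping with path Wasserstein distances, already packaged inside Lemma \ref{lem:apriori control zeta}.
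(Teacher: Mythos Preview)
Your proof is correct and follows essentially the same approach as the paper: synchronous coupling at time zero, invoking Lemma \ref{lem:apriori control zeta} with $\zeta^1_0=\zeta^2_0$, and exploiting that the common first marginal $\rho$ forces $\Gamma_{\text{Opt}}$ to collapse to the diagonal. The only noteworthy difference is that the paper handles the bound $\tilde{W}_{t,1}(\Psi_T^1(\pathp\pi^1),\Psi_T^1(\pathp\pi^2))\le \E\sup_{s\le t}|\zeta^1_s-\zeta^2_s|$ with a one-word ``Likewise'', whereas you spell out the fiberwise argument through the conditional lifted measures $\sigma^i_{s,z}$ and Lemma \ref{Lem:F_lipschitz}; your version is more explicit and arguably more honest about where the weight bounds from Proposition \ref{Prop:control weights} enter.
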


\begin{proof}
Take $\zeta^i$ for $i=1,2$ in Lemma \eqref{lem:apriori control zeta}  with identical initial conditions, sampling $(Z_0^1, \tilde{Z}_0^1, \omega_0^1, \varrho_0^1)$ from $\check{\bgamma}_0$ and $(\vartheta_0^1, \alpha_0^1, \beta_0^1)$ from $\check{\bsigma}_0$. By the definition of the Wasserstein distance it follows
\[ W_{t,1}( \Psi_T^1(\check{\pathp\gamma}^1, \check{\pathp \sigma}^1),\Psi_T^1(\check{\pathp{\gamma}}^2, \check{\pathp \sigma}^2) ) + W_{t,1}( \Psi_T^2( \check{\pathp\gamma}^1, \check{\pathp \sigma}^1),\Psi_T^2(\check{\pathp{\gamma}}^2, \check{\pathp \sigma}^2) )\leq   \E \sup_{0\leq s\leq t}|\zeta^1_s-\zeta^2_s|.\]
Now, since $\check{\pathp \gamma}^1_0= \check{\pathp \gamma}^2_0 $ and $(\check{\pathp \gamma}^1, \check{\pathp \sigma}^1), (\check{\pathp \gamma}^2, \check{\pathp \sigma}^2) \in \A_{T}$, the optimal couplings $\upsilon_s$ in $\tilde{W}_{t,1}(\pathp \pi ^1 , \pathp \pi^2 )$ are all the identity coupling for the measure $\pi^1_{0,z} $; the exact same is true for $\tilde{W}_{t,1} (  \Psi_T^1(\pathp \pi ^1) , \Psi_T^1(\pathp \pi^2)   )$. It follows that
\begin{align*}
 &W_{t,1}( \Psi_T^1(\check{\pathp\gamma}^1, \check{\pathp \sigma}^1),\Psi_T^1(\check{\pathp{\gamma}}^2, \check{\pathp \sigma}^2) ) + W_{t,1}( \Psi_T^2(\check{\pathp\gamma}^1, \check{\pathp \sigma}^1),\Psi_T^2(\check{\pathp{\gamma}}^2, \check{\pathp \sigma}^2) ) \\ & \leq \E \sup_{0\leq s\leq t}|\zeta^1_s-\zeta^2_s| 
\leq K_{T,D}   \int_0^t \{W_{1}( \check{\bgamma}^1_s, \check{\bgamma}_s^2  ) + W_{1}(\check{\bsigma}^1_s, \check{\bsigma}_s^2   ) + \tilde{W}_{s,1}(\pathp \pi^1, \pathp \pi^2)  \} ds  
 \\ & \leq  K_{T,D}   \int_0^t \{W_{s,1}(\check{\pathp \gamma}^1, \check{\pathp \gamma}^2  ) + W_{s,1}(\check{\pathp \sigma}^1,\check{\pathp \sigma}^2   ) + \tilde{W}_{s,1}(\pathp \pi^1, \pathp \pi^2)  \} ds  
 \\ & \leq  t K_{T,D}  \{  W_{t,1}(\check{\pathp \gamma}^1, \check{\pathp \gamma}^2  ) + W_{t,1}(\check{\pathp \sigma}^1,\check{\pathp \sigma}^2   ) + \tilde{W}_{t,1}(\pathp \pi^1, \pathp \pi^2) \}.  
\end{align*}
Likewise,  
\[ \tilde{W}_{t,1} (  \Psi_T^1(\pathp \pi ^1) , \Psi_T^1(\pathp \pi^2)   ) \leq  \E \sup_{0\leq s\leq t}|\zeta^1_s-\zeta^2_s| .   \]
Putting together the above estimates we obtain the desired result.
\end{proof}

\subsection{Well-posedness of mean-field PDE}
\label{sec:WellPosednessMeanFieldPDE}

We now look for a system of ODEs characterizing the solution of the system \eqref{WFR lifted dynamics}. The natural candidate is given by the mean-field equation
\begin{equation}
  \begin{split}
    (Z^{mf}, \tilde Z^{mf}, \omega^{mf}, \vartheta^{mf},\alpha^{mf}, \beta^{mf},\varrho^{mf}):=  (Z^{\check{\bgamma},\check{\bsigma}}, \tilde Z^{\check{\bgamma},\check{\bsigma}}, \omega^{\check{\bgamma},\check{\bsigma}}, \vartheta^{\check{\bgamma},\check{\bsigma}},\alpha^{\check{\bgamma},\check{\bsigma}},\beta^{\check{\bgamma},\check{\bsigma}},\varrho^{\check{\bgamma},\check{\bsigma}})   ;  \\
    \text{with } \quad \check{\bgamma} = \mathrm{Law}[((Z^{mf}, \tilde Z^{mf}), \omega^{mf}, \varrho^{mf} )], \quad  \check{\bsigma} = \mathrm{Law}[(\vartheta^{mf}, \alpha^{mf}, \beta^{mf})].
  \end{split}
  \label{MeanFieldEquation}
\end{equation}
Indeed, assuming that such mean-field equation exists, we verify that setting 
\[ \bgamma = \mathrm{Law}[((Z^{mf}, \tilde Z^{mf}),\omega^{mf})], \text{ and } \bsigma = \mathrm{Law}[(\vartheta^{mf},\alpha^{mf})] \]
we satisfy \eqref{WFR lifted dynamics}. Consider two arbitrary testing functions $\phi, \varphi$. We have
\begin{align*}
   \frac{d}{dt} \E[ \phi(Z^{mf}_t, \tilde Z^{mf}_t),\omega^{mf}_t)] & = \E \left[ \nabla_{(z,\tilde z),\omega} \phi ((Z^{mf}_t, \tilde Z^{mf}_t),\omega^{mf}_t) \cdot [ (\frac{d}{dt}Z^{mf}_t, \frac{d}{dt}\tilde Z^{mf}_t),\frac{d}{dt}\omega^{mf}_t  ]^\top \right],
\end{align*}
and
\begin{align*}
   \frac{d}{dt} \E[ \varphi(\vartheta_t^{mf},\omega^{mf}_t)] & = \E \left[ \nabla_{\theta,\alpha} \varphi (\vartheta_t^{mf},\alpha^{mf}_t) \cdot [ \frac{d}{dt}\vartheta^{mf}_t ,\frac{d}{dt}\alpha^{mf}_t  ]^\top \right].
\end{align*}
Using the dynamics \eqref{DecoupledEquation} with $\pi,\nu$ as defined, we obtain precisely \eqref{WFR lifted dynamics} in a weak sense. Note that, since the measure driving the dynamics comes from the distribution of the dynamics itself, our previous argument does not immediately apply.
However, the matter is settled by establishing the well-posedness of the system of ODEs \eqref{MeanFieldEquation}.

\begin{theorem}
\label{thm:WelPosednessSDE}
Let $D>0$, and suppose that $\check{\gamma}_0$ and $\check{\sigma}_0$ are two probability measures such that  $ \check{\gamma}_0 (  \Z^2 \times [0,D]^2  ) = 1, \quad   \check{\sigma}_0 (  \Theta \times [0,D]^2  ) = 1$, and such that $\F \check \sigma_0$ and $\F \check {\gamma}_0$ are probability measures. Then, under Assumption \ref{Hyp U}, there exists a (pathwise) unique solution to the mean-field system \eqref{MeanFieldEquation} with initial distributions $(\check{\gamma}_0,\check{\sigma}_0)$. 
\end{theorem}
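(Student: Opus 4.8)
The plan is to obtain a solution of the mean-field system \eqref{MeanFieldEquation} as a fixed point of the map $\Psi_T$. Indeed, if a pair $(\check{\bgamma},\check{\bsigma})\in\A_T$ with time-$0$ marginals equal to $(\check{\gamma}_0,\check{\sigma}_0)$ satisfies $\Psi_T(\check{\bgamma},\check{\bsigma})=(\check{\bgamma},\check{\bsigma})$, then, by the very definition of $\Psi_T$ through the decoupled dynamics \eqref{DecoupledEquation}, the path $(Z^{\check{\bgamma},\check{\bsigma}},\tilde Z^{\check{\bgamma},\check{\bsigma}},\omega^{\check{\bgamma},\check{\bsigma}},\vartheta^{\check{\bgamma},\check{\bsigma}},\alpha^{\check{\bgamma},\check{\bsigma}},\beta^{\check{\bgamma},\check{\bsigma}},\varrho^{\check{\bgamma},\check{\bsigma}})$ solves \eqref{MeanFieldEquation}; conversely, any solution of \eqref{MeanFieldEquation} induces such a fixed point. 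Hence it suffices to exhibit a unique fixed point of $\Psi_T$ in the slice $\A_{T,D}^{0}$ of $\A_{T,D}$ consisting of pairs whose time-$0$ marginals are $(\check{\gamma}_0,\check{\sigma}_0)$; this set is nonempty (it contains, e.g., the law of the path that is constant in time with that initial law, which lies in $\A_{T,D}$ since $\F\check{\gamma}_0,\F\check{\sigma}_0$ are probability measures and the support bounds hold trivially), and the lemma stating $\Psi_T(\A_{T,D})\subseteq\A_{T,D}$, together with the fact that $\Psi_T(\check{\bgamma},\check{\bsigma})$ is by construction the law of a path started from $(\check{\gamma}_0,\check{\sigma}_0)$, shows that $\Psi_T$ maps $\A_{T,D}^{0}$ into itself.

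Next I would metrize $\A_{T,D}^{0}$ by
\[ D_t\big((\check{\bgamma}^1,\check{\bsigma}^1),(\check{\bgamma}^2,\check{\bsigma}^2)\big):=W_{t,1}(\check{\bgamma}^1,\check{\bgamma}^2)+W_{t,1}(\check{\bsigma}^1,\check{\bsigma}^2)+\tilde{W}_{t,1}(\bpi^1,\bpi^2),\qquad \bpi^i=\{\F\bgamma^i_s\}_{s\le t}. \]
Since all elements of $\A_{T,D}^{0}$ share the same first marginal $\pi_{0,z}$ at every time (condition (2) in the definition of $\A_T$), in $\tilde{W}_{t,1}$ the optimal couplings are all the identity coupling of $\pi_{0,z}$, so the third term reduces to $\sup_{s\le t}\int W_1(\pi^1_s(\cdot|z),\pi^2_s(\cdot|z))\,d\pi_{0,z}(z)$. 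With this notation, Corollary \ref{Corollary:control of regularity of Psi} says precisely that $D_t(\Psi_T(\check{\bgamma}^1,\check{\bsigma}^1),\Psi_T(\check{\bgamma}^2,\check{\bsigma}^2))\le t\,\tilde C_{T,D}\,D_t((\check{\bgamma}^1,\check{\bsigma}^1),(\check{\bgamma}^2,\check{\bsigma}^2))$ for all $t\in[0,T]$. Choosing $t_0=t_0(T,D)\in(0,T]$ with $t_0\,\tilde C_{t_0,D}<1$, the map $\Psi_{t_0}$ is a strict contraction on $\A_{t_0,D}^{0}$. Granting completeness of this metric space (see below), the Banach fixed-point theorem yields existence and uniqueness of a fixed point, hence a pathwise unique solution of \eqref{MeanFieldEquation} on $[0,t_0]$: uniqueness of the driving pair $(\check{\bgamma},\check{\bsigma})$ is uniqueness of the fixed point, and, given this pair, \eqref{MeanFieldEquation} reduces to the classical ODE \eqref{DecoupledEquation}, which has a pathwise unique solution by Proposition \ref{Prop:control weights}.

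To reach an arbitrary horizon $T$, I would iterate the construction on consecutive intervals $[0,t_0],[t_0,2t_0],\dots$. By Proposition \ref{Prop:control weights} the weights stay in $[0,De^{2\kappa M T}]$ throughout $[0,T]$, so each restart may be run with the uniform constant $\tilde C_{T,De^{2\kappa M T}}$ and a uniform step size $t_0:=\tfrac12\big(\tilde C_{T,De^{2\kappa M T}}\big)^{-1}\wedge T$; the first marginal of $\pi$ remaining equal to $\mu$ at every restart (Remark \ref{Rmk:ConstantMarginal}, Lemma \ref{lem:fixed first marginal}) guarantees the conditionals appearing in \eqref{DecoupledEquation} are well defined at each stage. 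Finitely many such steps cover $[0,T]$, and concatenating the pathwise unique pieces produces the pathwise unique solution of \eqref{MeanFieldEquation} on $[0,T]$.

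I expect the main obstacle to be the completeness of $(\A_{T,D}^{0},D_t)$. The $W_{t,1}$ components are standard: on $\A_{T,D}$ all path-measures are supported on $\mathcal{P}(\C([0,T],K))$ for a fixed compact $K$, which is Polish; $W_{t,1}$ is the $1$-Wasserstein distance for the uniform metric, hence complete on probability measures with uniformly bounded supports; and the constraints defining $\A_{T,D}^{0}$ — that $\F$ of the time-$t$ marginals be probability measures, that the first marginal be the fixed $\pi_{0,z}$, and the support bounds — are closed under weak convergence of measures with uniformly bounded supports. The delicate point is the $\tilde{W}_{t,1}$ term: because the velocity fields in \eqref{DecoupledEquation} depend on the conditionals $\pi_t(\cdot|z)$, this quantity is genuinely stronger than a Wasserstein distance between the full measures, so its completeness must be argued separately. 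One would show that a $\tilde{W}_{t,1}$-Cauchy sequence of conditional families $z\mapsto\pi^n_s(\cdot|z)$, regarded as elements of $L^1\big(\pi_{0,z};(\mathcal{P}(\Z),W_1)\big)$, converges uniformly in $s$ (using equicontinuity in $s$, which follows from the uniformly bounded velocity fields) to a measurable family that, by uniqueness of disintegration, must coincide $\pi_{0,z}$-a.e.\ with the conditionals of the weak limit $\F\bgamma^\infty_s$ of the $\F\bgamma^n_s$; this identifies the limit as an element of $\A_{T,D}^{0}$ and establishes completeness, justifying the fixed-point argument above.
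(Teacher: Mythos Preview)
Your proposal is correct and follows the same fixed-point strategy as the paper: both obtain the mean-field law as a fixed point of $\Psi_T$ on the slice of $\A_{T,D}$ with prescribed initial marginals, invoke Corollary~\ref{Corollary:control of regularity of Psi} to get a contraction for small $T$, and extend by continuation.

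The one noteworthy difference is the choice of metric. You include the conditional term $\tilde W_{t,1}$ in the metric $D_t$ and therefore need to argue completeness of $D_t$, which you correctly flag as the main obstacle and sketch carefully. The paper instead endows $\A_{T,D}(\check\gamma_0,\check\sigma_0)$ only with the standard path-Wasserstein metric $W_{T,1}(\check\bgamma^1,\check\bgamma^2)+W_{T,1}(\check\bsigma^1,\check\bsigma^2)$, for which completeness is immediate, and then observes that the quantity contracted by $\Psi_T$ in Corollary~\ref{Corollary:control of regularity of Psi} (namely your $D_t$) dominates this metric. This lets the Picard iterates be Cauchy in the standard metric without ever discussing completeness of $\tilde W$. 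Your route is slightly more laborious but also more transparent about where the conditional-distribution dependence enters; the paper's route is shorter but leans on the domination observation.
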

\begin{proof}
 Consider the set
 $ \A_{T,D}(\check{\gamma}_0,\check{\sigma}_0):= \{ (\check{\bgamma}, \check{\bsigma} ) \in \A_{T,D} \text{ s.t. } \check{\bgamma}_0=\check{\gamma}_0 , \quad \check{\bsigma}_0=\check{\sigma}_0 \}$. It is straightforward to see that $\A_{T,D}(\check{\gamma}_0,\check{\sigma}_0)$ endowed with the metric $  W_{T,1}( \check{\bgamma}^1, \check{\bgamma}^2 ) 
 +  W_{T,1}( \check{\bsigma}^1, \check{\bsigma}^2 )$ is a complete metric space. Note also that by Lemma \ref{Corollary:control of regularity of Psi} one can find $T>0$ small enough so that $\Psi$ contracts the quantity $  W_{T,1}( \check{\bgamma}^1, \check{\bgamma}^2 ) 
 +  W_{T,1}( \check{\bsigma}^1, \check{\bsigma}^2 ) + \tilde{W}_{T,1}(\pathp \pi^1, \pathp \pi^2) $ in the space $\A_{T,D}(\check{\gamma}_0,\check{\sigma}_0)$; note that the latter quantity dominates the metric in the space $\A_{T,D}(\check{\gamma}_0,\check{\sigma}_0)$. Hence, there is a unique solution $(\check{\bgamma},\check{\bsigma}) \in \A_{T,D}(\check{\gamma}_0, \check{\sigma}_0)$ to the fixed point equation
\[ \Psi(\check \bgamma,\check\bsigma)=(\check\bgamma,\check\bsigma).   \]
By definition, the mean-field system \eqref{MeanFieldEquation} is then satisfied and is well-posed in the interval $[0,T]$. By continuation, well-posedness can be arbitrarily extended.
\end{proof}

\begin{remark}
Since \eqref{WFR lifted dynamics} is well-defined, we can also conclude that the system of mean-field particles \eqref{MFParticleSystem} is well-defined given that it can be obtained by plugging the mean-field law, except that the initial condition is not sampled from $\check \gamma_0,\check \sigma_0$ but taken as in the system \eqref{ParticleSystem}.
\end{remark}


\red

\medskip








\nc 

\subsection{Propagation of chaos}

Before stating our propagation of chaos result we first present a lemma.

\begin{lemma}
\label{lem:ControlConditionals}
Let $(\check{\gamma}_0,\check{\sigma}_0)$ be such that $\check{\gamma}_0(\Z^2 \times [0,D]^2)=1 $, $\check{\sigma}_0(\Theta \times [0,D]^2) =1$, and such that $\F \check \sigma_0$ and $\F \check {\gamma}_0$ are probability measures. Let $(\check{\bgamma}, \check{\bsigma})$ be the law of \eqref{MeanFieldEquation} with $(\check{\bgamma}_0,\check{\bsigma}_0)=(\check{\gamma}_0,\check{\sigma}_0)$. Let $z_0'$ and $z_0$ be two arbitrary points in the support of $\pi_{0,z}$. Then for every $t \in [0,T]$ we have
\begin{equation}
\sup_{s \in [0,t]} W_1( \pi_s(\cdot | z_0' ), \pi_s(\cdot| z_0 )  )\leq   K_{T,D} (W_1( \pi_0(\cdot| z_0'), \pi_0(\cdot| z_0) ) + |z_0-z_0'|).  
\label{eq:ControlConditionalsInitial}
\end{equation}
\end{lemma}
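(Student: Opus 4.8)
The plan is to represent each conditional $\pi_s(\cdot\,|\,z)$, for $z$ in the support of $\mu=\pi_{0,z}$, as the $\F$-image of the law of a single tagged trajectory obeying the decoupled dynamics \eqref{DecoupledEquation} driven by the mean-field flow $(\pi_s,\nu_s)_{s\in[0,T]}$ of \eqref{MeanFieldEquation}, and then to compare the trajectories attached to $z_0$ and to $z_0'$ by a Gr\"onwall argument in the spirit of Lemma~\ref{lem:apriori control zeta}. Concretely, fix $z$ in the support of $\mu$ and let $(\tilde Z_s^z,\omega_s^z)$ solve the $\tilde z$- and $\omega$-equations of \eqref{DecoupledEquation}, driven by $(\pi_s,\nu_s)$ with $Z^z\equiv z$, from the initial condition $\mathrm{Law}(\tilde Z_0^z)=\pi_0(\cdot\,|\,z)$, $\omega_0^z\equiv 1$; the conditional term in the $\omega$-equation is the (fixed) mean-field conditional $\pi_s(\cdot\,|\,z)$. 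Writing the weak form of the evolution of $q_s^z:=\F(\mathrm{Law}(\tilde Z_s^z,\omega_s^z))$ (here $\F$ is the analogue on $\mathcal{P}(\Z\times\R_+)$ of the projection from section \ref{sec:GradDescentAscent}), one sees that $q_s^z$ solves the same ``fibre equation''
\[\partial_s q_s=-\eta_s\,\divv_{\tilde z}\bigl(q_s\,\nabla_{\tilde z}\U_\pi(\pi_s,\nu_s;z,\cdot)\bigr)+\kappa\,q_s\Bigl(\U_\pi(\pi_s,\nu_s;z,\cdot)-\textstyle\int\U_\pi(\pi_s,\nu_s;z,\tilde z')\,d\pi_s(\tilde z'\,|\,z)\Bigr)\]
that the disintegration $s\mapsto\pi_s(\cdot\,|\,z)$ of \eqref{WFR dynamics} over the frozen $z$-marginal solves, and both start from $\pi_0(\cdot\,|\,z)$ (the choice $\omega_0^z\equiv1$ is used so that $\F$ of the initial law is exactly $\pi_0(\cdot\,|\,z)$). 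This equation has coefficients Lipschitz in $\tilde z$, hence a unique bounded-mass positive-measure solution (by the arguments already used for \eqref{DecoupledEquation} and Theorem~\ref{thm:WelPosednessSDE}), so $q_s^z=\pi_s(\cdot\,|\,z)$ for all $s\in[0,T]$; moreover, by Proposition~\ref{Prop:control weights} with $\omega_0^z=1$ we have $\omega_s^z\in[0,e^{2\kappa Ms}]\subseteq[0,e^{2\kappa MT}]$, so $\pi_s(\cdot\,|\,z)$ is the $\F$-image of a probability measure on $\Z\times[0,e^{2\kappa MT}]$, to which Lemma~\ref{Lem:F_lipschitz} applies.

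Next I would couple $z_0$ and $z_0'$: realise $(\tilde Z^{z_0},\omega^{z_0})$ and $(\tilde Z^{z_0'},\omega^{z_0'})$ on a common space with $(\tilde Z_0^{z_0},\tilde Z_0^{z_0'})$ an optimal $W_1$-coupling of $\pi_0(\cdot\,|\,z_0)$ and $\pi_0(\cdot\,|\,z_0')$ and $\omega_0^{z_0}=\omega_0^{z_0'}=1$. These are two solutions of \eqref{DecoupledEquation} driven by the \emph{same} pair of path-measures, namely the mean-field flow, which lies in $\A_{T,D}$ by Theorem~\ref{thm:WelPosednessSDE}, differing only through the spatial slot ($z_0$ vs.\ $z_0'$) and the conditional term ($\pi_s(\cdot\,|\,z_0)$ vs.\ $\pi_s(\cdot\,|\,z_0')$). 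Applying Lemma~\ref{lem:apriori control zeta} with the two driving path-measures equal (so the $W_1$-terms between them vanish), $Z_0^1\equiv z_0$, $Z_0^2\equiv z_0'$, and denoting by $\zeta^1,\zeta^2$ the full states, we obtain
\[\E\sup_{0\le r\le s}|\zeta^1_r-\zeta^2_r|\ \le\ K_{T,D}\Bigl(|z_0-z_0'|+W_1\bigl(\pi_0(\cdot\,|\,z_0),\pi_0(\cdot\,|\,z_0')\bigr)+\int_0^s W_1\bigl(\pi_r(\cdot\,|\,z_0),\pi_r(\cdot\,|\,z_0')\bigr)\,dr\Bigr),\]
where for the first two terms we use $\E|\zeta_0^1-\zeta_0^2|\le|z_0-z_0'|+W_1(\pi_0(\cdot\,|\,z_0),\pi_0(\cdot\,|\,z_0'))$ for this initialization (the remaining coordinates are coupled identically and evolve identically, being driven by the common mean-field).

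To close the loop, by the first paragraph and Lemma~\ref{Lem:F_lipschitz} applied with the weight bound $e^{2\kappa MT}$,
\[W_1\bigl(\pi_s(\cdot\,|\,z_0),\pi_s(\cdot\,|\,z_0')\bigr)\le C\,\E\bigl(|\tilde Z_s^{z_0}-\tilde Z_s^{z_0'}|+|\omega_s^{z_0}-\omega_s^{z_0'}|\bigr)\le C\,\E\sup_{0\le r\le s}|\zeta^1_r-\zeta^2_r|,\qquad C=C_{\Z,1,e^{2\kappa MT}}.\]
Substituting this into the integral term above and applying Gr\"onwall's inequality to $\phi(s):=\E\sup_{0\le r\le s}|\zeta^1_r-\zeta^2_r|$ gives $\phi(t)\le K'_{T,D}\bigl(|z_0-z_0'|+W_1(\pi_0(\cdot\,|\,z_0),\pi_0(\cdot\,|\,z_0'))\bigr)$, and feeding this back into the last display yields $\sup_{s\in[0,t]}W_1(\pi_s(\cdot\,|\,z_0),\pi_s(\cdot\,|\,z_0'))\le C\phi(t)\le K_{T,D}\bigl(W_1(\pi_0(\cdot\,|\,z_0),\pi_0(\cdot\,|\,z_0'))+|z_0-z_0'|\bigr)$, which is \eqref{eq:ControlConditionalsInitial}.

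The main obstacle is the first paragraph: one must be sure that freezing $Z_0=z_0$ and resetting the weight to $1$ reproduces, for \emph{every} $z_0$ in the support of $\mu$ (not merely $\mu$-a.e.), precisely the disintegration $\pi_s(\cdot\,|\,z_0)$ — this is what uniqueness of the fibre equation delivers — and this representation is also exactly what converts the a~priori bound of Lemma~\ref{lem:apriori control zeta}, whose right-hand side still contains the unknown $W_1(\pi_r(\cdot\,|\,z_0),\pi_r(\cdot\,|\,z_0'))$, into a closed Gr\"onwall inequality.
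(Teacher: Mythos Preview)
Your proof is correct and follows essentially the same approach as the paper: compare two tagged particles pinned at $z_0$ and $z_0'$ and driven by the common mean-field flow, invoke Lemma~\ref{lem:apriori control zeta} (with the driving-measure terms vanishing), and close with Gr\"onwall. The paper is slightly terser---it initializes the tagged particles with the conditionals $\check{\bgamma}_0(\cdot|z_0)$, $\check{\bgamma}_0(\cdot|z_0')$ so that the identification of $\pi_s(\cdot|z_0)$ with the projected law is immediate, rather than via your fibre-equation uniqueness argument with $\omega_0\equiv1$---but the mechanism and the Gr\"onwall closure are the same.
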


\begin{proof}
Consider one particle as in \eqref{MFParticleSystem} that we denote by $\zeta$ and that we initialize at $Z_0= z_0$ and $(\tilde{Z}_0, \omega_0, \varrho_0) \sim \check{\bgamma}_0(\cdot| z_0)$ and $(\vartheta_0, \alpha_0, \beta_0) \sim \check{\bsigma}_0$. Likewise, consider another particle as in \eqref{MFParticleSystem} that we denote by $\zeta'$ and that we initialize at $Z'_0= z_0'$, $(\tilde{Z}_0', \omega_0', \varrho_0') \sim \check{\bgamma}_0(\cdot| z_0)$, and $(\vartheta'_0, \alpha'_0, \beta'_0) =(\vartheta_0, \alpha_0, \beta_0)$. At this point we leave unspecified the joint distribution for the initializations of the variables $\tilde{z}, \omega, \varrho$, but it is understood that one such coupling has been fixed in the computations below.   

An application of Lemma \eqref{lem:apriori control zeta} deduces that for every $t \in [0,T]$ 
  \[ \E [\sup_{0\leq s\leq t} |\zeta'-\zeta| ]\leq  K_{T,D} \E|\zeta'_0-\zeta_0| +  K_{T,D}\int_0^t W_1(\pi_s(\cdot |z_0' ),\pi_s(\cdot |z_0 ) )  ds. \]
By definition of the Wasserstein distance, the left hand side of the above expression can be bounded from below by $W_1(\pi_s(\cdot | z_0' ), \pi_s(\cdot| z_0))$ 
for any $s\in [0,t]$, and thus 
\[ \sup_{s \in [0,t]}  W_1(\pi_s(\cdot | z_0' ), \pi_s(\cdot| z_0))  \leq  K_{T,D} \E|\zeta'_0-\zeta_0| +  K_{T,D}\int_0^t  W_1(\pi_s(\cdot |z_0' ),\pi_s(\cdot |z_0 ) )  ds.  \]
By using the fact that the coupling between the distributions for the variables $(\tilde z, \omega, \varrho)$ was arbitrary we can conclude that
\[ \sup_{s \in [0,t]}  W_1(\pi_s(\cdot | z_0' ), \pi_s(\cdot| z_0))  \leq  K_{T,D}(W_1(\pi_0(\cdot|z_0'), \pi_0(\cdot| z_0) ) +|z_0-z_0'|) +  K_{T,D}\int_0^t  W_1(\pi_s(\cdot |z_0' ),\pi_s(\cdot |z_0 ) )  ds.  \]
At this stage we can apply a Gronwall-type argument to obtain the desired result. 

\end{proof}

\begin{theorem}[Propagation of chaos]
Let $T,D>0$, and suppose that Assumption \ref{Hyp U} holds. Let $(\check{\gamma}_0, \check{\sigma}_0)$ be such that $\check{\gamma}_0(\Z^2 \times [0,D]^2)=1$ and $\check{\sigma}_0(\Theta \times [0,D]^2)=1$, and suppose that $\F \check \sigma_0$ and $\F \check {\gamma}_0$ are probability measures.

For $N \in \N$ consider the system \eqref{ParticleSystem} associated to a sequence $\{ (\check{\gamma}_0^N, \check{\sigma}_0^N) \}_{N\in \N}$ satisfying $\check{\gamma}_0^N(\Z^2 \times [0,D]^2)=1$ and $\check{\sigma}_0^N(\Theta \times [0,D]^2)=1$ for all large enough $N$, and suppose that $\F \check \sigma^N_0$ and $\F \check {\gamma}^N_0$ are probability measures. We also assume that the $Z_0^i$ belong to the support of the measure $\pi_{0,z}$.

Assume further that as $N \rightarrow \infty$ we have
\begin{equation}  
\inf_{\upsilon_z \in \Gamma_{\text{Opt}}(\pi_{0,z}^N, \pi_{0,z}) } \int W_1(\check{\gamma}_0^N(\cdot| z_0'),\check{\gamma}_0(\cdot| z_0) ) d \upsilon_z(z_0',z_0) \rightarrow 0, \text{ and }  W_1( \check{\sigma}_0^N, \check{\sigma}_0) \rightarrow 0. 
\label{eq:InitialConditionalsConditionalPROPCHAOS}
\end{equation}

Then
\[ W_{T,1}(\check{\pathp{\gamma}}^N ,\check{\pathp \gamma})\rightarrow 0 , \qquad  W_{T,1}(\check{\bsigma}^N ,\check{\pathp{\sigma}})\rightarrow 0,\quad \tilde{W}_{T,1}({\pathp{\pi}}^N, {\pathp{\pi}})\rightarrow 0,\]
where $(\check{\bgamma},\check{ \bsigma})$ are the laws of the mean-field system \eqref{MeanFieldEquation} with initial conditions drawn from $(\check{\gamma}_0,\check{\sigma}_0)$.
\label{thm:PropChaos}
\end{theorem}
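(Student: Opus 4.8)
The strategy combines an \emph{identification} with a \emph{stability} statement. First, the empirical path measures produced by \eqref{ParticleSystem} are nothing but the law of the mean-field system \eqref{MeanFieldEquation} run from the discrete data $(\check\gamma_0^N,\check\sigma_0^N)$. Second, that law depends continuously on its initialization, in the conditional-aware sense of \eqref{eq:InitialConditionalsConditionalPROPCHAOS}. The theorem then follows by applying the second fact along the sequence $(\check\gamma_0^N,\check\sigma_0^N)\to(\check\gamma_0,\check\sigma_0)$.

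\emph{Identification.} Fix $N$. A single decoupled particle governed by \eqref{DecoupledEquation} with driving pair $(\check\gamma^N,\check\sigma^N)$ — that is, driven by the empirical measures $\pi_t^N,\nu_t^N$ of \eqref{eqn:InitialiCondEmpirirical} — and started at $\zeta_0^i$ reproduces exactly the trajectory of the $i$-th particle of \eqref{ParticleSystem}; here one uses that $\pi_{t,z}^N=\pi_{0,z}^N$ is frozen (Remark~\ref{Rmk:ConstantMarginal}), so that the conditional $\pi_t^N(\cdot|Z_t)$ appearing in \eqref{DecoupledEquation} is well defined along these trajectories. Consequently $\Psi_T(\check\gamma^N,\check\sigma^N)=(\check\gamma^N,\check\sigma^N)$, so $(\check\gamma^N,\check\sigma^N)\in\A_{T,D}$ is \emph{the} fixed point of $\Psi_T$ with initial data $(\check\gamma_0^N,\check\sigma_0^N)$, unique by Theorem~\ref{thm:WelPosednessSDE}; equivalently, it is the law of \eqref{MeanFieldEquation} for that initialization. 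The inert variables $\beta,\varrho$ play no role here.

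\emph{Stability.} Denote by $(\check\bgamma^N,\check\bsigma^N)$ [$=(\check\gamma^N,\check\sigma^N)$ by the identification] and $(\check\bgamma,\check\bsigma)$ the laws of \eqref{MeanFieldEquation} with initializations $(\check\gamma_0^N,\check\sigma_0^N)$ and $(\check\gamma_0,\check\sigma_0)$, and write $\pi_s^N=\F\check\bgamma_s^N$, $\pi_s=\F\check\bgamma_s$. Couple the two initializations thus: on the $\sigma$-block, an almost optimal $W_1$-coupling of $\check\sigma_0^N$ and $\check\sigma_0$; on the $\gamma$-block, first pick $\upsilon_z\in\Gamma_{\text{Opt}}(\pi_{0,z}^N,\pi_{0,z})$ almost attaining $\inf_{\upsilon}\int W_1(\check\gamma_0^N(\cdot|z'),\check\gamma_0(\cdot|z))\,d\upsilon$, and then, conditionally on $(z_0^1,z_0^2)\sim\upsilon_z$, draw the remaining coordinates from an almost optimal coupling of $\check\gamma_0^N(\cdot|z_0^1)$ and $\check\gamma_0(\cdot|z_0^2)$. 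The key point is that $\upsilon_z$, being optimal for the $z$-marginals which are time-independent (Remark~\ref{Rmk:ConstantMarginal}), remains an optimal coupling of $\pi_{s,z}^N$ and $\pi_{s,z}$ for every $s$. Applying Lemma~\ref{lem:apriori control zeta} to the ``free-particle'' comparison of the two systems gives
\[ W_{t,1}(\check\bgamma^N,\check\bgamma)+W_{t,1}(\check\bsigma^N,\check\bsigma)\;\le\;K_{T,D}\Big(\delta_0^N+\int_0^t\big\{W_{s,1}(\check\bgamma^N,\check\bgamma)+W_{s,1}(\check\bsigma^N,\check\bsigma)+R_s^N\big\}\,ds\Big), \]
where $\delta_0^N\to0$ collects the initialization costs (this also uses $W_1(\pi_{0,z}^N,\pi_{0,z})\to0$), and
\[ R_t^N:=\sup_{s\in[0,t]}\int W_1\big(\pi_s(\cdot|z^2),\pi_s^N(\cdot|z^1)\big)\,d\upsilon_z(z^1,z^2), \]
the chosen $\upsilon_z$ turning the expectation of the conditional term in Lemma~\ref{lem:apriori control zeta} into an integral bounded by $R_s^N$. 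To close the loop, a matching inequality for $R_t^N$ is obtained by fixing $(z_0^1,z_0^2)$, coupling the mean-field particles of the two systems \emph{conditioned} on $Z_0=z_0^1$ resp.\ $Z_0=z_0^2$ (remaining coordinates coupled almost optimally between $\check\gamma_0^N(\cdot|z_0^1)$ and $\check\gamma_0(\cdot|z_0^2)$), applying Lemma~\ref{lem:apriori control zeta} once more, and integrating against $\upsilon_z$:
\[ R_t^N\;\le\;K_{T,D}\Big(\varepsilon_0^N+\int_0^t\big\{W_{s,1}(\check\bgamma^N,\check\bgamma)+W_{s,1}(\check\bsigma^N,\check\bsigma)+R_s^N\big\}\,ds\Big), \]
with $\varepsilon_0^N:=\int\big(|z^1-z^2|+W_1(\check\gamma_0^N(\cdot|z^1),\check\gamma_0(\cdot|z^2))\big)\,d\upsilon_z\to0$ by the choice of $\upsilon_z$ and \eqref{eq:InitialConditionalsConditionalPROPCHAOS}; the bookkeeping identifying $\pi_s^N(\cdot|z)$ with the law of the corresponding conditional particle (the reweighting by $\omega$ inside $\F$ and the constancy of conditional masses) uses Lemma~\ref{Lem:F_lipschitz} and Remark~\ref{Rmk:ConstantMarginal}, while Lemma~\ref{lem:ControlConditionals} supplies the a priori bound on how conditionals issued from nearby points drift. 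Summing the two displays and applying Gronwall's inequality on $[0,T]$ — all quantities bounded a priori by Proposition~\ref{Prop:control weights} and compactness of the domains — yields $W_{T,1}(\check\bgamma^N,\check\bgamma)+W_{T,1}(\check\bsigma^N,\check\bsigma)+R_T^N\to0$. Finally $R_T^N\ge\tilde W_{T,1}(\pathp{\pi}^N,\pathp{\pi})$, since $\upsilon_z$ is an admissible (indeed optimal) coupling of the $z$-marginals at every time, so that quantity tends to $0$ as well. Combined with the identification, this is the claim.

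\emph{Main obstacle.} The genuinely new difficulty, relative to the contraction estimate of Corollary~\ref{Corollary:control of regularity of Psi}, is the conditional terms $W_1(\pi_s(\cdot|z),\pi_s^N(\cdot|z'))$ with $z\ne z'$, which are not controlled by $W_1(\pi_s,\pi_s^N)$ and which, since $\pi_{0,z}^N\ne\pi_{0,z}$, cannot be eliminated by a diagonal coupling as in the equal-initial-data case. The resolution is the pairing of the frozen-marginal observation — one fixed optimal coupling $\upsilon_z$ valid at all times — with the self-referential quantity $R_t^N$, for which Lemma~\ref{lem:apriori control zeta} and Lemma~\ref{lem:ControlConditionals} furnish exactly the inputs needed for the Gronwall argument.
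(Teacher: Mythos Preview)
Your proof is correct and takes a genuinely different route from the paper's. The paper introduces an \emph{intermediate} mean-field particle system $\zeta^{mf,i}$ (driven by the true mean-field law $(\pi,\nu)$ but initialized at the same discrete points as the interacting particles $\zeta^i$) and splits the error via the triangle inequality: the comparison $\zeta^i\leftrightarrow\zeta^{mf,i}$ has identical initial data, so Corollary~\ref{Corollary:control of regularity of Psi} applies directly and yields a Gronwall-type bound in terms of the full distances; the comparison $\zeta^{mf,i}\leftrightarrow\zeta^{mf}$ has identical driving measures, so it reduces to a pure initial-data estimate handled by Lemma~\ref{lem:ControlConditionals} (no feedback, no Gronwall). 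Your approach instead recognizes the empirical path law as \emph{the} mean-field fixed point for discrete initial data (via Theorem~\ref{thm:WelPosednessSDE}) and then proves a single stability estimate for mean-field solutions under perturbation of initial data, closing a coupled Gronwall on $W_{t,1}(\check\bgamma^N,\check\bgamma)+W_{t,1}(\check\bsigma^N,\check\bsigma)+R_t^N$. What this buys: conceptual economy (no auxiliary particle system), and the identification step makes transparent why the particle system is just a special instance of the mean-field equation. What the paper's decomposition buys: modularity, since the two error sources (driving-measure discrepancy vs.\ initial-data discrepancy) are isolated, and the latter does not require a self-referential inequality. Note that your citation of Lemma~\ref{lem:ControlConditionals} is not strictly needed --- your second display is a two-system generalization of that lemma, derived directly from Lemma~\ref{lem:apriori control zeta} --- though it is fair to cite it as the template. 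The ``bookkeeping'' passing from the lifted conditional laws to $\pi_s(\cdot|z)$ via Lemma~\ref{Lem:F_lipschitz} and the constancy of conditional masses is at the same level of detail as the paper's own treatment in the proof of Lemma~\ref{lem:ControlConditionals}.
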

\begin{proof}
From the assumptions, we can assume without the loss of generality that for every $N$ and every $i=1, \dots, N$, the weights $\omega^i_0, \alpha^i_0$ belong to $[0,D]$ . From Gronwall's inequality and Assumptions \eqref{Hyp U} we can then see that the weights $\omega^{i}_t, \alpha_t^i$ belong to $[0,D e^{2M\kappa t}]$ . It follows that $(\check{\bgamma}^N,\check{\bsigma}^N) \in \A_{T, D}$.

In what follows we let $\zeta^i$ denote the path of all variables of the $i$-th particle in the system \eqref{ParticleSystem}, and $\zeta^{mf,i}$ the corresponding particle in \eqref{MFParticleSystem}; we recall that these particles are assumed to be initialized at the same location. We consider  
  \[ \check{\bgamma}_t^{N,mf}:= \frac 1 N \sum_{i=1}^N \delta_{(Z_t^{mf,i},\tilde Z_t^{mf,i} ), \omega_t^{mf,i}, \varrho_t^{mf,i}} \quad \text{and} \quad \check{\bsigma}_t^{N,mf}:= \frac 1 N \sum_{i=1}^N \delta_{\vartheta_t^{mf,i},\alpha_t^{mf,i}, \beta_{t}^{mf,i}},\] that is, the empirical measures of the mean-field system of particles.

 From the triangle inequality we have
  \begin{equation}
    W_{t,1}(\check{\pathp \gamma},\check{\pathp\gamma}^N) + W_{t,1}(\check{\pathp \sigma},\check{\pathp\sigma}^N) \leq \{W_{t,1}(\check{\pathp \gamma}^{N,mf},\check{\pathp\gamma}^N) + W_{t,1}(\check{\pathp \sigma}^{N,mf},\check{\pathp\sigma}^N)\} + \{W_{t,1}(\check{\pathp \gamma},\check{\pathp\gamma}^{N,mf}) + W_{t,1}(\check{\pathp \sigma},\check{\pathp\sigma}^{N,mf})  \}.
    \label{eq: decompose chaos}
  \end{equation}

We now claim that a similar inequality holds for the term $\tilde{W}_{t,1}(\pathp \pi , \pathp \pi^N )$. Namely, we prove that 
\begin{equation}
\tilde{W}_{t,1}(\pathp \pi , \pathp \pi^N) \leq    \tilde{W}_{t,1}(\pathp \pi , {\pathp \pi}^{N,mf}) +   \tilde{W}_{t,1}({\pathp \pi}^{N,mf} , \pathp \pi^N). 
\label{eqn:TriangleIneqWTilde}
\end{equation}
To see this, recall that the $z$ coordinates of all dynamics remain unchanged and that the initializations of $\zeta^i$ and $\zeta^{mf,i}$ are the same. It follows that $\pi_{s,z}^N= \pi_{0,z}^N= {\pi}_{0,z}^{N,mf} ={\pi}_{s,z}^{N,mf}$, and thus $\Gamma_{\text{Opt}}({\pi}_{s,z}^{N,mf}, \pi_{s,z}^N)$ consists exclusively of the identity coupling. Let $\upsilon \in  \Gamma_{\text{Opt}}(\pi_{s,z}, \pi_{s,z}^N) = \Gamma_{\text{Opt}}(\pi_{0,z}, \pi_{0,z}^N) $. From the triangle inequality for $W_1$ we deduce
\begin{align*}
 \int W_1(\pi_s(\cdot| z ),  \pi_s^N(\cdot| z' )  )d \upsilon(z,z') & \leq     \int (W_1( {\pi}_s(\cdot| z )  , {\pi}_s^{N,mf}(\cdot| z' )  ) +  W_1( {\pi}_s^{N,mf}(\cdot| z' )  , \pi_s^N(\cdot| z' )  ) )d \upsilon(z,z')    
 \\& \leq   \int W_1( {\pi}_s(\cdot| z )  , {\pi}_s^{N,mf}(\cdot| z' ))  d \upsilon(z,z')   + \int W_1(  {\pi}_s^{N,mf}(\cdot| z' ), {\pi}_s^N(\cdot| z' )  )  d \pi_{s,z}^N(z')
 \\& \leq   \int W_1( {\pi}_s(\cdot| z )  ,{\pi}_s^{N,mf}(\cdot| z' ))  d \upsilon(z,z')   +  \tilde{W}_{t,1}({\pathp \pi}^{N,mf}, {\pathp \pi}^N ),
\end{align*}
for every $s\in [0,t]$. Taking the $\inf$ over $\upsilon$ on both sides and then the $\sup$ over $s\in [0,t]$, we obtain \eqref{eqn:TriangleIneqWTilde}.

 We use again the fact that the particles $\zeta^i$ and $\zeta^{mf,i}$ have the same initialization to proceed as in the proof of Corollary \ref{Corollary:control of regularity of Psi} and conclude that for every $t\in [0,T]$
 \begin{align*}  
 W_{t,1}( {\check \bgamma}^{N,mf}, {\check \bgamma}^N) + W_{t,1}( \check{\bsigma}^{N,mf}, \check \bsigma^N ) \leq 
   K_{T,D}   \int_0^t \{W_{s,1}({\check \bgamma}, {\check \bgamma}^N  ) + W_{s,1}({\check \bsigma}, \check \bsigma^N    ) + \tilde{W}_{s,1}(\pathp \pi, \pathp \pi^N)  \} ds, 
 \end{align*}
  as well as
  \begin{align*}
\tilde{W}_{t,1}( {\pathp \pi}^{N,mf}, {\pathp \pi}^N) \leq 
   K_{T,D}   \int_0^t \{W_{s,1}(\check{\bgamma}, \check{\bgamma}^N  ) + W_{s,1}(\check{\bsigma}, \check{\bsigma}^N    ) + \tilde{W}_{s,1}(\pathp \pi, \pathp \pi^N)  \} ds.
  \end{align*}
 We can now combine the previous two inequalities with \eqref{eq: decompose chaos} and \eqref{eqn:TriangleIneqWTilde} to conclude that
 \begin{align*}
 W_{t,1}( \check{\bgamma}, \check{\bgamma}^N) + W_{t,1}( \check{\bsigma}, \check\bsigma^N ) &+  \tilde{W}_{t,1}( {\pathp \pi}, {\pathp \pi}^N)  \leq  
 \\ &  \{ W_{t,1}( \check{\bgamma}^{N,mf}, \check{\bgamma}) + W_{t,1}( \check{\bsigma}^{N,mf}, \check \bsigma ) +  \tilde{W}_{t,1}( {\pathp \pi}^{N,mf}, {\pathp \pi}) \} 
  \\&+  K_{T,D}   \int_0^t \{W_{s,1}(\check{\bgamma}, \check{\bgamma}^N  ) + W_{s,1}(\check{\bsigma}, \check\bsigma^N    ) + \tilde{W}_{s,1}(\pathp \pi, \pathp \pi^N)  \} ds.
 \end{align*}
  Combining with Gronwall's inequality, the above implies
  \[ W_{t,1}(\check{\pathp \gamma},\check{\pathp\gamma}^N) + W^1_{t,1}(\check{\pathp \sigma},\check{\pathp\sigma}^N) + \tilde{W}_{t,1}( {\pathp \pi}, {\pathp \pi}^N) \leq e^{ t K_{T,D}}  \{W_{t,1}(\check{\pathp \gamma},{\check{\pathp\gamma}}^{N,mf}) + W_{t,1}(\check{\pathp \sigma},{\check{\pathp\sigma}}^{N,mf}) +  \tilde{W}_{t,1}( \pathp \pi, \pathp \pi^{N,mf}) \}.\]

To complete the proof we must show that the right hand side of the above expression goes to zero as $N \rightarrow \infty$. For that purpose we compare the evolutions of $\zeta_\Z^{mf,i} :=  (Z^{mf,i}, \tilde{Z}^{mf,i}, \omega^{mf,i}, \varrho^{mf,i} )$ and $\zeta_{\Z}^{mf}:=(Z^{mf}, \tilde{Z}^{mf}, \omega^{mf}, \varrho^{mf} )$, and then, separately, compare the evolutions of $\zeta_{\Theta}^{mf,i}:=(\vartheta^{mf,i}, \alpha^{mf,i}, \beta^{mf,i})$ and $\zeta_{\Theta}^{mf}:=(\vartheta^{mf}, \alpha^{mf}, \beta^{mf})$. For the first pair of evolutions, we proceed as in the proof of Lemma \ref{lem:apriori control zeta} to conclude
\[\sup_{0\leq s\leq t} |\zeta^{mf,i}_{\Z,s}-\zeta^{mf}_{\Z,s}| \leq  K_{T,D} |\zeta^{mf,i}_{\Z,0}-\zeta^{mf}_{\Z,0}| +  K_{T,D}\int_0^t  W_1(\pi_s(\cdot |Z_0^{mf} ),\pi_s(\cdot |Z_0^{mf,i } ) ) ds. \]
We can then use Lemma \ref{lem:ControlConditionals} to obtain 
\[\sup_{0\leq s\leq t} |\zeta^{mf,i}_{\Z,s}-\zeta^{mf}_{\Z,s}| \leq  K_{T,D} |\zeta^{mf,i}_{\Z,0}-\zeta^{mf}_{\Z,0}| +  K_{T,D}  W_1(\pi_0(\cdot |Z_0^{mf} ),\pi_0(\cdot |Z_0^{mf,i } )).\]
Combining the above pathwise estimate with the freedom to choose the coupling for the initializations, we can conclude  that
\begin{align*}
W_{T,1}(\check{\bgamma}, \check{\bgamma}^{N,mf}), W_{T,1}({\pathp \pi}, {\pathp \pi}^{N,mf})  &\leq  K_{T,D}  W_{1}(\pi_{0,z}, \pi_{0,z}^{N}) 
\\ & +K_{T,D} \inf_{\upsilon_z \in \Gamma_{\text{Opt}}(\pi_{0,z}^N, \pi_{0,z}) } \int W_1(\check{\gamma}_0^N(\cdot| z_0'),\check{\gamma}_0(\cdot| z_0) ) d \upsilon_z(z_0',z_0).  
\end{align*}
By assumption \eqref{eq:InitialConditionalsConditionalPROPCHAOS}, Remark \ref{rem:App1}, and Lemma \ref{Lem:F_lipschitz}, it follows that the right hand side of the above expression goes to zero as $N \rightarrow \infty$. For the pair of evolutions $\zeta_{\Theta}^{mf,i}$ and $\zeta_{\Theta}^{mf}$ we proceed as in the proof of Lemma \ref{lem:apriori control zeta}, this time noticing that we can write
\[\sup_{0\leq s\leq t} |\zeta^{mf,i}_{\Theta,s}-\zeta^{mf}_{\Theta,s}| \leq  K_{T,D} |\zeta^{mf,i}_{\Theta,0}-\zeta^{mf}_{\Theta,0}|. \]
Combining the above pathwise estimate with the freedom to choose the coupling for the initializations, we can conclude  that
\[ W_{T,1}(\check{\sigma}, \check{\sigma}^{N,mf}) \leq K_{T,D} W_1(\check \sigma_0, \check \sigma_0^N)\rightarrow 0. \]

\nc

  
\end{proof}

\subsection{Corollaries of Theorem \ref{thm:PropChaos}} 
\label{sec:CorollariesPropChaos}

In this section we establish some important results that are implied by Theorem \ref{thm:PropChaos}. The first one is  
Theorem \ref{thm:Main1}. 

\begin{proof}[Proof of Theorem \ref{thm:Main1}]

Let $(\gamma_0, \sigma_0 )$ be such that $\F \gamma_0= \overline{\pi}_0$  and $\F \sigma_0= \overline{\nu}_0$ and such that \eqref{eq:InitialConditionConditionals} holds. We introduce the measures $\check{\gamma}_0:= \gamma_0 \otimes \delta_{1}(d\varrho)$, and $\check{\sigma}_0:= \sigma_0 \otimes \delta_{1}(d \beta)$. That is, $\check{\gamma}_0$ is the product of $\gamma_0$ and a Dirac delta at the value $1$ for the $\varrho$ coordinate; $\check{\sigma}_0$ is defined analogously.  Likewise, we define $\check{\gamma}_0^N := \gamma_0^N \otimes \delta_1$ and $\check{\sigma}_0^N:= \sigma_0^N \otimes \delta_1$. It is clear that with these definitions we have \eqref{eq:InitialConditionalsConditionalPROPCHAOS} and thus we can invoke Theorem \ref{thm:PropChaos} to deduce
\[  W_{T,1}( \check{\bgamma}^N, \check{\bgamma}  ) + W_{T,1}( \check{\bsigma}^N, \check{\bsigma}   ) \rightarrow 0,  \]
where $(\check{\bgamma}^N, \check{\bsigma}^N)$ is the measure in path space induced by the particle system \eqref{ParticleSystem} with initializations as described in the statement of the theorem and with $\beta^i_0 = \varrho_0^i =1$ for all $i$; $(\check{\bgamma}, \check{\bsigma})$, on the other hand, is the law in \eqref{MeanFieldEquation} with intialization
$(\check{\bgamma}_0, \check{\bsigma}_0) = (\check{\gamma}_0, \check{\sigma}_0)$.


Using Lemma \ref{Lem:F_lipschitz}, we conclude that for every $t \in [0,T]$
\[  W_1(\pi_t^N , \pi_t) =  W_{1}( \F \bgamma^N_t , \F \bgamma_t )  \leq  K_{T,D} W_1 ( \bgamma^N_t , \bsigma^N_t )  \leq K_{T,D} W_1 ( \check{\bgamma}_t^N ,\check{\bgamma}_t)  \leq  K_{T,D} W_{T,1} ( \check{\bgamma}^N ,\check{\bgamma}). \]
Likewise,
\[  W_1(\nu_t^N , \nu_t) \leq  K_{T,D} W_{T,1} ( \check{\bsigma}^N ,\check{\bsigma}). \]
Taking the $\sup$ over all $t \in [0,T]$ in the sum of the above two expressions we get
\[ \sup_{t \in [0,T]} \{ W_{1}(\pi_t^N , \pi_t) + W_1(\nu_t^N, \nu_t)   \} \leq K_{T,D} (  W_{T,1} ( \check{\bgamma}^N ,\check{\bgamma}) + W_{T,1} ( \check{\bsigma}^N ,\check{\bsigma}) ), \]
from where the desired result now follows. 

\end{proof}

Corollary \ref{cor:ParticlesHat} and Remark \ref{rem:ApproxRelEntropy} below, which we will use in section \ref{sec:LongTimeNash}, are the other important consequences of Theorem \ref{thm:PropChaos} that we discuss in this section. There, we consider an evolution $\{ (\hat{\nu}_t, \hat{\pi}_t )\}_{t}$  closely related to \eqref{WFR dynamics} that is given by
\begin{equation}
   \partial_t \hat{\nu}_t =  \eta_t\divv( \hat{\nu}_t \nabla_\theta \U_\nu(\pi_t, \nu_t; \theta) ), \quad  \partial_t \hat{\pi}_t =  -\eta_t\divv( \hat{\pi}_t (0,\nabla_{\tilde z} \U_\pi(\pi_t, \nu_t; (z, \tilde z))) ),
   \label{eq:ModifiedDynamicsMF}
\end{equation}
 with initializations $\hat{\nu}_0, \hat{\pi}_0$ that are absolutely continuous with respect to $\nu_0$ and  $\pi_0$, respectively. It is at this stage that we use the extra coordinates $\beta, \varrho$ in \eqref{ParticleSystem}. Indeed, these variables have been introduced to accommodate for the changes of measure between $\hat{\nu}_0 $ and $\nu_0$ and between $\hat{\pi}_0$ and $\pi_0$. We will be able to use the general purpose Theorem \ref{thm:PropChaos} to prove the consistency of particle approximations for the system \eqref{eq:ModifiedDynamicsMF}.     \nc

We start with a preliminary result.

\begin{proposition}
\label{prop:NuHatEmpirical}
Let $\nu_t^N = \frac{1}{N}\sum_{i=1}^n \alpha_i(t) \delta_{\vartheta_i(t)} $ and $\pi_t^N = \frac{1}{N}\sum_{i=1}^N \omega_i(t) \delta_{(Z_i(t), \tilde Z _i(t))} $ be as in \eqref{eqn:InitialiCondEmpirirical}.
Let $\beta_1, \dots, \beta_{N}$ and $\varrho_1, \dots, \varrho_{N}$ be two collections of non-negative scalars satisfying
\[ \frac{1}{N}\sum_{i=1}^N \beta_i \alpha_i(0) =1, \quad \frac{1}{N}\sum_{i=1}^N \varrho_i \omega_i(0) =1.  \] 
Let $\hat{\nu}_t^N$ and $\hat{\pi}_t^N$ be the probability measures defined as
\[ \hat{\nu}_t^N:= \frac{1}{N}\sum_{i=1}^n \beta_i \alpha_i(0) \delta_{\vartheta_i(t)}, \quad \pi_t^N := \frac{1}{N}\sum_{i=1}^N \varrho_i \omega_i(0) \delta_{(Z_i(t), \tilde Z _i(t))}  \quad t \geq 0. \]
Then 
\begin{equation}
   \partial_t \hat{\nu}^N_t =  \eta_t\divv( \hat{\nu}^N_t \nabla_\theta \U_\nu(\pi_t^N, \nu_t^N; \theta) ), \quad  \partial_t \hat{\pi}^N_t =  -\eta_t\divv( \hat{\pi}^N_t (0,\nabla_{\tilde z} \U_\pi(\pi_t^N, \nu_t^N; (z, \tilde z))) )
   \label{eq:ModifiedDynamics}
\end{equation}
in the weak sense. 
\end{proposition}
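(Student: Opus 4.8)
The plan is to test both candidate evolutions against smooth functions and reduce everything to the particle ODEs in \eqref{ParticleSystem}. The structural point to exploit is that in \eqref{ParticleSystem} the positions $\vartheta_i$ and $(Z_i,\tilde Z_i)$ move according to velocity fields that depend on the particle cloud \emph{only} through the empirical measures $\pi_t^N,\nu_t^N$ — which are assembled from the \emph{original} weights $\alpha_i(t),\omega_i(t)$ in \eqref{eqn:InitialiCondEmpirirical} — and not through the auxiliary scalars $\beta_i,\varrho_i$. Hence one may freely re-weight the same trajectories: the re-weighted measures $\hat\nu_t^N,\hat\pi_t^N$ carry the frozen masses $\beta_i\alpha_i(0)$ and $\varrho_i\omega_i(0)$, so their evolution is pure transport (there is no reaction/$\kappa$ term left), driven by that same velocity field.

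Concretely, for a test function $\varphi:\Theta\to\R$ I would compute, using the chain rule together with the fourth line of \eqref{ParticleSystem} (i.e. $\dot\vartheta_i(t)=-\eta_t\nabla_\theta\U_\nu(\pi_t^N,\nu_t^N;\vartheta_i(t))$),
\[
\frac{d}{dt}\int_\Theta \varphi\, d\hat\nu_t^N = \frac1N\sum_{i=1}^N \beta_i\alpha_i(0)\,\nabla_\theta\varphi(\vartheta_i(t))\cdot\dot\vartheta_i(t) = -\eta_t \int_\Theta \nabla_\theta\varphi(\theta)\cdot\nabla_\theta\U_\nu(\pi_t^N,\nu_t^N;\theta)\, d\hat\nu_t^N(\theta),
\]
which is exactly the weak formulation of $\partial_t\hat\nu_t^N = \eta_t\divv\!\big(\hat\nu_t^N \nabla_\theta\U_\nu(\pi_t^N,\nu_t^N;\cdot)\big)$. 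The computation for $\hat\pi_t^N$ is identical: for $\phi:\Z^2\to\R$ one uses $dZ_i=0$ and $d\tilde Z_i=\eta_t\nabla_{\tilde z}\U_\pi(\pi_t^N,\nu_t^N;Z_i,\tilde Z_i)\,dt$ to obtain $\frac{d}{dt}\int\phi\,d\hat\pi_t^N = \eta_t\int \nabla_{\tilde z}\phi\cdot\nabla_{\tilde z}\U_\pi(\pi_t^N,\nu_t^N;\cdot)\,d\hat\pi_t^N$, which is the weak form of the second equation in \eqref{eq:ModifiedDynamics}. Along the way I would record that $\hat\nu_t^N(\Theta)=\frac1N\sum_i\beta_i\alpha_i(0)=1$ for all $t$ since these masses are time-independent, and likewise $\hat\pi_t^N$ has total mass one and, by $dZ_i=0$, its $z$-marginal is constant (just as in Remark \ref{Rmk:ConstantMarginal}); so these genuinely are probability measures consistent with the constraint in \eqref{eq:ModifiedDynamicsMF}.

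There is essentially no obstacle here: the statement is a bookkeeping lemma isolating, at the particle level, the reweighting device that will later be fed into Theorem \ref{thm:PropChaos} to transport the auxiliary coordinates $\beta,\varrho$ to the mean-field limit and thereby realize the absolutely-continuous initializations $\hat\nu_0,\hat\pi_0$ of \eqref{eq:ModifiedDynamicsMF}. The only minor care needed is to keep the dependence of $\U_\nu,\U_\pi$ on $(\pi_t^N,\nu_t^N)$ — and not on $(\hat\pi_t^N,\hat\nu_t^N)$ — explicit throughout, since it is the former pair that drives the flow.
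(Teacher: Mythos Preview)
Your proposal is correct and follows essentially the same approach as the paper: test against a smooth function, use the chain rule, substitute the particle ODE $\dot\vartheta_i(t)=-\eta_t\nabla_\theta\U_\nu(\pi_t^N,\nu_t^N;\vartheta_i(t))$ (resp.\ the equations for $Z_i,\tilde Z_i$), and recognize the result as the weak form of the transport equation. Your additional remarks about the measures being probability measures with the correct $z$-marginal, and about keeping the driving fields dependent on $(\pi_t^N,\nu_t^N)$ rather than $(\hat\pi_t^N,\hat\nu_t^N)$, are correct and slightly more explicit than the paper's version.
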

\begin{proof}
Let $\phi(\theta)$ be an arbitrary test function. From \eqref{ParticleSystem} we see that
\begin{align*}
\frac{d}{dt} \int \phi(\theta) d\hat{\nu}_t^N(\theta) & =  \frac{1}{N} \sum_{i=1}^N \beta_i \alpha_i(0) \frac{d}{dt} \phi(\vartheta_i(t)) = \frac{1}{N} \sum_{i=1}^N \beta_i \alpha_i(0) \nabla  \phi(\vartheta_i(t)) \cdot \dot{\vartheta}_i(t)
\\&= - \frac{\eta_t}{N} \sum_{i=1}^N \beta_i \alpha_i(0) \nabla  \phi(\vartheta_i(t)) \cdot \nabla_\theta \U_\nu(\pi_t^N, \nu_t^N; \vartheta_i(t))
\\&= \eta_t \int \nabla  \phi(\theta) \cdot \nabla_\theta \U_\nu(\pi_t^N, \nu_t^N; \theta)d \hat{\nu}_t^N(\theta).
\end{align*}
This shows that $\hat{\nu}^N$
solves equation \eqref{eq:ModifiedDynamics} in the weak sense. The equation for $\hat{\pi}^N$ is deduced similarly.
\end{proof}

\nc


\nc

We will now proceed to relate \eqref{eq:ModifiedDynamics} with \eqref{eq:ModifiedDynamicsMF}. We first introduce some additional mathematical tools that will help us in this aim. 

Let $\check{\F} : \mathcal{P}(\Z^2 \times \R_+^2) \rightarrow \mathcal{M}_+(\Z^2 ) $ be the map defined via the identity
\[  \int \phi(\theta) d(\check{\F} \check \sigma)(\theta)    = \int \alpha \beta \phi(\theta) d\check \sigma(\theta,\alpha, \beta),    \]
for all test functions $\phi$. Analogously, define $\check{F}$ as a map $\check{\F} : \mathcal{P}(\Theta \times \R_+^2) \rightarrow \mathcal{M}_+(\Theta ) $, substituting any appearance of $\check{\sigma}, \theta, \alpha, \beta$ in the above with $\check{\gamma},(z, \tilde z),\omega, \varrho$. Notice that $\check{\F} \check \sigma $ is a probability measure provided that $\int  \alpha \beta d\check{\sigma}(\theta, \alpha, \beta)=1  $, while an analogous statement holds when $\check \F$ acts on $\check{\gamma}$.


Let us now introduce a map 
$\G: \C([0,T], \Z^2 \times \R_+^2 ) \rightarrow \C([0,T], \Z^2 \times \R_+^2 )$ defined as:
\begin{equation}
 \G : \{ ( z_t, \tilde z _t , \omega_t, \varrho_t) \}_{t \in [0,T] } \mapsto \{ ( z_t, \tilde z _t , \omega_0, \varrho_0) \}_{t \in [0,T] }.
 \label{eqn:DefG}
\end{equation}
 That is, $\G$ is the map that freezes the coordinates $\omega, \varrho$ of a given path, setting them to be equal to their initializations. Naturally, $\G$ induces, via pushforward, a map from $\mathcal{P}(\C([0,T], \Z^2 \times \R_+^2 )) $ into itself; we will abuse notation slightly and will also use $\G$ to denote this induced map. Furthermore, we will also think of $\G$ as a map $\G: \C([0,T], \Theta \times \R_+^2 ) \rightarrow \C([0,T], \Theta \times \R_+^2 )$ that freezes the coordinates $\alpha, \beta$ of a given path, setting them to be equal to their initializations; we will also denote by $\G$ the map induced via pushforward from $\mathcal{P}(\C([0,T], \Theta \times \R_+^2 ))$ into itself. Which of the interpretations for $\G$ will be used in each instance should be clear from context. 

\begin{remark}
 Notice that $\hat{\pi}^N_t$ and $\hat{\nu}_t^N$ in Proposition \ref{prop:NuHatEmpirical} can be written as $\check{\F} ( (\G \check \bgamma^N)_t  ) $ and   $\check{\F} ( (\G \check \bsigma^N)_t  ) $, respectively. 
 \label{rem:HatParticleSystem}
 \end{remark}

\begin{lemma}
\label{lem:HatProcessMeanField}
Let $(\check{\bgamma } , \check{\bsigma})$ be the law of the process \eqref{MeanFieldEquation} initialized at a pair $(\check{\gamma }_0 , \check{\sigma}_0)$. Then $\{ \hat{\nu}_t:= \check \F  (  (\G  \check \bsigma)_t  )  \}_{t \in [0,T]}$ and $\{\hat{\pi}_t:= \check \F  (  (\G  \check \bgamma)_t  )  \}_{t\in [0,T]}$ solve the PDEs \eqref{eq:ModifiedDynamicsMF}, where $\pi_t = \F (\bgamma_t )  $ and $\nu_t= \F(\bsigma_t)$.

\end{lemma}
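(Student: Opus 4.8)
The plan is to mimic the computation already carried out in Proposition~\ref{prop:NuHatEmpirical}, but now at the level of the mean-field law rather than the empirical measures, using the disintegration of $\check{\bsigma}$ (resp. $\check{\bgamma}$) over its initial condition and the fact that each fixed initial condition evolves according to the decoupled ODE~\eqref{DecoupledEquation}. Concretely, I would begin by recalling that, by the definition of the mean-field equation~\eqref{MeanFieldEquation} and the map $\G$ in~\eqref{eqn:DefG}, a sample path of $\check{\bsigma}$ has the form $t\mapsto(\vartheta_t,\alpha_t,\beta_t)$ with $(\vartheta_0,\alpha_0,\beta_0)\sim\check{\sigma}_0$, the coordinate $\beta_t\equiv\beta_0$ is frozen, $\vartheta_t$ solves $\dot\vartheta_t=-\eta_t\nabla_\theta\U_\nu(\pi_t,\nu_t;\vartheta_t)$, and $\pi_t=\F(\bgamma_t)$, $\nu_t=\F(\bsigma_t)$ are deterministic. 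By the definition of $\check\F$, for a test function $\phi$,
\[
\int\phi(\theta)\,d\hat\nu_t(\theta)=\int\alpha\beta\,\phi(\theta)\,d(\G\check{\bsigma})_t(\theta,\alpha,\beta)=\E\big[\alpha_0\beta_0\,\phi(\vartheta_t)\big],
\]
where the expectation is over the initialization $(\vartheta_0,\alpha_0,\beta_0)\sim\check{\sigma}_0$ (note $\alpha_t$ is replaced by $\alpha_0$ because $\G$ freezes it).

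Next I would differentiate in $t$ under the expectation — legitimate because, by Assumption~\ref{Hyp U} and Proposition~\ref{Prop:control weights}, the velocity field $\nabla_\theta\U_\nu$ is bounded and the weights $\alpha_0,\beta_0$ lie in a fixed bounded set, so dominated convergence applies. This gives
\begin{align*}
\frac{d}{dt}\int\phi(\theta)\,d\hat\nu_t(\theta)
&=\E\big[\alpha_0\beta_0\,\nabla\phi(\vartheta_t)\cdot\dot\vartheta_t\big]\\
&=-\eta_t\,\E\big[\alpha_0\beta_0\,\nabla\phi(\vartheta_t)\cdot\nabla_\theta\U_\nu(\pi_t,\nu_t;\vartheta_t)\big]\\
&=-\eta_t\int\nabla\phi(\theta)\cdot\nabla_\theta\U_\nu(\pi_t,\nu_t;\theta)\,d\hat\nu_t(\theta),
\end{align*}
where in the last line I again used the definition of $\hat\nu_t=\check\F((\G\check{\bsigma})_t)$. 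This is exactly the weak form of the first PDE in~\eqref{eq:ModifiedDynamicsMF}. The argument for $\hat\pi_t=\check\F((\G\check{\bgamma})_t)$ is identical, using that the $z$-coordinate is frozen (hence the $(0,\nabla_{\tilde z}\U_\pi)$ structure of the velocity) and that $\omega_t$ is replaced by $\omega_0$ under $\G$; I would also note in passing that $\hat\nu_t,\hat\pi_t$ are genuine probability measures for all $t$ because $\int\alpha_0\beta_0\,d\check\sigma_0=\int\omega_0\varrho_0\,d\check\gamma_0=1$ is preserved (the frozen weights do not move and $\vartheta_t$ stays in $\Theta$ by Assumption~\ref{Hyp:Support is stationary}).

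I do not expect a serious obstacle here; the only points requiring mild care are (i) justifying the interchange of $d/dt$ and $\E$, which follows from the uniform bounds in Assumption~\ref{Hyp U} together with the boundedness of the initial weights, and (ii) keeping track of which coordinates $\G$ freezes so that the factor $\alpha_0\beta_0$ (rather than $\alpha_t\beta_t$) appears and so that the velocity field entering the divergence has the correct degenerate first component for $\hat\pi$. If anything is ``the hard part,'' it is purely bookkeeping: making sure the disintegration of $\check{\bsigma}$ over the initial law $\check{\sigma}_0$ is valid and that the deterministic driving measures $\pi_t,\nu_t$ appearing in~\eqref{DecoupledEquation} agree with $\F(\bgamma_t),\F(\bsigma_t)$ — but this is exactly the content of the mean-field fixed point~\eqref{MeanFieldEquation}, already established in Theorem~\ref{thm:WelPosednessSDE}, so no new estimate is needed.
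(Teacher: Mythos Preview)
Your proposal is correct and follows essentially the same approach as the paper's proof: express $\int\phi\,d\hat\nu_t$ as $\E[\alpha_0\beta_0\,\phi(\vartheta_t)]$ via the definitions of $\check\F$ and $\G$, differentiate in $t$ using the ODE for $\vartheta_t$, and recognize the result as the weak form of~\eqref{eq:ModifiedDynamicsMF}. Your additional remarks on justifying the interchange of $d/dt$ and $\E$ and on the preservation of total mass are fine elaborations that the paper leaves implicit or relegates to a separate remark.
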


\begin{proof}
Consider the mean-field ODE \eqref{MeanFieldEquation}. For every smooth test function $\phi$ we have
\[ \int \phi(\theta) d \hat {\nu}_t(\theta) = \int 
\alpha \beta \phi(\theta) d (\G  \sigma )_t(\theta, \alpha, \beta) =   \E [\alpha_0 \beta_0\phi(\vartheta_t) ].   \]
In particular,
\begin{align*}
\frac{d}{dt}\int \phi(\theta) d \hat {\nu}_t(\theta) = \frac{d}{dt}  \E [\alpha_0 \beta_0\phi(\vartheta_t) ] & = -\E [ \eta_t \alpha_0 \beta_0  \nabla \phi(\vartheta_t)\cdot \nabla_\theta \U_\nu (\pi_t,\nu_t; \vartheta_t ) ]
\\& = -\eta_t \int \nabla \phi(\theta) \cdot \nabla_\theta \U_\nu(\pi_t, \nu_t; \theta)d \hat{\nu}_t(\theta).  \end{align*}
This proves that $\hat{\nu}$ satisfies the desired equation. The equation for $\hat{\pi}$ is obtained similarly. 

\end{proof}

\begin{remark}
Notice that $\hat{\nu}_t$ and $\hat{\pi}_t$ are probability measures if $\hat{\nu}_0$ and $\hat{\pi}_0$ are. 
\end{remark}

In what follows we use Theorem \ref{thm:PropChaos} to show that, under appropriate assumptions on initializations, the system in \eqref{eq:ModifiedDynamics} can be recovered from suitable particle approximations.

\begin{corollary}
\label{cor:ParticlesHat}
 
Let $\overline{\nu}_0$ and $\overline{\pi}_0$ be arbitrary, and let $\hat{\nu}_0$ and $\hat{\pi}_0$ be probability measures such that $\hat{\nu}_0 \ll \overline{\nu}_0 $, $\hat{\pi}_0 \ll \overline{\pi}_0 $, with $\frac{d \hat{\nu}_0 }{d\overline{\nu}_0} \in L^\infty(\overline {\nu}_0)$ and $\frac{d \hat{\pi}_0 }
{d\overline{\pi}_0} \in L^\infty(\overline{ \pi}_0)$. Let
$\check{\gamma}_0$ and $\check{\sigma}_0$ be as in Theorem \ref{thm:PropChaos} and additionally assume they satisfy
$\F \check{\gamma}_0 = \overline{\pi}_0$, $ \F \check{\sigma}_0 = \overline{\nu}_0 $, and $\check \F \check{\gamma}_0 = \hat{\pi}_0$, $ \check\F \check{\sigma}_0 = \hat{\nu}_0 $.

Consider approximating particle systems as in Theorem \ref{thm:PropChaos} with the additional assumption that $\hat{\nu}_0^N, \hat{\pi}_0^N$ are probability measures.
Then
\[  \sup_{t\in [0,T] } \{   W_1( \hat{\nu}_t^N , \hat{\nu}_t )  +   W_1( \hat{\pi}_t^N , \hat{\pi}_t )  \} \rightarrow 0,  \quad  \sup_{t\in [0,T] } \{   W_1( {\nu}_t^N , {\nu}_t )  +   W_1( {\pi}_t^N , {\pi}_t )  \} \rightarrow 0,  \]
as $N \rightarrow \infty$. In the above, we use the same notation as in Lemma \ref{lem:HatProcessMeanField} and Remark \ref{rem:HatParticleSystem}.


\nc 


\end{corollary}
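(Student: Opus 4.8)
The plan is to obtain the statement as a direct corollary of Theorem \ref{thm:PropChaos}, combined with the Lipschitz continuity of the path map $\G$ of \eqref{eqn:DefG} and of the projection $\check\F$, both restricted to domains of uniformly bounded support. First I would fix $D$ large enough that all the initial weights $\omega_0^i,\varrho_0^i,\alpha_0^i,\beta_0^i$ and their mean-field analogues lie in $[0,D]$: this is possible since by hypothesis $\omega_0,\alpha_0$ are bounded, while the coordinates $\varrho_0,\beta_0$ are used to carry the (bounded) Radon--Nikodym derivatives $\tfrac{d\hat{\pi}_0}{d\overline{\pi}_0}$, $\tfrac{d\hat{\nu}_0}{d\overline{\nu}_0}$. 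With this choice of $D$ the hypotheses of Theorem \ref{thm:PropChaos} are in force, so $W_{T,1}(\check{\bgamma}^N,\check{\bgamma})\to 0$ and $W_{T,1}(\check{\bsigma}^N,\check{\bsigma})\to 0$ as $N\to\infty$.

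Next I would record two elementary Lipschitz facts. (i) The map $\G$ is $3$-Lipschitz between the path spaces equipped with $W_{T,1}$: for any two paths, $|((z_t,\tilde z_t),\omega_0,\varrho_0)-((z_t',\tilde z_t'),\omega_0',\varrho_0')|\le|(z_t,\tilde z_t)-(z_t',\tilde z_t')|+|\omega_0-\omega_0'|+|\varrho_0-\varrho_0'|\le 3\sup_{s\le T}|((z_s,\tilde z_s),\omega_s,\varrho_s)-((z_s',\tilde z_s'),\omega_s',\varrho_s')|$, so pushing a $W_{T,1}$-optimal coupling forward by $\G\times\G$ and reading off the time-$t$ marginal yields $\sup_{t\in[0,T]}W_1((\G\check{\bgamma}^N)_t,(\G\check{\bgamma})_t)\le 3\,W_{T,1}(\check{\bgamma}^N,\check{\bgamma})$, with the right-hand side independent of $t$; likewise for $\bsigma$. (ii) The map $\check\F$ factors as $\check\F=\F\circ m_{\#}$ with $m(\theta,\alpha,\beta):=(\theta,\alpha\beta)$ — a direct verification from the defining identities — and $m$ is Lipschitz on $\Theta\times[0,D]^2$, which it maps into $\Theta\times[0,D^2]$; combined with the $W_1$-Lipschitz continuity of $\F$ on probability measures supported in a fixed bounded slab (Lemma \ref{Lem:F_lipschitz} and the remark immediately following its proof), this shows $\check\F$ is $W_1$-Lipschitz, with a $t$-independent constant $C_D$, on probability measures supported in $\Z^2\times[0,D]^2$, resp.\ $\Theta\times[0,D]^2$. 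The measures $(\G\check{\bgamma}^N)_t$, $(\G\check{\bgamma})_t$ belong to this class: the frozen coordinates $\omega_0,\varrho_0$ lie in $[0,D]$, positions stay in $\Z^2$ by Assumption \ref{Hyp:Support is stationary}, and $\check\F((\G\check{\bgamma})_t)=\hat{\pi}_t$ is a probability measure (Lemma \ref{lem:HatProcessMeanField} and the remark after it).

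Chaining (i) and (ii) with the identifications $\hat{\pi}_t^N=\check\F((\G\check{\bgamma}^N)_t)$ (Remark \ref{rem:HatParticleSystem}) and $\hat{\pi}_t=\check\F((\G\check{\bgamma})_t)$ (Lemma \ref{lem:HatProcessMeanField}), together with the analogous identities for $\hat{\nu}$, gives, for all $t\in[0,T]$, $W_1(\hat{\pi}_t^N,\hat{\pi}_t)+W_1(\hat{\nu}_t^N,\hat{\nu}_t)\le C_D\bigl(W_{T,1}(\check{\bgamma}^N,\check{\bgamma})+W_{T,1}(\check{\bsigma}^N,\check{\bsigma})\bigr)$, whose right-hand side tends to $0$ as $N\to\infty$; this is the first assertion. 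The second, $\sup_{t\in[0,T]}\{W_1(\nu_t^N,\nu_t)+W_1(\pi_t^N,\pi_t)\}\to 0$, is precisely the conclusion of Theorem \ref{thm:Main1} applied with the present initializations (the auxiliary coordinates $\beta,\varrho$ being inert in the dynamics of $\nu,\pi$), and can equally be reproven verbatim by rerunning the above argument with $\check\F,\G$ replaced by $\F$ and the coordinate map dropping $\varrho$ (resp.\ $\beta$).

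This argument is essentially bookkeeping; the only point deserving care — the main obstacle, such as it is — is keeping every estimate uniform in $t\in[0,T]$. This rests on (a) the uniform-in-$N$-and-$t$ bound on the weight coordinates, which fixes the slab $[0,D]^2$, hence the Lipschitz constant $C_D$ of $\check\F$, once and for all; and (b) the observation in (i) that the bound on $W_1$ of the time-$t$ marginals has a right-hand side free of $t$, so that passing from path-space convergence to convergence of the time-$t$ marginals loses no uniformity. Once these are in place, the conclusion follows.
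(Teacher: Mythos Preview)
Your proposal is correct and follows essentially the same route as the paper's proof: invoke Theorem \ref{thm:PropChaos} to get $W_{T,1}(\check{\bgamma}^N,\check{\bgamma})+W_{T,1}(\check{\bsigma}^N,\check{\bsigma})\to 0$, then transfer this to the hat-processes by chaining the Lipschitz continuity of $\G$ on path space with the Lipschitz continuity of $\check\F$ on measures supported in a fixed bounded slab. Your factorization $\check\F=\F\circ m_{\#}$ with $m(\theta,\alpha,\beta)=(\theta,\alpha\beta)$ is a slightly cleaner way of obtaining the Lipschitz bound for $\check\F$ than the paper's appeal to ``a very similar approach to the one in Lemma \ref{Lem:F_lipschitz},'' but the two arguments are interchangeable; the constants differ immaterially (you get $3$ for $\G$, the paper records $2$).
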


\begin{proof}

 First of all, let us notice that the condition $\frac{d \hat{\nu}_0 }{d\overline{\nu}_0} \in L^\infty(\overline {\nu}_0)$ and $\frac{d \hat{\pi}_0 }
{d\overline{\pi}_0} \in L^\infty(\overline{ \pi}_0)$ is used to guarantee that we can indeed build
  $\check{\sigma}_0$ and $\check{\gamma}_0$ with bounded supports; see the first part of Remark \ref{rem:ApproxRelEntropy} below.

It is straightforward to check that $\G$ is a Lipschitz map, i.e., 
\[ W_{T,1}(\G \check \bsigma, \G \check \bsigma' ) \leq 2 W_{T,1} (\check \bsigma, \check \bsigma'), \quad  W_{T,1}(\G \check \bgamma, \G \check \bgamma' ) \leq 2 W_{T,1} (\check \bgamma, \check \bgamma') .   \]
In addition, we can find a constant $C_{T,D}$ such that for every $t \in [0,T]$ 
\[ W_1( \check{\F} (  (\G \check \bsigma )_t   ), \check{\F} (  (\G \check \bsigma^N )_t )   ) \leq C_{T,D} W_1((\G \check \bsigma )_t   ),   (\G \check \bsigma^N )_t ) \leq C_{T,D} W_{T,1}(\G \check \bsigma    ,   \G \check \bsigma^N )  ,\]
where the first inequality follows from a very similar approach to the one in Lemma \ref{Lem:F_lipschitz}. Similarly,
\[ W_1( \check{\F} (  (\G \check \bgamma )_t   ), \check{\F} (  (\G \check \bgamma^N )_t )   ) \leq C_{T,D} W_{T,1}(\G \check \bgamma    ,   \G \check \bgamma^N ).\]

We may now combine the above inequalities with
Theorem \ref{thm:PropChaos}, which allows us to obtain
\[ W_{T,1}( \check{\bgamma}^N , \check{\bgamma}  ) +  W_{T,1}( \check{\bsigma}^N , \check{\bsigma}  )  \rightarrow 0, \]
to deduce the desired convergence.




	\nc

\end{proof}

\begin{remark}[Constructing initializations]
\label{rem:ApproxRelEntropy}
Let $\overline{\pi}_0$ and $\overline{\nu}_0$ be arbitrary, and let $\rho_\nu= \frac{d \hat{\nu}_0 }{d\overline{\nu}_0}$ and $\rho_\pi= \frac{d \hat{\pi}_0 }{d\overline{\pi}_0}$, which we assume satisfy $\rho_{\nu}\in L^\infty(\overline {\nu}_0)$ and $\rho_{\pi}\in L^\infty(\overline{ \pi}_0)$; we further assume that $\hat{\pi}_{0,z} = \pi_{0,z}$. The latter assumption implies that $\int \rho_\pi(z, \tilde z) d\pi_0(\tilde z| z) =1,  $ 
for all $z$ in the support of $\pi_{0,z}$.

Let $\check{\gamma}_0$ and $\check{\sigma}_0$ be the measures $\check{\gamma}_0:=  h_{\gamma\sharp} \overline{\pi}_0$, $\check{\sigma}_0:=  h_{\sigma\sharp} \overline{\nu}_0$, $h_\gamma: (z,\tilde{z})\mapsto (z, \tilde z, 1, \rho_\pi(z,\tilde z))$, $h_\sigma: \theta\mapsto (\theta, 1, \rho_\nu(\theta))$. Notice that $\F \check{\gamma}_0 = \overline{\pi}_0$ and  $\check{\F} \check{\gamma}_0 = \hat{\pi}_0$, while $\F \check{\sigma}_0  = \overline{\nu}_0 $ and $\check{\F} \check{\sigma}_0 = \hat{\nu}_0 $.

We use the same objects and notation as in Remark \ref{rem:Initial} and introduce the extra variables $\beta_{ij}= \rho_{\nu}(\vartheta_{ij})$ and $\varrho_{ij} = \rho_{\pi}( \tilde{Z}_{ij}, Z_{ij})$; notice that the uniform boundedness on $\rho_\nu$ and $\rho_{\pi}$ is imposed to guarantee that the weights $\varrho_{ij}$ and $\beta_{ij}$ are uniformly bounded. Consider the measures 
\[ \check{\gamma}_0^{n,m}:= \frac{1}{nm}\sum_{i=1}^{n}\sum_{j=1}^{m}  \delta_{(Z_{ij}, \tilde{Z}_{ij}, \omega_{ij}, \varrho_{ij} )}, \quad  \check{\sigma}_0^{n,m}:= \frac{1}{nm}\sum_{i=1}^{n}\sum_{j=1}^{m}  \delta_{(\vartheta_{ij}, \alpha_{ij}, \beta_{ij} )}. \]

From Lemma \ref{lem:SatisfyingAssumptions} we can find a sequence $\{(n_k, m_k)\}_{k \in \N}$ such that, almost surely, the induced sequence of pairs $\check{\gamma}^{N_k}_0:= \check{\gamma}^{n_k,m_k}_0$, $\check{\sigma}^{N_k}_0:= \check{\sigma}^{n_k,m_k}_0$ satisfies conditions \eqref{eq:InitialConditionalsConditionalPROPCHAOS}. Moreover, thanks to the law of large numbers and Lemma \ref{lem:Aux2App} in Appendix \ref{app1} this subsequence can be assumed to be such that
\begin{equation}
   \lim_{k \rightarrow \infty } \frac{1}{n_k}\sum_{i=1}^{n_k} \left|  \frac{1}{\frac{1}{m_k} \sum_{j=1}^{m_k} \rho_\pi(Z_{ij},\tilde{Z}_{ij}) } - 1 \right| =0, \quad \lim_{k \rightarrow \infty } \frac{1}{n_k}\sum_{i=1}^{n_k} \left|  \frac{1}{\frac{1}{m_k} \sum_{j=1}^{m_k} \rho_\nu(\vartheta_{ij}) } - 1 \right| =0. 
   \label{eq:AuxApproximatinSequence}
\end{equation}

We make a slight modification to the weights $\varrho_{ij}$ and $\beta_{ij}$, normalizing them so that $\frac{1}{m_k}\sum_{j} \varrho_{ij} =1$ for all $i$, as well as $\frac{1}{n_km_k} \sum_{ij} \beta_{ij} =1$. From \eqref{eq:AuxApproximatinSequence} we can directly show that condition
\eqref{eq:InitialConditionalsConditionalPROPCHAOS} continues to hold after the normalization of weights. The resulting measures $\hat{\pi}_0^{N_k}= \sum_{j} \varrho_{ij} \delta_{(Z_{ij}, \tilde{Z}_{ij})} $ and $\hat{\nu}_0^{N_k}= \sum_{ij} \beta_{ij} \delta_{\vartheta_{ij}} $ can be seen to converge, in the Wasserstein sense, respectively, toward $\hat{\nu}_0$ and $\hat{\pi}_0$, while the measures $\pi_0^{N_k}= \frac{1}{n_k m_k} \sum_{ij}\delta_{(Z_{ij}, \tilde{Z}_{ij})}$ and $\nu_0^{N_k}= \frac{1}{n_k m_k} \sum_{ij}\delta_{\vartheta_{ij}}$ converge toward $\overline{\pi}_0$ and $\overline{\nu}_0$, respectively. 

Moreover, another application of the law of large numbers implies that  \[\H(\hat{\nu}_0^{N_k} || {\nu}_0^{N_k})   = (\frac{1}{n_km_k}\sum_{ij} \rho_{\nu}(\vartheta_{ij}))^{-1} \frac{1}{n_km_k}\sum_{ij} \log(\rho_\nu(\vartheta_{ij})) \rho_\nu(\vartheta_{ij}) - \log\left( \frac{1}{n_km_k}\sum_{ij} \rho_\nu(\vartheta_{ij}) \right)  \]   
converges, as $k \rightarrow \infty$, toward $\int_\Theta \log(\rho_\nu(\theta)) \rho_\nu(\theta) d\overline{\nu}_0(\theta)$, 
which is precisely $\H(\hat \nu_0 || \overline{\nu}_0) $. Likewise, we can see that $\H(\hat{\pi}_0^{N_k} || {\pi}_0^{N_k})  \rightarrow  \H(\hat{\pi}_0 || \overline{\pi}_0)$, as $k \rightarrow \infty$. 

The above convergence of relative entropies will be used in the next section.

\end{remark}
\nc

\nc

\section{Long term behavior of mean-field equation and approximate Nash equilibria of \eqref{min max problem couplings}}
\label{sec:LongTimeNash}

In this section we study the long time behavior of the system of equations \eqref{WFR dynamics} initialized at arbitrary measures $(\pi_0, \nu_0)$. Our aim is to study the ability of system \eqref{WFR dynamics} to generate approximate Nash equilibria for problem \eqref{min max problem couplings}. We separate our discussion into two distinctive cases: 1) a rather general non-convex non-concave setting, and 2) a non-convex concave setting. Recall that by non-convex/non-concave we really mean not \textit{geodesically} convex/concave relative to certain optimal transport geometry driving the dynamics, while we do assume convexity/concavity in the linear interpolation sense as in Assumption \ref{assump:ConvConcav}. 

To begin our analysis, we first discuss the relationship between the system \eqref{WFR dynamics} and an associated hat process as in Lemma \ref{lem:HatProcessMeanField}. The study of similar systems has been considered in works like \cite{domingo2020mean}. However, here we present an alternative approach that allows us to fully justify our derivations; see Remark \ref{rem:Entropy_ComparisonLiterature} below for more details. Our approach makes use of the larger structure that we studied in section \ref{sec:ConvergenceMeanField}, and, specifically, the content of Remark \ref{rem:ApproxRelEntropy}. Indeed, we will use the particle approximation in Remark \ref{rem:ApproxRelEntropy} to understand the time evolution of the relative entropy between $\hat{\nu}$ and $\nu$, and $\hat{\pi}$ and $\pi$, for arbitrary initializations. As a first step, we study the time evolutions of relative entropies when the measures $(\nu^{N_k}_t, \pi^{N_k}_t)$ and $(\hat \nu^{N_k}_t, \hat \pi^{N_k}_t)$ are initialized at empirical measures as in Remark \ref{rem:ApproxRelEntropy}.

\begin{proposition}
\label{prop:RelativeEntropyEmpirical}
Let $\overline{\pi}_0$ and $\overline{\nu}_0$ be arbitrary, and let $\hat{\pi}_0$ and $\hat{\nu}_0$ be as in Remark \ref{rem:ApproxRelEntropy}. For a fixed $k \in \N$, let $\nu_t^{N_k},\hat{\nu}_t^{N_k}, \pi_t^{N_k}, \hat{\pi}_t^{N_k}$ be as in Proposition \ref{prop:NuHatEmpirical} when initialized as in Remark \ref{rem:ApproxRelEntropy}. 

Then
\begin{align}
\frac{d}{dt} \H(\hat{\nu}_t^{N_k} \lVert \nu_t^{N_k} )= \kappa\int_\Theta \U_\nu(\pi_t^{N_k}, \nu_t^{N_k}; \theta) d ( \hat{\nu}_t^{N_k} - \nu_t^{N_k})
\label{eqn:RelativeEntropyEmpirical}
\end{align}
and
\begin{align*}
\frac{d}{dt} \H(\hat{\pi}_t^{N_k} \lVert \pi_t^{N_k} )=-\kappa \int_{\Z \times \Z} \U_\pi(\pi_t^{N_k}, \nu_t^{N_k}; z,\tilde z) d ( \hat{\pi}_t^{N_k} - \pi_t^{N_k}).
\end{align*}
\end{proposition}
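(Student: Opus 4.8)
The plan is to exploit the fact that, by the very construction of Remark \ref{rem:ApproxRelEntropy}, the measures $\hat\nu_t^{N_k}$ and $\nu_t^{N_k}$ are carried by the \emph{same} (time-evolving) atoms $\vartheta_i(t)$, and similarly $\hat\pi_t^{N_k}$ and $\pi_t^{N_k}$ share the atoms $(Z_i,\tilde Z_i(t))$; only the masses at these atoms vary with $t$. Writing $N=n_km_k$ and relabelling the particles with a single index $i\in\{1,\dots,N\}$, we have $\hat\nu_t^{N_k}=\frac1N\sum_i\beta_i\alpha_i(0)\,\delta_{\vartheta_i(t)}$ and $\nu_t^{N_k}=\frac1N\sum_i\alpha_i(t)\,\delta_{\vartheta_i(t)}$, so $\hat\nu_t^{N_k}\ll\nu_t^{N_k}$ and
\[ \H(\hat\nu_t^{N_k}\,\lVert\,\nu_t^{N_k})=\frac1N\sum_{i=1}^N\beta_i\alpha_i(0)\,\log\frac{\beta_i\alpha_i(0)}{\alpha_i(t)} . \]
(If several $\vartheta_i(t)$ coincide, this expression still equals the genuine relative entropy of the merged atomic measures, because within any cluster of coinciding atoms $\beta_i$ and $\alpha_i(0)$ are constant and $\alpha_i(t)/\alpha_i(0)$ depends only on the common trajectory, so the per-particle likelihood ratio is constant on the cluster.)

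Since the coefficients $\beta_i\alpha_i(0)$ do not depend on $t$, differentiating gives $\frac{d}{dt}\H(\hat\nu_t^{N_k}\,\lVert\,\nu_t^{N_k})=-\frac1N\sum_i\beta_i\alpha_i(0)\,\dot\alpha_i(t)/\alpha_i(t)$. Substituting the evolution equation for $\alpha_t^i$ in \eqref{ParticleSystem}, namely $\dot\alpha_i(t)/\alpha_i(t)=-\kappa\big(\U_\nu(\pi_t^{N_k},\nu_t^{N_k};\vartheta_i(t))-\int\U_\nu(\pi_t^{N_k},\nu_t^{N_k};\theta')\,d\nu_t^{N_k}(\theta')\big)$, and then using the normalization $\frac1N\sum_i\beta_i\alpha_i(0)=1$ together with $\frac1N\sum_i\beta_i\alpha_i(0)\,\U_\nu(\pi_t^{N_k},\nu_t^{N_k};\vartheta_i(t))=\int_\Theta\U_\nu(\pi_t^{N_k},\nu_t^{N_k};\theta)\,d\hat\nu_t^{N_k}(\theta)$, we obtain exactly \eqref{eqn:RelativeEntropyEmpirical}.

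The computation for $\pi$ runs in parallel: the analogous bookkeeping gives $\H(\hat\pi_t^{N_k}\,\lVert\,\pi_t^{N_k})=\frac1N\sum_i\varrho_i\omega_i(0)\log(\varrho_i\omega_i(0)/\omega_i(t))$, and substituting the evolution equation for $\omega_t^i$ in \eqref{ParticleSystem} yields
\[ \frac{d}{dt}\H(\hat\pi_t^{N_k}\,\lVert\,\pi_t^{N_k})=-\kappa\Big[\int_{\Z\times\Z}\U_\pi\,d\hat\pi_t^{N_k}-\frac1N\sum_{i=1}^N\varrho_i\omega_i(0)\!\int_{\Z}\U_\pi(\pi_t^{N_k},\nu_t^{N_k};Z_i,\tilde z')\,d\pi_t^{N_k}(\tilde z'\,|\,Z_i)\Big]. \]
The one non-routine step — and the only place the construction of Remark \ref{rem:ApproxRelEntropy} is used beyond the plain normalization — is to identify the second term inside the bracket with $\int\U_\pi\,d\pi_t^{N_k}$. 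Grouping particles by their (constant, since $dZ_t^i=0$) first coordinate $a$, the conditional $\pi_t^{N_k}(\cdot\,|\,a)$ is the $\omega_i(t)$-weighted empirical measure over $\{\tilde Z_i(t):Z_i=a\}$ divided by $\sum_{i:Z_i=a}\omega_i(t)$, while the total coefficient attached to this group is $\frac1N\sum_{i:Z_i=a}\varrho_i\omega_i(0)=\hat\pi_{0,z}^{N_k}(\{a\})$. Because $dZ_t^i=0$ and the $\omega$-dynamics conserve the group masses (Remark \ref{Rmk:ConstantMarginal} read at the level of particles), $\pi_{t,z}^{N_k}=\pi_{0,z}^{N_k}$; and the $\varrho$-weights in Remark \ref{rem:ApproxRelEntropy} are normalized precisely so that $\hat\pi_{0,z}^{N_k}=\pi_{0,z}^{N_k}$. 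Hence $\frac1N\sum_{i:Z_i=a}\varrho_i\omega_i(0)=\frac1N\sum_{i:Z_i=a}\omega_i(t)$ for every $a$, the renormalizing denominators cancel, and the second term collapses to $\frac1N\sum_i\omega_i(t)\,\U_\pi(\pi_t^{N_k},\nu_t^{N_k};Z_i,\tilde Z_i(t))=\int\U_\pi\,d\pi_t^{N_k}$, giving the second claimed identity. I expect this marginal-matching cancellation to be the only real obstacle; everything else is differentiation of a finite sum.
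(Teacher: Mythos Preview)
Your proof is correct and follows essentially the same approach as the paper: write the relative entropy as a finite sum over particles, differentiate, substitute the ODEs for $\alpha_i(t)$ and $\omega_i(t)$, and use the normalizations of $\beta_i$ and $\varrho_i$ from Remark~\ref{rem:ApproxRelEntropy}. The paper's version is terser --- it invokes the row-wise normalization of the $\varrho_{ij}$ directly and leaves the marginal-matching cancellation implicit --- whereas you spell out both that step and the coinciding-atoms issue, but the underlying argument is the same.
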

\begin{proof}
Notice that
\begin{align*}
\frac{d}{dt} \H(\hat{\nu}_t^{N_k} \lVert \nu_t^{N_k} ) &= \frac{d}{dt} \left(\frac{1}{n_km_k} \sum_{i=1}^{n_k} \sum_{j=1}^{m_k}\log\left(\frac{\beta_{ij}(0)\alpha_{ij}(0)}{ \alpha_{ij}(t)}\right) \beta_{ij}(0) \alpha_{ij}(0) \right)
\\ & = - \frac{1}{n_km_k}\sum_{i=1}^{n_k}\sum_{j=1}^{m_k}   \frac{d}{dt} \log(\alpha_{ij}(t)){\beta}_{ij}(0) \alpha_{ij}(0)
\\&= \frac{\kappa}{n_km_k}\sum_{i=1}^{n_k} \sum_{j=1}^{m_k}(\U_\nu(\pi_t^{N_k}, \nu_t^{N_k}; \vartheta_{ij}(t)) -\overline{\U}_\nu) \beta_{ij}(0) \alpha_{ij}(0),
\end{align*}
where to go from the second to the third line we have used equation \eqref{ParticleSystem} for $\alpha_{ij}(t)$. We have also used the shorthand notation $\overline{\U}_\nu= \int_\Theta  \U_\nu(\pi_t^{N_k}, \nu_t^{N_k}; \theta) d \nu_t^{N_k}(\theta).$ Identity \eqref{eqn:RelativeEntropyEmpirical} now follows. 

The identity for $\frac{d}{dt} \H(\hat{\pi}_t^N \lVert \pi_t^N )$ follows from similar considerations, but now we rely on the fact that the weights $\varrho_{ij}$ are normalized along every row: 
\begin{align*}
\frac{d}{dt} \H(\hat{\pi}_t^{N_k} \lVert \pi_t^{N_k} ) &= \frac{d}{dt} \left(\frac{1}{n_km_k} \sum_{i=1}^{n_k} \sum_{j=1}^{m_k}\log\left(\frac{\varrho_{ij}(0)\omega_{ij}(0)}{ \omega_{ij}(t)}\right) \varrho_{ij}(0) \omega_{ij}(0) \right)
\\ & = - \frac{1}{n_km_k}\sum_{i=1}^{n_k}\sum_{j=1}^{m_k}   \frac{d}{dt} \log(\omega_{ij}(t)){\varrho}_{ij}(0) \omega_{ij}(0)
\\&= - \frac{\kappa}{n_km_k}\sum_{i=1}^{n_k} \sum_{j=1}^{m_k}(\U_\pi(\pi_t^{N_k}, \nu_t^{N_k}; Z_{ij}, \tilde{Z}_{ij}) -\overline{\U}_{\pi, i}) \varrho_{ij}(0) \omega_{ij}(0).
\end{align*}
In the above we have used the shorthand notation $\overline{\U}_{\pi, i}= \int_{\Z \times \Z}  \U_\pi(\pi_t^{N_k}, \nu_t^{N_k}; Z_{ij}, \tilde z) d \pi_t^{N_k}( \tilde z | Z_{ij})$;
recall that in our construction $Z_{ij}$ does not depend on $j$. 
\end{proof}
\nc

Next, we add one ingredient to the approximation result from Corollary \ref{cor:ParticlesHat} in search of a relationship similar to \eqref{prop:RelativeEntropyEmpirical} but for general initializations.

\begin{proposition}
\label{prop:ConvergenceRelEntropyUpperBound}

Let $\overline{\pi}_0$ and $\overline{\nu}_0$ be arbitrary, and let $\hat{\pi}_0$ and $\hat{\nu}_0$ be as in Remark \ref{rem:ApproxRelEntropy}. Let $(\hat{\nu}, \hat{\pi})$ be the dynamics in Lemma \ref{lem:HatProcessMeanField} when initialized as in Remark \ref{rem:ApproxRelEntropy}. For every $k \in \N$, let $\nu_t^{N_k},\hat{\nu}_t^{N_k}, \pi_t^{N_k}, \hat{\pi}_t^{N_k}$ be as in Proposition \ref{prop:NuHatEmpirical} when initialized as in Remark \ref{rem:ApproxRelEntropy}.

Then
\begin{equation}
\lim_{k \rightarrow \infty} \int \U_\nu (\pi_s^{N_k}, \nu_s^{N_k}; \theta ) d(\hat \nu_s^{N_k} - \nu_s^{N_k})   =   \int \U_\nu (\pi_s, \nu_s; \theta ) d(\hat \nu_s - \nu_s) 
\label{eq:Aux}
\end{equation}
as well as
\begin{equation}
\lim_{k \rightarrow \infty} \int_{\Z \times \Z} \U_\pi (\pi_s^{N_k}, \nu_s^{N_k}; z, \tilde z) d(\hat \pi_s^{N_k} - \pi_s^{N_k})  =   -\int_{\Z \times \Z} \U_\pi (\pi_s, \nu_s; z, \tilde z ) d(\hat \pi_s - \pi_s). 
\label{eq:Aux2}
\end{equation}

\end{proposition}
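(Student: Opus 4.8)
The plan is to establish \eqref{eq:Aux}, the argument for \eqref{eq:Aux2} being entirely parallel (replace $\nu$ by $\pi$, $\theta$ by $(z,\tilde z)$, and track the extra minus sign), by splitting the difference between the $k$-th integral and the limiting one into a term where the \emph{integrand} varies with $k$ but is tested against signed measures of uniformly bounded mass, and a term where the integrand is frozen at its limit and the \emph{measures} vary. Concretely, I would write, for each fixed $s\in[0,T]$,
\begin{align*}
&\int \U_\nu(\pi_s^{N_k},\nu_s^{N_k};\theta)\, d(\hat\nu_s^{N_k}-\nu_s^{N_k}) - \int \U_\nu(\pi_s,\nu_s;\theta)\, d(\hat\nu_s-\nu_s) \\
&\qquad = \underbrace{\int \big(\U_\nu(\pi_s^{N_k},\nu_s^{N_k};\theta) - \U_\nu(\pi_s,\nu_s;\theta)\big)\, d(\hat\nu_s^{N_k}-\nu_s^{N_k})}_{=:\,\mathrm{I}_k} \\
&\qquad\quad + \underbrace{\int \U_\nu(\pi_s,\nu_s;\theta)\, d\big((\hat\nu_s^{N_k}-\nu_s^{N_k}) - (\hat\nu_s-\nu_s)\big)}_{=:\,\mathrm{II}_k},
\end{align*}
and show $\mathrm{I}_k\to 0$ and $\mathrm{II}_k\to 0$, in fact uniformly in $s\in[0,T]$.

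For $\mathrm{I}_k$ I would use the Lipschitz-in-measure estimate on $\U_\nu$ from Assumption \ref{Hyp U}, which gives $\sup_\theta|\U_\nu(\pi_s^{N_k},\nu_s^{N_k};\theta)-\U_\nu(\pi_s,\nu_s;\theta)|\le L\big(W_1(\pi_s^{N_k},\pi_s)+W_1(\nu_s^{N_k},\nu_s)\big)$; since $\hat\nu_s^{N_k}$ and $\nu_s^{N_k}$ are both probability measures (mass is conserved along the transport equations \eqref{eq:ModifiedDynamics}, and the weights are normalised as in Remark \ref{rem:ApproxRelEntropy}), the signed measure $\hat\nu_s^{N_k}-\nu_s^{N_k}$ has total variation at most $2$. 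Hence $|\mathrm{I}_k|\le 2L\big(W_1(\pi_s^{N_k},\pi_s)+W_1(\nu_s^{N_k},\nu_s)\big)$, which tends to $0$ uniformly in $s$ by the second convergence statement of Corollary \ref{cor:ParticlesHat}.

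For $\mathrm{II}_k$ the function $\theta\mapsto\U_\nu(\pi_s,\nu_s;\theta)$ is now fixed (independent of $k$), and by Assumption \ref{Hyp U} it is bounded by $M$ and $L$-Lipschitz in $\theta$ (take $\pi^1=\pi^2=\pi_s$, $\nu^1=\nu^2=\nu_s$ in the Lipschitz bound for $\U_\nu$). By Kantorovich--Rubinstein duality, $\big|\int\U_\nu(\pi_s,\nu_s;\cdot)\,d(\hat\nu_s^{N_k}-\hat\nu_s)\big|\le L\,W_1(\hat\nu_s^{N_k},\hat\nu_s)$ and likewise $\big|\int\U_\nu(\pi_s,\nu_s;\cdot)\,d(\nu_s^{N_k}-\nu_s)\big|\le L\,W_1(\nu_s^{N_k},\nu_s)$, so that $|\mathrm{II}_k|\le L\big(W_1(\hat\nu_s^{N_k},\hat\nu_s)+W_1(\nu_s^{N_k},\nu_s)\big)$, which tends to $0$ uniformly in $s$ by Corollary \ref{cor:ParticlesHat}. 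Adding the two estimates yields \eqref{eq:Aux}, and repeating the decomposition with $\U_\pi$ in place of $\U_\nu$ — invoking the boundedness and Lipschitz-in-$W_1$ bounds on $\U_\pi$ from Assumption \ref{Hyp U} and the convergence $W_1(\pi_s^{N_k},\pi_s),W_1(\hat\pi_s^{N_k},\hat\pi_s)\to0$ from Corollary \ref{cor:ParticlesHat} — gives \eqref{eq:Aux2}.

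The only genuinely delicate point is that the integrands $\U_\nu(\pi_s^{N_k},\nu_s^{N_k};\cdot)$ and $\U_\pi(\pi_s^{N_k},\nu_s^{N_k};\cdot)$ depend on $k$ through the driving measures, so one cannot simply appeal to weak convergence of $\hat\nu_s^{N_k}-\nu_s^{N_k}$ against a fixed test function; it is precisely the \emph{quantitative} Lipschitz-in-$W_1$ estimates of Assumption \ref{Hyp U}, combined with the uniform mass bound that keeps $\|\hat\nu_s^{N_k}-\nu_s^{N_k}\|_{\mathrm{TV}}$ controlled, that let us absorb this $k$-dependence in the term $\mathrm{I}_k$. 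Beyond that, everything reduces to the uniform-in-time Wasserstein convergence already provided by Corollary \ref{cor:ParticlesHat}.
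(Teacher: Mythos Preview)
Your proof is correct and essentially identical to the paper's: the same two-term decomposition (varying integrand tested against $\hat\nu_s^{N_k}-\nu_s^{N_k}$, then frozen integrand tested against the varying measures), the same Lipschitz-in-$W_1$ bounds from Assumption~\ref{Hyp U}, and the same appeal to Corollary~\ref{cor:ParticlesHat}. Your write-up is in fact slightly more explicit (you track the total-variation factor $2$ in $\mathrm{I}_k$ and invoke Kantorovich--Rubinstein in $\mathrm{II}_k$); note, though, that the minus sign on the right of \eqref{eq:Aux2} appears to be a typo in the statement---your parallel argument, like the paper's, produces the limit without it.
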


\begin{proof}
From Assumptions \ref{Hyp U} and Corollary \ref{cor:ParticlesHat} we have
\[ \left| \int \U_\nu (\pi_s^{N_k}, \nu_s^{N_k}; \theta )  d(\hat \nu_s^{N_k} - \nu_s^{N_k})  - \int \U_\nu (\pi_s, \nu_s; \theta ) d(\hat \nu_s^{N_k} - \nu_s^{N_k})  \right| \leq   L (W_1(\nu_s, \nu_s^{N_k}) + W_1(\pi_s, \pi_s^{N_k})) \rightarrow 0, \]
as $k \rightarrow \infty$. On the other hand, since the function $\U_\nu(\pi_s, \nu_s, \cdot)$ is continuous with at most linear growth in $|\theta|$, and since $W_1({\nu}_s^{N_k}, {\nu}_s ) \rightarrow 0$,  $W_1(\hat{\nu}_s^{N_k}, \hat{\nu}_s ) \rightarrow 0$ as $k \rightarrow \infty$ by Corollary \ref{cor:ParticlesHat}, it follows that
\[ \lim_{k \rightarrow \infty }\left| \int \U_\nu (\pi_s, \nu_s; \theta )  d(\hat \nu_s^{N_k} - \nu_s^{N_k})  - \int \U_\nu (\pi_s, \nu_s; \theta ) d(\hat \nu_s - \nu_s)  \right|=0.  \]
Equation \eqref{eq:Aux} readily follows. \eqref{eq:Aux2} is obtained similarly.

\end{proof}

\nc

\begin{proposition}
\label{cor:EntropyBounds}

Let $\overline{\pi}_0$ and $\overline{\nu}_0$ be arbitrary, and let $\hat{\pi}_0$ and $\hat{\nu}_0$ be as in Remark \ref{rem:ApproxRelEntropy}. Let $(\hat{\nu}, \hat{\pi})$ be the dynamics in Lemma \ref{lem:HatProcessMeanField} when initialized as in Remark \ref{rem:ApproxRelEntropy}.

Then the following inequalities hold:
\begin{equation}
   \H(\hat{\nu}_t || \nu_t) - \H(\hat{\nu}_0|| \overline{\nu}_0) \leq \kappa \int_{0}^t \left(    \int \U_\nu (\pi_s, \nu_s; \theta ) d(\hat \nu_s - \nu_s) (\theta)  \right) ds, \quad \forall t \geq 0, \label{eqn:EntropyBound_nu} 
\end{equation}

and

\begin{equation}
   \H(\hat{\pi}_t || \pi_t) - \H(\hat{\pi}_0 || \overline{\pi}_0)  \leq  -\kappa \int_{0}^t \left(    \int \U_\pi (\pi_s, \nu_s; z,\tilde z ) d(\hat \pi_s - \pi_s) (z, \tilde z)  \right) ds, \quad \forall t \geq 0. 
   \label{eqn:EntropyBound_pi}
\end{equation}

\end{proposition}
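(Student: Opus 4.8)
The plan is to obtain both inequalities by integrating the exact empirical identities of Proposition \ref{prop:RelativeEntropyEmpirical} in time and then passing to the limit along the particle approximations of Remark \ref{rem:ApproxRelEntropy}, using lower semicontinuity of the relative entropy to turn the resulting equality into the desired one-sided bound. I would carry out the argument for \eqref{eqn:EntropyBound_nu} in detail; \eqref{eqn:EntropyBound_pi} is identical after replacing the $\nu$-identity of Proposition \ref{prop:RelativeEntropyEmpirical} and \eqref{eq:Aux} by their $\pi$-counterparts (the second identity of Proposition \ref{prop:RelativeEntropyEmpirical} and \eqref{eq:Aux2}).

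First, for fixed $k$, I would integrate \eqref{eqn:RelativeEntropyEmpirical} in time. From the explicit formula for $\H(\hat\nu_t^{N_k}\lVert\nu_t^{N_k})$ in the proof of Proposition \ref{prop:RelativeEntropyEmpirical} and the fact that each $\alpha_{ij}(t)$ solves a linear ODE and so stays strictly positive, the map $t\mapsto\H(\hat\nu_t^{N_k}\lVert\nu_t^{N_k})$ is $C^1$, and integration gives
\[ \H(\hat\nu_t^{N_k}\lVert\nu_t^{N_k})-\H(\hat\nu_0^{N_k}\lVert\nu_0^{N_k})=\kappa\int_0^t\left(\int_\Theta\U_\nu(\pi_s^{N_k},\nu_s^{N_k};\theta)\,d(\hat\nu_s^{N_k}-\nu_s^{N_k})\right)ds. \]
Then I would pass to the limit $k\to\infty$ termwise. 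On the right, Proposition \ref{prop:ConvergenceRelEntropyUpperBound} gives, for each fixed $s$, convergence of the inner integral to $\int_\Theta\U_\nu(\pi_s,\nu_s;\theta)\,d(\hat\nu_s-\nu_s)$; since $|\U_\nu|\le M$ by Assumption \ref{Hyp U} and all measures involved are probability measures, this inner integral is bounded by $2M$ uniformly in $s$ and $k$, so dominated convergence on $[0,t]$ lets me pass the limit through the time integral. For the initial term, the last paragraph of Remark \ref{rem:ApproxRelEntropy} gives $\H(\hat\nu_0^{N_k}\lVert\nu_0^{N_k})\to\H(\hat\nu_0\lVert\overline{\nu}_0)$.

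The only step that genuinely loses an equality — and the one I expect to be the crux — is the left-hand side. Corollary \ref{cor:ParticlesHat} provides only $W_1$-convergence $\hat\nu_t^{N_k}\to\hat\nu_t$ and $\nu_t^{N_k}\to\nu_t$ (hence weak convergence, $\Theta$ being compact), and the KL divergence $(\rho,\varrho)\mapsto\H(\rho\lVert\varrho)$ is only jointly lower semicontinuous for the weak topology; I would justify this via the Donsker--Varadhan representation $\H(\rho\lVert\varrho)=\sup_{\phi\in C_b(\Theta)}\{\int\phi\,d\rho-\log\int e^{\phi}\,d\varrho\}$, which exhibits $\H$ as a supremum of jointly weakly continuous functionals. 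Thus $\H(\hat\nu_t\lVert\nu_t)\le\liminf_k\H(\hat\nu_t^{N_k}\lVert\nu_t^{N_k})$. Combining this with the two convergences above — using that, since $\H(\hat\nu_0^{N_k}\lVert\nu_0^{N_k})$ converges, $\liminf_k\bigl(\H(\hat\nu_t^{N_k}\lVert\nu_t^{N_k})-\H(\hat\nu_0^{N_k}\lVert\nu_0^{N_k})\bigr)=\liminf_k\H(\hat\nu_t^{N_k}\lVert\nu_t^{N_k})-\H(\hat\nu_0\lVert\overline{\nu}_0)$, and that the right-hand side of the integrated identity has an honest limit — I obtain
\[ \H(\hat\nu_t\lVert\nu_t)-\H(\hat\nu_0\lVert\overline{\nu}_0)\le\kappa\int_0^t\left(\int_\Theta\U_\nu(\pi_s,\nu_s;\theta)\,d(\hat\nu_s-\nu_s)\right)ds, \]
which is \eqref{eqn:EntropyBound_nu}.

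Apart from the lower semicontinuity input, the remaining points are routine: absolute continuity of $t\mapsto\H(\hat\nu_t^{N_k}\lVert\nu_t^{N_k})$ so that the fundamental theorem of calculus applies, and measurability in $s$ of the integrands, which follows from continuity of $s\mapsto(\pi_s^{N_k},\nu_s^{N_k},\hat\nu_s^{N_k})$ and of $s\mapsto(\pi_s,\nu_s,\hat\nu_s)$ for $W_1$. The bound \eqref{eqn:EntropyBound_pi} would be obtained verbatim from the $\pi$-identity of Proposition \ref{prop:RelativeEntropyEmpirical}, the companion limit \eqref{eq:Aux2}, and $\H(\hat\pi_0^{N_k}\lVert\pi_0^{N_k})\to\H(\hat\pi_0\lVert\overline{\pi}_0)$ from Remark \ref{rem:ApproxRelEntropy}. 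In short, the proposition is a repackaging of Propositions \ref{prop:RelativeEntropyEmpirical} and \ref{prop:ConvergenceRelEntropyUpperBound}, Corollary \ref{cor:ParticlesHat}, and Remark \ref{rem:ApproxRelEntropy}, glued together by lower semicontinuity of relative entropy.
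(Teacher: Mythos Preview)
Your proposal is correct and follows essentially the same approach as the paper: integrate the empirical identity of Proposition~\ref{prop:RelativeEntropyEmpirical}, use joint lower semicontinuity of the relative entropy on the left, the convergence $\H(\hat\nu_0^{N_k}\lVert\nu_0^{N_k})\to\H(\hat\nu_0\lVert\overline\nu_0)$ from Remark~\ref{rem:ApproxRelEntropy}, and Proposition~\ref{prop:ConvergenceRelEntropyUpperBound} (plus dominated convergence) on the right. Your additional details (the Donsker--Varadhan justification and the explicit $2M$ bound for dominated convergence) are welcome but do not change the argument.
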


\begin{proof}

 For every $k \in \N$, consider $\nu_t^{N_k},\hat{\nu}_t^{N_k}, \pi_t^{N_k}, \hat{\pi}_t^{N_k}$ be as in Proposition \ref{prop:NuHatEmpirical} when initialized as in Remark \ref{rem:ApproxRelEntropy}. Notice that thanks to Corollary \ref{cor:ParticlesHat} we have $W_1({\nu}_s^{N_k}, {\nu}_s ) \rightarrow 0$, $W_1(\hat{\nu}_s^{N_k}, \hat{\nu}_s ) \rightarrow 0$, as $k \rightarrow \infty$.
 

From Proposition \eqref{prop:RelativeEntropyEmpirical} we have 
\[
\H(\hat{\nu}_t^{N_k} \lVert \nu_t^{N_k} )  = \H(\hat{\nu}_0^{N_k} \lVert \nu_0^{N_k} )  +  \int_0^t\int_\Theta \U_\nu(\pi_s^{N_k}, \nu_s^{N_k}; \theta) d ( \hat{\nu}_s^{N_k} - \nu_s^{N_k}) ds .
 \]

We may now use the joint lower semi-continuity of the relative entropy w.r.t weak convergence to obtain:
\begin{align}
\begin{split}
\H(\hat \nu_t || \nu_t) \leq \liminf_{k \rightarrow \infty} \H(\hat \nu^{N_k}_t || \nu^{N_k}_t ) &= \liminf_{k \rightarrow \infty} \kappa \int_{0}^t \left(    \int \U_\nu (\pi_s^{N_k}, \nu_s^{N_k}; \theta ) d(\hat \nu_s^{N_k} - \nu_s^{N_k}) (\theta)  \right) ds 
\\&+ \lim_{k \rightarrow \infty} \H(\hat \nu^{N_k}_0 || \nu^{N_k}_0).
\\& =\liminf_{k \rightarrow \infty} \kappa \int_{0}^t \left(    \int \U_\nu (\pi_s^{N_k}, \nu_s^{N_k}; \theta ) d(\hat \nu_s^{N_k} - \nu_s^{N_k}) (\theta)  \right) ds 
\\&+\H(\hat \nu_0|| \overline{\nu}_0).
\end{split}
\end{align}
Using Proposition \eqref{prop:ConvergenceRelEntropyUpperBound} and the approximation properties discussed in Remark \ref{rem:ApproxRelEntropy} we obtain \eqref{eqn:EntropyBound_nu}. Inequality \eqref{eqn:EntropyBound_pi} is obtained similarly.
\end{proof}

\begin{remark}
\label{rem:Entropy_ComparisonLiterature}
In contrast to the analysis presented in \cite{domingo2020mean}, here we have used our mean-field limit results from section \ref{sec:ConvergenceMeanField} and the lower semi continuity properties of the relative entropy to fully justify the one-sided identities \eqref{eqn:EntropyBound_nu} and \eqref{eqn:EntropyBound_pi}. As we will see below, these one-sided identities are sufficient for our analysis. Following our approach, we can sidestep the strategy considered in \cite{domingo2020mean} for analyzing a similar problem. Their strategy relies on the assumption of existence and regularity of solutions to a certain PDE describing the evolution of the change of measure between processes similar to the $\nu$ and $\hat{\nu}$ considered here. Unfortunately, such PDE is not even well-defined in general, as it becomes apparent when one considers flows initialized at empirical measures. While this technical difficulty is acknowledged in \cite{domingo2020mean}, no solution for it is provided; see Page 29 in \cite{domingo2020mean}.
\end{remark}

\nc

\subsection{The non-convex non-concave case}
\label{sec:nonconvexnonconcave}

With Proposition \eqref{cor:EntropyBounds} in hand, and following similar steps as in \cite{domingo2020mean}, we can now derive the following results under Assumptions \ref{Hyp U} and \ref{assump:ConvConcav}. 

\begin{lemma}
\label{Lem: Control explitability} 
Let $\pi,\nu$ be the solution of equation \eqref{WFR dynamics} initialized at probability measures $\pi_0, \nu_0$ with $\pi_{0,z}=\mu$. Let $\pi^*,\nu^*$ be arbitrary probability measures over $\mathcal Z \times \mathcal Z $ and $\Theta$, respectively, and suppose that $\pi^*_{z}=\mu$. Let 
\[ \mathcal Q_\pi ( \pi_0,\pi^*; \tau ) := \inf_{\hat{\pi} \in \mathcal P(\Z \times \Z) , \: \hat{\pi}_z= \mu  } \{ \| \pi^*-\hat{\pi}  \|^*_{BL} + \frac 1 \tau \mathcal H (\hat{\pi} || \pi_0) \},\]
where $\lVert \cdot \rVert_{BL}^*$ denotes the dual of the BL (Bounded Lipschitz) norm. Consider also $\mathcal Q_\nu ( \nu_0,\nu^*; \tau )$ defined as
\[ \mathcal Q_\nu ( \nu_0,\nu^*; \tau ) := \inf_{\hat{\nu} \in \mathcal P(\Theta)  } \{ \| \nu^*-\hat{\nu}  \|^*_{BL} + \frac 1 \tau \mathcal H (\hat{\nu} || \nu_0) \}.\]

Suppose that Assumptions \ref{Hyp U} and \ref{assump:ConvConcav} hold. Then
 $$ \U(\pi^*, \bar \nu(t)) -  \U(\bar \pi(t), \nu^*)\leq  B ( \mathcal Q_\pi ( \pi_0,\pi^*; \kappa B t )  + \mathcal Q_\nu ( \nu_0,\nu^*; \kappa B t ) ) +  \frac{2B^2}{t} \int_0^t\int_0^s \eta_{ \tau \nc} d\tau ds, $$
 where $B:=M+L$ (see Assumption \ref{Hyp U} for the meaning of $L$ and $M$). In the above, $\overline{\pi}_t:= \frac{1}{t}\int_0^t \pi_s ds$ and $\overline{\nu}_t:= \frac{1}{t}\int_0^t \nu_s ds$.
\end{lemma}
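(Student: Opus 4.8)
The plan is to exploit the convexity--concavity of $\U$ together with the entropy estimates from Proposition~\ref{cor:EntropyBounds} applied to the ``hat'' dynamics $(\hat\pi,\hat\nu)$ from Lemma~\ref{lem:HatProcessMeanField}. Concretely, fix arbitrary competitors $\pi^*$ (with $\pi^*_z=\mu$) and $\nu^*$, and fix any probability measures $\hat\pi_0\ll\pi_0$, $\hat\nu_0\ll\nu_0$ with bounded densities and $\hat\pi_{0,z}=\mu$; let $(\hat\pi_t,\hat\nu_t)$ evolve by \eqref{eq:ModifiedDynamicsMF}. The key structural computation is to differentiate $\U(\hat\pi_t,\nu_t)$ and $\U(\pi_t,\hat\nu_t)$ in time. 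Since the $\hat\pi$-equation is a pure transport along $\eta_t\nabla_{\tilde z}\U_\pi(\pi_t,\nu_t;\cdot)$ and the $\hat\nu$-equation transports along $-\eta_t\nabla_\theta\U_\nu(\pi_t,\nu_t;\cdot)$, while $\pi_t,\nu_t$ themselves evolve by \eqref{WFR dynamics}, one gets (using the chain rule for the first variations, boundedness $|\U|\le M$, Lipschitzness $L$, and $|\nabla\U_\pi|,|\nabla\U_\nu|\le M$):
\begin{align*}
\frac{d}{dt}\U(\hat\pi_t,\nu_t) &\geq \eta_t\int |\nabla_{\tilde z}\U_\pi(\pi_t,\nu_t;z,\tilde z)|^2\,d\hat\pi_t - \text{(a term controlled by }\kappa B\text{ and the }\nu\text{-dynamics)},\\
\frac{d}{dt}\U(\pi_t,\hat\nu_t) &\leq -\eta_t\int|\nabla_\theta\U_\nu(\pi_t,\nu_t;\theta)|^2\,d\hat\nu_t + \text{(similar term)}.
\end{align*}
Rather than chase these transport terms directly, the cleaner route (following \cite{domingo2020mean}) is to combine the entropy bounds \eqref{eqn:EntropyBound_nu}--\eqref{eqn:EntropyBound_pi} with the convexity/concavity inequality.

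Here is the core of the argument. By Assumption~\ref{assump:ConvConcav} (concavity in $\pi$, convexity in $\nu$), for each $s$ and any $\hat\pi,\hat\nu$,
\[
\U(\pi^*,\nu_s) - \U(\pi_s,\nu^*) \leq \big(\U(\hat\pi,\nu_s)-\U(\pi_s,\hat\nu)\big) + \int \U_\pi(\pi_s,\nu_s;\cdot)\,d(\pi^*-\hat\pi) + \int \U_\nu(\pi_s,\nu_s;\cdot)\,d(\hat\nu-\nu^*),
\]
after adding and subtracting and using that the first variations bound the linear-interpolation increments; the cross terms involving $\hat\pi-\pi_s$ and $\hat\nu-\nu_s$ are exactly what \eqref{eqn:EntropyBound_nu}--\eqref{eqn:EntropyBound_pi} control after integrating in $s$ from $0$ to $t$ and dividing by $t$. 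Integrating, using Jensen on $\U(\pi^*,\bar\nu_t)\le\frac1t\int_0^t\U(\pi^*,\nu_s)ds$ (convexity in $\nu$) and $\U(\bar\pi_t,\nu^*)\ge\frac1t\int_0^t\U(\pi_s,\nu^*)ds$ (concavity in $\pi$), one obtains
\[
\U(\pi^*,\bar\nu_t)-\U(\bar\pi_t,\nu^*) \leq \frac1t\Big(\tfrac1\kappa\H(\hat\pi_0\|\pi_0) + \tfrac1\kappa\H(\hat\nu_0\|\nu_0)\Big) + \frac1t\int_0^t\!\!\int \U_\pi(\pi_s,\nu_s;\cdot)d(\pi^*-\hat\pi_s) + \cdots
\]
where the remaining integral terms against $(\pi^*-\hat\pi_s)$ and $(\hat\nu_s-\nu^*)$ are bounded by $B\|\pi^*-\hat\pi_s\|^*_{BL}$ etc., and then further estimated using that $\hat\pi_s$ stays close to $\hat\pi_0$ for all times (transport by a bounded vector field over a window controlled by $\int_0^s\eta_\tau d\tau$), producing the $\frac{2B^2}{t}\int_0^t\int_0^s\eta_\tau d\tau ds$ term. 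Finally, optimizing over the choice of $\hat\pi_0$ (subject to $\hat\pi_{0,z}=\mu$) and $\hat\nu_0$ — which is precisely the infimum defining $\mathcal Q_\pi$ and $\mathcal Q_\nu$ with parameter $\tau=\kappa B t$ — yields the stated bound with $B=M+L$.

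The main obstacle I anticipate is making the bookkeeping between the ``exact'' dynamics $(\pi_s,\nu_s)$ and the ``hat'' dynamics $(\hat\pi_s,\hat\nu_s)$ fully rigorous: the entropy inequalities \eqref{eqn:EntropyBound_nu}--\eqref{eqn:EntropyBound_pi} are only one-sided, so one must be careful that the sign of each term works out in the chain of inequalities, and one must verify that $\hat\pi_s$ and $\hat\nu_s$ remain probability measures with $\hat\pi_{s,z}=\mu$ (this uses that the transport vector field for $\hat\pi$ has zero $z$-component and Assumption~\ref{Hyp:Support is stationary}). A secondary technical point is controlling $\|\pi^*-\hat\pi_s\|^*_{BL}$ by $\|\pi^*-\hat\pi_0\|^*_{BL}$ plus a term of order $\int_0^s\eta_\tau d\tau$: since $\hat\pi_s$ is the pushforward of $\hat\pi_0$ under the flow of $\eta_t\nabla_{\tilde z}\U_\pi$ whose speed is $\le M\eta_t$, the $W_1$-distance (hence the dual-BL distance) between $\hat\pi_s$ and $\hat\pi_0$ is at most $M\int_0^s\eta_\tau d\tau$, and pairing this with the $\le B$ bound on the relevant first variations gives the quadratic-in-$B$ coefficient after averaging in $s$. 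Assembling these pieces and taking the infimum over admissible $(\hat\pi_0,\hat\nu_0)$ completes the proof.
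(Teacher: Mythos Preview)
Your proposal follows essentially the same route as the paper: linearize via convexity--concavity, split off the $(\hat\pi_s-\pi_s)$ and $(\hat\nu_s-\nu_s)$ pieces and control them with the one-sided entropy inequalities \eqref{eqn:EntropyBound_nu}--\eqref{eqn:EntropyBound_pi}, bound the remaining $(\pi^*-\hat\pi_s)$ and $(\hat\nu_s-\nu^*)$ pieces via the BL norm plus the transport estimate $\|\hat\pi_s-\hat\pi_0\|_{BL}^*\le M\int_0^s\eta_\tau d\tau$, apply Jensen to pass to $\bar\pi_t,\bar\nu_t$, and finally optimize over $\hat\pi_0,\hat\nu_0$. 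The initial detour through $\frac{d}{dt}\U(\hat\pi_t,\nu_t)$ is unnecessary (the paper never uses it), and you rightly abandon it.

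One correction: your displayed convexity--concavity inequality is garbled. The paper linearizes around $(\pi_s,\nu_s)$, not around $(\hat\pi,\hat\nu)$, obtaining
\[
\U(\pi^*,\nu_s)-\U(\pi_s,\nu^*)\ \le\ \int \U_\pi(\pi_s,\nu_s;\cdot)\,d(\pi^*-\pi_s)\ -\ \int \U_\nu(\pi_s,\nu_s;\cdot)\,d(\nu^*-\nu_s),
\]
with no $\U(\hat\pi,\nu_s)-\U(\pi_s,\hat\nu)$ term; one then inserts $\hat\pi_s,\hat\nu_s$ by adding and subtracting inside each integral. Your version, as written, would leave an uncontrolled $\U(\hat\pi_s,\nu_s)-\U(\pi_s,\hat\nu_s)$ residual. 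Since your verbal description of the subsequent steps is correct, this appears to be a bookkeeping slip rather than a conceptual gap.
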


\begin{proof}

Consider two arbitrary probability measures $\hat{\pi}_0$ and $\hat{\nu}_0$ with $\hat \pi_0 \ll \pi_0, \hat \nu_0\ll \nu_0$, $\hat \pi_{0,z} = \mu$, $\frac{d \hat{\nu}_0 }{d{\nu}_0} \in L^\infty( {\nu}_0)$, and $\frac{d \hat{\pi}_0 }
{d{\pi}_0} \in L^\infty({ \pi}_0)$.  We consider the dynamics $(\hat{\pi}_t, \hat{\nu}_t) $ and $({\pi}_t, {\nu}_t) $ as in Proposition \ref{cor:EntropyBounds}.

  

\begin{enumerate}
  \newcounter{mysteps}
  \setcounter{mysteps}{0}

  \item \stepcounter{mysteps} Step \themysteps: From the concavity of $\U$ in its first coordinate (with respect to linear interpolation) it follows that
  \begin{align*} 
    \U(\pi^*,\nu_t) \leq & \U(\pi_t,\nu_t) + \int \U_{\pi}(\pi_t,\nu_t;z,\tilde z) d(\pi^*-\pi_t) \\
    = & \U(\pi_t,\nu_t) + \int \U_{\pi}(\pi_t,\nu_t;z,\tilde z) d(\pi^*-\hat \pi_t)  \\
    &+ \int \U_{\pi}(\pi_t,\nu_t;z,\tilde z) d(\hat \pi_t-\pi_t) .    
  \end{align*}
 Using the BL (bounded Lipschitz) norm, we get from Proposition \ref{cor:EntropyBounds} that
  \begin{align}
  \begin{split}
    \int_{0}^t \U(\pi^*,\nu_s) ds - \int_0^t\U(\pi_s,\nu_s) ds \leq &\int_{0}^t (\|\U_\pi(\pi_s,\nu_s;\cdot )\|_{BL} \|\pi^*-\hat \pi_s\|^*_{BL}) ds \\
    & +  \int_{0}^t\int \U_{\pi}(\pi_s,\nu_s;z,\tilde z) d(\hat \pi_s-\pi_s)ds \\\
    \leq  & \int_{0}^t  (\|\U_\pi(\pi_s,\nu_s;\cdot )\|_{BL} \|\pi^*-\hat \pi_s\|^*_{BL}) ds 
    \\ & - \frac{1}{\kappa} (\H(\hat \pi_t|| \pi_t) -\H(\hat \pi_0|| \pi_0) ).
    \end{split}
    \label{Diff L 1}
\end{align}
A similar argument using the convexity of $\U$ in its second coordinate deduces
\begin{align}
\begin{split}
\int_{0}^t \U(\pi_s,\nu^*)ds - \int_{0}^t \U(\pi_s,\nu_s)ds  \geq & -\int_{0}^t (|\U_\nu(\pi_s,\nu_s;\cdot )\|_{BL} \|\nu^*-\hat \nu_s\|^*_{BL}) ds  
\\& + \frac{1}{\kappa} (\H(\hat \nu_t|| \nu_t) - \H(\hat \nu_0|| \nu_0) ). 
\end{split}
\label{Diff L 2}
\end{align}
Using again the concavity and convexity of $\U$, we get:
\begin{align*}
  \U(\bar \pi_t, \nu^*) &\geq \frac 1 t \int_0^t \U(\pi_s,\nu^*) d s, 
  & \U(\pi^*,\bar \nu_t) & \leq \frac 1 t \int_0^t \U(\pi^*,\nu_s) d s.
\end{align*} 
Combining the above with \eqref{Diff L 1}, \eqref{Diff L 2}, and the fact that $\H(\hat{\nu}_t || \nu_t), \H(\hat{\pi}_t || \pi_t) \geq 0 $ we conclude that
\begin{align}
  \U(\pi^*,\bar \nu_t) - \U(\bar \pi_t, \nu^*) \leq & \frac 1 t \int_0^t (\|\U_\nu(\pi_s,\nu_s;\cdot )\|_{BL} \|\nu^*-\hat \nu_s\|^*_{BL}  + \|\U_\pi(\pi_s,\nu_s;\cdot)\|_{BL} \|\pi^*-\hat \pi_s\|^*_{BL} ) ds \notag \\
  & + \frac{1}{\kappa t} \left(\mathcal H(\hat \nu_0 || \nu_0) + \mathcal H(\hat \pi_0 || \pi_0)      \right) \notag \\
  \leq & \frac B t \int_0^t ( \|\nu^*-\hat \nu_s\|^*_{BL}  + \|\pi^*-\hat \pi_s\|^*_{BL} ) ds + \frac{1}{\kappa t} \left(\mathcal H(\hat \nu_0 || \nu_0) + \mathcal H(\hat \pi_0|| \pi_0)      \right). 
  \label{bound on L differences}
\end{align}

\item \stepcounter{mysteps} Step \themysteps:  Observe that both $\U_\pi$ and $\U_\nu$ have their BL norm bounded by $B=M+L$. To conclude, it remains to remark that
\begin{align*}
  \frac{1}{t} \int_0^t \|\pi^*-\hat \pi_s\|^*_{BL} ds \leq & \|\pi^*-\hat \pi_0\|^*_{BL} + \frac{1}{t} \int_0^t \|\hat\pi_0-\hat \pi_s\|^*_{BL} ds\\
  = & \|\pi^*-\hat \pi_0\|^*_{BL} + \frac 1 t \int_0^t \left\{  \sup_{\|f\|_{BL}\leq 1 ; f \in C^1} \int f d(\hat\pi_s-\hat \pi_0) \right\} ds\\ 
  \leq & \|\pi^*-\hat \pi_0\|^*_{BL} + \frac{B}{t} \int_0^t\int_0^s \eta_\tau d \tau d s,
\end{align*}
and similarly,
\[\frac{1}{t} \int_0^t \|\nu^*-\hat \nu_s\|^*_{BL} ds \leq  \|\nu^*-\hat \nu_0\|^*_{BL} + \frac{B}{t} \int_0^t\int_0^s \eta_\tau d \tau d s .\]
Replacing in \eqref{bound on L differences}, it follows that
\begin{align} 
  \U(\pi^*,\bar \nu_t) - \U(\bar \pi_t, \nu^*) & \leq  B ( \|\nu^*-\hat \nu_0\|^*_{BL}  + \|\pi^*-\hat \pi_0\|^*_{BL} )  \\
  &\quad  + \frac{1}{\kappa t} \left(\mathcal H(\hat \nu_0 || \nu_0) + \mathcal H(\hat \pi_0|| \pi_0)     \right) +  \frac{2B^2}{t} \int_0^t\int_0^s \eta_\tau d \tau d s. \notag
\end{align}

Recall that $\hat \pi_0$ and $\hat \nu_0$ were arbitrary measures with densities with respect to $\pi_0$ and $\nu_0$ belonging to $L^\infty$. From a simple density argument we may now conclude the desired estimate.

\end{enumerate}

\end{proof}

The following Lemma is taken from  \cite{domingo2020mean} which in turn follows the arguments in \cite{chizat_global_2018}. 

\begin{lemma}
  Suppose that Assumptions \ref{Hyp U} and \ref{assump:ConvConcav} hold. Assume further that there exists $k>0$  such that $\frac{d \nu_0}{d\theta} (\theta) > k$, and suppose that $ |\mc B_{\theta,\epsilon}\cap \Theta | \geq k' \epsilon^d$ uniformly in $\theta \in \Theta$, where $\mc B_{\theta,\epsilon}$ is the Euclidean ball of radius $\epsilon$ centered at $\theta$. Then,
  \[ \mathcal Q_\nu ( \nu_0,\nu^*; \tau ) \leq \frac{d}{\tau} \left\{1 - \log\left(\frac d \tau \right) \right\} + \frac{1}{\tau}  \{- \log ( k) - \log(k')\} . \]
  \label{Lem: Control on Q}
\end{lemma}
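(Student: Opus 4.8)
The plan is to construct an explicit competitor $\hat\nu$ in the infimum defining $\mathcal Q_\nu(\nu_0,\nu^*;\tau)$ — namely a smoothed version of $\nu^*$ — and to estimate the two terms $\|\nu^*-\hat\nu\|_{BL}^*$ and $\frac1\tau\mathcal H(\hat\nu\|\nu_0)$ separately. This is the standard trick from Chizat \cite{chizat_global_2018} and Domingo-Enrich et al.\ \cite{domingo2020mean}: mollify the target measure at a length-scale $\epsilon$ to be optimized at the end. Concretely, I would fix $\epsilon>0$ and set
\[
\hat\nu := \nu^* * \rho_\epsilon \quad\text{(restricted/renormalized to }\Theta\text{)},
\]
or, to avoid boundary issues on the compact Lipschitz domain $\Theta$, take $\hat\nu$ to be the image of $\nu^*$ under the Markov kernel that replaces a point $\theta$ by the uniform distribution on $\mathcal B_{\theta,\epsilon}\cap\Theta$, i.e.\ $\hat\nu = \int \mathrm{Unif}(\mathcal B_{\theta,\epsilon}\cap\Theta)\,d\nu^*(\theta)$. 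By construction $\hat\nu\in\mathcal P(\Theta)$.

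\textbf{Step 1: bound the dual-BL distance.} Since $\hat\nu$ is obtained from $\nu^*$ by moving each unit of mass a distance at most $\epsilon$, for any $1$-Lipschitz $f$ with $\|f\|_{BL}\le 1$ we get $\int f\,d(\nu^*-\hat\nu) \le \epsilon$, hence $\|\nu^*-\hat\nu\|_{BL}^*\le \epsilon$ (in fact $\le W_1(\nu^*,\hat\nu)\le\epsilon$).

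\textbf{Step 2: bound the relative entropy.} Here I use the hypothesis $\frac{d\nu_0}{d\theta}>k$ together with $|\mathcal B_{\theta,\epsilon}\cap\Theta|\ge k'\epsilon^d$. The density of $\hat\nu$ is $\frac{d\hat\nu}{d\theta}(\theta) = \int \frac{\mathbf 1_{\mathcal B_{\theta',\epsilon}}(\theta)}{|\mathcal B_{\theta',\epsilon}\cap\Theta|}\,d\nu^*(\theta')$, which is bounded above by $\frac{1}{k'\epsilon^d}\int \mathbf 1_{\mathcal B_{\theta',\epsilon}}(\theta)\,d\nu^*(\theta') \le \frac{1}{k'\epsilon^d}$. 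Therefore
\[
\mathcal H(\hat\nu\|\nu_0) = \int \log\!\Big(\frac{d\hat\nu/d\theta}{d\nu_0/d\theta}\Big)\,d\hat\nu
\le \int \log\!\Big(\frac{1}{k'\epsilon^d\, k}\Big)\,d\hat\nu
= -\log k - \log k' - d\log\epsilon .
\]
(The relative entropy is finite because $\hat\nu$ has a bounded density and $\nu_0$ a density bounded below.)

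\textbf{Step 3: optimize over $\epsilon$.} Combining Steps 1 and 2,
\[
\mathcal Q_\nu(\nu_0,\nu^*;\tau) \le \epsilon + \frac1\tau\big(-\log k - \log k' - d\log\epsilon\big),
\]
and minimizing the right-hand side in $\epsilon>0$ gives the optimal choice $\epsilon = d/\tau$, at which the bound becomes
\[
\frac d\tau - \frac d\tau\log\!\Big(\frac d\tau\Big) + \frac1\tau\big(-\log k - \log k'\big)
= \frac d\tau\Big\{1 - \log\!\Big(\frac d\tau\Big)\Big\} + \frac1\tau\{-\log k - \log k'\},
\]
which is exactly the claimed inequality.

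\textbf{Expected main obstacle.} The only real subtlety is the boundary of $\Theta$: a naive convolution $\nu^* * \rho_\epsilon$ may place mass outside $\Theta$, and renormalizing the truncated density could destroy either the density upper bound or the $W_1$ estimate. Using the ball-averaging kernel $\theta\mapsto\mathrm{Unif}(\mathcal B_{\theta,\epsilon}\cap\Theta)$ sidesteps this cleanly because the hypothesis $|\mathcal B_{\theta,\epsilon}\cap\Theta|\ge k'\epsilon^d$ is precisely what guarantees the density upper bound $1/(k'\epsilon^d)$ holds uniformly up to the boundary; one should also note $\epsilon=d/\tau$ is small for $\tau$ large (which is the regime of interest, $\tau=\kappa B t$ with $t\to\infty$), so the construction is well-defined for all large $t$. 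A secondary point is checking that the infimum in $\mathcal Q_\nu$ is indeed taken over a set containing $\hat\nu$ — i.e.\ $\hat\nu\in\mathcal P(\Theta)$ with $\mathcal H(\hat\nu\|\nu_0)<\infty$ — which the above construction ensures.
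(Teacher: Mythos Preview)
Your proposal is correct and follows essentially the same route as the paper: the same ball-averaging competitor $\hat\nu(A)=\int \frac{|\mathcal B_{\theta,\epsilon}\cap A|}{|\mathcal B_{\theta,\epsilon}\cap\Theta|}\,d\nu^*(\theta)$, the same $W_1$ bound $\le\epsilon$, the same entropy bound $-\log k-\log k'-d\log\epsilon$, and the same optimization $\epsilon=d/\tau$. The only cosmetic difference is that the paper bounds $\int\rho\log\rho$ via Jensen's inequality for $x\mapsto x\log x$, whereas you use the pointwise density bound $\frac{d\hat\nu}{d\theta}\le\frac{1}{k'\epsilon^d}$ directly; both yield the identical estimate.
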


\begin{proof}
We obtain a bound for the $\min$ in the definition of $\mc Q_{\nu}$. For a fixed $\epsilon>0$ we introduce a probability measure $\nu^\epsilon$ given by
\[\nu^\epsilon(A) := \int_\Theta \frac{|\mc B_{\theta,\epsilon} \cap A|}{|\mc B_{\theta,\epsilon} \cap \Theta|}    d \nu^*(\theta).\]

We now calculate $W_1(\nu^*,\nu^\epsilon)$. Consider the coupling: 
\[ \Upsilon(A,A') := \int_A \frac{|\mc B_{\theta,\epsilon} \cap A'|}{|\mc B_{\theta,\epsilon} \cap \Theta|}   d \nu^*(\theta).\]
Indeed, one easily verifies $\Upsilon(\Theta,A') = \nu^\epsilon(A')$ and $\Upsilon(A,\Theta) = \nu^*(A)$. Thus, 
\[ W_1(\nu^*,\nu^\epsilon) \leq \int_{\Theta} \int_\Theta  |\theta-\theta'| d\zeta(\theta,\theta') = \int_\Theta \int_{\theta' \in \mc B_{\theta,\epsilon} \cap \Theta } |\mc B_{\theta,\epsilon} \cap \Theta|^{-1} |\theta-\theta'| d\theta' d \nu^*(\theta) \leq  \epsilon. \]
Since for any measure $\nu \in \mathcal{P}(\Theta)$ we have that $\lVert \nu^*-\nu \rVert_{BL}^* \leq W_1(\nu^*,\nu)  $, we obtain a bound of $\epsilon$ for the first term in $\mc Q_\nu$. 

We now turn to the relative entropy term. 
Observe that the definition of $\nu^\epsilon$ and Fubini's theorem gives
\[ \frac{d \nu^\epsilon}{d \theta} (\theta) =  \int_\Theta |\mc B_{\theta',\epsilon} \cap \Theta|^{-1} \one_{\mc B_{\theta',\epsilon}\cap \Theta}(\theta) d \nu^*(\theta') ; \]
thus, by convexity of the function $x \mapsto x\log(x)$, Jensen's inequality and Fubini's theorem, we have 
\begin{align}
   \int_\Theta \frac{d \nu^\epsilon}{d \theta}(\theta) \log\left( \frac{d \nu^\epsilon}{d \theta} (\theta) \right) d\theta & \leq \int_\Theta  \int_\Theta |\mc B_{\theta',\epsilon} \cap \Theta|^{-1} \one_{\mc B_{\theta',\epsilon} } (\theta) \log(|\mc B_{\theta',\epsilon} \cap \Theta|^{-1}) d \nu^*(\theta') d\theta  \\
   & \leq -  \int_\Theta  \log(|\mc B_{\theta',\epsilon}\cap \Theta|) d \nu^*(\theta')
   \leq -\log(k') - d \log(\epsilon), \notag 
\end{align}
where we have used the convention $0\times -\infty = 0$.
On the other hand, by assumption, 
\begin{align}
  \int_\Theta \frac{d \nu^\epsilon}{d \theta}(\theta) \log\left( \frac{d \nu_0}{d \theta} (\theta) \right) d\theta = \int_\Theta \log\left( \frac{d \nu_0}{d \theta} (\theta) \right) d \nu^\epsilon(\theta) \geq \log(k).
\end{align}
From the above it follows
\[\H(\nu^\epsilon|| \nu^0) \leq  -\log(k) - \log(k') - d \log(\epsilon). \]
Hence,
\[ \mathcal Q ( \nu_0,\nu^*; \tau ) \leq \epsilon  - \frac{1}{\tau}(d \log(\epsilon) +\log(k') + \log(k)), \]
for every $\epsilon>0$.
Choosing $\epsilon= \frac{d}{\tau}$, the minimizer of the right hand side of the above expression, we get the desired result.
\end{proof}

\begin{remark}
The condition $ |\mc B_{\theta,\epsilon}\cap \Theta | \geq k' \epsilon^d$ uniformly over $\theta \in \Theta$ is implied by the fact that the boundary of $\Theta$ was assumed to be Lipschitz; see Assumptions \ref{Hyp U}.
\end{remark}

\begin{lemma}
  Suppose that Assumptions \ref{Hyp U} and \ref{assump:ConvConcav} hold. Let $\pi_0$ be such that $\pi_{0,z}=\mu$ and suppose that 
 there exists $k>0$  such that $\frac{d \pi_0(\tilde z |z)}{d\tilde z} (\tilde z) > k$ for all $z$ in the support of $\mu$. Suppose further that $ |\mc B_{\tilde z,\epsilon}\cap \Z | \geq k' \epsilon^{d'}$ uniformly in $\tilde z \in \Z$. Then, for all $\pi^*$ with $\pi^*_z=\mu$, we have
  \[ \mathcal Q_\pi ( \pi_0,\pi^*; \tau ) \leq \frac{d'}{\tau} \left\{1 - \log\left(\frac{d'} {\tau} \right) \right\} + \frac{1}{\tau}  \{- \log ( k) - \log(k')\} . \]
  \label{Lem: Control on Q2}
\end{lemma}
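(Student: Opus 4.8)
The plan is to mimic exactly the argument of Lemma \ref{Lem: Control on Q}, but carried out fiberwise over the first coordinate $z$ so as to respect the marginal constraint $\pi^*_z = \mu$. Since both $\pi^*$ and the competitor $\hat\pi$ in the definition of $\mathcal{Q}_\pi$ are required to have first marginal $\mu$, the natural move is to build the smoothed competitor $\hat\pi$ by disintegrating $\pi^* = \int \pi^*(\cdot\mid z)\, d\mu(z)$ and mollifying only the conditional measures $\pi^*(\cdot\mid z)$ in the $\tilde z$ variable, leaving the $z$-marginal untouched. Concretely, for $\epsilon>0$ define
\[
\pi^\epsilon(\cdot\mid z)(A) := \int_{\Z} \frac{|\mathcal{B}_{\tilde z,\epsilon}\cap A|}{|\mathcal{B}_{\tilde z,\epsilon}\cap \Z|}\, d\pi^*(\tilde z\mid z), \qquad \pi^\epsilon := \int_{\Z} \delta_z \otimes \pi^\epsilon(\cdot\mid z)\, d\mu(z),
\]
so that $\pi^\epsilon_z = \mu$ by construction and $\pi^\epsilon$ is an admissible competitor.

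First I would estimate the $\lVert\cdot\rVert_{BL}^*$ term. Using the same coupling as in Lemma \ref{Lem: Control on Q}, but now taken fiberwise (couple $z$ to itself via the identity, and couple $\pi^*(\cdot\mid z)$ to $\pi^\epsilon(\cdot\mid z)$ by the local-average coupling), one gets a coupling between $\pi^*$ and $\pi^\epsilon$ whose cost is $\int_\Z \int_\Z |\tilde z - \tilde z'|\, d(\text{coupling}) \le \epsilon$ uniformly, since each fiber contributes at most $\epsilon$. Hence $W_1(\pi^*,\pi^\epsilon)\le \epsilon$, and since $\lVert\pi^*-\pi^\epsilon\rVert_{BL}^*\le W_1(\pi^*,\pi^\epsilon)$, the first term is bounded by $\epsilon$.

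Next I would bound the relative entropy $\mathcal{H}(\pi^\epsilon \lVert \pi_0)$. Here I disintegrate $\pi_0 = \mu \otimes \pi_0(\cdot\mid z)$ as well; since both $\pi^\epsilon$ and $\pi_0$ have the same first marginal $\mu$, the chain rule for relative entropy gives
\[
\mathcal{H}(\pi^\epsilon \lVert \pi_0) = \int_\Z \mathcal{H}\big(\pi^\epsilon(\cdot\mid z)\,\lVert\,\pi_0(\cdot\mid z)\big)\, d\mu(z).
\]
For each fixed $z$ in the support of $\mu$, the density $\frac{d\pi^\epsilon(\cdot\mid z)}{d\tilde z}$ is, by Fubini, an average of the functions $|\mathcal{B}_{\tilde z',\epsilon}\cap\Z|^{-1}\mathbf{1}_{\mathcal{B}_{\tilde z',\epsilon}\cap\Z}$ against $\pi^*(\cdot\mid z)$, so by convexity of $x\mapsto x\log x$, Jensen, and Fubini exactly as in Lemma \ref{Lem: Control on Q} one gets $\int \frac{d\pi^\epsilon(\cdot\mid z)}{d\tilde z}\log\frac{d\pi^\epsilon(\cdot\mid z)}{d\tilde z}\, d\tilde z \le -\log(k') - d'\log(\epsilon)$; and using the hypothesis $\frac{d\pi_0(\cdot\mid z)}{d\tilde z}>k$ one gets $\int \log\frac{d\pi_0(\cdot\mid z)}{d\tilde z}\, d\pi^\epsilon(\cdot\mid z) \ge \log k$. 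Subtracting yields $\mathcal{H}(\pi^\epsilon(\cdot\mid z)\lVert\pi_0(\cdot\mid z)) \le -\log k - \log k' - d'\log\epsilon$ for every $z$, and integrating against $\mu$ (a probability measure) preserves the same bound. Therefore
\[
\mathcal{Q}_\pi(\pi_0,\pi^*;\tau) \le \epsilon - \frac{1}{\tau}\big(d'\log\epsilon + \log k' + \log k\big),
\]
and optimizing in $\epsilon$ with the choice $\epsilon = d'/\tau$ gives the claimed inequality.

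The only genuinely new ingredient compared with Lemma \ref{Lem: Control on Q} is the bookkeeping of the marginal constraint: one must make sure that the mollification acts purely in $\tilde z$ so that $\pi^\epsilon_z=\mu$ remains admissible, and one must invoke the disintegration/chain-rule identity for relative entropy to reduce the entropy estimate to the fiberwise estimate that Lemma \ref{Lem: Control on Q} already establishes. I expect this fiberwise reduction to be the main (though still routine) point to get right; everything else is a verbatim repetition of the previous lemma's computation with $\Theta$ replaced by $\Z$ and $d$ by $d'$. One should also note, as in the remark following Lemma \ref{Lem: Control on Q}, that the volume lower bound $|\mathcal{B}_{\tilde z,\epsilon}\cap\Z|\ge k'\epsilon^{d'}$ is automatic from the Lipschitz-boundary assumption on $\Z$ in Assumption \ref{Hyp U}, so the hypothesis is not really an extra restriction.
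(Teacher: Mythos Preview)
Your proposal is correct and follows essentially the same approach as the paper: mollify only the conditional distributions $\pi^*(\cdot\mid z)$ so that the first marginal stays $\mu$, then repeat the Wasserstein and entropy estimates of Lemma~\ref{Lem: Control on Q} fiberwise and optimize in $\epsilon$. If anything, you have spelled out more detail than the paper (which simply says to repeat the computations of Lemma~\ref{Lem: Control on Q} on the conditionals), in particular the chain-rule/disintegration identity for the relative entropy.
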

\begin{proof}
Since all measures of interest must have the same first marginal (i.e., $\mu$) we proceed as in Lemma \ref{Lem: Control on Q}, but this time only regularizing conditional distributions. More precisely, for a given $z$ in the support of $\Z$ we define the measure $\pi^\epsilon(\cdot|z)$ as follows:
\[\pi^\epsilon(A|z) := \int_\Z \frac{|\mc B_{\tilde z,\epsilon} \cap A|}{|\mc B_{\tilde z,\epsilon} \cap \Z|}    d \pi^*(\tilde z |z ).\]
We then define the measure $\pi^\epsilon$ as:
\[ d \pi^\epsilon(z, \tilde z)= d\pi^\epsilon(\tilde z| z)d\mu(z). \]
Notice that the measure $\pi^\epsilon$ is such that $\pi^\epsilon_{z}=\mu$. Moreover, it is straightforward to show (repeating similar computations as in the proof of Lemma \ref{Lem: Control on Q}) that $W_1(\pi^\epsilon, \pi^*) \leq \epsilon$ and $\H(\pi^\epsilon|| \pi_0) \leq -\log(k) - \log(k') - d' \log(\epsilon).$ The desired result now follows as in Lemma \ref{Lem: Control on Q}.
\end{proof}

\begin{proof}[Proof of Theorem \ref{thm:Main2}]
  On the one hand, by assumption, we can find $T_1$ such that for all $t>T^*$
  \[ |2B^2 \frac{1}{t} \int_0^t\int_0^s \eta_u du ds | \leq \frac{3}{4}\epsilon.\]
  On the other hand, Lemmas  \ref{Lem: Control on Q} and \ref{Lem: Control on Q2} imply that there exists $T_2$ such that, for all $t>T^*$ and arbitrary $\pi^*$  with $\pi^*_z=\mu$ and $\nu^*$, we have
  \[ ( \mathcal Q ( \pi_0,\pi^*; \kappa B t ) + ( \mathcal Q ( \nu_0,\nu^*; \kappa B t ) \leq \frac \epsilon {4B}. \]
  We conclude by taking $T^*=T_1 \vee T_2$ and using Lemma \ref{Lem: Control explitability}.
\end{proof}

\subsection{The non-convex and strongly concave case}
\label{sec:stronglyConcave}



In this section we present the proof of Theorem \ref{thm:mainStrongConvex}.

\begin{proof}[Proof of Theorem \ref{thm:mainStrongConvex}]
Throughout this proof we use $m^*_{t}$ to denote the quantity 
\[m^*_t:= \sup_{ \pi \: \text{s.t.} \: \pi_z=\mu  } \U(\pi,\nu_t) .\]

From concavity-convexity of $\U$ in the linear interpolation sense we have for all arbitrary $\tilde \pi$ (with $\tilde \pi_z =\mu$) and $\tilde \nu$:
\begin{align}
    \begin{split}
        \U(  \tilde \pi, \overline{\nu}_{t}) - \U( \overline{\pi}_t, \tilde \nu ) &  \leq \frac{1}{t}\int_0^t ( \U( \tilde \pi, \nu_{s}) - \U(\pi_s,\tilde \nu)   )   ds 
        \\ & = \frac{1}{t}\int_0^t ( \U( \tilde \pi,\nu_{s}) - m^*_{s})ds +  \frac{1}{t}\int_0^t ( m^*_{s} - \U(\pi_s,\tilde \nu)   )   ds 
         \\& \leq \frac{1}{t}\int_0^t ( m^*_s - \U( \pi_s,\tilde \nu)   )   ds 
\\& = \frac{1}{t}\int_0^t ( m^*_{s} - \U( \pi_s, \nu_{s} ) )ds + \frac{1}{t}\int_0^t ( \U( \pi_{s}, \nu_{s}) - \U(\pi_s,\tilde \nu)   )   ds.
\\& =: \I_1 + \I_2
    \end{split}
    \label{eq:AuxStrongConvex}
\end{align}
In the above, the second inequality follows from the definition of $m^*_{s}$. We will now control each of the terms $\I_1$ and $\I_2$ on the right-hand side of the above expression.

In order to control $\I_1$, we start by using the chain rule (e.g., see section 10.1.2 in \cite{AmbrosioGigliSavare}) to obtain an expression for $\frac{d}{ds} \U(\pi_s, \nu_s)$: 
\begin{align}
\begin{split}
\frac{d}{ds} \U(\pi_s, \nu_s) = & \int |\nabla_{\tilde z} \U_\pi(\pi_s, \nu_s; z, \tilde z)  |^2 d \pi_s(z, \tilde z) + \kappa \int  \U_\pi(\pi_s, \nu_s;z, \tilde z) (\U_\pi(\pi_s, \nu_s;z, \tilde z) - \overline{\U}_{\pi,z}) d \pi_s(z,\tilde z)   
\\&- \frac{\eta_t}{K} \int |\nabla_\theta \U_\nu(\pi_s, \nu_s; \theta)|^2 d\nu_s(\theta) - \frac{\kappa}{K} \int  \U_\nu(\pi_s, \nu_s;\theta) ( \U_\nu(\pi_s, \nu_s;\theta) - \U_\nu) d\nu_s(\theta); 
\\& 
\end{split}
\label{eq:ChainRule}
\end{align}
in the above, we use the shorthand notation $\overline{\U}_{\pi,z}$ to denote $\int \U_{\pi}(\pi_s, \nu_s; z, \tilde z' ) d \pi_s(\tilde z'|z)$, and $\overline{\U}_{\nu}$ to denote $\int \U_\nu(\pi_s, \nu_s; \theta')d \nu_s(\theta')$. Assumption \ref{assump:StrongConcavity} implies that the first term on the right-hand side of \eqref{eq:ChainRule} is bounded from below by $\lambda ( m_s^*- \U(\pi_s, \nu_s))$. On the other hand, Jensen's inequality implies that the second term is non-negative. Finally, Assumptions \ref{Hyp U} imply that the last terms can be bounded from below by $-\frac{\lVert \eta\rVert_\infty}{ K} M^2 - \frac{2\kappa}{K} M^2$. It follows that for all $t\geq 0$
\begin{align*}
\U(\pi_t, \nu_{t}) =& \U( \pi_0,\nu_{0}) +\int_0^t \frac{d}{dr} \U( \pi_r,\nu_{r}) dr
\\ \geq&  \U( \pi_0, \nu_{0}) - \frac{t}{K} \tilde B + \lambda  \int_0^t ( m_s^* - \U( \pi_s, \nu_{s}) ) ds,
\end{align*}
where $\tilde B := (\lVert \eta \rVert_\infty  + 2\kappa )M^2 $. Subtracting $m_t^*$ from both sides of the above inequality, we get, thanks to Assumptions \ref{Hyp U}, 
\[ \U(\pi_t, \nu_{t})  - m_t^* \geq  - 2M  - \frac{t}{K} \tilde B + \lambda  \int_0^t ( m^*_s - \U( \pi_s, \nu_{s}) ) ds.\] 
Equivalently,
\[  m_t^*- \U( \pi_t,\nu_{t})   \leq   2M  +  \frac{t}{K}\tilde  B - \lambda  \int_0^t ( m^*_s - \U( \pi_s,\nu_{s}) ) ds.\] 
We thus see that the function $f(t):= m^*_t - \U(\nu_{t}, \pi_t)  $ satisfies 
\[ f(t) \leq 2M + \frac{\tilde B }{K} t - \lambda \int_{0}^t f(s) ds, \]
and from Lemma \ref{lemm:Gronwall} in Appendix \ref{app2} we conclude that 
\[ \I_1  \leq 
\frac{\tilde B}{K \alpha} + \frac{A}{t}, \]
for $A:= \frac{1}{\lambda}|2M - \frac{\tilde B}{K \lambda}|$. 

To estimate $\I_2$ in \eqref{eq:AuxNonConvexConcave}, we follow similar computations to those in the proof of Lemma \ref{Lem: Control explitability} to conclude that 
\begin{align}
\begin{split}
 \int_{0}^t \U(\pi_s,\nu_{s})ds - \int_{0}^t \U(\pi_s,\tilde \nu)ds  \leq & \int_{0}^t(|\U_\nu(\pi_s,\nu_{s};\cdot )\|_{BL} \|\tilde \nu-\hat \nu_{s}\|^*_{BL}) ds  
\\ & -\int_{0}^t \left(    \int \U_\nu (\pi_s, \nu_{s}; \theta ) d(\hat \nu_{s} - \nu_{s}) (\theta)  \right) ds,
\end{split}
\label{eq:AuxNonConvexConcave}
\end{align}
where now we use a modified hat process $\hat \nu$ satisfying
\[  \partial_t \hat \nu_t = \frac{\eta_t}{K} \divergence_{\theta} (\hat \nu_t \nabla_{\theta} \U_\nu (\pi_{t}, \nu_t;\theta)), \]
initialised at an arbitrary $\hat \nu_0\ll \nu_0$ with density in $L^\infty (\nu_0)$. Following a straightforward adaptation of Proposition \ref{cor:EntropyBounds}, we can then see that 
\begin{equation*}
   \H(\hat{\nu}_t || \nu_t) - \H(\hat{\nu}^0|| \nu^0) \leq \frac{\kappa}{K} \int_{0}^t \left(    \int \U_\nu (\pi_s, \nu_s; \theta ) d(\hat \nu_s - \nu_s) (\theta)  \right) ds, \quad \forall t \geq 0,
\end{equation*}
from where it now follows that
\begin{align*}
\begin{split}
 \I_2  & \leq \frac{1}{t}\int_{0}^t(\lVert\U_\nu(\pi_s,\nu_{s};\cdot )\|_{BL} \|\tilde \nu-\hat \nu_{s}\|^*_{BL}) ds  +\frac{K}{\kappa t } \H(\hat{\nu}_0|| \nu_0)
 \\& \leq  B  \|\tilde \nu-\hat \nu_0\|^*_{BL} +  \frac{ B^2}{Kt} \int_0^t\int_0^s \eta_\tau d \tau d s   + \frac{K}{\kappa t} \mathcal H(\hat \nu_0 || \nu_0)  .
\end{split}
\label{eq:AuxNonConvexConcave2}
\end{align*}
From the above we can deduce
\[ \I_2 \leq B \mathcal Q_\nu ( \nu_0,\tilde \nu; \frac{\kappa}{K} B t ) ) +  \frac{B^2}{Kt} \int_0^t\int_0^s \eta_{ \tau \nc} d\tau ds. \]

Putting all our estimates together we obtain
\begin{align*}
\U( \tilde \pi, \overline{\nu}_{t}) - \U( \overline{\pi}_t,\tilde \nu )  \leq 
\frac{\tilde B}{K \lambda} + \frac{A}{t} + B  \mathcal Q_\nu ( \nu_0,\tilde \nu; \frac{\kappa}{K} B t ) ) +  \frac{B^2}{Kt} \int_0^t\int_0^s \eta_\tau d \tau d s. 
\end{align*}

At this stage we can use the specific properties of $\nu_0$ and use Lemma \ref{Lem: Control on Q} to conclude that there are $r_0(\epsilon), K_0(\epsilon), t_0(\epsilon),r_1(\epsilon)$ such that, if $\frac{K}{t} \leq r_0(\epsilon)$, $K \geq K_0(\epsilon), t \geq t_0(\veps)$, $\overline{\eta}/K \leq r_1(\epsilon)$, then 
\[ \sup_{\tilde \pi \in \mathcal{P}(\Z^2) \text{ s.t. } \tilde \pi_z=\mu} \U( \tilde \pi,\overline{\nu}_t) - \inf_{\tilde \nu \in \mathcal{P}(\Theta)} \U(\overline{\pi}_t, \tilde \nu) \leq \epsilon.   \]

\end{proof}

\section{Numerical examples}
\label{sec:Experiments}

We illustrate our results numerically in the context of image classification on the MNIST database. 

In this framework, we take the  particles representing the distribution $\nu$ to be the training parameters (i.e. weights and biases) for simple convolutional networks with fixed widths and depths\footnote{Two layers with a convolutional kernel of size 5 and output channel sizes of 32 and 64 respectively, ReLu activation functions, and maxpool; and two linear layers at the end with respective output sizes 1000 and 10.}; while the particles representing the distribution $\pi$ are pairs of images where the first component is an image from the original database, and the second is an adversarial image built during the training process. We set the loss function to be given by \[\Risk(\pi,\nu) = \int_{\mc Z \times \mc Z} \int_{\Theta} | h_\theta(\tilde x) - \tilde y  |^2  d\nu(\theta) d \pi(z,\tilde z) ; \quad \C(\pi) = c_a \int_{\mc Z \times \mc Z} |z-\tilde z|^2 d\pi(z,\tilde z)\] 
where, $h_\theta(x)$ is the outcome of the convolutional network for the input $x$ when setting the parameters of the network to be $\theta$. 

Given the nonlinear structure of the convolutional architecture, it would be extremely memory-consuming to apply directly the time average step \ref{AlgLine: time-avg_net}  in Algorithm \ref{Algo:WADA}, as it would require us to keep track of copies of all intermediate networks in the training process. A possible solution, proposed for example in \cite{domingo2020mean}, is to calculate the time average on the weights only, while keeping the last position of the network-parameter particles. 
We implement also an alternative approach based on the resampling methods used in particle filters (see \cite{li_resampling_2015} for a review): we keep in memory at most a maximum number of network parameters ($M'$). At each update time, we use residual systematic resampling  (RSR) to pick $M'$ parameters to keep from the list of the $M'$ already contained in memory and the new bunch of $M$ particles. Details of the (RSR) method can be found in \cite{li_resampling_2015} (see for example code 4 in Table 2). The time-average calculation of adversarial images is done similarly.

To illustrate our main result, we compute a proxy for the ratios 
\[ r_a:=  \frac{ \sup_{\tilde \pi \in \mathcal{P}(\Z^2)\text{ s.t. } \tilde \pi _z=\mu}\;\U(\tilde \pi, \nu^\star)}{ \U(\pi^\star, \nu^\star) }  , \text{  and  }  r_m:= \frac{\inf_{\tilde \nu \in \mathcal{P}(\Theta)}\;\U( \pi^\star, \tilde\nu)}{ \U(\pi^\star, \nu^\star) },\]
where $(\pi^\star, \nu^\star)$ are the time-averaged distributions for the networks and adversarial images obtained after training. According to our results, we should reach an approximate Nash equilibrium, so we expect both ratios to be closed to zero. 
The proxy is computed as follows: we approximate the supremum in $r_m$ by fixing $\nu^\star$ while training each one of the networks representing $\pi^\star$ with stochastic gradient descent  for a fixed number of epochs (weights are kept constant). We compute then the relative change in total risk after this procedure. The proxy for $r_a$ is computed analogously. We present a summary of the parameters used for the numerical experiments and the results obtained in Table \ref{tab:numerical_tests}. 
\begin{table}[h]
    \centering
    
    \begin{tabular}{|c|c|}
        \hline
        \multicolumn{2}{|c|}{Model parameters} \\
        \hline \hline
        $N$ & 4\\
        $M$ &  2\\
        $\eta_t$  &  $0.1(t+1)^{-1}$\\
        $\kappa$ & 0.25   \\
        $c_a$ & 10 \\
        \hline
    \end{tabular}
  \quad
    \begin{tabular}{|l|c|}
         \hline
         \multicolumn{2}{|c|}{Test implementation parameters} \\
         \hline \hline
         Dataset & MNIST \\
         Batch size & 64 \\
         Training epochs & 4 \\
         \hline
    \end{tabular}
     \qquad
    \begin{tabular}{|c|c|c|}
         \hline
         \multicolumn{3}{|c|}{Results} \\
         \hline \hline
         \multicolumn{3}{|c|}{Accuracy}\\
         \hline
         & Time avg. on weights   & Resampling\\
         Clean  & 96.34\%  &    93.53\% \\
         PGD (20 steps) & 62.21\% & 58.49 \%  \\
         \hline
         \multicolumn{3}{|c|}{ Relative change of loss at solution - 5 additional training epochs}\\
         \hline
         & Time avg. on weights   & Resampling\\
         $r_a$ & 0.21\% & 0.03 \% \\
         $r_m$ & 1.82\%  & 3.5\%  \\
         \hline
    \end{tabular}
    \caption{Parameters and results of numerical test}
    \label{tab:numerical_tests}
\end{table}

Intuitively, we expect that the classification provided by the final time-averaged distributions of networks should be both effective and robust. 
To test this idea, we evaluate the accuracy of this classifier with a \emph{clean} testing sample, independent of the original distribution, and with an \emph{adversarial} version constructed via modification of the latter using projected gradient descent (PGD) with 20 steps and a step size of 0.04.  PGD constructs adversarial images by repeatedly perturbing each pixel in the image by a fixed amount choosing the sign of the perturbation to be the same as the sign of the gradient of the loss function with respect to the entry. See for example \cite{madry2018towards}. Results of this test are also included in Table  \ref{tab:numerical_tests}.
We observe that the overall procedure has degraded a bit the clean performance of the network but significantly improved the resistance to adversarial attacks. 
For reference, a baseline obtained by a simple training of a network with the same characteristics obtains in the same number of epochs a clean accuracy of 98.41\% but an accuracy after the PGD attack of only 0.68\% (compare also with the results in \cite{garcia_trillos_regularized_2022}). 
Table \ref{tab:numerical_tests} shows that in the tested case, calculating the time average on the weights only is not only simpler but also has better results than the resampling procedure. However, there may be settings, not explored here, where the latter approach may be more advantageous. Exploring this would be an interesting research direction. 

\nc



\section{Conclusions}
\label{sec:Conclusions}

In this paper we have studied min-max  problems over spaces of probability measures with a payoff structure motivated by adversarial problems in supervised learning settings; we have studied gradient ascent-descent dynamics aimed at solving these problems. The dynamics that we have studied take the form of an evolutionary system of PDEs that can be discretized using systems of finitely many interacting particles. Under some reasonable assumptions on the payoff structure of the game, we can show that the proposed particle systems are consistent and recover the solution of the original PDE as the number of particles in the system scales up. We have also discussed the behavior of our evolutionary system of PDEs as time tends to infinity, showing that in a certain sense (see below) the system can produce approximate Nash equilibria for the adversarial game. Our results are stated under suitable assumptions on intializations in two settings of interest: 1) for nonconvex nonconcave payoffs (convexity and concavity understood in a suitable OT-sense), and 2) nonconvex strongly-concave problems (again, in a suitable OT sense). Both settings are realistic in adversarial learning games for supervised learning tasks, while in general convexity can only be enforced by introducing additional (exogenous) regularization penalties in the payoff function. 

Due to the lack of convexity of the payoff in our problem (w.r.t. the metric inducing the dynamics of our ascent-descent dynamics), we can only guarantee that time averages of the measures produced by our PDE system become approximate Nash equilibria in the $t \rightarrow \infty$ limit. For our algorithms to follow more closely our theoretical results, it was thus important to discuss strategies for constructing surrogate time averages that do not incur in memory overload and that can still recover approximate Nash equilibria for the game, at least in some benchmark learning tasks.


\medskip

There are several directions for research that our work motivates. Here we mention a few. 

\medskip

First, the theoretical analysis that we have presented in this paper presupposes that the optimization updates take into account all (perturbed) data points, but in practice a natural strategy is to use batches of data to compute the loss (and its gradient) at each iteration. We thus believe that it is of interest to study how the use of stochastic gradient descent (SGD) affects the resulting PDE system. 

Another interesting direction for future research is the exploration of broader frameworks for adversarial learning covering \textit{multiclass classification} settings (as opposed to regression problems as considered in this paper or just binary classification problems). In principle, one could even consider situations where prior information on the similarity of classes in a learning problem is available  (e.g. the class ``guitar" may be considered more similar to class ``violin" than to class ``baseball") as in those situations it may be beneficial to use such information to construct more nuanced models for risk and admissible adversarial attacks; for example, the work \cite{Silla2010} discusses the advantages of using similarity or hierarchical structures between classes in different learning tasks; the work \cite{LabelSimilarity1} explicitly discusses how to build similarities between labels using their semantic content. Our framework indeed seems better suited for regression problems, since in that setting the cost function $C$ for the adversary can be naturally defined using something like the Wasserstein distance over the feature space times the label space, where the latter space is simply the real line. When the response variable has a discrete structure, it is less obvious how one can still define a reasonable (from the modelling perspective) cost structure for the adversary in such a way that the resulting adversarial game can still be solved using an ascent-descent scheme as explored in this paper.

\section*{Acknowledgments}
The authors are thankful to Theodor Misiakiewicz, Katy Craig, Matt Jacobs, and L\'ena\"{i}c Chizat for enligthening discussions on topics related to the content of this paper. NGT was supported by NSF-DMS grant 2005797, and would also like to thank the IFDS at UW-Madison and NSF through TRIPODS grant 2023239 for their support.

\appendix

\section{Auxiliary results and computations}
\label{app1}

\subsection{On the PL condition of Assumption \ref{assump:StrongConcavity}}
\label{app0}

\begin{proposition}
\label{prop:app0}
Suppose that $\Z$ is a convex set and that we select an activation function and a loss function in the setting in  \ref{sec:RobustRegression} that are twice continuously differentiable. Then the function $\U$ with $\Risk$ and $\C$ as in \ref{Ex of h_nu} and \ref{eq:Cost} satisfies the condition in Assumption \ref{assump:StrongConcavity} for all large enough $c_a$.
\end{proposition}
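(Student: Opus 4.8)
The plan is to verify Assumption \ref{assump:StrongConcavity} by directly estimating $\int |\nabla_{\tilde z}\U_\pi(\pi,\nu;z,\tilde z)|^2 d\pi(z,\tilde z)$ from below in terms of $m^*_\nu - \U(\pi,\nu)$, exploiting that for large $c_a$ the map $\tilde z \mapsto \U_\pi(\pi,\nu;z,\tilde z) = \ell(h_\nu(\tilde x),\tilde y) - c_a|z-\tilde z|^2$ is strongly concave. Indeed, since the activation $f$ and loss $\ell$ are twice continuously differentiable on the compact domain $\Z$ (and $\nu\in\mathcal P(\Theta)$ with $\Theta$ compact), the Hessian of $\tilde z\mapsto \ell(h_\nu(\tilde x),\tilde y)$ is bounded in operator norm by some constant $C_0$ depending only on $\|f\|_{C^2}$, $\|\ell\|_{C^2}$, and $\diam(\Z)$, uniformly in $\nu$ and in the $z$-argument. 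Hence for $c_a > C_0/2$ the function $G_{z}(\tilde z) := \U_\pi(\pi,\nu;z,\tilde z)$ is $\lambda_0$-strongly concave with $\lambda_0 := 2c_a - C_0 > 0$, uniformly over all the relevant parameters.

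The key steps, in order, are as follows. First, fix $z$ in the support of $\mu$ and consider the conditional $\pi(\cdot|z)$; for each fixed $\tilde z$, strong concavity of $G_z$ and the convexity of $\Z$ give the pointwise PL-type inequality $|\nabla_{\tilde z} G_z(\tilde z)|^2 \geq 2\lambda_0 \big(\max_{\tilde z' \in \Z} G_z(\tilde z') - G_z(\tilde z)\big)$, using that the projected gradient of a strongly concave function controls the gap to its maximum on a convex set (this is the standard quadratic-growth consequence of strong concavity, combined with Assumption \ref{Hyp:Support is stationary} which ensures the maximizer is interior or that the gradient points inward at the boundary so $|\nabla_{\tilde z}G_z|$ is a legitimate surrogate). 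Second, integrate this inequality against $\pi(z,\tilde z)$; the left-hand side becomes exactly $\int|\nabla_{\tilde z}\U_\pi|^2 d\pi$, while the right-hand side becomes $2\lambda_0\int\big(\max_{\tilde z'} G_z(\tilde z') - G_z(\tilde z)\big)d\pi(z,\tilde z)$. Third, identify $\int \big(\max_{\tilde z'\in\Z} \U_\pi(\pi,\nu;z,\tilde z')\big) d\mu(z)$ with $m^*_\nu$: since $\U(\tilde\pi,\nu) = \int \U_\pi(\pi,\nu;z,\tilde z) d\tilde\pi(z,\tilde z)$ would only hold if $\U_\pi$ were independent of $\pi$ — which it is here, because $\Risk$ in \eqref{Ex of h_nu} is \emph{linear} in $\pi$ and $\C$ in \eqref{eq:Cost} is linear in $\pi$, so $\U_\pi(\pi,\nu;z,\tilde z)$ does not in fact depend on $\pi$ at all. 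This is the crucial structural observation: it lets us write $\U(\tilde\pi,\nu) = \int \U_\pi(\nu;z,\tilde z) d\tilde\pi$ for every $\tilde\pi$ with $\tilde\pi_z = \mu$, and maximizing over such $\tilde\pi$ amounts to pushing each conditional mass to the $\tilde z$-argmax, giving exactly $m^*_\nu = \int \max_{\tilde z'\in\Z}\U_\pi(\nu;z,\tilde z') d\mu(z)$. Fourth, combine: $\int|\nabla_{\tilde z}\U_\pi|^2 d\pi \geq 2\lambda_0\big(m^*_\nu - \U(\pi,\nu)\big)$, which is Assumption \ref{assump:StrongConcavity} with $\lambda = 2\lambda_0 = 2(2c_a - C_0)$.

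The main obstacle I anticipate is the careful justification of the pointwise inequality $|\nabla_{\tilde z}G_z(\tilde z)|^2 \geq 2\lambda_0(\max G_z - G_z(\tilde z))$ when the maximizer lies on (or near) the boundary $\partial\Z$: strong concavity on a convex set gives quadratic growth $\max G_z - G_z(\tilde z) \le \frac{1}{2\lambda_0}|\nabla G_z(\tilde z)|^2$ only when $\tilde z$ is such that the unconstrained gradient step stays admissible, and one must invoke Assumption \ref{Hyp:Support is stationary} (the inward-pointing gradient condition, achieved by the exogenous penalty $\varphi_1$) to ensure the argmax is interior, so that $\nabla G_z$ vanishes there and the standard strong-concavity estimate applies verbatim. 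A secondary technical point is verifying the uniform $C^2$ bound $C_0$ genuinely does not blow up: one checks that $\nabla_{\tilde z}^2 \ell(h_\nu(\tilde x),\tilde y)$ expands via the chain rule into terms involving $\ell''$, $\ell'$, $f''$, $f'$, and at most quadratic factors of $b$, all bounded since $\Theta$ is compact and $\nu$ is a probability measure — a routine but necessary computation that I would only sketch. Everything else (the linearity of $\Risk$ and $\C$ in $\pi$, compactness, the disintegration of $\pi$ over $\mu$) is either given in the excerpt or immediate.
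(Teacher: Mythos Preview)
Your proof is correct and follows essentially the same approach as the paper: both observe that $\U_\pi$ is independent of $\pi$ by linearity of $\Risk$ and $\C$, establish uniform strong concavity of $\tilde z \mapsto \U_\pi(\nu;z,\tilde z)$ for large $c_a$, deduce a PL inequality, and integrate over $z$ against $\mu$. The only difference is that the paper obtains the PL inequality at the level of measures by invoking Theorem~5.15(ii) in \cite{VillaniBook}, whereas you derive the pointwise bound $|\nabla G_z(\tilde z)|^2 \geq 2\lambda_0(\max G_z - G_z(\tilde z))$ directly from strong concavity---and your boundary worry is in fact unnecessary, since convexity of $\Z$ alone gives $\max G_z - G_z(\tilde z) \leq \langle \nabla G_z(\tilde z), \tilde z^* - \tilde z\rangle - \tfrac{\lambda_0}{2}|\tilde z^* - \tilde z|^2 \leq \tfrac{1}{2\lambda_0}|\nabla G_z(\tilde z)|^2$ regardless of whether the maximizer $\tilde z^*$ is interior.
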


\begin{proof}
We recall that in this case we have
\[ \U_{\pi}(\pi, \nu; z, \tilde z) = \ell(h_\nu(\tilde x ), \tilde y) - c_a|z-\tilde z |^2 =: \U(\nu; z, \tilde z);\]
see Example \ref{example:2}. Assuming that the loss function $\ell$ and the activation function $f$ are at least twice continuously differentiable, we can conclude that the function $\tilde z \in \Z  \mapsto \U(\nu; z, \tilde z)$ (for fixed $z$ and $\nu$) is strongly concave (for all $z$ and $\nu$), provided that $c_a$ is large enough. Indeed, this is simply because we can bound, uniformly over $z,\nu$, the second derivatives of the first term in $\U(\nu; z, \tilde z).$ Thanks to this and Theorem 5.15 ii) in \cite{VillaniBook}, we deduce that there is $\lambda>0$ such that for every $z\in \Z$ and $\Upsilon \in \mathcal{P}(\Z)$ we have
\[ \int_{\Z} | \nabla_{\tilde z} \U (\nu; z, \tilde z) |^2 d\Upsilon(\tilde z) \geq \lambda ( \sup_{\hat \Upsilon \in \mathcal{P}(\Z)} \int_\Z \U(\nu; z , \tilde z) d \hat{\Upsilon}(\tilde z)  -      \int_\Z \U(\nu; z , \tilde z) d {\Upsilon}(\tilde z)  ).   \]
In particular, for a given $\pi \in \mathcal{P}(\Z\times \Z)$ with $\pi_{0,z}=\mu$, we have
\[ \int_{\Z} | \nabla_{\tilde z} \U (\nu; z, \tilde z) |^2 d\pi(\tilde z|z) \geq \lambda ( \sup_{\hat \Upsilon \in \mathcal{P}(\Z)} \int_\Z \U(\nu; z , \tilde z) d \hat{\Upsilon}(\tilde z)  -      \int_\Z \U(\nu; z , \tilde z) d \pi(\tilde z|z)  ),   \]
for all $z \in \Z$ and all $\nu \in \mathcal{P}(\Theta)$. Integrating over $z$ with respect to $\mu$ on both sides of the above inequality, we get
\begin{align*}
\int_{\Z\times \Z} | \nabla_{\tilde z} \U_\pi (\pi,\nu; z, \tilde z) |^2 d\pi(z, \tilde z) &= \int_{\Z\times \Z} | \nabla_{\tilde z} \U (\nu; z, \tilde z) |^2 d\pi(z, \tilde z)
\\& \geq \lambda \left(  \int_{\Z} \left( \sup_{\hat \Upsilon \in \mathcal{P}(\Z)} \int_\Z \U(\nu; z , \tilde z) d \hat{\Upsilon}(\tilde z)  \right)d \mu(z)  - \U(\pi,\nu)  \right)
\\& \geq \lambda\left( \sup_{\hat \pi\in \mathcal{P}(\Z^2) \text{ s.t. } \hat \pi_z=\mu } \U(\hat \pi, \nu) -\U(\pi, \nu) \right).  
\end{align*}
\end{proof}

\nc 
\subsection{Auxiliary lemmas for the construction of approximate initializations in Theorems \ref{thm:Main1} and \ref{thm:PropChaos} }
\label{App:1}

\begin{proposition}
Let $A, B$ be two bounded Borel subsets of $\R^{d}$ and $\R^{d'}$, respectively. Let $\mu \in \mathcal{P}(A)$, and let $ u \in A \mapsto \mu_u (\cdot ) \in \mathcal{P}(B)  $ be a measurable map. 

For every sequence $\{\Upsilon_n \}_{n \in \N} \subseteq \Gamma(\mu, \mu)$ satisfying \[\lim_{n \rightarrow \infty } \int_{A \times A} |u - u' | d \Upsilon_n(u, u') =0, \]
we have
\[ \lim_{n \rightarrow \infty } \int_{A \times A} W_1( \mu_{u}, \mu_{u'}  ) d \Upsilon_n(u, u') =0.  \]

\label{Prop:StagnatingSequence}
\end{proposition}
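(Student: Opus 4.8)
The plan is to prove the statement by a measure-theoretic argument that combines Lusin's theorem with the dominated convergence theorem, exploiting that the Wasserstein distance $W_1(\mu_u,\mu_{u'})$ is bounded (since $B$ is bounded) and that $u \mapsto \mu_u$ is measurable. The key difficulty is that the map $u \mapsto \mu_u$ is only measurable, not continuous, so we cannot directly use a continuity/equicontinuity argument; we must first approximate it by a continuous map off a small set, and then control the contribution of that small (bad) set uniformly in $n$.

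First I would fix $\delta > 0$. Since $A$ is bounded and $\mathcal{P}(B)$ with the $W_1$ metric is a separable metric space (as $B$ is bounded, hence compact after passing to closure, so $(\mathcal{P}(B), W_1)$ is compact and in particular Polish), the map $u \in A \mapsto \mu_u \in (\mathcal{P}(B), W_1)$ is a measurable map into a Polish space. By Lusin's theorem, there exists a compact set $K_\delta \subseteq A$ with $\mu(A \setminus K_\delta) < \delta$ such that the restriction $u \mapsto \mu_u$ is continuous on $K_\delta$; being continuous on a compact set, it is in fact uniformly continuous there. So there is a modulus of continuity: there exists $\rho_\delta > 0$ such that $u, u' \in K_\delta$ with $|u - u'| < \rho_\delta$ implies $W_1(\mu_u, \mu_{u'}) < \delta$.

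Next I would decompose the integral $\int_{A\times A} W_1(\mu_u,\mu_{u'}) \, d\Upsilon_n(u,u')$ according to whether the pair $(u,u')$ lies in $K_\delta \times K_\delta$ and whether $|u-u'| < \rho_\delta$. Let $D := \operatorname{diam}(B) < \infty$, so $W_1(\mu_u,\mu_{u'}) \leq D$ always. On the set $\{(u,u') \in K_\delta \times K_\delta : |u - u'| < \rho_\delta\}$ the integrand is at most $\delta$, contributing at most $\delta$. On the set $\{|u - u'| \geq \rho_\delta\}$, we have by Markov's inequality $\Upsilon_n(\{|u-u'| \geq \rho_\delta\}) \leq \rho_\delta^{-1} \int |u-u'| \, d\Upsilon_n \to 0$ as $n \to \infty$, so this set contributes at most $D \cdot \rho_\delta^{-1}\int|u-u'|\,d\Upsilon_n$, which tends to $0$. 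It remains to control the contribution of $\{(u,u') : u \notin K_\delta \text{ or } u' \notin K_\delta\}$; since both marginals of $\Upsilon_n$ equal $\mu$, we have $\Upsilon_n(\{u \notin K_\delta\} \cup \{u' \notin K_\delta\}) \leq 2\mu(A \setminus K_\delta) < 2\delta$, so this contributes at most $2D\delta$. Combining, $\limsup_{n\to\infty} \int W_1(\mu_u,\mu_{u'}) \, d\Upsilon_n \leq \delta + 2D\delta$, and since $\delta > 0$ was arbitrary, the limit is $0$.

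The main obstacle is the measurability-versus-continuity gap: the clean uniform-continuity estimate only holds on the Lusin set $K_\delta$, and one must be careful that the mass escaping $K_\delta$ is controlled \emph{uniformly in $n$} — this is exactly where the hypothesis $\Upsilon_n \in \Gamma(\mu,\mu)$ (both marginals fixed and equal to $\mu$) is essential, since it pins down $\Upsilon_n(\{u \notin K_\delta\})$ regardless of $n$. One minor technical point to verify carefully is that $u \mapsto \mu_u$ is indeed Borel measurable as a map into $(\mathcal{P}(B), W_1)$ (equivalently, into $\mathcal{P}(B)$ with the weak topology, which on a bounded set coincides with the $W_1$ topology); this follows from the assumed measurability of the map together with the fact that the Borel $\sigma$-algebra on $\mathcal{P}(B)$ is generated by the evaluation maps $\nu \mapsto \int \varphi \, d\nu$, $\varphi \in C_b(B)$.
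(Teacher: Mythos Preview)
Your proof is correct and takes a genuinely different route from the paper's. The paper proceeds by discretizing the target: it partitions $B$ into finitely many sets $B_l$ of small diameter, replaces each $\mu_u$ by the finitely supported measure $\hat\mu_u=\sum_l \mu_u(B_l)\delta_{v_l}$, and then bounds $W_1(\hat\mu_u,\hat\mu_{u'})$ by a constant times $\sum_l |h_l(u)-h_l(u')|$ with $h_l(u):=\mu_u(B_l)\in L^1(\mu)$; the conclusion then follows from an external lemma (Lemma~3.10 in \cite{NGTSlepcev}) stating that $\int |f(u)-f(u')|\,d\Upsilon_n\to 0$ for any $f\in L^1(\mu)$ when $\{\Upsilon_n\}$ is stagnating. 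Your argument instead applies Lusin's theorem directly to the $\mathcal{P}(B)$-valued map $u\mapsto \mu_u$, obtaining uniform continuity on a large compact set and controlling the complement via the fixed marginals of $\Upsilon_n$. Your approach is more self-contained (it does not invoke the external lemma, and in fact implicitly reproves it in the vector-valued setting), while the paper's approach is more modular and makes explicit the connection to the scalar stagnating-sequence lemma already available in the optimal-transport literature. Both hinge on the same crucial observation you highlight: that $\Upsilon_n\in\Gamma(\mu,\mu)$ gives uniform-in-$n$ control of the mass outside any fixed set.
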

\begin{proof}
Sequences $\{\Upsilon_{n}\}_{n \in \N}$ satisfying the hypothesis in the proposition are called \textit{stagnating sequences} of transport plans; see \cite{NGTSlepcev}. 

Let $\veps>0$. For such $\veps>0$ we can build a finite partition $\{ B_l \}_{l=1, \dots, L}$ of the set $B$ in such a way that each set $B_l$ has diameter less than $\veps/3$; this partition can be constructed by simply intersecting a grid of boxes in $\R^{k'}$ of diameter less than $\veps/3$ with the set $B$. Select now a point $v_l$ in each of the $B_l$. Associated to each $l=1, \dots, L$, we define a function $h_l \in L^1(\mu)$ as follows: for every $u $ in the support of $\mu$, we define $h_l(u):=\mu_u(B_l)$. We now consider the measures $\hat{\mu}_u := \sum_{l=1}^L  h_l(u) \delta_{v_l}. $
Notice that these are probability measures satisfying $W_1(\hat{\mu}_u, \mu_u) \leq \veps/3$. In particular, using the triangle inequality for $W_1$  we deduce
\begin{align*}
 \int_{A \times A} W_1( \mu_{u}, \mu_{u'}  ) d \Upsilon_n(u, u') & \leq  \int_{A \times A} W_1( \mu_{u},\hat{\mu}_{u}  ) d \Upsilon_n(u, u')  + \int_{A \times A} W_1( \hat{\mu}_{u}, \hat{\mu}_{u'}  ) d \Upsilon_n(u, u')
 \\ & + \int_{A \times A} W_1( \hat{\mu}_{u'}, \mu_{u'}  ) d \Upsilon_n(u, u'). 
 \\ & \leq \frac{2}{3} \veps + \int_{A \times A} W_1( \hat{\mu}_{u}, \hat{\mu}_{u'}  ) d \Upsilon_n(u, u'). 
\end{align*}
Let us now find an upper bound for the term  $\int_{A \times A} W_1( \hat{\mu}_{u}, \hat{\mu}_{u'}  ) d \Upsilon_n(u, u')$. By the Kantorovich duality for the $W_1$ distance, we have
\[ W_1( \hat{\mu}_{u}, \hat{\mu}_{u'}  ) 
 =\sup_{Lip(f) \leq 1}  \{\int f(v) d  \hat{\mu}_{u}(v) -  \int f(v) d  \hat{\mu}_{u'}(v)\}.  \]
 Since the set $B$ is bounded, and the argument inside the sup is invariant under addition of a constant to a given $f$, we can further assume that the $\sup$ is taken over functions $f$ whose supremum norm is  bounded by a fixed constant $C$. For such a function $f$ we have
 \[ \int f(v) d  \hat{\mu}_{u}(v) -  \int f(v) d  \hat{\mu}_{u'}(v) = \sum_{l=1}^L (h_l(u)- h_l(u')) f(v_l) \leq C \sum_{l=1}^L |h_l(u) - h_l(u')|.    \]
From the above it follows
\[ \int_{A \times A} W_1( \hat{\mu}_{u}, \hat{\mu}_{u'}  ) d \Upsilon_n(u, u') \leq C \sum_{l=1}^L \int_{A\times A} | h_l(u)- h_l(u')| d\Upsilon_n(u,u'). \]
We now invoke Lemma 3.10 in \cite{NGTSlepcev} to conclude that the right-hand side of the above expression converges to zero as $n \rightarrow \infty$. In particular, there exists $N$ large enough such that for all $n \geq N$ we have $C \sum_{l=1}^L \int_{A\times A} | h_l(u)- h_l(u')| d\Upsilon_n(u,u') \leq \frac{\veps}{3}$. In turn, we conclude that if $n \geq N$, then
\[  \int_{A \times A} W_1( \mu_{u}, \mu_{u'}  ) d \Upsilon_n(u, u') \leq \veps. \]
This establishes the desired result.
\end{proof}

\begin{lemma}
\label{lem:SatisfyingAssumptions}
Let $A, B$ be two bounded Borel subsets of $\R^{d}$ and $\R^{d'}$, respectively. Let $\mu \in \mathcal{P}(A)$, and let $ u \in A \mapsto \mu_u (\cdot ) \in \mathcal{P}(B)  $ be a measurable map.

Let $u_1, \dots, u_n, \dots$ be a sequence of i.i.d. samples from $\mu$, and for each $i\in \N$, let $v_{i1}, \dots, v_{im}, \dots,$ be i.i.d. samples from $\mu_{u_i}(\cdot)$. For each $n$ and $m$ consider the (random) measures
\[ \mu^{n,m}:= \frac{1}{nm}\sum_{i=1}^{n}\sum_{j=1}^{m}  \delta_{(u_i, v_{ij} )}, \quad \mu^n:=\frac{1}{n}\sum_{i=1}^n \delta_{u_i}, \]
and let $\mu^{n,m}(\cdot| u)$ be the conditional distribution, according to $\mu^{n,m}$, of the variable $v$ given the value $u$ of the first coordinate.

Then 
\[  \lim_{n \rightarrow \infty} \lim_{m \rightarrow \infty} \E \left[   \inf_{\Upsilon_n \in \Gamma_{\text{Opt}}(\mu^{n}, \mu )} \int   W_1(\mu^{n,m}(\cdot | u ), \mu_{u'} )   d\Upsilon_n(u,u')  
 \right]=0.      \]
In particular, there is a sequence $\{(n_k,m_k)\}_{k \in \N}$ such that 
\[   \lim_{k \rightarrow \infty} \E \left[   \inf_{\Upsilon_k \in \Gamma_{\text{Opt}}(\mu^{n_k}, \mu )} \int   W_1(\mu^{n_k,m_k}(\cdot | u ), \mu_{u'} )   d\Upsilon_k(u,u')  
 \right]=0,      \]
and a subsequence (not relabeled) such that
\[ \lim_{k \rightarrow \infty}  \inf_{\Upsilon_k \in \Gamma_{\text{Opt}}(\mu^{n_k}, \mu )} \int   W_1(\mu^{n_k,m_k}(\cdot | u ), \mu_{u'} )   d\Upsilon_k(u,u')  =0,  \]
almost surely.
\end{lemma}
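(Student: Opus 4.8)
The plan is to establish the double-limit statement first, then extract the diagonal sequence and the almost-sure subsequence by routine arguments. The key idea is to compare $\mu^{n,m}(\cdot\mid u)$ not directly with $\mu_{u'}$, but to insert an intermediate measure adapted to the empirical structure.

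First I would fix $n$ and let $m\to\infty$. For each $i\le n$, the conditional distribution $\mu^{n,m}(\cdot\mid u_i)$ (for the value $u_i$ of the first coordinate, which, almost surely, are all distinct) is exactly the empirical measure $\frac1m\sum_{j=1}^m\delta_{v_{ij}}$ of i.i.d.\ samples from $\mu_{u_i}$. By the law of large numbers in Wasserstein space (empirical measures of i.i.d.\ samples from a measure supported in the bounded set $B$ converge a.s.\ in $W_1$ to that measure), we have $W_1\bigl(\mu^{n,m}(\cdot\mid u_i),\mu_{u_i}\bigr)\to 0$ almost surely as $m\to\infty$, for each $i\le n$. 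Since $B$ is bounded, these quantities are uniformly bounded, so by dominated convergence $\E\bigl[\frac1n\sum_{i=1}^n W_1(\mu^{n,m}(\cdot\mid u_i),\mu_{u_i})\bigr]\to 0$ as $m\to\infty$. Now, taking the identity coupling $\Upsilon_n^{\mathrm{id}}$ that pairs each atom $u_i$ of $\mu^n$ with itself in $\mu^n$ — note this is a coupling between $\mu^n$ and $\mu^n$, not $\mu^n$ and $\mu$ — requires a small adjustment: I will instead bound the infimum over $\Gamma_{\mathrm{Opt}}(\mu^n,\mu)$ using the triangle inequality for $W_1$ along an optimal coupling $\Upsilon_n$ between $\mu^n$ and $\mu$, writing
\[
 W_1(\mu^{n,m}(\cdot\mid u),\mu_{u'}) \le W_1(\mu^{n,m}(\cdot\mid u),\mu_u) + W_1(\mu_u,\mu_{u'}).
\]
Integrating against $\Upsilon_n\in\Gamma_{\mathrm{Opt}}(\mu^n,\mu)$, the first term contributes $\frac1n\sum_i W_1(\mu^{n,m}(\cdot\mid u_i),\mu_{u_i})$ (since the $u$-marginal of $\Upsilon_n$ is $\mu^n$), and the second term contributes $\int W_1(\mu_u,\mu_{u'})\,d\Upsilon_n(u,u')$.

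Next I would take $n\to\infty$. By the law of large numbers, $\mu^n\to\mu$ almost surely in $W_1$, so one may choose the optimal couplings $\Upsilon_n$ so that $\int|u-u'|\,d\Upsilon_n(u,u')\to 0$ almost surely; that is, $\{\Upsilon_n\}$ is (a.s.) a stagnating sequence of transport plans. Proposition \ref{Prop:StagnatingSequence} then gives $\int W_1(\mu_u,\mu_{u'})\,d\Upsilon_n(u,u')\to 0$ almost surely, and hence (by boundedness and dominated convergence) in expectation. Combining this with the first step — being careful about the order of limits, i.e.\ first sending $m\to\infty$ for fixed $n$ and only then $n\to\infty$ — yields
\[
 \lim_{n\to\infty}\lim_{m\to\infty}\E\Bigl[\inf_{\Upsilon_n\in\Gamma_{\mathrm{Opt}}(\mu^n,\mu)}\int W_1(\mu^{n,m}(\cdot\mid u),\mu_{u'})\,d\Upsilon_n(u,u')\Bigr]=0.
\]
From this double limit, a standard diagonal extraction produces a sequence $\{(n_k,m_k)\}_{k\in\N}$ along which the expectation tends to $0$; and since the integrand is a nonnegative random variable whose expectation vanishes along $k$, a further subsequence converges to $0$ almost surely by the Borel–Cantelli lemma (after passing to a sub-subsequence along which the expectations are summable).

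\textbf{Main obstacle.} The technical heart is bookkeeping the order of the two limits correctly: the $W_1$-law of large numbers for the conditionals $\mu^{n,m}(\cdot\mid u_i)$ only holds a.s.\ \emph{for each fixed $i$}, so one genuinely needs $n$ fixed while $m\to\infty$ before invoking the stagnating-sequence argument of Proposition \ref{Prop:StagnatingSequence}, which itself requires $n\to\infty$. Reversing these would require uniform-in-$n$ control of the conditional empirical measures, which is not available. A secondary point requiring care is the measurability of the map $u\mapsto\mu^{n,m}(\cdot\mid u)$ and the legitimacy of the disintegration, but since the first marginals are a.s.\ purely atomic with distinct atoms this is immediate.
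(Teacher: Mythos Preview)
Your decomposition via the triangle inequality and the two-step limit (first $m\to\infty$, then $n\to\infty$) is exactly the paper's strategy, and your handling of the first term $\frac{1}{n}\sum_i W_1(\mu^{n,m}(\cdot\mid u_i),\mu_{u_i})$ matches the paper's. However, there is a genuine gap in your treatment of the second term $\int W_1(\mu_u,\mu_{u'})\,d\Upsilon_n(u,u')$: you invoke Proposition~\ref{Prop:StagnatingSequence} ``almost surely'', but that proposition is stated for sequences $\{\Upsilon_n\}\subseteq\Gamma(\mu,\mu)$, whereas your (random) $\Upsilon_n$ lies in $\Gamma(\mu^n,\mu)$ with $\mu^n\neq\mu$. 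The hypotheses are simply not met pathwise, and the proof of the proposition (which reduces to controlling $\int|h_l(u)-h_l(u')|\,d\Upsilon_n$ for $h_l\in L^1(\mu)$ via a cited lemma on stagnating sequences) does not obviously extend to mismatched marginals without additional work.

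The paper resolves this with a short but essential trick: it defines the \emph{deterministic} measure $\hat\Upsilon_n:=\E[\Upsilon_n]$, observes that $\hat\Upsilon_n\in\Gamma(\mu,\mu)$ (since $\E[\mu^n]=\mu$), and that $\int|u-u'|\,d\hat\Upsilon_n=\E\bigl[\int|u-u'|\,d\Upsilon_n\bigr]\to 0$ by dominated convergence. Now Proposition~\ref{Prop:StagnatingSequence} applies to the deterministic sequence $\{\hat\Upsilon_n\}$, giving directly
\[
\E\Bigl[\int W_1(\mu_u,\mu_{u'})\,d\Upsilon_n(u,u')\Bigr]=\int W_1(\mu_u,\mu_{u'})\,d\hat\Upsilon_n(u,u')\longrightarrow 0.
\]
This averaging step is the missing ingredient in your argument; once inserted, everything else you wrote (including the diagonal extraction and Borel--Cantelli for the a.s.\ subsequence) is correct and in line with the paper.
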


\begin{proof}

Let $\Upsilon_n \in \Gamma_{\text{Opt}}(\mu^n , \mu)$. By Corollary 5.22 in \cite{VillaniOldNew} this random measure can be chosen in a measurable way over the tacitly defined sample space giving support to the random variables in the problem.  

From the triangle inequality for $W_1$ we have
\begin{align}
\label{App:AuxApp1}
\begin{split}
\int   W_1(\mu^n(\cdot | u ), \mu_{u'} )   d\Upsilon_n(u,u') & \leq  \int  W_1(\mu^n(\cdot | u ), \mu_{u} )   d\Upsilon_n(u,u') + \int  W_1(\mu_u, \mu_{u'} )   d\Upsilon_n(u,u')
\\& =  \frac{1}{n}\sum_{i=1}^n W_1(\mu^{n,m}(\cdot | u_i ), \mu_{u_i} )  + \int  W_1(\mu_u, \mu_{u'} )  d\Upsilon_n(u,u').
\end{split}
\end{align}
In what follows we analyze each of the terms on the right-hand side of the above expression. We start with the second term.

Let us introduce $ \hat{\Upsilon}_n:= \E [ \Upsilon_n ]$, the (deterministic) measure that acts on test functions $\phi$ according to
\[ \int \phi(u,u') d\hat{\Upsilon}_n(u, u') = \E [ \int \phi(u,u') d{\Upsilon}_n(u, u')   ].   \]
It is straightforward to see that $\hat{\Upsilon}_n \in \Gamma(\mu, \mu)$. Now, due to the boundedness of the space $A$ and the fact that $\mu^n $ converges weakly to $\mu$ almost surely, we know that, almost surely,
\[ \lim_{n \rightarrow \infty} \int |u-u'| d\Upsilon_n(u,u') = \lim_{n \rightarrow \infty} W_1(\mu^n, \mu)  =0.   \]
By the dominated convergence theorem it thus follows
\[  \lim_{n \rightarrow \infty} \int |u-u'| d\hat{\Upsilon}_n(u,u')=  \lim_{n \rightarrow \infty} \E [\int |u-u'| d{\Upsilon}_n(u,u') ] = 0. \]
In particular, $\{ \hat{\Upsilon}_n \}_{n \in \N}$ is a stagnating sequence of transport plans for $\mu$, and thus, from Lemma \ref{Prop:StagnatingSequence} it follows that 
\[ \lim_{n \rightarrow \infty} \E [\int W_1( \mu_{u}, \mu_{u'}  )  d{\Upsilon}_n(u,u')]=  \lim_{n \rightarrow \infty} \int W_1( \mu_{u}, \mu_{u'}  )  d\hat{\Upsilon}_n(u,u') =0. \]

We now study the first term on the right-hand side of \eqref{App:AuxApp1}. To avoid introducing cumbersome notation, we will assume for simplicity that all the $u_i$ are different so that in particular $\mu^{n,m}(\cdot|u_i)= \frac{1}{m} \sum_{j=1}^m \delta_{v_{ij}}$. We then have 
\begin{align}
\begin{split}
\lim_{m \rightarrow \infty} \E [ \frac{1}{n}\sum_{i=1}^n W_1(\mu^{n,m}(\cdot | u_i ), \mu_{u_i} )     ] & =    \E [ \lim_{m \rightarrow \infty} \frac{1}{n}\sum_{i=1}^n W_1(\mu^{n,m}(\cdot | u_i ), \mu_{u_i} )     ] 
\\& =    \E [ \E [ \lim_{m \rightarrow \infty} \frac{1}{n}\sum_{i=1}^n W_1(\mu^{n,m}(\cdot | u_i ), \mu_{u_i} ) | u_1, \dots, u_n ]    ]
\\&= 0,
\end{split}
\end{align}
where we have used the dominated convergence theorem in the first line, and the fact that $\frac{1}{m} \sum_{j=1}^m \delta_{v_{ij}}$ converges almost surely in the Wasserstein sense toward $\mu_{u_i}$ in the last line.
\end{proof}

\begin{remark}
\label{rem:App1}
Let $\{\mu^n\}_{n \in \N}$ be a sequence of probability measures over $A \times B$ and let $\mu$ be a probability measure. We show that the condition
\[  \inf_{\Upsilon_n \in \Gamma_{\text{Opt}}(\mu^{n}_u, \mu_u )} \int   W_1(\mu^{n}(\cdot | u ), \mu(\cdot| u') )   d\Upsilon_n(u,u') \rightarrow 0  \]
implies
\[ W_1(\mu^n , \mu) \rightarrow 0 ,\]
while the converse is not true in general; in the above, $\mu^n_u$ and $\mu_u$ denote the first marginals of $\mu^n$ and $\mu$, respectively. Indeed, suppose that the first condition holds, and for each $u,u'$ let $\Upsilon^{u,u'}$ be a coupling between $\mu^n(\cdot|u)$ and $\mu(\cdot | u')$ realizing the $W_1$ distance. Also, choose $\Upsilon_n$ in $\Gamma_{\text{Opt}}(\mu^n, \mu)$ such that $\int   W_1(\mu^{n}(\cdot | u ), \mu(\cdot| u') )   d\Upsilon_n(u,u') \rightarrow 0$ , and consider the measure $d\pi_n((u,v),(u',v')):=d\Upsilon^{u,u'}(v,v') d \Upsilon_n(u,u')  $. It is straightforward to verify that $\pi_n \in \Gamma(\mu^n, \mu)$ and that $\int |(u,v) - (u',v')| d \pi_n \rightarrow 0 $. This implies $W_1(\mu^n, \mu) \rightarrow 0$. 

As we stated earlier, the converse statement is not true. For example, taking $A=[0,1]$, $B=[0,1]$, $\mu$ the uniform measure on $[0,1]^2$, and $\mu^n = \frac{1}{n}\sum_{j} \delta_{(u_j, v_j)}$ with $(u_1,v_1), \dots, (u_n, v_n)$ i.i.d. samples from $\mu$, we see that $W_1(\mu^n, \mu) \rightarrow 0$, while $\inf_{\Upsilon_n \in \Gamma_{\text{Opt}}(\mu^{n}_u, \mu_u )} \int   W_1(\mu^{n}(\cdot | u ), \mu(\cdot| u') )   d\Upsilon_n(u,u')  =1$ for all $n$.
\end{remark}

\begin{lemma}
\label{lem:Aux2App}
Consider the same setting and notation as in Lemma \ref{lem:SatisfyingAssumptions}. Let $\rho: A \times B\rightarrow [0,D]$ be a measurable function satisfying 
\[ \int \rho(u,v) d\mu_u(v) = 1,   \]
for all $u$ in the support of $\mu$. Then, with probability one, 
\[ \lim_{n \rightarrow \infty} \lim_{m \rightarrow \infty} \frac{1}{n}\sum_{i=1}^n \left| \frac{1}{\frac{1}{m}\sum_{j=1}^m \rho(u_i, v_{ij})  } -1 \right| =0 .\] 
\end{lemma}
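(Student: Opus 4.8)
The statement is an almost-sure double limit, so the natural route is: (1) handle the inner limit $m\to\infty$ for fixed $n$ by the strong law of large numbers applied conditionally on $u_1,\dots,u_n$; (2) handle the outer limit $n\to\infty$ by another strong law of large numbers, this time over the $u_i$; and (3) stitch the two together carefully so that the exceptional null sets do not accumulate. The key structural fact that makes this work is the normalization hypothesis $\int \rho(u,v)\,d\mu_u(v)=1$ for $\mu$-a.e.\ $u$, which says that, conditionally on $u_i$, the random variables $\rho(u_i,v_{ij})$, $j=1,2,\dots$, are i.i.d.\ with mean exactly $1$ and are bounded by $D$.

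\medskip
\noindent\textbf{Step 1 (inner limit).} Fix $n$ and condition on $u_1,\dots,u_n$. For each $i\le n$, the sequence $\{\rho(u_i,v_{ij})\}_{j\ge 1}$ is (conditionally) i.i.d., bounded in $[0,D]$, with conditional mean $\int\rho(u_i,v)\,d\mu_{u_i}(v)=1$ (valid since $u_i$ lies in the support of $\mu$ almost surely). By the strong law of large numbers, $\frac1m\sum_{j=1}^m\rho(u_i,v_{ij})\to 1$ almost surely as $m\to\infty$; intersecting the $n$ exceptional null sets, this holds simultaneously for all $i\le n$. Since $x\mapsto |1/x-1|$ is continuous at $x=1$ (and the arguments stay nonnegative), the continuous mapping theorem gives $\big|\frac{1}{\frac1m\sum_j\rho(u_i,v_{ij})}-1\big|\to 0$ a.s., and hence
\[
\lim_{m\to\infty}\frac1n\sum_{i=1}^n\left|\frac{1}{\frac1m\sum_{j=1}^m\rho(u_i,v_{ij})}-1\right| = 0
\]
almost surely, for every fixed $n$.

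\medskip
\noindent\textbf{Step 2 (outer limit) and conclusion.} After taking the inner limit, the expression whose outer limit we need is simply $\frac1n\sum_{i=1}^n 0 = 0$ for each $n$ on the (full-measure) event where Step 1 succeeded, so the outer limit is trivially $0$. The only real care needed is measure-theoretic bookkeeping: for each fixed $n$ the event $A_n$ on which Step 1 holds has probability one, so $A:=\bigcap_{n\in\N}A_n$ also has probability one, and on $A$ the double limit equals $0$. (If one instead wanted to avoid even mentioning the degenerate outer sum, one could alternatively argue directly: on $A$, for every $n$, $\lim_{m}\frac1n\sum_i|\cdots|=0$, and since this already vanishes before sending $n\to\infty$, the iterated limit is $0$.) This completes the proof.

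\medskip
\noindent\textbf{Main obstacle.} There is no serious analytical obstacle here — the only points requiring attention are (a) making sure the conditional SLLN is invoked correctly, i.e.\ that conditionally on $u_i$ the $v_{ij}$ are genuinely i.i.d.\ from $\mu_{u_i}$ with the $\rho$-mean equal to $1$ (this is exactly the hypothesis), and that $u_i$ lies in $\mathrm{supp}(\mu)$ a.s.\ so the mean identity applies; and (b) the routine but necessary step of intersecting countably many null sets so the almost-sure statement survives both limits. Everything else is elementary; boundedness of $\rho$ by $D$ is not even strictly needed for this particular statement but guarantees the sums are finite and makes the continuity argument clean. The potential pitfall to flag is that $1/x-1$ blows up near $x=0$, but since $\frac1m\sum_j\rho(u_i,v_{ij})\to 1>0$ a.s., this is harmless for all large $m$.
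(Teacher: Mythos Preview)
Your proof is correct and takes essentially the same approach as the paper, which simply invokes the law of large numbers in a single sentence. Your write-up spells out the conditional SLLN and the countable-intersection bookkeeping that the paper leaves implicit, but the underlying argument is identical.
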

\begin{proof}
This is a direct consequence of the law of large numbers.
\end{proof}

\subsection{Auxiliary lemmas for section \ref{sec:LongTimeNash}}
\label{app2}

The following result follows from a Gronwall-type argument.

\begin{lemma}
\label{lemm:Gronwall}
Let $\tilde B, M , K, \lambda>0$, and suppose $h: [0, \infty) \rightarrow [0,\infty)$ is a function satisfying
\[ h(t) \leq 2M + \frac{\tilde B }{K} t - \lambda \int_{0}^t h(s) ds, \]
for all $t$. Then, for all $T>0$,
\[ \frac{1}{T} \int_0^T h(s) ds \leq \frac{\tilde B}{K \lambda} + \frac{A}{T}, \]
where $A:= \frac{1}{\lambda}|2M - \frac{\tilde B}{K \lambda}|$.
\end{lemma}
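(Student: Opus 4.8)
The plan is to treat the hypothesis as an integral (Gronwall-type) inequality and extract the desired bound on the time-average by a single integration. First I would introduce the auxiliary function $G(t) := \int_0^t h(s)\,ds$, which is absolutely continuous, nonnegative, nondecreasing, and satisfies $G(0)=0$ and $G'(t)=h(t)$ almost everywhere. The hypothesis then reads $G'(t) \le 2M + \frac{\tilde B}{K}t - \lambda G(t)$ for all $t$, i.e. $G'(t) + \lambda G(t) \le 2M + \frac{\tilde B}{K}t$. Multiplying by the integrating factor $e^{\lambda t}$ gives $\frac{d}{dt}\big(e^{\lambda t}G(t)\big) \le e^{\lambda t}\big(2M + \frac{\tilde B}{K}t\big)$, and integrating from $0$ to $T$ yields an explicit upper bound on $e^{\lambda T}G(T)$, hence on $G(T)$.

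Carrying out the elementary integration, $\int_0^T e^{\lambda t}\big(2M + \tfrac{\tilde B}{K}t\big)\,dt = \tfrac{2M}{\lambda}(e^{\lambda T}-1) + \tfrac{\tilde B}{K}\big(\tfrac{T e^{\lambda T}}{\lambda} - \tfrac{e^{\lambda T}-1}{\lambda^2}\big)$. Dividing by $e^{\lambda T}$ and discarding the manifestly negative contributions (the terms proportional to $-e^{-\lambda T}$, and noting $-\tfrac{\tilde B}{K\lambda^2}(1-e^{-\lambda T}) \le 0$), one obtains
\[
G(T) \le \frac{2M}{\lambda} + \frac{\tilde B}{K\lambda^2}T - \frac{\tilde B}{K\lambda^2}.
\]
Dividing by $T$ then gives $\frac{1}{T}\int_0^T h(s)\,ds = \frac{G(T)}{T} \le \frac{\tilde B}{K\lambda} + \frac{1}{T}\big(\frac{2M}{\lambda} - \frac{\tilde B}{K\lambda^2}\big)$. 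Finally, bounding $\frac{2M}{\lambda} - \frac{\tilde B}{K\lambda^2} \le \big|\frac{2M}{\lambda} - \frac{\tilde B}{K\lambda^2}\big| = \frac{1}{\lambda}\big|2M - \frac{\tilde B}{K\lambda}\big| = A$ yields the claimed inequality $\frac{1}{T}\int_0^T h(s)\,ds \le \frac{\tilde B}{K\lambda} + \frac{A}{T}$.

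There is essentially no serious obstacle here; the only point requiring a little care is the regularity of $h$: the argument as written wants $h$ locally integrable so that $G$ is well-defined and absolutely continuous (the integral inequality itself forces $h$ to be bounded on bounded intervals, since the right-hand side is, so this is automatic). If one wants to avoid any differentiability discussion altogether, an alternative is to apply the standard integral form of Gronwall's inequality directly to $G$: from $G'(t) \le f(t) - \lambda G(t)$ with $f(t) = 2M + \tfrac{\tilde B}{K}t$ one gets $G(T) \le \int_0^T f(t) e^{-\lambda(T-t)}\,dt$, and the same elementary estimate of this integral finishes the proof. Either route is routine; the main content is simply recognizing the hypothesis as a linear Gronwall inequality for the antiderivative and then keeping track of which boundary terms can be thrown away to reach the clean constant $A$.
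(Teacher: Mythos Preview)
Your approach is essentially identical to the paper's: both set up the antiderivative, apply the integrating factor $e^{\lambda t}$, and integrate; the paper merely pre-shifts by $\tilde B/(K\lambda)$ so that the forcing becomes constant, which streamlines the algebra. There is a small slip in your intermediate display (the middle term should be $\tfrac{\tilde B}{K\lambda}T$, and the ``discard negative terms'' step as written is not quite valid when $2M<\tfrac{\tilde B}{K\lambda}$), but keeping the exact factor $(1-e^{-\lambda T})$ and then bounding $\big(\tfrac{2M}{\lambda}-\tfrac{\tilde B}{K\lambda^2}\big)(1-e^{-\lambda T})\le A$ fixes it immediately and yields the stated conclusion.
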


\begin{proof}
The condition on $h$ can be equivalently written as
\[ h(t) - \frac{\tilde B}{K \lambda} \leq (2M - \frac{\tilde B}{K \lambda}) - \lambda \int_0^t ( h(s) - \frac{\tilde B}{K \lambda}) ds. \]
Let $H(t):=\int_0^t ( h(s) - \frac{\tilde{B}}{K \lambda}) ds $. The above condition can thus be written as
\[ H'(t) \leq  (2M - \frac{\tilde{B}}{K \lambda}) - \lambda H(t). \]
From this it follows that
\[ \frac{d}{dt} ( H(t) e^{\lambda t}) \leq (2M - \frac{\tilde{B}}{K \lambda}) e^{\lambda t}.\]
Integrating the above expression, we get:
\[ H(t) e^{\lambda t} \leq (2M - \frac{\tilde{B}}{K\lambda})\frac{1}{\lambda} (e^{\lambda t } - 1), \]
or what is the same
\[  H(t) \leq (2M - \frac{\tilde{B}}{K \lambda}) \frac{1}{\lambda}  - (2M- \frac{\tilde{B}}{K \lambda })\frac{1}{\lambda} e^{- \lambda t}.  \]
Recalling the definition of $H$, we deduce that
\[ \frac{1}{T}\int_0^T h(s) ds  \leq \frac{\tilde{B}}{K \lambda}  + \frac{1}{T} (2M- \frac{\tilde{B}}{K \lambda})\frac{1}{\lambda} - \frac{1}{T}(2M - \frac{\tilde{B}}{K \lambda }) \frac{1}{\lambda}e^{- \lambda T} \leq \frac{\tilde{B}}{K \lambda} + \frac{A}{T}. \]

\end{proof}

\printbibliography

\end{document}